\documentclass[a4paper]{article}
\usepackage{graphicx}
\usepackage{setspace}
\spacing{1.25}
\usepackage{authblk}

\usepackage[numbers]{natbib}

\usepackage{import}
\usepackage[margin=1in]{geometry}
\setlength{\parindent}{0pt}
\setlength{\parskip}{5pt}
\makeatletter
\def\thm@space@setup{%
\thm@preskip=1em \thm@postskip=0pt
}
\makeatother


\usepackage[utf8]{inputenc}

\usepackage{amsmath, amssymb, bbold, mathtools}
\usepackage{footnote}
\usepackage{acronym}
\usepackage{subfigure}
\usepackage{enumerate}
\usepackage{hyperref}
\usepackage{multirow}

\usepackage{todonotes}

\acrodef{dro}[DRO]{distributionally robust optimization}
\acrodef{ldt}[LDT]{large deviation theory}
\acrodef{ldp}[LDP]{large deviation principle}
\acrodef{lln}[LLN]{law of large numbers}
\acrodef{kl}[KL]{Kullback-Leibler}
\acrodef{iid}[{i.i.d.\ \!\!}]{independent identically distributed}
\acrodef{qp}[QP]{quadratic program}
\acrodef{qcqp}[QCQP]{quadratically constrained quadratic program}
\acrodef{vod}[VoD]{value of data}
\acrodef{saa}[SAA]{stochastic average approximation}
\acrodef{fl}[FL]{Fenchel-Legendre}

\usepackage{tikz}
\usetikzlibrary{shapes.geometric}
\usetikzlibrary{matrix,positioning}
\usepackage{pgfplots}
\pgfplotsset{compat=1.15}
\usepackage{mathrsfs}
\usetikzlibrary{arrows}
\usepackage{array}

\newcommand{\keywords}[1]{\textbf{Keywords:} #1}


\newcommand{\norm}[1]{\left\|#1\right\|}

\renewcommand{\tfrac}[2]{#1 / #2}

\newcommand{\D}[2]{\mathrm I(#1 , #2 )}

\newcommand{\set}[2]{\left\{ #1\ : \ #2 \right\}}
\newcommand{\tpose}{ ^\top}

\newcommand{\defn}[0]{:=}

\DeclareMathOperator*{\argmin}{arg\,min}
\DeclareMathOperator*{\argmax}{arg\,max}

\newcommand{\mb}{\mathbb}

\usepackage{amsthm}
\theoremstyle{plain}
\newtheorem{theorem}{Theorem}[section]
\newtheorem{lemma}[theorem]{Lemma}
\newtheorem{claim}[theorem]{Claim}

\theoremstyle{definition}
\newtheorem{definition}[theorem]{Definition}
\newtheorem{remark}[theorem]{Remark}

\newtheorem{example}[theorem]{Example}
\newtheorem{proposition}[theorem]{Proposition}

\newcommand{\Eb}{\mathbb{E}}
\newcommand{\Pb}{\mathbb{P}}
\newcommand{\Qb}{\mathbb{Q}}

\newcommand{\cX}{\mathcal{X}}
\newcommand{\cC}{\mathcal{C}}
\newcommand{\cP}{\mathcal{P}}
\newcommand{\cPin}{\cP^{\mathrm{o}}}
\newcommand{\into}{\mathrm{o}}

\newcommand{\Var}{\mathrm{Var}}
\newcommand{\Cov}{\mathrm{Cov}}
\newcommand{\orderleq}{\preceq_{\mathcal{C}}}
\newcommand{\orderpresc}{\preceq_{\hat{\mathcal{X}}}}

\newcommand\numberthis{\addtocounter{equation}{1}\tag{\theequation}}

\newcommand{\loss}{\ell}
\renewcommand{\Re}{\mathbf{R}}
\newcommand{\integ}{\mathbf{N}}

\newcommand{\cKL}{\hat{c}_{\mathrm{KL}}}
\newcommand{\cSVP}{\hat{c}_{\mathrm{V}}}
\newcommand{\deltaSVP}{\hat{\delta}_{\text{V}}}
\newcommand{\cRob}{\hat{c}_{\mathrm{R}}}

\newcommand{\xKL}[1]{\hat{x}_{\mathrm{KL} #1 }}
\newcommand{\xSVP}[1]{\hat{x}_{\mathrm{V} #1}}
\newcommand{\xRob}[1]{\hat{x}_{\mathrm{R} #1}}


\begin{document}

\title{Learning and Decision-Making with Data :\\ Optimal Formulations and Phase Transitions
}



\author[1]{Amine Bennouna}
\author[2]{Bart P.G. Van Parys}
\affil[1]{Operations Research Center, MIT}
\affil[2]{Sloan School of Management, MIT}

\date{}

\maketitle

\begin{abstract}
  We study the problem of designing optimal learning and decision-making formulations when only historical data is available.
  Prior work typically commits to a particular class of data-driven formulation and subsequently tries to establish out-of-sample performance guarantees.
  Following \cite{van2020data} we take here the opposite approach.
  We define first a sensible yardstick with which to measure the quality of any data-driven formulation and subsequently seek to find an ``\textit{optimal}'' such formulation.
  Informally, any data-driven formulation can be seen to balance a measure of proximity of the estimated cost to the actual cost while guaranteeing a level of out-of-sample performance.
  Given an acceptable level of out-of-sample performance, we construct explicitly a data-driven formulation that is uniformly closer to the true cost than any other formulation enjoying the same out-of-sample performance.
  We show the existence of three distinct out-of-sample performance regimes (a superexponential regime, an exponential regime, and a subexponential regime) between which the nature of the optimal data-driven formulation experiences a phase transition.
  The optimal data-driven formulations can be interpreted as a classically robust formulation in the superexponential regime, an entropic distributionally robust formulation in the exponential regime, and finally a variance penalized formulation in the subexponential regime.
  This final observation unveils a surprising connection between these three, at first glance seemingly unrelated, data-driven formulations which until now remained hidden.
\end{abstract}
\keywords{Data-Driven Decisions, Machine Learning, Distributionally Robust Optimization, Large Deviation Theory, Phase Transitions}

\section{Data-Driven Decision Making}
\label{sec: data-driven decision making}
We consider decision-making problems in the face of uncertainty where the probability distribution of the uncertainty remains unobserved but rather must be learned from a finite number of independent samples.
Let $\mathcal{X}$ be a compact set of possible decisions and $\xi$ a random variable realizing in a set $\Sigma$ representing the uncertainty.
For a given scenario $i \in \Sigma$ of the uncertainty, and a decision $x \in \mathcal{X}$, the loss incurred for decision $x$ in scenario $i$ is denoted here as $\loss(x, i) \in \Re$. We assume throughout that this loss function is continuous in the decision $x$ for any scenario $i$.
The random variable $\xi$ is distributed according to a probability distribution $\Pb$ in the probability simplex $\mathcal{P}$ over $\Sigma$.
The problem we wish to approximate is
\begin{equation}\label{eq: stochastic opt}
    \min_{x \in \mathcal{X}}~ \mathbb{E}_{\Pb}(\loss(x, \xi)),
\end{equation}
where the cost function $c(x,\Pb) = \mathbb{E}_{\Pb}(\loss(x, \xi))$ represents an expected loss.
Whereas the loss $\loss(x, i)$ of each decision $x$ is known for each scenario $i$, the cost $c(x, \Pb)$ of each decision remains unknown as it is a function of the unknown probability distribution $\Pb$. The described class of problems is rather large as it describes both empirical risk minimization problems in machine learning as well as data-driven decision problems in operations research.

\begin{example}[Machine Learning]\label{exp: Decision-making ML}
Given covariates $(X,Y)\in \Re^n\times \Re$ following an unknown probability distribution $\Pb$, a set of parametrized functions $\{f_{\theta}\}_{\theta \in \Theta}$, and a loss function $L :\Re \times \Re \rightarrow \Re$, the goal is to learn the parameter $\theta$ that minimizes the expected out-of-sample error
\begin{equation}\label{eq: ML problem}
\min_{\theta \in \Theta}~\Eb_{ \Pb}[L(f_{\theta}(X),Y)].
\end{equation}
Such learning problem can be seen as a stochastic optimization problem of the form \eqref{eq: stochastic opt} by letting $x:= \theta$, $\xi:=(X, Y)$, and $c(x,\Pb):= \Eb_{ \Pb}[L(f_{\theta}(X),Y)]$.
\end{example}


{\color{black}
\begin{example}[Newsvendor problem]
\label{exp: newsvendor}
  Consider a newsvendor problem in which the decision maker decides on an inventory level $x \in \cX \subset \Re_+$ in the face of uncertain demand $d \in \Re_+$ following a distribution $\Pb$. Any excess inventory leads to a cost $h>0$ per unit, and any demand that is not satisfied is associated with to an opportunity cost $b>0$ per unit. The newsvendor problem can be formulated as the stochastic optimization problem 
  \begin{equation*}
    \min_{x\in \cX} \, \Eb_{\Pb}[b(d-x)^+ +h(x-d)^+].
  \end{equation*}
\end{example}
}

In practice the unknown distribution is never observed directly but rather must be estimated from historical data.
Problem \eqref{eq: stochastic opt} can hence not be solved directly. Typically, it is assumed instead that we observe independent samples $\{\xi_1, \ldots, \xi_T\} \in \Sigma^T$ identically distributed as the unknown distribution $\Pb$.
The most straightforward|and perhaps the most common| way to formulate a data-driven counterpart to Problem \eqref{eq: stochastic opt} is to simply replace the unknown distribution with its empirical counterpart and to solve instead the problem
\begin{equation*}
  \hat x_{\rm{SAA}}(\hat{\Pb}_T) \in \arg\min_{x\in \cX}\, c(x, \hat{\Pb}_T)
\end{equation*}
where we denote with $\hat{\Pb}_T$ the empirical distribution of the observed data points,
$\hat{\Pb}_T(i) := \frac{1}{T}\sum_{t=1}^T \mathbb{1}_{\xi_t = i}$ for all $i \in \Sigma$.
This formulation is well known in the optimization community as the Sample Average Approximation (SAA) \cite{shapiro2003monte}. When applied to the machine learning problem discussed in Example \ref{exp: Decision-making ML}, this approach reduces to the well known Empirical Risk Minimization (ERM) principle \cite{vapnik2013nature}.
SAA and ERM are motivated by the fact that the historical empirical cost is the simplest and perhaps most natural substitution for the unknown actual cost.
{\color{black}This formulation has been shown to be optimal when the goal is minimizing the \textit{expected} optimality gap. 
More precisely, denote for any data-driven decision $\hat x_T:\cP\to\cX$, $T\in \integ$ its sub-optimality gap as $G(\hat x_T(\hat{\Pb}_T))=c(\hat x_T(\hat{\Pb}_T), \Pb)- \min_{x \in \mathcal{X}}c(x,\Pb)$. 
Under mild technical assumptions asymptotically with $T\to\infty$ we have
  \(
    \Eb_{}[\phi(G(\hat x_{\rm{SAA}}(\hat{\Pb}_T)))] \leq \Eb_{}[\phi(G(\hat x_T(\hat{\Pb}_T)))]
  \)
  for all convex increasing functions $\phi$ where the expectation here is over the data generation \cite{lam2021impossibility}.}

{\color{black}
A decision with good expected performance might however have a nonnegligable probability of having a bad or even catastrophic performance. That is, good {expected} performance does not necessarily imply acceptable practical performance. As an example, consider the newsvendor problem of Example \ref{exp: newsvendor}.
Figure \ref{fig: newsvendor numericals} illustrates the distribution of the relative optimality gap of the SAA solution under the randomness of the data generation. Here, the expected optimality gap is only $\approx 5\%$. Nevertheless, the optimality gap is larger than $17\%$ with probability $0.2$. A decision-maker may seek decisions with not only good expected performance, but also seek to obtain performance guarantee in case a less likely but nevertheless plausible event occurs. 
We seek to provide the decision-maker with a confidence interval on the out-of-sample performance of its prescribed decision for all events whose probability is not too small.
Ultimately, our goal is to study the inherent trade-off between expected performance and such high probability out-of-sample guarantees. Numerous ``robust'' data-driven formulations, built to provide safe decisions, have been suggested in the literature which we review in Section \ref{sec: robust formulations}. A key question is which ``best'' formulation to use for the purpose of data-driven decision-making?
We study this question from the lens of the above mentioned trade-off.
We first formalize this trade-off and then seek to determine in a precise sense ``optimal'' data-driven formulations.
There are two fundamental parts of our problem: a \textit{prediction} problem and a \textit{prescription} problem. We believe that both problems are interesting in their own right and hence we will discuss them separately. Our framework which we now introduce will be a very significant generalization of the results found in \cite{van2020data}. In Section \ref{sec: closely related work}, we detail precisely the key differences and our main novelties. 
}

\begin{figure}
  \centering
  \begin{minipage}{\textwidth}
    \centering
    \begin{tikzpicture}

\def\ratio{0.15}
\def\mean{5}
\def\upperquartile{26.9}
\def\lowerquartile{0.001}
\def\tailquantile{17.62}

\def\boxwidth{0.2}
    
\draw[->] (0,0) -- (5,0);
\draw[thick] (0,\boxwidth*1.5) -- (0,-\boxwidth*1.5); 
\node[left] at (0,0) {Opt}; 

\fill[opacity=0.3,fill=orange!90!black] (\ratio*\lowerquartile,\boxwidth) rectangle (\ratio*\upperquartile,-\boxwidth);
\draw[ultra thick, orange] (\ratio*\mean,-\boxwidth) -- (\ratio*\mean,\boxwidth);
\node[below] at (\ratio*\mean,-\boxwidth) {5\%};
\fill[red, fill opacity=0.1] (\tailquantile*\ratio,\boxwidth*1.5) rectangle (\upperquartile*\ratio,-\boxwidth*1.5);
\draw[thick, red] (\tailquantile*\ratio,\boxwidth*1.5) -- (\tailquantile*\ratio,-\boxwidth*1.5);
\draw[thick, red] (\tailquantile*\ratio,\boxwidth*1.5) -- (\upperquartile*\ratio,\boxwidth*1.5);
\draw[thick, red] (\tailquantile*\ratio,-\boxwidth*1.5) -- (\upperquartile*\ratio,-\boxwidth*1.5);
\node[below, red] at (\tailquantile*\ratio,-\boxwidth*1.5) {17\%};
\end{tikzpicture}
    \caption{{\color{black}Illustration of the distribution of relative optimality gap of SAA ($G(\hat x_{\rm{SAA}}(\hat{\Pb}_T))/\min_{x \in \mathcal{X}}c(x,\Pb)$) on a news vendor problem, under the randomness of the data. Here, $h = 12, b = 1$ and demand $\Tilde{d}$ follows a mixture of two Gaussians $\mathcal{N}(50,5)$ and $\mathcal{N}(100,5)$ with respective weights $0.1$ and $0.9$. The SAA solution is computed with $100$ demand data points, and the presented statistics are computed across $10^7$ random datasets generations. The orange box represents realizations of the optimality gap between quantiles at level $0.01$ and $0.99$. The expected value is at $5\%$. The red box represent values above the quantile at level $0.8$, which is associated with a $17\%$ relative optimality gap.}}
  \end{minipage}
  \label{fig: newsvendor numericals}
\end{figure}

\subsection{Prediction Problems}\label{sec: prediction intro.}

The \textit{prediction} problem consists of estimating the unknown cost of a given decision|the unknown expectation in \eqref{eq: stochastic opt}|based on data. That is, constructing an estimate of the cost function using only the observed data. Such an estimate is called a data-driven predictor. In our setting all statistical information of the observed data can be compressed into its empirical distribution $\hat{\Pb}_T$.
Indeed, the order of the observed data points is of no importance when the data samples are independent.
Hence, in full generality, the data-driven predictor can be written as a function of the decision, the observed empirical distribution and the data size.

\begin{definition}[Predictors]\label{def: predictors.}
A predictor $\hat c$ is a sequence of functions $\{\hat{c}(\cdot,\cdot,T) \in \mathcal{X}\times \mathcal{P} \to \Re\}_{T\in \integ}$. For each distribution of the uncertainty $\Pb \in \cP$, decision $x \in \cX$, and data size $T \in \integ$, $\hat{c}(x,\hat{\Pb}_T,T)$ estimates the true cost $c(x,\Pb) = \Eb_{\Pb}(\loss(x,\xi))$ of decision $x$ under distribution $\Pb$.
\end{definition}

We will restrict our study to the class of smooth predictors $\cC$ which we discuss later. Our goal is to define a notion of optimality for data-driven predictors based on how well they balance two competing properties: out-of-sample {\color{black}guarantee} and accuracy.

\paragraph{Out-of-Sample Guarantees.}

The main issue with the naive estimator used in SAA or ERM formulations is its tendency to overfit the training data and consequently suffer poor, {\color{black}uncontrolled}, out-of-sample performance. 
{\color{black}In particular, in the context of prediction, SAA and ERM may provide an optimistic and unreliable estimate of the out-of-sample cost. This phenomenon is particularly pronounced in machine learning, where it is known as ``overfitting''. In decision analysis, \cite{smith2006optimizer} refers to it as the ``optimizer's curse'', and in finance, \cite{michaud1989markowitz} calls it the ``error maximization effect'' of portfolio optimization. In the context of decision-making, it is desirable to have access to a high quality upper bound on the out-of-sample cost. Indeed, while a lower cost may come as a welcome surprise, underestimating the cost may come with severe financial repercussions. }
Formally, the predictor is desired to have, for all decision $x \in \cX$ and underlying distribution $\Pb \in \cP$, a low \textit{probability of disappointment} 
    \begin{equation}\label{eq: out-of-sample proba}
    \Pb^\infty \left( c(x, \Pb) > \hat c(x, \hat{\Pb}_T,T)\right).
    \end{equation}
{\color{black}As in the rest of the paper, $\Pb^{\infty}$ denotes here the probability distribution of the independent identically distributed data from which the estimate $\hat{\Pb}_T$ is constructed.}
Typically in the decision-making and statistical learning literature, one first considers a particular predictor and subsequently looks to establish out-of-sample disappointment guarantees of the form \eqref{eq: out-of-sample proba} such as\footnote{Some bounds in the literature exhibit an additive constant in the inequality of bounds of the form \eqref{eq: out-of-sample proba}. This is equivalent to the form \eqref{eq: out-of-sample proba} by incorporating the constant in the predictor $\hat{c}$.} those discussed in {\color{black}Section \ref{sec: robust formulations}}.
We take here precisely the opposite approach.
That is, we will impose a desired out-of-sample guarantee and only then seek to determine the ``best'' predictor (in a certain sense) verifying said imposed out-of-sample guarantee.

For a given sequence $(a_T)_{T\geq 1} \in \Re_+^{\integ}$, we are interested in predictors $(\hat{c}(\cdot,\cdot,T))_{T\geq 1}$ verifying the out-of-sample guarantee 
\begin{equation}\label{eq: out-of-sample ganrantee}
    \limsup_{T\to\infty} \frac{1}{a_T} \log \Pb^\infty \left( c(x, \Pb) > \hat c(x, \hat{\Pb}_T,T)\right)\leq -1 \quad \forall x \in \mathcal{X}, \; \forall \Pb \in \mathcal{P}.
\end{equation}
Such guarantee imposes that the estimated cost bounds the out-of-sample cost from above, i.e., $c(x, \Pb) < \hat c(x, \hat{\Pb}_T,T)$, with probability at least $1-e^{-a_T+o(a_T)}$. 
From a statistical point of view, the predictor can be interpreted to provide a one sided confidence interval {\color{black}$(-\infty,\hat{c}(x, \hat{\Pb}_T,T)]$} on the true cost $c(x, \Pb)$.
The sequence $(a_T)_{T\geq 1}$ quantifies the desired speed with which the predictor enjoys stronger out-of-sample performance as the amount of available data grows.
We reiterate the generality of our setting: for \textit{any} sequence $(a_T)_{T\geq 1}$ and associated desired out-of-sample guarantee, we seek to find what is the ``best'' predictor using data that provides the desired out-of-sample guarantee. Let us now precise rigorously what constitutes such ``best'' predictor.

\paragraph{Accuracy.}
It is easy to construct {\color{black}safe estimators} with low probability of disappointment. Indeed, by simply inflating the empirical cost, i.e.,
\[
  \hat{c}(x,\hat{\Pb}_T,T) = c(x, \hat{\Pb}_T) + R
\]
for some large $R > 0$, the probability of disappointment can be made arbitrarily small. The resulting cost estimate is however very conservative, {\color{black}and far from the actual cost,} which is clearly undesirable. {\color{black}In the newsvendor example, this would amount to predicting a very large cost, which is always a safe estimate of the unknown profit.}
{\color{black}Hence, among the predictors providing a desirable out-of-sample guarantee, we seek the closest ones to the unknown out-of-sample cost. There are various ways to define an statistical interesting notion of closeness with which to compare predictors. For example, we could consider the expected bias induced by the predictor $\Eb_{\Pb^\infty
}(\hat{c}(x,\hat{\Pb}_T,T) - c(x,\Pb))$ or expected $L^1$ estimation error $\Eb_{\Pb^\infty
}(|\hat{c}(x,\hat{\Pb}_T,T) - c(x,\Pb)|)$ for all decision $x\in \cX$ and underlying true distribution $\Pb \in \cP$. 
Both notions attempt to quantify how accurate our predictor estimates the unknown out-of-sample cost in \textit{expectation}.
}


{\color{black}
We will consider here a stronger accuracy notion based on the amount predictors inflate or regularize the empirical cost.}
Recall that $(\hat{c}(x,\hat{\Pb}_T,T))_{T\geq 1}$ is used to estimate $c(x,\Pb)$. Hence, the term $(\hat{c}(x,\hat{\Pb}_T,T) - c(x,\hat{\Pb}_T))_{T\geq 1}$ is precisely the amount of regularization added to the empirical cost $c(x,\hat{\Pb}_T)$ by the predictor $\hat{c}(x,\hat{\Pb}_T,T)$. 
We seek predictors that add less regularization uniformly in all decisions $x\in \cX$ and realization of the empirical distribution $\hat{\Pb}_T$.  
We introduce, therefore, the partial order $\preceq_{\mathcal{C}}$ on the set of predictors defined as\footnote{We indicate that dropping the absolute values in the definition of the order $\preceq_{\cC}$ leads to an equivalent order.}
$$
\hat{c}_1 \orderleq \hat{c}_2 \iff
\forall x,\Qb \in \cX\times \cPin \quad
\limsup_{T\to \infty} \frac{|\hat{c}_1(x,\Qb,T)-c(x,\Qb)|}{|\hat{c}_2(x,\Qb,T)-c(x,\Qb)|} \leq 1,%
$$
for $\hat{c}_1,\hat{c}_2 \in \cC$ where $\cPin$ is the interior of $\cP$ and where we follow the convention $\tfrac{0}{0}=1$. Intuitively, $\hat{c}_1$ is preferred to $\hat{c}_2$ if asymptotically, $\hat{c}_1$ adds less regularization in its worst case than $\hat{c}_2$ adds in its best case. That is, informally, for every $x $ and $\Pb$ we have
\(
\sup_{t\geq T}{|\hat{c}_1(x,\Pb,t)-c(x,\Pb)|} 
\lesssim
\inf_{t\geq T}{|\hat{c}_2(x,\Pb,t)-c(x,\Pb)|}. 
\)
Moreover, as seen in the previous section, predictors verifying the out-of-sample guarantee provide a high probability upper bound on the true cost. Hence, when $\hat{c}_1$ and $\hat{c}_2$ verify the out-of-sample guarantee, and $\hat{c}_1 \preceq_{\cC} \hat{c}_2$, $\hat{c}_1$ is intuitively a uniformly better upper bound than $\hat{c}_2$ and hence ought to be preferred.
Notice finally that we can verify easily that $\orderleq$ is a partial order with the equivalence relation $\equiv$ defined as $\hat{c}_1 \equiv \hat{c}_2 \iff |\hat{c}_1(x,\Qb,T)-c(x,\Qb)| \sim |\hat{c}_2(x,\Qb,T)-c(x,\Qb)|, \; \forall x,\Qb \in \cX, \cP$ where $\sim$ denotes asymptotic equivalence when $T \to \infty$. Figure \ref{fig: illustration of guarantee and order.} illustrates this order and the out-of-sample guarantee. We highlight here that the partial order requires less regularization for \textit{every} distribution $\Qb \in \cPin$, in particular, for any realization of the empirical distribution $\hat{\Pb}_T$.
{\color{black}
Finally, we also prove under certain technical assumptions that our proposed order is stronger than the aforementioned alternative orders induces by expected bias and $L^1$ error (see Lemma \ref{lemma: order implies less bias and L1 error.}).
}

\begin{figure}[!htb]
\centering
\begin{minipage}{.45\textwidth}
  \begin{tikzpicture}[scale=0.8]

  \draw[->] (0.3,1) -- (7,1) node[right] {$x,\Pb$};
  \draw[->] (0.5,0.8) -- (0.5,5);

  \draw[thick] plot [smooth,tension=0.9] coordinates{
    (0.5, 2.0)
    (2.5, 1.5)
    (5.0, 2.5)
    (6.5, 2.5)
  };
  \draw (0.5, 2.0) node[left] 
  {$c \;$};

  \draw[thick, color=red] plot [smooth,tension=0.9] coordinates{
    (0.5, 3)
    (1.8, 2)
    (3.5,2)
    (5.5, 2.75)
    (6.5, 2.6)
  };
  \draw (0.5,3) node[left] 
  {${\color{red}\hat{c}_1}$};
  
  \draw[thick, color=blue] plot [smooth,tension=0.9] coordinates{
    (0.5, 4)
    (3,3)
    (5.5, 3.5)
    (6.5, 3.5)
  };
  \draw (0.5, 4) node[left] 
  {${\color{blue}\hat{c}_2}$};
  
  \fill[color=red,fill opacity=0.3]
  plot [smooth,tension=0.99] coordinates{
    (0.5, 3.5)
    (1.8, 2.3)
    (3.5,2.25)
    (4.5,2.6)
    (5.5, 2.85)
    (6.5, 2.9)
  }
  --
  plot [smooth,tension=0.99] coordinates{
    (6.5, 2.3)
    (5.5, 2.55)
    (4.5,2)
    (3.5,1.75)
    (1.8, 1.8)
    (0.5, 2.4)
  }
  -- cycle;

  \fill[color=blue,fill opacity=0.3]
  plot [smooth,tension=0.9] coordinates{
    (0.5, 4.2)
    (1.7,3.75)
    (3.3,3.1)
    (4.5, 3.5)
    (6.5, 3.7)
  }
  --
  plot [smooth,tension=0.99] coordinates{
    (6.5, 3.3)
    (4.5, 2.9)
    (3.3,2.9)
    (1.7,3)
    (0.5, 3.8)
  }
  -- cycle;

\end{tikzpicture}
\end{minipage}
\begin{minipage}{.45\textwidth}
  \begin{tikzpicture}[scale=0.8]

\draw[->] (0.3,1) -- (7,1) node[right] {$x,\Pb$};
\draw[->] (0.5,0.8) -- (0.5,5);

\draw[thick] plot [smooth,tension=0.9] coordinates{
(0.5, 2.0)
(2.5, 1.5)
(5.0, 2.5)
(6.5, 2.5)
 };
   \draw (0.5, 2.0) node[left] 
      {$c \;$};

\draw[thick, color=red] plot [smooth,tension=0.9] coordinates{
(0.5, 3)
(3,2.5)
(5.5, 4.5)
(6.5, 4)
 };
  \draw (0.5,3) node[left] 
      {${\color{red}\hat{c}_1}$};
 
\draw[thick, color=blue] plot [smooth,tension=0.9] coordinates{
(0.5, 4)
(3,3)
(5.5, 3.5)
(6.5, 3.5)
 };
  \draw (0.5, 4) node[left] 
      {${\color{blue}\hat{c}_2}$};
      
\draw[thick, color=green] plot [smooth,tension=0.9] coordinates{
(0.5, 2.5)
(2,2)
(5.5, 3)
(6.5, 3)
 };
  \draw (0.5, 2.5) node[left] 
      {${\color{green}\hat{c}_3}$};
 
\fill[color=red,fill opacity=0.3]
 plot [smooth,tension=0.99] coordinates{
(0.5, 3.2)
(3,2.7)
(4.5, 4)
(6, 4.7)
(6.5, 4.1)
}
--
 plot [smooth,tension=0.99] coordinates{
(6.5, 3.8)
(5.5, 4.3)
(4.5, 3.2)
(2,2.2)
(0.5, 2.8)
}
-- cycle;

\fill[color=blue,fill opacity=0.3]
plot [smooth,tension=0.9] coordinates{
(0.5, 4.2)
(1.7,3.75)
(3.3,3.1)
(4.5, 3.5)
(6.5, 3.7)
 }
--
 plot [smooth,tension=0.99] coordinates{
(6.5, 3.3)
(4.5, 2.9)
(3.3,2.9)
(1.7,3)
(0.5, 3.8)
}
-- cycle;

\fill[color=green,fill opacity=0.3]
plot [smooth,tension=0.9] coordinates{
(0.5, 2.7)
(2,2.2)
(5.5, 3.3)
(6.5, 3.2)
 }
--
plot [smooth,tension=0.9] coordinates{
(6.5, 2.8)
(5.5, 2.7)
(2,1.8)
(0.5, 2.2)
 }
-- cycle;
\end{tikzpicture}
\end{minipage}
\caption{Each colored curve represents the nominal value of a predictor $\hat{c}(x,\Pb,T)$, for a fixed $T$, the black lower curve being the true cost $c(x,\Pb)$. The shaded region represents the random values of $\hat{c}(x,\hat{\Pb}_T,T)$ which occur with high probability $\sim 1-e^{-a_T}$. The predictor $\hat{c}_1$ on the left does not verify the out-of-sample guarantee as there is a set of probability larger than $e^{-a_T}$ where $\hat{c}_1(x,\hat{\Pb}_T,T)<c(x,\Pb)$ (the shaded region below the $c$ curve). The predictor $\hat{c}_2$ on the other hand verifies the out-of-sample as for all values of $\hat{\Pb}_T$ on the high probability set, $\hat{c}_2(x,\hat{\Pb}_T,T)\geq c(x,\Pb)$. The figure on the right illustrates the order $\preceq_{\cC}$. In the figure, $\hat{c}_1$ and $\hat{c}_2$ can not be compared as none is better uniformly than the other. The predictor $\hat{c}_3$ is uniformally closer to $c$ than $\hat{c}_1$ and $\hat{c}_2$, hence $\hat{c}_3 \preceq_{\cC} \hat{c}_1$ and $\hat{c}_3 \preceq_{\cC} \hat{c}_2$. Notice that as both our feasibility and order notions are asymptotic in $T$, the figure here is merely an illustration. }
\label{fig: illustration of guarantee and order.}
\end{figure}
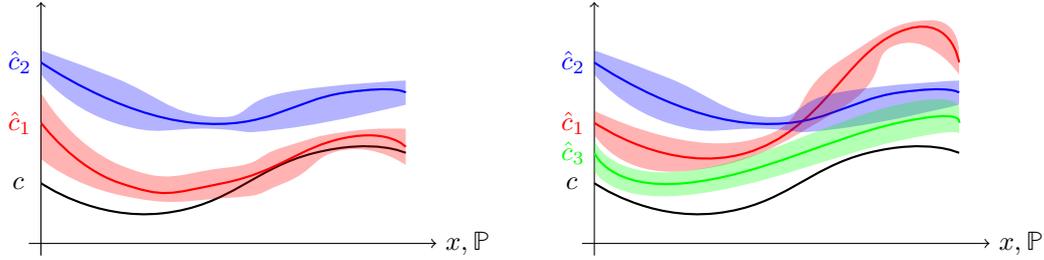

\paragraph{Optimal Prediction.}
{\color{black}There is an inherent trade-off between the accuracy of any predictor and guaranteeing with high probability not to disappoint out-of-sample.}
Due to the fluctuations of the empirical distribution $\hat{\Pb}_T$ around the true distribution $\Pb$, the prediction $\hat{c}(x,\hat{\Pb}_T,T)$ fluctuates around the nominal value $\hat{c}(x,\Pb,T)$. Therefore, the closer $\hat{c}(x,\Pb,T)$ is to the true cost $c(x,\Pb)$, the higher the probability that it disappoints, i.e., $\hat{c}(x,\hat{\Pb}_T,T) \leq c(x,\Pb)$ (see Figure \ref{fig: illustration of guarantee and order.}). Hence, the more preferred a predictor is in terms of our introduced accuracy order $\preceq_{\cC}$, the weaker the out-of-sample guarantees it will verify.

Using the order $\orderleq$, we can formulate the problem of finding the optimal predictor verifying a given out-of-sample guarantee as the following meta-optimization problem:

\begin{equation}
  \label{eq:optimal-predictor}
  \begin{array}{ll}
    \displaystyle\mathop{\text{minimize}}_{\hat c\in \mathcal{C}}{}_{\preceq_{\cC}} & \hat c \\
    \text{subject to} & \displaystyle \limsup_{T\to\infty} \frac{1}{a_T} \log \Pb^\infty \left( c(x, \Pb) > \hat c(x, \hat{\Pb}_T,T)\right)\leq -1 \quad \forall x\in \mathcal{X},\;\Pb\in \cPin.
  \end{array}
\end{equation}

A feasible predictor $\hat{c}$ in problem \eqref{eq:optimal-predictor} is a predictor $\hat{c} \in \cC$ verifying the out-of-sample guarantee \eqref{eq: out-of-sample ganrantee}. Notice that various classes of robust predictors are feasible in problem \eqref{eq:optimal-predictor} such as Wasserstein DRO and SVP, with properly scaled parameters, as indicated by inequalities \eqref{eq: Wasserstein bound} and \eqref{eq: bound for SVP}.

A \textit{weakly optimal} solution to problem \eqref{eq:optimal-predictor} is a feasible predictor $\hat{c}$ such that no other feasible predictor is prefered to $\hat{c}$.
A \textit{strong optimal} solution to problem \eqref{eq:optimal-predictor} is a feasible predictor $\hat{c}$ that is preferred to all feasible predictor $\hat{c}'$, ie $\hat{c} \orderleq \hat{c}'$. Typically, several weakly optimal solutions may be expected to exist, while there rarely exists a strong optimal solution.
When a strong optimal solution $\hat{c}$ exists, it means that for the purpose of data-driven prediction with the desired out-sample guarantee, there is no incentive to use a different predictor than $\hat{c}$ as it is preferred to all other predictors verifying the out-of-sample guarantee, for any uncertainty distribution $\Pb$ and decision $x$. We finally point out that strong optimal solutions are unique in the sense that any two strong optimal solutions $\hat{c}_1$ and $\hat{c}_2$ are necessarily equivalent in the sense that $\hat{c}_1 \equiv \hat{c}_2$.

\subsection{Prescription Problems}

Predicting the cost of any decision $x$ (or parameter choice $\theta$ in the context of machine learning, Example \ref{exp: Decision-making ML}) is typically merely a means to derive an approximation of the optimal solution and its cost.
Out-of-sample guarantees of the form \eqref{eq: out-of-sample ganrantee},
which hold pointwise for any decision $x$, do not generally hold for the prescribed decision with minimal estimated cost
$
\argmin_{x \in \cX} \hat{c}(x,\hat{\Pb}_T,T)
$.
An indirect way to impose out-of-sample guarantees on the prescribed decision is to derive a uniform counterpart to \eqref{eq: out-of-sample ganrantee} which holds uniformly for all decisions.
To do so in statistical learning, one typically restricts the considered function class $\{f_{\theta}\}_{\theta \in \Theta}$ to be of controlled bounded complexity, as measured by covering numbers or VC dimension \cite{vapnik2013nature, vapnik2015uniform}, {\color{black}such as in \cite{lam2019recovering, duchi2021statistics}.}
When the purpose of decision-making is only to derive the optimal solution of problem \eqref{eq: stochastic opt}, seeking such uniform bounds seems unduly restrictive. We seek, therefore, to investigate cost estimates satisfying out-of-sample guarantees precisely at the prescribed decision instead of uniformly over all decisions. In the following, we formalize these observations in what we call the prescription problem.

{\color{black}
Our prescription problem consists of using the observed data $\hat{\Pb}_T$ to choose a decision $\hat{x}_T(\hat{\Pb}_T)\in \cX$ minimizing the true cost $c(\cdot,\Pb)$.
Unlike in the prediction problem, we are only interested in approximating the minimizer and minimum of Problem \eqref{eq: stochastic opt} and not in estimating the cost of any alternative decision.
The decision-maker's interest is two fold: constructing a decision $\hat{x}_T(\hat{\Pb}_T)$  and providing a high probability upper bound guarantee on its out-of-sample performance, say $\hat h(\hat{\Pb}_T,T) \in \Re$. This then constitutes a one sided confidence interval $(-\infty,\hat h(\hat{\Pb}_T,T)]$ on the performance of the prescribed decision. Hence, in general we need to consider pairs of prescriptors and cost estimators $(\hat{x},\hat h)$ to model the prescription problem in full generality. A special commonly considered case is when the prescriptor $\hat{x}$ is a minimizer of a surrogate predictor $\hat{c}$ of the cost function, that is $\hat{x}_T(\hat{\Pb}_T) \in \argmin_{x\in \cX} \hat{c}(x,\hat{\Pb}_T,T)$. Then, the predictor can be used to construct the desired confidence interval with $\hat h(\hat{\Pb}_T,T) := \hat{c}(\hat{x}_T(\hat{\Pb}_T),\hat{\Pb}_T,T)$. While seemingly restrictive, this common special structure does not actually reduce generality. 
In fact,
for any pair $(\hat{x},\hat h)$, consider the predictor $\hat{c}(x,\hat{\Pb}_T,T) = \hat h(\hat{\Pb}_T,T)+\|x-\hat x_T(\hat{\Pb}_T)\|$. Clearly, $\hat{x}_T(\hat{\Pb}_T) \in \argmin_{x\in\cX} \hat{c}(x,\hat{\Pb}_T,T)$ and 
$\hat{c}(\hat{x}_T(\hat{\Pb}_T),\hat{\Pb}_T,T) = \hat h(\hat{\Pb}_T,T)$.
This motivates the following general definition of predictors.
}

\begin{definition}(Prescriptors)
\label{def:prescriptors}  
A precriptor is a sequence of functions $\hat{x}_T(\cdot): \cP \longrightarrow \cX$, $T\in \integ$ such that there exists a predictor $\hat{c} \in \cC$ verifying 
$$
\hat{x}_T(\Pb) \in \argmin_{x \in \cX} \, \hat{c}(x,\Pb,T), \quad \forall \Pb \in \cP, \; T \in \integ.
$$
For each empirical distribution $\hat{\Pb}_T$ and data size $T$, the prescription $\hat{x}_T(\hat{\Pb}_T)$ approximates the optimal solution of $\min_{x\in \cX} c(x,\Pb)$. We denote by $\hat{\cX}$ the set of prescriptors and their associated predictors.
\end{definition}

Akin to the discussion in the predictor problem we will judge any prescriptor in terms of their out-of-sample guarantee and accuracy. Both quantities will be defined similarly as in the prediction problem but crucially only pertain to one particular decision|that is to say the suggested prescription.
 
\paragraph{Out-of-Sample Guarantee.} In the prescription problem estimating the true expected value in Problem \eqref{eq: stochastic opt} is only a means with which to find an associated decision with good out-of-sample cost.
{\color{black}We seek therefore to construct a predictor which provides a high probability upper bound estimate of the true out-of-sample cost of its prescribed solution.}
More precisely, the estimated optimal cost by the predictor $\hat{c}^{\star}(\hat{\Pb}_T,T):= \hat{c}(\hat{x}_T(\hat{\Pb}_T),\hat{\Pb}_T,T)$ is an upper bound on the unknown out-of-sample cost of the prescribed decision $c(\hat{x}_T(\hat{\Pb}_T),\Pb)$ with high probability.
Formally, the estimator is desired to have a low \textit{probability of disappointment}
\begin{equation}\label{eq: prec out-of-sample guarantee prob}
    \Pb^\infty 
    \left(
       c(\hat{x}_T(\hat{\Pb}_T), \Pb) > \hat c^{\star}(\hat{\Pb}_T,T)
    \right).
\end{equation}
where {\color{black}$\hat{c}^{\star}(\Pb,T):= \min_{x\in \cX} \hat{c}(x,\Pb,T)$} for every predictor $\hat{c}$, distribution $\Pb$ and $T\in \integ$.
For a given sequence $(a_T)_{T\geq 1} \in \Re_+^{\integ}$, we are interested in prescriptors verifying the out-of-sample guarantee on the probability of disappointment:
\begin{equation}\label{eq: prescriptor out-of-sample ganrantee}
    \limsup_{T\to\infty} \frac{1}{a_T} \log \Pb^\infty \left( c(\hat{x}_T(\hat{\Pb}_T), \Pb) > \hat c^{\star}(\hat{\Pb}_T,T) \right)\leq -1, \quad \forall  \Pb \in \mathcal{P}.
  \end{equation}
  {\color{black}Such a guarantee can be in equated with a confidence interval on the out-of-sample cost of the prescribed decision, $c(\hat{x}_T(\hat{\Pb}_T), \Pb) \in (-\infty, \hat c^{\star}(\hat{\Pb}_T,T)]$, with coverage probability $1-e^{-a_T+o(a_T)}$. Once again, here we are only interested in an upper bound rather than a two-sided confidence interval, as in decision-making, while lower cost are desirable and we primarily seek to control higher cost scenarios.}
  The sequence $(a_T)_{T\geq 1}$ quantifies the desired speed with which the prescriptor enjoys stronger out-of-sample guarantee as the amount of available data grows. For \textit{any} desired out-of-sample guarantee with probability $1-e^{-a_T+o(a_T)}$, we seek to find what is the ``best'' prescriptor using data that provides the desired out-of-sample guarantee. 

\paragraph{Accuracy.}
{\color{black}While the out-of-sample guarantee ensures the prescription provides a high probability safe estimate of the out-of-sample cost of the prescription, we seek now to define an accuracy notion characterizing how well the prescriptor estimates the true optimal cost $c^{\star}(\Pb) :=\min_{x\in\cX}c(x,\Pb)$. That is, among these safe prescriptors, we seek to determine the closest to the optimal cost.
There are again multiple possible statistical notions to quantify this closeness. One example is how well the estimated optimal cost approximates the true optimal cost in \textit{expectation}, that is by measuring $\Eb_{\Pb^\infty}(|\hat{c}^\star(\hat{\Pb}_T,T) - c^\star(\Pb)|)$. As before, we will consider a yet more discriminating comparison notion.}

We introduce the following partial order $\preceq_{\hat{\mathcal{X}}}$ on prescriptors and their associated predictors
$$
(\hat{c}_1,\hat{x}_1) \preceq_{\hat{\mathcal{X}}} (\hat{c}_2,\hat{x}_2) \iff
\forall \Pb \in \cPin \quad
\limsup_{T\to \infty} \frac{|\hat{c}_1^{\star}(\Pb,T) - c^{\star}(\Pb)|}{|\hat{c}_2^{\star}(\Pb,T) - c^{\star}(\Pb)|} \leq 1
$$
again with the convention $\tfrac{0}{0}=1$.
{\color{black}A prescriptor is preferred over another if its estimation of the optimal cost induces uniformly less regularization to the SAA estimator. In Lemma \ref{lemma: order on prescriptors}, we prove under mild assumptions that this order notion is stronger than the order induced by the expected estimation error mentioned before in which a prescriptor $(\hat{c}_1,\hat{x}_1)$ is preferred to another prescriptor if}
  \[
    \Eb_{\Pb^\infty}(|\hat{c}_2^\star(\hat{\Pb}_T,T) - c^\star(\Pb)|) \leq \Eb_{\Pb^\infty}(|\hat{c}_2^\star(\hat{\Pb}_T,T) - c^\star(\Pb)|).
  \]
Furthermore, notice that when $\hat{c}$ verifies the out-of-sample guarantee, then  $\hat c^{\star}(\hat{\Pb}_T,T) \geq c(\hat{x}_T(\hat{\Pb}_T), \Pb) \geq c^\star(\Pb)$ with high probability. Hence, smaller $\hat c^{\star}(\hat{\Pb}_T,T)$ in our partial order, lead also to closer out-of-sample cost $c(\hat{x}_T(\hat{\Pb}_T), \Pb)$ to the optimal cost $c^\star(\Pb)$ with high probability.
We remark that our proposed order only takes into consideration the accuracy of the cost predictor at the associated prescribed action and is blind to its accuracy with respect to any other alternative decision.
Finally, it should be remarked that $\orderpresc$ is here a partial order with the equivalence relation $\equiv_{\hat{\cX}}$ defined as $(\hat{c}_1,\hat{x}_1) \equiv_{\hat{\cX}} (\hat{c}_2,\hat{x}_2) \iff |\hat{c}_1^{\star}(\Pb,T) - c^{\star}(\Pb)| \sim |\hat{c}_2^{\star}(\Pb,T) - c^{\star}(\Pb)|, \; \forall \Pb \in \cPin$, where $\sim$ denotes asymptotic equivalence as $T \to \infty$.

\paragraph{Optimal Prescription.}
Designing the optimal prescriptor amounts therefore to balancing out-of-sample performance and accuracy at the prescribed decision. This balancing act is formalized as the meta-optimization prescription problem
\begin{equation}
  \label{eq:optimal-prescriptor}
  \begin{array}{ll}
    \displaystyle\mathop{\text{minimize}}_{(\hat c, \hat x)\in \hat{\mathcal{X}}}{}_{\preceq_{\hat{\mathcal{X}}}} & (\hat c, \hat x) \\
    \text{subject to} & \displaystyle \limsup_{T\to\infty} \frac{1}{a_T} \log \Pb^\infty \left( c(\hat{x}_T(\hat{\Pb}_T), \Pb) > \hat c^{\star}(\hat{\Pb}_T,T) \right)\leq -1, \quad \forall \Pb \in \mathcal{P}.
  \end{array}
\end{equation}

We define the notions of feasibility, weak optimality and strong optimality similarly as in the prediction problem. 
We note that feasible solutions of the optimal prediction problem \eqref{eq:optimal-predictor} are not necessarily feasible in the optimal prescription problem \eqref{eq:optimal-prescriptor} as the prediction out-of-sample guarantee does not directly imply the prescription out-of-sample guarantee on the prescribed solution, as discussed earlier.
A key question is whether the prescription problem, where the guarantees are only enforced at the prescribed solution, admits ``better'' solutions for the purpose of prescription than the solutions of the prediction problem. More precisely, does requiring out-of-sample guarantees only at the prescribed decision instead of every decision provides predictors with better prescribed decisions?

{\color{black}
Finally, we point out that studying the prescription problem is technically more challenging than the prediction problem. Indeed, the prediction problem involves studying estimators $(\hat{c}(x,\hat{\Pb}_T,T))_{T\geq 1}$ of the unknown ``constant''  $c(x,\Pb)$ for all fixed $x$. However, the prescription problem involves studying estimators of the form $(\hat{c}(\hat{x}_T(\hat{\Pb}_T),\hat{\Pb}_T,T))_{T\geq 1}$ of the unknown random quantity $\hat{c}(\hat{x}_T(\hat{\Pb}_T),\Pb)$. In the prescription problem, the decision $\hat{x}_T(\hat{\Pb}_T)$ is itself, unlike in the prediction problem, stochastic.}

{\color{black}
\begin{remark}[Oracles]
    A first natural candidate when considering optimal data-driven decision-making formulations is an oracle returning the constant equal to the optimal cost and optimal solution, 
$\hat{c}(x,\hat{\Pb}_T,T) := c(x,\Pb)$ and
$\hat{x}_T(\hat{\Pb}_T) = x^\star(\Pb) \in \argmin_{x\in \cX} c(x,\Pb)$. These are clearly ``optimal'' by any reasonable metric. However, note that the required guarantee in \eqref{eq:optimal-prescriptor} (and also in \eqref{eq:optimal-predictor}) needs to be verified \textit{for all} $\Pb \in \cP^\circ$. Hence, while an ``oracle'' predictor committing to a constant value would be best for the specific distribution $\Pb$ considered, it would fail to verify the out-of-sample guarantee or provide satisfactory accuracy for other out-of-sample distributions $\Pb' \in \cP$. Hence, a feasible formulation must be able to adapt to whatever out-of-sample distribution $\Pb' \in \cP$ using only the observed samples from this distribution $\hat{\Pb}_T$|as expected in real applications. Feasible predictors are subject therefore to the fundamental limitation of the data and its randomness.
\end{remark}
}

\subsection{Summary of Results}
\label{sec:summary-results}

{\color{black}{Investigating the power statistical procedures such as hyptothesis tests or parameter estimators in different statistical regimes is a historically widely established practice. Indeed, the statistical power of a hypothesis tests is often characterzied by the exponential rate with which its type I and type II errors decay to zero in the asymptotic regime $T\to\infty$ as more data becomes available; see for instance \cite[Chapter 7.1]{zeitouni1998large} and references therein.
As another example, the study of the statistical power of a statistical estimator $\hat \theta_T$ which tries to determine an unknown parameter $\theta$ based on $T$ data points in the exponential regime is well established. Indeed, estimators which achieve the fastest exponential worst-case
decay rate
\(
\limsup_{T\to\infty}\frac{1}{T}\log \Pb^\infty ( \| \theta-\hat \theta_T\| > \epsilon )
\)
over all $\Pb$ for some $\epsilon>0$ have historically been denoted Bahadur optimal \cite{bahadur1967rates}.
Pitman optimality on the other hand is concerned with minimizing the variance of $T^{1/2} \| \theta-\hat \theta_T\|$.
For such estimators $\limsup_{T\to\infty} \Pb^\infty ( \|\theta-\hat \theta_T\| > \epsilon_T)<\alpha$ for some $\alpha\in (0, 1)$ if the error tolerance decays as $\epsilon_T=\Theta(T^{-1/2})$.
In the context of our proposed framework the exponential regime of Bahadur corresponds to $a_T=\Theta(T)$ while that of Pitman is associated with $a_T = \Theta(1)$. Solutions to the optimal prediction problem \eqref{eq:optimal-predictor} and the optimal prescription problem \eqref{eq:optimal-prescriptor} will come to depend on the choice of the strength of the imposed out-of-sample performance via the choice of the sequence $(a_T)_{T\geq1}$. The larger $(a_T)_{T\geq1}$, the stronger the required guarantees and consequently the more conservative the feasible predictors and prescriptors.
While previous work such as \cite{van2020data, duchi2021statistics, lam2019recovering} has considered some specific regimes $(a_T)_{T\geq1}$ as we will discuss next, there is at the moment no unified treatment of the entire spectrum of possible asymptotic regimes associated with a generic sequence $(a_T)_{T\geq1}$.
}
{\color{black}{Perhaps surprisingly, we show that such unified treatment is possible and that in each regime there exists strong optimal solutions which we can exhibit. To prove this result we will assume in this paper that the set of events $\Sigma$ has finite cardinality. Extending our results to the more general case of continuous event sets, along the lines of \cite{van2020data}, is a possibility but would require a vastly more technical exposition. We believe that the results in the discrete event case which we present momentarily are sufficiently interesting to warrant this slightly restrictive assumption, and bear all the desired insights.
}

The \textit{exponential} regime in which strong out-of-sample guarantees are imposed with $a_T/T \to r$ ($a_T \sim rT$) for $r>0$ has been studied {\color{black}in a slightly different setting in \cite{van2020data} and \cite{sutter2020general}}. Here the decision-maker desires an exponentially decreasing probability of out-of-sample disappointment. One can show that in our setting in line with what was shown by \cite{van2020data} and \cite{sutter2020general} that the DRO predictor with Kullback-Leibler (KL) ball 
\begin{equation}\label{eq: KL predictor intro}
\hat{c}(x,\hat{\Pb}_T,T) = \sup_{\Pb'\in \Pb} \set{c(x,\Pb')}{\sum_{i\in \Sigma}\hat{\Pb}_T(i) \log \left( \frac{\hat{\Pb}_T(i)}{\Pb'(i)} \right)\leq r}, \quad \forall x \in \cX, \; \forall T \in \integ,
\end{equation}
is strongly optimal for both the optimal prediction and prescription problem. 
{\color{black}{The optimal predictor $\hat{c}$ is in this regime not consistent as more data is observed. Indeed, the considered ambiguity set does not depend on the data size and hence can not shrink as more data is observed.} 
We will study here a wider spectrum of out-of-sample guarantees: the \textit{superexponential} regime in which yet stronger out-of-sample guarantees are imposed, i.e., $a_T/T \to \infty$ ($a_T \gg T$), the \textit{exponential} regime $a_T/T \to r$ ($a_T \sim rT$), as well as the \textit{subexponential} regime where moderate out-of-sample guarantees suffice, i.e., $a_T/T \to 0$, with $a_T \to \infty$ ($a_T \ll T$). We explicitly construct in all three discussed regimes a strongly optimal solution in both the optimal prediction and prescription problems (\ref{eq:optimal-predictor}) and (\ref{eq:optimal-prescriptor}). 

The superexponential regime {\color{black}{is somewhat degenerate as we can prove} that it is necessary to guard against all outcomes no matter the observed data. The robust predictor 
\begin{equation*}
    \hat{c}(x,\hat{\Pb}_T,T) = \sup_{\Pb' \in \cP} c(x,\Pb') \quad \forall x \in \cX,\; \forall T \in \integ,
\end{equation*}
is hence strong optimal in both the optimal prediction and the prescription problems.
In the exponential regime, $a_T\sim rT$, $r>0$, we extend results in \cite{van2020data} to our more general setting and show that the DRO formulation \eqref{eq: KL predictor intro} with KL ambiguity set is strong optimal in both the optimal prediction and prescription problems even when allowed to explicitly depend on the data size.
This implies that consistency is impossible in both the exponential and superexponential regime, and imposing such out-of-sample guarantees yields necessarily rather conservative predictors.

Although perhaps similar at first glance to the other two regimes, the subexponential, $a_T\ll T$, regime is more sophisticated and requires a much finer analysis. {\color{black}The subexponential regime is of interest in problems in which a decision needs to be made for every time $T$ and performance guarantees aggregated over time are of interest. For instance, \cite{garivier2011kl} considers in a bandit setting the regime $a_T=b\log(T)$ for $b>1$.
Suppose indeed that we have the guarantee
\[
 \limsup_{T\to\infty}\Pb^\infty \left( c(x, \Pb) > \hat c(x, \hat{\Pb}_T,T)\right) \leq 1/T^b+o(1/T^b)
\]
with $b>1$ which corresponds to our subexponential regime with $a_T = b \log(T)$. When a decision is implemented for each time $T\geq 1$ then the number of times we disappoint remains bounded even as $T\to\infty$. Indeed, we have that
\(
\Eb_{\Pb^\infty}(\textstyle\sum_{t=1}^T \mb 1\{c(x, \Pb) > \hat c(x, \hat{\Pb}_t,t)]\})\leq \textstyle\sum_{t=1}^T \Pb^\infty(\mb 1\{c(x, \Pb) > \hat c(x, \hat{\Pb}_t,t)\}) < \infty
\)
is here uniformly bounded for all $T\geq 1$. The regime has been considerably less studied and its analysis is the main focus our paper.
}
In this regime we show that consistent predictors become possible and we prove that the SVP formulation of \cite{maurer2009empirical} with predictor
\begin{equation*}
  \hat{c}(x,\hat{\Pb}_T,T)
=
    c (x,\hat{\Pb}_T)
    + 
    \sqrt{\frac{2a_T}{T}\Var_{\hat{\Pb}_T}(\loss(x, \xi))},
    \quad \forall x \in \cX,\; \forall T \in \integ,
\end{equation*}
is strong optimal in both the optimal prediction and prescription problems.
An interesting insight is that this ambiguity set can be identified with the second order term in the Taylor expansion of the KL-divergence ambiguity set of \eqref{eq: KL predictor intro} when $ \Pb' \approx \hat{\Pb}_T$ and  $r = a_T/T$. 
    
We hence show the existence of three distinct out-of-sample performance regimes between which the nature of the optimal data-driven formulation, for the purpose of decision-making and machine learning, experiences a phase transition.
The optimal data-driven formulations can be interpreted as a classical robust formulation in the superexponential regime, a KL distributionally robust formulation in the exponential regime and finally a variance penalized formulation in the subexponential regime; see also Table \ref{tab:summary all}.
This final observation unveils a surprising connection between these three, at first glance seemingly unrelated, data-driven formulations which until now remained hidden.

\renewcommand\arraystretch{2.5}
\begin{table}[ht]
\caption{Summary of the optimal predictors for the three out-of-sample guarantee regimes. }
    \label{tab:summary all}
\begin{center}
\begin{tabular}{|c||c|c|c|} 
\hline
  OOS Guarantee 
& $a_T \gg T$
& $a_T \sim rT$
& $a_T \ll T$  \\
 \hline
 \hline
  \setlength\extrarowheight{-3pt}
    \begin{tabular}{c}
    Optimal\\Ambiguity Set
    \end{tabular}
& $\cP$
& $\Pb' : \displaystyle\sum_{i\in \Sigma} \hat{\Pb}_T(i) \log \left( \frac{\hat{\Pb}_T(i)}{\Pb'(i)} \right)\leq r$
& $\Pb' :  \displaystyle \sum_{i\in \Sigma} \frac{(\Pb'(i)-\hat{\Pb}_T(i))^2}{2\hat{\Pb}_T(i)}
    \leq
 \frac{a_T}{T}$ \\
\hline
Consistency
& No
& No
& Yes \\
  \hline
\end{tabular}
\end{center}
\end{table}


{\color{black}
\paragraph{Limitations:}
We conclude our results section by listing some limitations of our framework. 
The first limitation is in the asymptotic nature of our meta-optimization problem. The considered out-of-sample guarantees and order comparison hold at best in a large sample regime $T\to\infty$. Our work therefore only indicates the form of predictors and prescriptors that enjoy better guarantees as more data becomes available rather than provide an exact formula to use for a finite data size $T$. In a practical problem, our recommendation is to use such the predictors given in Table \ref{tab:summary all}, but to choose the radius and penalization parameter based on finite sample guarantees which exist for all our recommended formulations (see Proposition \ref{prop: predictor finite sample guarantees.} and \cite[Theorem 5]{van2020data}). Our results here indicate that at least in the large sample regime, these finite sample guarantees can not be improved by either more careful analysis nor by considering better predictors.
}

{\color{black}
A second limitation relates to the nature of our considered data-driven decision-making setting. Here, we consider that the decision-maker has no prior on the out-of-sample distribution of the problem at hand, and has only access to data $\hat{\Pb}_T$. In this setting, any distribution $\Pb$ in the probability simplex $\cP$ is a plausible candidate. However, in some settings, the problem warrants some special distributional structure or the decision-maker can make reasonable structural assumptions limiting the possible out-of-sample distributions. Our framework does not take such additional information into account and our resulting optimal predictors may hence be conservative in such settings.
}

\section{Related Work}
\subsection{Optimal Decision-Making}\label{sec: closely related work}
{\color{black}
The study of optimal estimation has a long and distinguished history in statistics \cite{lehmann2006theory}. However, literature on ``optimization aware'' optimal estimation is considerably shorter.
In this section, we review in details closely related work to our problem.
\paragraph{On Optimal Data-Driven Decision-Making Formulations.} 
It can be remarked that the assumption $a_T\to\infty$ we impose even in the subexponential setting as discussed in Section \ref{sec:summary-results} excludes the regime $a_T=\Theta(1)$ asssociated with the study of data-driven formulations with the guaranteee
\begin{equation}
\label{eq:coverage_guarantee}
\limsup_{T\to\infty}\Pb^\infty \left( c(x, \Pb) > \hat c(x, \hat{\Pb}_T,T)\right) \leq \alpha
\end{equation}
for some $\alpha\in (0,1)$. However, this regime is already studied extensively by both \cite{lam2019recovering} and \cite{duchi2021statistics}.
In these works predictors are regarded as statistically ``best'' when they satisfy the out-of-sample guarantee \eqref{eq:coverage_guarantee} with equality.
\cite[Theorem 4]{duchi2021statistics} in particular shows that predictors based on a sufficiently regular $f$-divergence ball with robustness radius taken as $\chi^2_{1-\alpha}/(2T)$ where $\chi^2_{1-\alpha}$ denotes here the $(1-\alpha)$-quantile of the $\chi_1^2$ distribution indeed achieves equality in \eqref{eq:coverage_guarantee}.
  Our work differs from \cite{lam2019recovering,duchi2021statistics} in that our study is not limited to distributional robust predictors and in that our proposed meta-optimization problem provides a disciplined optimality notion. Indeed, even degenerate predictors can provide exact asymptotic out-of-sample guarantees.
  {Consider for example the following family of predictors 
  $$
  \hat{c}_{M,p,q}(x,\hat{\Pb}_T,T) := M \left(1-2 \mathbb{1}\left( T\hat{\Pb}_T(i_0) \; \text{mod} \; q \in \{0, \ldots, p-1\})\right)\right), \quad \forall x,T,
  $$
  where $M>0$ is a large constant, $\tfrac{p}{q}$ is any rational number, and $i_0 \in \Sigma$ is arbitrary.
  We show in Lemma \ref{lemma: exact cov counter example}, Appendix \ref{App: Partial order}, that for any rational $\alpha = p/q$, the predictor $\hat{c}_{M,p,q}$ satisfies (\ref{eq:coverage_guarantee}) with equality. Intuitively, our degenerate predictor $\hat{c}_{M,p,q}$ simply grossly overestimates the unknown out-of-sample cost with probability $1-p/q$ and grossly underestimates the unknown out-of-sample cost the with probability $p/q$. 
  Our meta-optimization problem discriminates between predictors with the help of an accuracy measure.
  At a technical level, while \cite{lam2019recovering} and \cite{duchi2021statistics} use empirical likelihood theory,
  our results rely on the mathematical machinery of large deviation theory \cite{zeitouni1998large} which has proven very useful also in several related fields such as control \cite{jongeneel2021topological, jongeneel2021efficient} and queuing theory \cite{puhalskii2007large}.
}

   A Bayesian perspective is taken by \cite{gupta2015near} to determine the smallest convex ambiguity sets that contain the unknown data-generating distribution with a prescribed level of confidence as the sample size increases. Both the Pearson divergence and KL ambiguity sets with properly scaled radii are optimal in this setting. Attention is restricted in \cite{gupta2015near} to the subclass of distributionally robust predictors with convex ambiguity sets whereas here a much richer class of predictors is considered.

Perhaps the work most closely related to ours is \cite{van2020data} in which a meta-optimization on predictors was first introduced. 
Our work is a substantial generalization of the framework of \cite{van2020data}. First, \cite{van2020data} consider the set of predictors that do not depend on the data size $T$. This is a considerable restriction eliminating several widely used formulations. This includes any regularization formulation with shrinking penalization term, and any distributional robust formulation with decreasing radius. Second,  \cite{van2020data} consider a different partial order, namely $\hat{c}_1 \preceq \hat{c}_2 \iff \hat{c}_1(x, \Pb') \leq \hat{c}_2(x, \Pb')$ for all $x,\Pb'$. When predictors are allowed to depend on the data size, a natural generalization of this order is $\hat{c}_1 \preceq \hat{c}_2 \iff \lim_{T\to\infty} \hat{c}_1(x, \Pb', T) \leq \lim_{T\to\infty} \hat{c}_2(x, \Pb', T)$ for all $x,\Pb'$. However, such order does not discriminate any predictors converging to the optimal cost as they all have the same limit. As \cite{van2020data} only studies the case $a_T = \Theta(T)$, there is no need to capture consistent predictors as they can not be feasible. However, outside this special case this becomes problematic and hence the proofs in \cite{van2020data} do not generalize to our setting, which requires finer analysis and the use of novel tools. We point out however that \cite{sutter2020general} study, within the same framework as \cite{van2020data}, more general data generating processes which may exhibit dependence over time such as Markov chains and auto-regressive models which do not capture in our work.

Finally, a minimax perspective on the optimality of data-driven formulation as in \cite{besbes2023big} can be taken as well. Here formulations are compared by a worst performance metric across all distributions. In our setting, a minimax approach would suggest an accuracy order of the form
  \[
    (\hat{c}_1,\hat{x}_1) 
    \preceq'_{\hat{\mathcal{X}}} 
    (\hat{c}_2,\hat{x}_2) \iff
    \limsup_{T\to \infty} \frac{ \sup_{\Pb \in \cPin}|\hat{c}_1^{\star}(\Pb,T) - c^{\star}(\Pb)|}
    {\sup_{\Pb \in \cPin}|\hat{c}_2^{\star}(\Pb,T) - c^{\star}(\Pb)|} \leq 1.
  \]
  Our suggested accuracy notion provides however stronger optimality results as a strongly optimal solution in our meta-optimization problem \eqref{eq:optimal-prescriptor} has uniformally better accuracy in our case across all distributions simultaneously.
  However, we do note that our results are asymptotic in sharp contrast to most minimax approaches that accomodate finite sample optimality results directly.

}

\subsection{Robust Formulations}\label{sec: robust formulations}
 SAA and ERM have been well documented to yield unreliable decisions which may suffer disappointing out-of-sample performance. To provide robust approaches, two alternatives to simple ERM/SAA have sparked interest in the past years: regularized formulations and distributionally robust formulations. Typically, these approaches attempt at designing predictors satisfying out-of-sample guarantees of the form \eqref{eq: out-of-sample ganrantee}. In particular, all of these formulations are possible solutions of our meta-optimization problem \eqref{eq:optimal-predictor}, and our work seeks to determine which of these formulations has better statistical properties for the purpose of decision-making.
\paragraph{Regularized Formulations.}
The use of regularization to improve prediction is well established and was indeed studied already by \cite{tikhonov1943stability} in the context of ill-posed inverse problems. Likewise, regularization for the benefit of prescription was proposed by \cite{mulvey1995robust} and has been tremendously influential. Decision formulations and their associated prescriptions can be guarded against overfitting by considering regularized cost estimators. 
Such regularized formulations estimate the cost of any decision as
\begin{equation*} 
    \hat{c}(x,\hat{\Pb}_T,T) =  \Eb_{\hat{\Pb}_T}(\ell(x, \xi)) + R(x, \hat{\Pb}_T, T),
\end{equation*}
where the term $R(x, \hat{\Pb}_T, T)$ is referred to as the regularization term.
For instance in the context of machine learning (Example \ref{exp: Decision-making ML}), regularized ERM formulations typically take the form
$
\frac{1}{T} \sum_{t=1}^T L(f_{\theta}(X_t),Y_t) + \lambda_T \mu(\theta)
$
where $\lambda_T \in \mathbf{R}_+$ and $\mu: \Theta \rightarrow \mathbf{R}_+$. Identifying $\mu(\theta)$ for instance as $\|\theta\|_0$, $\|\theta\|_1$, $\|\theta\|_2$ yields respectively sparse regression, LASSO \cite{tibshirani1996regression}, and Ridge \cite{hoerl1970ridge1, hoerl1970ridge2}.
These methods have been shown to enjoy strong out-of-sample performance both in theory \cite{li1986asymptotic, koltchinskii2011nuclear, van2016estimation} as well as in practice.
One particular example of regularization, which will come to play a protagonist role in this paper, is the \textit{Sample Variance Penalization} (SVP) formulation which considers regularized estimates of the form
\begin{equation}\label{eq: exp SVP}
     \Eb_{\hat{\Pb}_T}(\ell(x, \xi)) + \lambda_T \sqrt{\Var_{\hat{\Pb}_T}(\loss(x, \xi))},
\end{equation}
where $\Var_{\hat{\Pb}_T}(\loss(x, \xi))$ denotes the empirical variance of the loss. Although the term SVP was coined by \cite{maurer2009empirical} in the context of machine learning, the benefits of variance regularization were documented much earlier by \cite{Markowitz1952protfolio} in the context of portfolio selection. This regularized formulation is biased towards decisions for which the empirical variance of the cost is low rather than high. In particular, \cite{maurer2009empirical} show that SVP enjoys out-of-sample performance guarantees of the form
\begin{equation}\label{eq: bound for SVP}
    \Pb^{\infty}\left(
        \Eb_{\Pb}(\ell(x, \xi))
        >
        \Eb_{\hat{\Pb}_T}(\ell(x, \xi)) + \sqrt{\frac{2a}{T}} \sqrt{\Var_{\hat{\Pb}_T}(\loss(x, \xi))}
        +  C\frac{a}{T}
    \right)
    \leq 
    2e^{-a},
\end{equation}
for all $a>0$ and $T\in \integ$ with $C>0$ is a constant. Notice that the parameter $a$ controls here the level of out-of-sample performance we can expect the associated SVP estimator to enjoy.
\paragraph{Distributionally Robust Formulations.}
The use of robust formulations to guard against overfitting and to guarantee out-of-sample performance was popularized by \cite{ben2009robust}. During the last decade in particular, Distributionally Robust Optimization (DRO) formulations have witnessed a surge in popularity.
Such formulations consider an ambiguity set $\mathcal{U}_T(\hat{\Pb}_T)\subseteq \cP$ around the empirical distribution $\hat{\Pb}_T$ and protect against overfitting by considering decisions which minimize the worst-case cost over all considered probability distributions in $\mathcal{U}_T(\hat{\Pb}_T)$ instead of merely the empirical one.
That is, they estimate the cost of any decision as
\[
\hat{c}(x,\hat{\Pb}_T,T) =
 \sup_{\Pb' \in \cP} \set{\Eb_{\Pb'}(\loss(x,\xi))}{\Pb' \in \mathcal{U}_T(\hat{\Pb}_T)}.
\]
Evidently, the particular ambiguity set $\mathcal{U}_T(\hat{\Pb}_T)\subseteq \cP$ considered will ultimately determine the statistical properties and computational challenges of the associated robust data-driven formulation.
Much of the early literature \cite{delage2010distributionally, wiesemann2014distributionally, van2016generalized} focused on ambiguity sets consisting of probability measures sharing certain given moments and shape constraints.
More recent approaches \cite{bertsimas2018data} however consider ambiguity sets which are based on a statistical distance  instead.
For instance, \cite{kuhn2019wasserstein, gao2016distributionally} propose to consider ambiguity sets consisting of all distributions at distance at most $r$ to the empirical distribution in the Wasserstein metric. Alternatively, ambiguity sets consisting of all distributions at distance at most $r$ to the empirical distribution as measured by a divergence metric such as the Kullback-Leibler divergence are also well studied \cite{lam2019recovering, duchi2021statistics, gotoh2021calibration}.
 More recently, \cite{bennouna2022holistic} finally propose ambiguity sets which can be interpreted as combining the KL and Levy-Prokhorov distances.
The recent uptick in popularity of such robust formulations is in no small part due to the fact that they are often tractable and enjoy superior statistical guarantees.
\cite[Theorem 3.5]{kuhn2019wasserstein} indicate for instance that the probability that the estimated cost of the Wasserstein DRO formulation does not upper bound the actual unknown cost decays as
\begin{equation}\label{eq: Wasserstein bound}
  \Pb^\infty \left( 
 \Eb_{\Pb}(\loss(x,\xi)) 
  >
  \sup_{\Pb' \in \cP} \set{\Eb_{\Pb'}(\loss(x,\xi))}{\Pb' \in \mathcal{U}_T(\hat{\Pb}_T)} 
  \right)\leq
    C_1 e^{-C_2 T r^{\max(2,\dim\Sigma)}}
\end{equation}
when $r<1$, where $C_1$, $C_2$ are positive constants. 
This guarantee of the form \eqref{eq: out-of-sample ganrantee} shows that DRO approaches such a Wasserstein are also feasible in our optimization problem \eqref{eq:optimal-predictor} with appropriately chosen ambiguity set and radius. In the DRO literature, an important natural question is which choice of ambiguity set enjoys ``best'' such statistical guarantees, which is a main focus of our paper.

{\color{black}
Regularization and robustness appear at first glance to be two distinct ideas with which to encourage a nominal formulation to enjoy better out-of-sample guarantees. However, intimate connections between both ideas have been reported before by \cite{xu2009robustness}.
Furthermore, \cite{gao2017wasserstein} state an equivalence between Wasserstein DRO formulations and a certain gradient-norm regularization formulation. 
The use of $f$-divergence robust optimization as a convex proxy for variance regularization is discussed in \cite{gotoh2018robust, gotoh2021calibration, duchi2016variance, lam2016robust}.
Finally, from the previous discussion it is clear that adding robustness reduces to regularization with
\[
  R(x, \hat{\Pb}_T, T) =  \sup_{\Pb' \in \cP} \set{\Eb_{\Pb'}(\loss(x,\xi))}{\Pb' \in \mathcal{U}_T(\hat{\Pb}_T)} - c(x, \hat{\Pb}_T) \geq 0.
\]
While the addition of robustness or equivalently regularization provide out-of-sample guarantees, it is unclear what amount of regularization or robustness is necessary. More generally,
given multiple formulations which provide out-of-sample guarantees, a natural question is whether any such formulation should be preferred over all others.
Our paper addresses the question of which formulation enjoyed the ``best'' statistical properties for the purpose of data-driven decision-making.
}

\section{Optimal Data-Driven Prediction}\label{sec: predictor}

We study in this section the problem of optimal prediction as formally stated in problem \eqref{eq:optimal-predictor}. We first indicate with the help of a small counterexample that among the set of all possible predictors as defined in Definition \ref{def: predictors.} there exists some pathological predictors, of no practical significance, which might prevent the existence of optimal solutions.

Consider the example illustrated in Figure \ref{fig: counter example 1} where we assume the existence of an optimal predictor $\hat{c}$.
We can derive another predictor $\hat c'$ by subtracting a spike function that vanishes to $0$ on $\{\frac{0}{T}, \frac{1}{T}, \ldots, \frac{T}{T}\}^d$. For all practical purposes hence both predictors are the same as both predict the same cost whatever the empirical distribution $\hat \Pb_T$. In particular, $\hat{c}'$ still verifies the out-of-sample guarantee as it only concerns its values in $\hat{\Pb}_T$. However, the predictor $\hat c'$ can be shown to be a strictly better predictor as quantified by the partial order $\orderleq$.
Such discussed pathological predictors are possible as there is no explicit restriction on the smoothness of the predictors. For instance, in the example of Figure \ref{fig: counter example 1}, the spike that needs to be added has variations that explode with large $T$.
To avoid such pathological predictors, we will consider exclusively regular predictors.

\begin{definition}[Regular Predictors]\label{def: regular pred}
The set of regular predictors $\cC$ is the set of predictors verifying the following two properties\footnote{We recall the definition of these properties in Definition \ref{def: unif boundedness} and \ref{def: Equicontinuity}.}:
\begin{enumerate}
\item The sequence of functions $(\hat{c}(\cdot,\cdot,T))_{T\geq 1}$ is uniformly bounded and equicontinuous.
\item The function $\hat{c}(\cdot,\cdot,T)$ is differentiable in the second argument $\Pb$ for all $T \in \mathbf{N}$. Furthermore, the sequence of its derivatives is uniformly bounded and equicontinuous.
\end{enumerate}
\end{definition}

The differentiability of regular predictors in $\Pb$ is without much loss of generality as the predictor is in essence characterized completely by the values it takes on the discrete support $\{\frac{0}{T},\frac{1}{T}, \ldots,\frac{T}{T}\}^d$. Any non-differentiable predictor can therefore always be substituted by an equivalent predictor that is differentiable by smoothly extending its values from $\{\frac{0}{T},\frac{1}{T}, \ldots,\frac{T}{T}\}^d$ to $\cP$.
The fact that regular predictors should be bounded is also natural in this context as we assume here that the unknown actual cost $c(x, \Pb) < \infty$ remains bounded. The equicontinuity and the uniform boundedness of derivatives condition is precisely what allows us to avoid the pathological asymptotic behaviors of sequences of functions constituting predictors. We will show the existence
of an optimal predictor among the class of regular predictors in each of the three regimes discussed in Section \ref{sec:summary-results}.

\begin{figure}
  \centering
  \begin{minipage}{\textwidth}
    \centering
    \begin{tikzpicture}[scale=0.9]
  \begin{axis}[
    axis x line*=bottom, 
    axis y line*=left, 
    xmin = -2,
    xmax = 2.8,
    ymin = 0.1,
    ymax = 0.9, 
    xticklabels={,,,$\frac{k}{T}$,$\frac{k+1}{T}$,,},
    ytick=\empty,
    yticklabels={,,,$\hat{c}$,,,},
    axis lines = left,
    xlabel=$\Pb(i)$, xlabel style={at={(1,0)}, anchor=north}
    ]


    \plot[color= green!40!black ,thick, opacity=1] plot[domain=0.26:0.84,samples=300, smooth]
    expression{
      0.07*sin(50*x)+0.5 + 0.07*sin(100*x)
      -
      1500*exp(-1/((0.35^2-(x-0.5)^2)))
    };
    
    \plot[color= blue ,thick, opacity=1] 
    plot[samples=300, smooth] expression{0.07*sin(50*x)+0.5 + 0.07*sin(100*x)};

    \plot[color= black ,thick, opacity=1] 
    plot[samples=300, smooth] expression{0.07*sin(30*x)+0.5 - 0.07*sin(60*x)-0.3275};
    
    \draw [<->,green!40!black, opacity=1] (0.2,0.37) -- (0.8,0.37);
    \draw[color=green!40!black, ] (1,0.37) node {$\frac{1}{2T}$};

    \foreach \x in {-1,0,1,2}
    {
      \edef\temp{\noexpand
        \draw [dotted,color = black] (\x,0) -- (\x,0.8)
        ;}
      \temp
    }
    \draw[color=black] (0,0.87) node {$\hat{\Pb}_T(i)$};

    \draw[color=blue] (-1.8,0.48) node {$\hat{c}$};

    \draw[color=black] (-1.8,0.22) node {$c$};
    
    \draw[color=green!40!black] (0.8,0.2) node {$\hat{c}'$};
  \end{axis}
\end{tikzpicture}

    \caption{Illustration of the construction of a pathological predictor dominating a given predictor. The {\color{black}blue} line represents the regular predictor $\hat{c}$, while the {\color{black}green} line represents the perturbed predictor into a pathological predictor $\hat{c}'$.
    Here $i\in \Sigma$, $k\in \{0,\ldots,T\}$ is an integer, and the pointed vertical lines represent the possible values of the empirical distribution $\hat{\Pb}_T(i) \in \{\frac{0}{T}, \ldots, \frac{T}{T}\}$.}
  \end{minipage}
  \label{fig: counter example 1}
\end{figure}
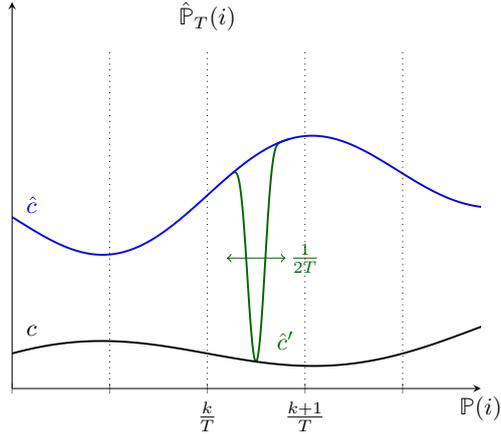

\subsection{The Exponential Regime ($a_T\sim rT$)}\label{sec: pred exp}

We say that the prediction problem is in the exponential regime when the desired out-of-sample disappointment speed satisfies $a_T\sim rT$, i.e., $\lim_{T\to\infty} a_T/T = r > 0$.
That is, admissible predictors can suffer an out-of-sample disappointment probability as defined in Equation \eqref{eq: out-of-sample proba} which decays exponentially at rate $r$ with increasing amount $T$ of observed data points.

In this exponential regime the appropriate distance notion between distributions seems to be the relative entropy sometimes also better known as the KL-divergence.
The relative entropy of a distribution $\Pb\in\cP$ with respect to a distribution $\Pb'\in\cP$ is defined as
\begin{align}\label{eq: KL divergence.}
  \D{\Pb}{\Pb'} = \sum_{i\in\Sigma} \Pb(i) \log\left(\frac{\Pb(i)}{\Pb'(i)}\right),
\end{align}
where we use the conventions $0 \log(0/p)=0$ for any $p\geq 0$ and $p' \log(p'/0)=\infty$ for any $p'> 0$. We can define an associated DRO predictor as
\begin{equation}\label{eq: KL predictor}
  \cKL(x,\Pb,T) 
  =
  \sup_{\Pb' \in \cP} \set{c(x,\Pb')}{\D{\Pb} {\Pb'}\leq r},
  \quad \forall x \in \cX, \; \forall \Pb \in \cP, \; \forall T \in \integ.
\end{equation}

\cite{van2020data} have shown indeed that in the exponential regime this DRO predictor should be preferred to any other predictor which does not {\color{black}explicitly} depend on data size $T$. We generalize this result and show that in fact it should be preferred to any other regular predictor, {\color{black}even when allowed to scale with the data size}. The key component of the proofs establishing this claim is the large deviation property of the empirical distribution $\hat{\Pb}_T$ which we review in Appendix \ref{Appendix: LDT} for completeness.

We first prove that the DRO predictor is in fact regular and enjoys our imposed out-of-sample guarantee.
The full proof of the subsequent result is deferred to Appendix \ref{App: proof feasibility pred exp}.

\begin{proposition}[Feasibility]\label{prop: frasibility pred exp}
  The predictor $\cKL\in \cC$ verifies the out-of-sample guarantee \eqref{eq: out-of-sample ganrantee} when $a_T \sim rT$.
\end{proposition}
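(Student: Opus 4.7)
The statement combines two requirements: (i) that the KL predictor is regular, i.e., $\cKL \in \cC$, and (ii) that it satisfies the out-of-sample guarantee \eqref{eq: out-of-sample ganrantee} under $a_T \sim rT$. My plan is to handle (ii) first via a direct application of Sanov's theorem for the empirical distribution $\hat{\Pb}_T$, and then handle (i) by exploiting the fact that the formula \eqref{eq: KL predictor} is independent of $T$, so that the uniform-in-$T$ conditions of Definition \ref{def: regular pred} collapse to pointwise statements about the single function $\cKL(\cdot,\cdot)$.

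For the out-of-sample guarantee I would begin with the elementary inclusion
\[
\{c(x,\Pb) > \cKL(x,\hat{\Pb}_T,T)\} \;\subseteq\; \{\D{\hat{\Pb}_T}{\Pb} > r\},
\]
which is the contrapositive of the observation that whenever $\D{\hat{\Pb}_T}{\Pb} \leq r$, the true distribution $\Pb$ is itself feasible in the inner supremum of \eqref{eq: KL predictor}, forcing $c(x,\Pb) \leq \cKL(x,\hat{\Pb}_T,T)$. The upper bound in Sanov's theorem, applied to the closed set $F = \{Q \in \cP : \D{Q}{\Pb} \geq r\}$, then yields
\[
\limsup_{T\to\infty} \frac{1}{T} \log \Pb^\infty(\hat{\Pb}_T \in F) \;\leq\; -\inf_{Q \in F} \D{Q}{\Pb} \;=\; -r,
\]
the infimum being attained on the level set $\{\D{Q}{\Pb}=r\}$. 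Dividing through by $a_T$ and using $a_T/T \to r$ gives $\limsup_{T\to\infty} \frac{1}{a_T} \log \Pb^\infty(c(x,\Pb) > \cKL(x,\hat{\Pb}_T,T)) \leq -1$ uniformly in $x$ and $\Pb$, as desired.

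For regularity, since $\cKL(\cdot,\cdot,T)$ is the same function for every $T$, the uniform boundedness and equicontinuity conditions reduce to boundedness and continuity of a single function, and analogously for its derivative in $\Pb$. Boundedness follows from continuity of $\ell$ on the compact set $\cX \times \Sigma$ ($|\Sigma|<\infty$), which yields $|\cKL(x,\Pb)| \leq \|\ell\|_{\infty}$. Joint continuity on $\cX \times \cPin$ follows from Berge's maximum theorem: $(x,\Pb') \mapsto c(x,\Pb')$ is jointly continuous, and the constraint correspondence $\Pb \mapsto \{\Pb' \in \cP : \D{\Pb}{\Pb'} \leq r\}$ is compact-valued and continuous on the interior by continuity of $\D{\cdot}{\cdot}$ there. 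For differentiability in $\Pb$, I plan to pass to the convex Lagrangian dual of the inner KL-constrained maximization, rewriting $\cKL(x,\Pb)$ as an infimum over dual multipliers of a smooth function of $\Pb$; strict convexity of the dual on $\cPin$ yields a unique minimizer depending smoothly on $\Pb$, and Danskin's envelope theorem then delivers an explicit, bounded, and continuous gradient.

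The main obstacle I anticipate is the smoothness and bounded-derivative part near the boundary of the simplex, where the KL ball and its dual representation may degenerate and the gradient may blow up. Since the accuracy order $\preceq_{\cC}$ quantifies only over $\Pb \in \cPin$, interior regularity is sufficient for the usage of $\cC$ in the sequel; on compact subsets of $\cPin$ the dual argument above produces a uniformly bounded and equicontinuous gradient, so the remaining work is essentially a careful bookkeeping of the relative-entropy duality rather than a conceptual difficulty.
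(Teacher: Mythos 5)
Your proof follows essentially the same route as the paper's: for the out-of-sample guarantee, you observe that $\D{\hat{\Pb}_T}{\Pb}\le r$ makes $\Pb$ feasible in the inner supremum of \eqref{eq: KL predictor}, take the contrapositive, and apply Sanov's upper bound to the closed superset $\{I(Q,\Pb)\ge r\}$ (where the paper instead takes $\bar\Gamma$ of the open sublevel set and shows $\bar\Gamma\subset\{I\ge r\}$ by convexity of $I$ — a logically equivalent move); for regularity, both you and the paper exploit $T$-independence to collapse uniform conditions to pointwise ones and then pass to the Lagrangian dual of the KL-constrained maximization, with the paper making this explicit (the one-dimensional dual over $\alpha$ and the implicit function theorem) where you invoke Berge and a Danskin-type envelope argument, which amounts to the same thing once the uniqueness of the dual minimizer is established. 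Your concern about degeneration of the gradient near $\partial\cP$ is exactly the issue the paper also handles implicitly by proving differentiability only for $\Pb\in\cPin$.
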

\begin{proof}[Sketch of proof]
Let $x,\Pb \in \cX\times \cPin$. Denoting $\Gamma = \set{\Pb' \in \mathcal{P}}{I(\Pb',\Pb) > r}$, we have by definition of $\cKL$, for all $T\in \integ$
  \[
    \hat{\Pb}_T \not \in \Gamma \implies c(x,\Pb) \leq \cKL(x,\hat{\Pb}_T,T).
  \]
Hence, using the Large Deviation Principle, Theorem \ref{thm: LDP finite space}, we get  
  \begin{align*}
    \limsup_{T\to\infty}
    \frac{1}{a_T} \log \Pb^{\infty}
        \left(
                c(x,\Pb) > \cKL(x,\hat{\Pb}_T,T)
        \right)
    \leq& 
        \limsup_{T\to\infty}
    \frac{1}{rT} \log \Pb^{\infty}
    \left(
      \hat{\Pb}_T \in \Gamma
          \right)
    \leq -\frac{1}{r}\inf_{\Pb'\in \bar{\Gamma}}
             I(\Pb',\Pb).
  \end{align*}
 Finally, using the convexity of the relative entropy $I(\cdot,\cdot)$ we show that $\overline{\Gamma}  \subset \set{\Pb' \in \mathcal{P}}{I(\Pb',\Pb) \geq r}$ and therefore $-\inf_{\Pb'\in \bar{\Gamma}}
             I(\Pb',\Pb) \leq -r$.
             
To prove the regularity of $\cKL$, we show its differentiability by rewriting $\cKL$ in a dual form as 
  \begin{equation*}
    \cKL(x,\mb P, T)= \min_{\alpha\geq \max_{i\in \Sigma} \ell(x, i)} ~\{f(\alpha; x,\Pb, T)\defn \alpha - e^{-r} \exp(\textstyle\sum_{i\in \Sigma} \log \left(\alpha-\loss(x, i) \right) {\mb P(i)})\}
  \end{equation*}
and subsequently use the implicit function theorem. The equicontinuity  property holds then simply as the predictor does not depend on $T$.
\end{proof}

We now show that the proposed predictor is strongly optimal in the optimal predictor Problem \eqref{eq:optimal-predictor} and consequently should be preferred to any other regular predictor in the exponential regime. {\color{black}The proof is in Appendix \ref{Appendix: Claims of exp pred thm}.}

\begin{theorem}[Strong Optimality]\label{thm: strong opt exponential regime predictor.}
Consider the exponential regime in which $a_T\sim rT$. The predictor $\cKL$ is feasible in the prediction problem \eqref{eq:optimal-predictor} and
for any predictor $\hat{c} \in \cC$ satisfying the out-of-sample guarantee \eqref{eq: out-of-sample ganrantee}, we have $\cKL \orderleq \hat{c}$. That is, $\cKL$ is a strong optimal predictor in the exponential regime.
\end{theorem}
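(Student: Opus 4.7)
Feasibility of $\cKL$ in problem \eqref{eq:optimal-predictor} is already Proposition~\ref{prop: frasibility pred exp}, so the heart of the argument is strong optimality, which I would prove by contradiction. Suppose there exists a feasible $\hat c \in \cC$ with $\cKL \not\orderleq \hat c$. Since $\cKL(x,\Pb)\geq c(x,\Pb)$ for every $(x,\Pb)$ (the true distribution is always in the KL ball of radius $r$ around itself), the absolute values in the definition of $\orderleq$ may be dropped (per the footnote), and the failure of the order yields some $(x_0,\Pb_0)\in \cX\times \cPin$, a subsequence $(T_k)_{k\geq 1}$, and $\epsilon>0$ such that
\[
\hat c(x_0,\Pb_0,T_k) \;\leq\; \cKL(x_0,\Pb_0) - \epsilon, \qquad k\geq 1.
\]
Invoking the equicontinuity of $(\hat c(\cdot,\cdot,T))_T$ from Definition~\ref{def: regular pred} upgrades this pointwise deficit to an open neighborhood: there exists an open $V\subset \cPin$ containing $\Pb_0$ with $\hat c(x_0,\Pb,T_k) \leq \cKL(x_0,\Pb_0) - \epsilon/2$ for every $\Pb\in V$ and every $k$ large.

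Next I would identify an adversarial true distribution $\tilde\Pb$ against which $\hat c$ must disappoint. Since the KL ball $\{\Pb' : \D{\Pb_0}{\Pb'}\leq r\}$ is compact in the finite simplex and $c(x_0,\cdot)$ is continuous, the supremum defining $\cKL(x_0,\Pb_0)$ is attained at some $\Pb^\star$. If $\D{\Pb_0}{\Pb^\star} < r$ take $\tilde\Pb := \Pb^\star$; otherwise interpolate $\tilde\Pb := (1-\lambda)\Pb^\star + \lambda \Pb_0$ for $\lambda>0$ small. By convexity of $\D{\Pb_0}{\cdot}$,
\[
\D{\Pb_0}{\tilde\Pb} \;\leq\; (1-\lambda)\D{\Pb_0}{\Pb^\star} \;=\; (1-\lambda)r \;<\; r,
\]
and by continuity of $c(x_0,\cdot)$, $c(x_0,\tilde\Pb) \geq \cKL(x_0,\Pb_0) - \epsilon/4$ provided $\lambda$ is small enough. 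Since $\Pb_0\in \cPin$, $\tilde\Pb$ also lies in the interior of the simplex, which is needed for a clean application of Sanov's lower bound.

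Finally I would combine these ingredients via the large deviation principle for the empirical measure under true distribution $\tilde\Pb$: the open-set half of Theorem~\ref{thm: LDP finite space} gives
\[
\liminf_{T\to\infty} \tfrac{1}{T}\log \tilde\Pb^{\infty}\bigl(\hat\Pb_T \in V\bigr)
\;\geq\; -\inf_{\Pb\in V} \D{\Pb}{\tilde\Pb} \;\geq\; -\D{\Pb_0}{\tilde\Pb}.
\]
On the event $\{\hat\Pb_{T_k}\in V\}$ the deficit above reads $\hat c(x_0,\hat\Pb_{T_k},T_k) \leq \cKL(x_0,\Pb_0) - \epsilon/2 < c(x_0,\tilde\Pb)$, i.e.\ the disappointment event at $(x_0,\tilde\Pb)$ occurs, so using $a_{T_k}\sim r T_k$,
\[
\limsup_{T\to\infty} \tfrac{1}{a_T}\log \tilde\Pb^{\infty}\bigl(c(x_0,\tilde\Pb) > \hat c(x_0,\hat\Pb_T,T)\bigr)
\;\geq\; -\tfrac{1}{r}\D{\Pb_0}{\tilde\Pb} \;>\; -1,
\]
contradicting the out-of-sample guarantee satisfied by $\hat c$ and completing the proof.

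I expect the main obstacle to be the adversarial perturbation $\tilde\Pb$: the KL constraint must become strictly feasible while the cost $c(x_0,\tilde\Pb)$ drops by only $O(\epsilon)$, and the interplay between the $\lambda$ defining the perturbation and the $\epsilon$ from the order failure requires careful book-keeping. A secondary subtlety is that the failure of the order gives only a subsequence $(T_k)$, but the $\limsup$ respects subsequences and the LDP lower bound holds for every large $T$, so this suffices. Equicontinuity of regular predictors is used crucially to pass from the pointwise deficit at $\Pb_0$ to a deficit on the open set $V$ — the natural input to Sanov's lower bound — and this is precisely the role the regularity hypothesis plays in excluding the spike counterexample following Definition~\ref{def: regular pred}.
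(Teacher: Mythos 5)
Your proposal is correct and follows essentially the same route as the paper's proof: reduce the failure of $\orderleq$ to a constant pointwise deficit at some $(x_0,\Pb_0)$, extend it to an open neighborhood of $\Pb_0$ via equicontinuity, choose an adversarial distribution in the strict interior of the KL-ball (by interpolating the worst-case $\bar\Pb$ toward $\Pb_0$), and invoke the LDP lower bound under that distribution to show the out-of-sample rate is strictly above $-r$. Your derivation of the constant gap is a bit more direct than the paper's (you exploit that $\cKL - c$ is a fixed constant in $T$, whereas the paper's Claim~\ref{claim: dro proof} does a short case analysis to get there), and you apply equicontinuity before the interpolation rather than bundling both in Claim~\ref{claim: dro proof 2}, but these are cosmetic rearrangements of the same argument.
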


\begin{remark}
The proof of strong optimality in the exponential regime does not require differentiability and equicontinuity of the derivatives of the predictors. In fact, the optimality results in the exponential (and superexponential) regimes hold even when the considered set of predictors are predictors verifying only the first but not necessarily the second regularity condition in Definition \ref{def: regular pred}.
\end{remark}

Theorem \ref{thm: strong opt exponential regime predictor.} establishes that consistent estimator for which $\lim_{T\to\infty} \hat c(x, \Pb, T) = c(x, \Pb)$ are not compatible with the exponential regime. That is, even the optimal predictor in this regime is typically biased in that we may have $\lim_{T\to\infty} \cKL(x, \Pb, T) > c(x, \Pb)$. This is undesirable as we may hope to recover the unknown cost at least when an increasing amount of data becomes available \cite{bertsimas2018robust}. We can attribute this undesirable outcome by our insistence on imposing an exponentially decaying out-of-sample disappointment as we will show later. We first show however that imposing even stronger out-of-sample guarantees -- perhaps unsurprisingly -- does not alleviate this issue.

\subsection{The Superexponential Regime ($a_T \gg T$)}\label{sec: superexp pred}

We consider now the superexponential regime, in which the desired out-of-sample guarantee speed is stronger than exponential, $a_T\gg T$, i.e., $\lim_{T\to\infty} a_T/T = \infty$.
This implies that admissible predictors suffer an out-of-sample disappointment probability which decays faster than exponential in the number of samples $T$. We will show that, perhaps unsurprisingly, we can not escape fully conservative predictors with such strong guarantees.

Consider a robust predictor taking the worst case scenario of the uncertainty
\begin{equation}\label{eq: overly robust predictor}
  \hat{c}_{R}(x,\Pb,T) 
  =
  \max_{i\in \Sigma} \,\ell(x,i),
  \quad \forall x \in \cX, \; \forall \Pb \in \cP.
\end{equation}
The previous predictor can also be seen as a DRO predictor with the whole simplex $\cP$ as ambiguity set, $\hat{c}_{R}(x,\Pb,T) = \sup_{\Pb' \in \cP} c(x,\Pb')$ for all $x,\Pb$ and $T$.
We remark that this predictor does not actually use the observed data at all and only depends on the support of its potential outcomes instead. We first prove that this robust predictor is indeed regular and enjoys our imposed out-of-sample guarantee.
  
\begin{proposition}[Feasibility]\label{prop: feasibility or predictor}
The predictor $\cRob\in \cC$ verifies the out-of-sample guarantee \eqref{eq: out-of-sample ganrantee} when $a_T \gg T$.
\end{proposition}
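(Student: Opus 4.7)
The plan is to verify the two requirements of feasibility separately, both of which turn out to be nearly trivial because the predictor $\cRob$ depends neither on the empirical distribution $\Pb$ nor on the sample size $T$.

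To establish $\cRob \in \cC$, I would first note that $\cRob(x,\Pb,T) = \max_{i \in \Sigma} \loss(x,i)$ is, for each $T$, the pointwise maximum of finitely many (since $\Sigma$ is finite) continuous functions of $x$ on the compact set $\cX$. It is therefore continuous and bounded in $x$, with no dependence on $(\Pb,T)$. Uniform boundedness of the sequence $(\cRob(\cdot,\cdot,T))_{T\geq 1}$ is immediate, and equicontinuity in the joint argument $(x,\Pb)$ reduces to uniform continuity of $x\mapsto \max_i \loss(x,i)$ on $\cX$, which holds by the Heine--Cantor theorem. The partial derivative with respect to $\Pb$ is identically zero, so it is trivially uniformly bounded and equicontinuous, and the second regularity condition of Definition \ref{def: regular pred} is met for free.

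For the out-of-sample guarantee, I would exploit the deterministic inequality
\[
c(x, \Pb) = \Eb_{\Pb}(\loss(x, \xi)) \;\leq\; \max_{i \in \Sigma} \loss(x, i) \;=\; \cRob(x, \hat{\Pb}_T, T),
\]
valid for every $\Pb \in \cP$, $x\in\cX$ and $T\in\integ$. The disappointment event $\{c(x, \Pb) > \cRob(x, \hat{\Pb}_T, T)\}$ is therefore empty and carries probability exactly zero, so the associated logarithm is $-\infty$. Hence the required $\limsup$ is $-\infty \leq -1$, which in fact holds for \emph{any} divergent sequence $a_T$, a fortiori in the superexponential regime $a_T \gg T$.

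The main take-away is that the built-in conservativeness of $\cRob$ makes the out-of-sample inequality hold pointwise rather than only with high probability, so no large-deviation argument is needed here (unlike in the exponential regime). The only step that deserves any attention is the regularity check, and even there the constancy of $\cRob$ in $(\Pb,T)$ collapses the conditions of Definition \ref{def: regular pred} to elementary properties of a single continuous function on a compact set; so there is no substantive obstacle in either part.
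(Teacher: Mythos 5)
Your proof is correct and follows essentially the same approach as the paper: regularity is immediate from the constancy of $\cRob$ in $(\Pb, T)$, and feasibility follows because the disappointment event is deterministically empty, so its probability is exactly zero for every $T$. You merely spell out the regularity check in slightly more detail (Heine--Cantor, finite pointwise maximum) than the paper's terser statement.
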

\begin{proof}
The predictor $\cRob$ is constant in $\Pb$ and $T$. Therefore, the required regularity conditions follow immediately and $\cRob \in \cC$. Let us now verify the out-of-sample guarantee. Let $(x,\Pb) \in \cX \times \cP$. We have $\hat{c}_{R}(x,\Pb,T) = \sup_{\Pb' \in \cP} c(x,\Pb')$ for all $T\in \integ$, therefore, $\Pb^\infty ( c(x, \Pb) > \hat{c}_{R}(x, \hat{\Pb}_T,T)) =0$ for all $T\in \integ$.
\end{proof}

\begin{theorem}[Strong Optimality]\label{thm: overly strong predictor}
  Consider the superexponential regime in which $a_T \gg T$. The predictor $\cRob$ is feasible in the prediction problem \eqref{eq:optimal-predictor} and
for any predictor $\hat{c} \in \cC$ satisfying the out-of-sample guarantee \eqref{eq: out-of-sample ganrantee}, we have $\cRob \orderleq \hat{c}$. That is, $\cRob$ is a strong optimal predictor in the superexponential regime.
\end{theorem}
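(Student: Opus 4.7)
The plan is to mimic the contradiction template used for Theorem \ref{thm: strong opt exponential regime predictor.}, while exploiting the much stronger normalization $a_T\gg T$ to push through a rougher LDP lower bound than in the exponential regime. Feasibility is already provided by Proposition \ref{prop: feasibility or predictor}, so only the order statement $\cRob \orderleq \hat c$ has to be established. Suppose for contradiction that some $\hat c\in\cC$ satisfying \eqref{eq: out-of-sample ganrantee} violates the order. Exactly as in the proof of Theorem \ref{thm: strong opt exponential regime predictor.}, I extract $x_0\in\cX$, $\Pb_0\in\cPin$, $\varepsilon>0$ and an increasing sequence $(t_T)$ with
\[
M - c(x_0,\Pb_0) \;\geq\; (1+\varepsilon)\,|\hat c(x_0,\Pb_0,t_T) - c(x_0,\Pb_0)|,
\]
where $M:=\max_{i\in\Sigma}\ell(x_0,i)$. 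The convention $0/0=1$ rules out the degenerate case $M=c(x_0,\Pb_0)$, and rearranging yields a uniform gap $\hat c(x_0,\Pb_0,t_T)\leq M-\delta$ for some $\delta>0$ and every $T$.

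Next I would promote this pointwise bound to a neighbourhood bound by invoking the equicontinuity of $(\hat c(x_0,\cdot,T))_T$: there is an open $U\subset\cP$ containing $\Pb_0$ such that $\hat c(x_0,\Pb,t_T)\leq M - \delta/2$ for every $\Pb\in U$ and every $T$. Fix any $i^\star\in\argmax_{i\in\Sigma}\ell(x_0,i)$ and consider the interior mixture
\[
\Pb^\star(j) := (1-\lambda)\Pb_0(j) + \lambda\,\mathbb{1}_{\{j=i^\star\}}, \quad j\in\Sigma,
\]
with $\lambda\in(0,1)$ chosen close enough to $1$ so that $c(x_0,\Pb^\star) = c(x_0,\Pb_0) + \lambda\,(M-c(x_0,\Pb_0)) > M - \delta/2$. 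Because $\Pb_0\in\cPin$ and $\lambda<1$, we still have $\Pb^\star\in\cPin$, hence the relative entropy $I(\Pb_0,\Pb^\star)$ is a finite constant. By construction, on the event $\{\hat\Pb_{t_T}\in U\}$ we have $c(x_0,\Pb^\star) > \hat c(x_0,\hat\Pb_{t_T},t_T)$, i.e.\ a disappointment occurs under source $\Pb^\star$.

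The finishing blow is the LDP lower bound (Theorem \ref{thm: LDP finite space}) applied to the open set $U$ under source $\Pb^\star$: since $\Pb_0\in U^{\into}$,
\[
\liminf_{T\to\infty} \frac{1}{T}\log \Pb^{\star,\infty}\!\bigl(\hat\Pb_T\in U\bigr) \;\geq\; -\inf_{\Pb'\in U^{\into}} I(\Pb',\Pb^\star) \;\geq\; -I(\Pb_0,\Pb^\star).
\]
Normalising by $a_{t_T}$ instead of $t_T$ and using $t_T/a_{t_T}\to 0$, the finite rate $I(\Pb_0,\Pb^\star)$ is crushed by the superexponential normalisation, so
\[
\limsup_{T\to\infty} \frac{1}{a_{t_T}}\log \Pb^{\star,\infty}\!\bigl(c(x_0,\Pb^\star) > \hat c(x_0,\hat\Pb_{t_T},t_T)\bigr)\;\geq\; 0 \;>\; -1,
\]
contradicting the out-of-sample guarantee for $\hat c$ at $(x_0,\Pb^\star)$ and completing the proof.

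The only genuinely delicate point is the joint calibration of $\lambda$ and $U$: $\lambda$ must be close enough to $1$ so that $c(x_0,\Pb^\star)$ sits strictly above the equicontinuous upper envelope $M-\delta/2$ of $\hat c$ on $U$, yet $\lambda<1$ so that $\Pb^\star$ stays interior and $I(\Pb_0,\Pb^\star)<\infty$. Conceptually, what makes the superexponential regime qualitatively different from the exponential one is precisely that, whatever finite value this relative entropy takes, it is swamped by the normalisation $a_T\gg T$. Any predictor that underestimates $M$ even slightly at a single interior point is therefore doomed to violate \eqref{eq: out-of-sample ganrantee}, so the robust predictor that ignores the data and guards against every outcome is forced upon us.
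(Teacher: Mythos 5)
Your proof is correct and essentially reproduces the paper's self-contained argument in Appendix~\ref{Appendix: alternative proof of strong opt superexp pred}: contradiction, pointwise gap upgraded via equicontinuity to a neighbourhood bound, perturbation to an interior distribution $\Pb^\star\in\cPin$ where the disappointment event is forced to contain $\{\hat{\Pb}_T\in U\}$, an LDP lower bound under the shifted source giving a finite rate $I(\Pb_0,\Pb^\star)<\infty$, and then the superexponential normalisation $a_T\gg T$ sends the rescaled log-probability to $0>-1$. Your version is in fact slightly cleaner in two places: you extract the uniform gap $\hat{c}(x_0,\Pb_0,t_T)\leq M-\delta$ directly from the $(1+\varepsilon)$ inequality (the paper's Claim instead does a two-case analysis plus a further subsequence extraction), and you build the interior source explicitly as a $\lambda$-mixture of $\Pb_0$ with a Dirac at the worst scenario rather than invoking continuity of $c(x_0,\cdot)$ abstractly. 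For comparison, the paper's main-text proof takes a shorter route: any predictor feasible at superexponential speed is also feasible at every exponential speed $a_T\sim rT$, so Theorem~\ref{thm: strong opt exponential regime predictor.} gives $\cKL\orderleq\hat{c}$ for every $r>0$, and letting $r\to\infty$ recovers $\cRob\orderleq\hat{c}$; that reduction is more economical but relies on the exponential-regime theorem, whereas yours is self-contained.
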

We present two proofs of this theorem. The first one, which we sketch next and can be found in Appendix \ref{Appendix: proof of thm superexp pred}, builds upon the result of Theorem \ref{thm: strong opt exponential regime predictor.}. The second one, in Appendix \ref{Appendix: alternative proof of strong opt superexp pred}, is a self contained proof that does not require Theorem \ref{thm: strong opt exponential regime predictor.}.
\begin{proof}[Sketch of Proof]
Proposition \ref{prop: feasibility or predictor} ensures the feasibility of $\cRob$. Let $\hat{c}\in \cC$ be a feasible predictor in Problem \eqref{eq:optimal-predictor} with $a_T\gg T$. Notice that $\hat{c}$ is also feasible for $a_T \sim rT$, for all $r>0$. In fact, verifying a guarantee with a given speed implies verifying all weaker guarantees. Hence Theorem \ref{thm: strong opt exponential regime predictor.} implies that $\cKL \preceq_{\cC} \hat{c}$, for $\cKL$ with an ambiguity set with any $r>0$. The ambiguity set of $\cKL$, $\{\Pb'\in \cP\; : \; \D{\Pb} {\Pb'}\leq r\}$ ``converges" with $r\to \infty$ to $\cP$ which is the ambiguity set of $\cRob$. Hence, intuitively, taking $r\to \infty$ in the inequality $\cKL \preceq_{\cC} \hat{c}$ leads to $\cRob \preceq_{\cC} \hat{c}$. In order to make this line of argument rigorous a slightly more refined approach is required.
\end{proof}

The optimality of the robust predictor shows that for a predictor to satisfy superexponential out-of-sample performance guarantees, it necessarily needs to hedge against all possible distributions of the uncertainty. That is, it needs to take into account the worst-case cost in all scenarios which is independent of the data observed.

The presented optimality results in the exponential and superexponential regimes reveal an interesting insight. In both regimes, the optimal predictors are not consistent. That is, they do not converge to the true cost with increasing data size $T$. This shows that when exponential or stronger guarantees are imposed, consistent data-driven formulations are not possible. In other words, predictors must necessarily remain conservatively biased even when the amount of available data is large.

\subsection{The subexponential Regime ($a_T \ll T$)}\label{sec: subexp pred}

We study now the subexponential regime in which the desired out-of-sample guarantee speed is slower than exponential, $a_T\ll T$, i.e., $\lim_{T\to\infty} a_T/T = 0$.
Admissible predictors may suffer an out-of-sample disappointment probability which decays to zero slower than exponential in the number of samples $T$. While consistent predictors were not possible in the previous exponential and superexponential regimes, we will observe a phase transition when moving into the subexponential regime. Because of the weaker requirements imposed on the out-of-sample performance, we will show that consistent predictors are not only a possibility but a necessity for optimality. The next result indeed indicates that, in the subexponential regime, any weakly optimal predictor in Problem \eqref{eq:optimal-predictor} must necessarily be consistent.

\begin{proposition}[Consistency of weakly optimal predictors]\label{prop: consitency}
Consider the subexponential regime in which $a_T\ll T$. Every weakly optimal predictor in the optimal prediction Problem \eqref{eq:optimal-predictor} is consistent. That is, for every predictor $\hat{c}\in \cC$ verifying the out-of-sample guarantee \eqref{eq: out-of-sample ganrantee}, either $(\hat{c}(\cdot,\cdot,T))_{T\geq1}$ converges point-wise to $c(\cdot,\cdot)$, i.e.,
$$
\lim_{T\to\infty} \hat{c}(x,\Pb,T) = c(x,\Pb), \quad \forall x\in \cX, \; \forall \Pb \in \cP,
$$
or there exists a predictor $\hat{c}'\in \cC$ verifying the out-of-sample guarantee that is strictly preferred to $\hat{c}$, i.e., $\hat{c}' \orderleq \hat{c}$ and $\hat{c}' \not\equiv \hat{c}$.
\end{proposition}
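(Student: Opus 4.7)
The proof will proceed by contraposition. Assume $\hat c \in \cC$ verifies the out-of-sample guarantee \eqref{eq: out-of-sample ganrantee} but is not consistent, so there exist $(x_0, \Pb_0) \in \cX \times \cP$ and $\varepsilon > 0$ with $|\hat c(x_0, \Pb_0, T) - c(x_0, \Pb_0)| \geq \varepsilon$ along some subsequence. Since the equicontinuity of $(\hat c(\cdot,\cdot,T))_T$ transports the same gap to an interior perturbation of $\Pb_0$, I may assume $\Pb_0 \in \cPin$. My plan is to construct explicitly a regular predictor $\hat c' \in \cC$ which still satisfies \eqref{eq: out-of-sample ganrantee} and beats $\hat c$ in the partial order $\orderleq$.

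The first step is to pin down the sign of the non-consistency gap: I would show that $\liminf_{T\to\infty} \hat c(x, \Pb, T) \geq c(x, \Pb)$ for every $(x, \Pb)$. If this failed along some subsequence by $\varepsilon > 0$, equicontinuity would spread the shortfall to a full neighborhood $U$ of $\Pb$, and $\Pb^\infty(\hat{\Pb}_T \in U) \to 1$ from the weak law of large numbers would force the disappointment probability away from $0$, contradicting the feasibility constraint (which, since $a_T \to \infty$, forces disappointment probability to vanish). Combined with inconsistency, this yields $\eta > 0$ and a subsequence $(T_k)$ on which $\hat c(x_0, \Pb_0, T_k) - c(x_0, \Pb_0) \geq \eta$ for all large $k$.

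The construction is a smooth cap on the excess $\hat c - c$. Fix $\gamma \in (0, \eta/3)$ and a smooth nondecreasing concave function $\Phi : \Re \to \Re$ with $\Phi(t) = t$ for $t \leq \gamma$ and $\Phi(t) = 2\gamma$ for $t \geq 3\gamma$; concavity together with $\Phi(0) = 0$ yields $0 \leq \Phi(t) \leq t$ for $t \geq 0$. Set
\begin{equation*}
  \hat c'(x, \Pb, T) := c(x, \Pb) + \Phi\bigl(\hat c(x, \Pb, T) - c(x, \Pb)\bigr).
\end{equation*}
Since $c$ is linear (hence smooth) in $\Pb$ and $\Phi$ is $T$-independent and smooth with bounded derivatives, the chain rule transfers the regularity of $\hat c$ to $\hat c'$, so $\hat c' \in \cC$. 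For the order, $\Phi(t) \leq t$ gives $|\hat c' - c| \leq |\hat c - c|$ asymptotically (using the one-sided consistency $\hat c \geq c - o(1)$), so $\hat c' \orderleq \hat c$; at the singled-out point $(x_0, \Pb_0)$, along $(T_k)$ we are in the saturated regime $\hat c - c \geq \eta > 3\gamma$, so $\hat c' - c = 2\gamma$ and the ratio $|\hat c' - c|/|\hat c - c|$ is at most $2\gamma/\eta < 1$, showing $\hat c' \not\equiv \hat c$.

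The only delicate point, and where the subexponential assumption $a_T \ll T$ is decisive, is feasibility of $\hat c'$. Writing $\Delta_T := c(x, \Pb) - c(x, \hat{\Pb}_T)$ and $t_T := \hat c(x, \hat{\Pb}_T, T) - c(x, \hat{\Pb}_T)$, the disappointment event for $\hat c'$ is $\{\Delta_T > \Phi(t_T)\}$. Since $\Phi(t) = t$ for $t \leq \gamma$ and $\Phi(t) \geq \gamma$ for $t \geq \gamma$, this event is contained in $\{\Delta_T > t_T\} \cup \{\Delta_T > \gamma\}$. The first piece is the disappointment event of $\hat c$, whose probability is at most $e^{-a_T(1+o(1))}$ by feasibility of $\hat c$. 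The second piece is a classical empirical-mean large-deviation tail: by Hoeffding's inequality (or by Sanov's theorem together with compactness of $\cX \times \cP$), $\Pb^\infty(\Delta_T > \gamma) \leq \exp(-\kappa(\gamma) T)$ for some $\kappa(\gamma) > 0$ uniform in $(x, \Pb)$. In the regime $a_T \ll T$ this bound is $o(e^{-a_T})$, so the union bound preserves the required rate and $\hat c'$ is feasible. The main obstacle is precisely this step: the same construction would fail in the exponential or superexponential regimes, where $e^{-\kappa(\gamma)T}$ is not negligible compared to $e^{-a_T}$, consistently with the non-consistency of the optimal KL and robust predictors established in Sections~\ref{sec: pred exp} and~\ref{sec: superexp pred}.
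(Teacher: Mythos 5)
Your proof is correct and reaches the same conclusion as the paper's, but via a genuinely different perturbation. The paper subtracts a compactly supported bump $\eta(x,\Pb)\,\mathbf{1}_{T\in\mathcal{T}}$, localized in the decision-distribution ball around the witness $(x_0,\Pb_0)$ and only active along the subsequence $\mathcal{T}$ where the gap persists; feasibility is then established by splitting $\cX\times\cP$ into the ball where $\eta$ acts and its complement, with the ball case absorbed by the large-deviation probability that $\hat\Pb_T$ wanders far from $\Pb$. You instead apply a single $T$-independent concave cap $\Phi$ to the excess $\hat c - c$ globally, which is conceptually cleaner: it needs no choice of $\mathcal{T}$, no bump, and no spatial case analysis. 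The core mechanism is identical --- exploit the one-sided positivity $\liminf_T(\hat c - c)\geq 0$ (your preliminary step is the paper's Lemma~\ref{lemma: hat c > true c}, derived by you from a simpler LLN argument rather than the MDP, and equally valid), use the persistent gap to shave off a constant amount, and cover the resulting disappointment event by a union of $\hat c$'s disappointment event and an empirical-mean deviation event of probability $e^{-\kappa T}$, which is $o(e^{-a_T})$ precisely because $a_T\ll T$. Your decomposition $\{\Delta_T>\Phi(t_T)\}\subseteq\{\Delta_T>t_T\}\cup\{\Delta_T>\gamma\}$ packages the feasibility argument as a one-line union bound, whereas the paper's requires checking two spatial regimes separately. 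One small note: since you take $\Phi(t)=t$ for all $t\leq\gamma$, the bound $|\Phi(t)|\leq|t|$ holds pointwise for every $t$, so the asymptotic one-sidedness caveat you invoke when establishing $\hat c'\orderleq\hat c$ is actually unnecessary; the order holds exactly.
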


\begin{proof}[Sketch of Proof]
Suppose that there exists a predictor $\hat{c} \in \cC$ which satisfies the out-of-sample guarantee and does not converge point-wise to $c$.
We first show that non-consistency combined with equicontinuity (as $\hat{c} \in \cC$) implies that the predictor is larger than the true cost by a constant gap in an open ball for an infinite number of $T$: there exists $\epsilon>0$, a ball of center $(x_0,\Pb_0)$ and radius $\rho>0$, $\mathcal{B}((x_0,\Pb_0),\rho)$, and an infinite set $\mathcal{T} \subset \integ$ such that
\begin{equation*}
\forall T \in \mathcal{T}, \; 
\forall (x,\Pb) \in \mathcal{B}((x_0,\Pb_0),\rho), \quad
\hat{c}(x_0,\Pb_0,T) - c(x_0,\Pb_0) > \frac{\varepsilon}{4}
.
\end{equation*}
We next show, using the Moderate Deviation Principle, Theorem \ref{thm: MDP finite space}, that it suffices to have a constant gap with the true cost to verify subexponential guarantees. We seek, therefore, to perturb $\hat{c}$ in the ball $\mathcal{B}((x_0,\Pb_0),\rho)$ into a strictly preferred predictor while ensuring to maintain a constant gap with the true cost. We consider $\eta: \cX \times \cP \longrightarrow [0,\frac{\varepsilon}{8}]$ an infinitely differentiable bump function of support $\mathcal{B}((x_0,\Pb_0),\frac{\rho}{2})$ such that $\eta(x_0,\Pb_0) = \frac{\varepsilon}{8}$ and perturb $\hat{c}$ into $\hat{c}'$ defined as $\hat{c}'(x,\Pb,T) = \hat{c}(x,\Pb,T) - \eta(x,\Pb) \mathbf{1}_{T \in \mathcal{T}}$. Figure \ref{fig: proof of consistency} illustrates this construction. Note that it is crucial that $\eta$ is independent of $T$|and more precicely has variations not exploding with $T$|as otherwise $\hat{c}'$ might not preserve the necessary regularity of $\hat{c}$. Non-consistency, combined with equicontinuity provides a sufficient gap with the true cost $c$ to substract a bump function independent of $T$ while preserving a constant gap.
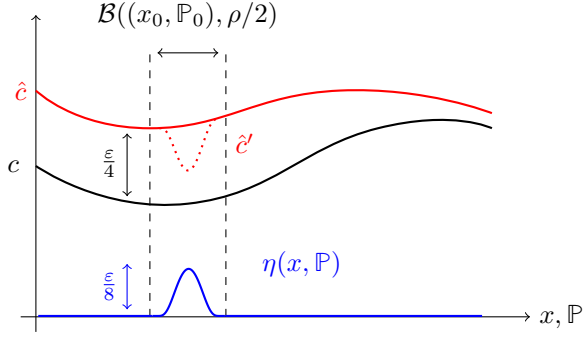
\begin{figure}
  \centering
  \begin{tikzpicture}[scale=1]

  \draw[->] (0.3,0) -- (7,0) node[right] {$x,\Pb$};
  \draw[->] (0.5,-0.2) -- (0.5,4);

  \draw[thick] plot [smooth,tension=0.9] coordinates{
    (0.5, 2.0)
    (2.5, 1.5)
    (5.0, 2.5)
    (6.5, 2.5)
  };
  \draw (0.5, 2.0) node[left] 
  {$c \;$};

  \draw[thick, color=red] plot [smooth,tension=0.9] coordinates{
    (0.5, 3)
    (2, 2.5)
    (4.5, 3)
    (6.5, 2.7)
  };
  \draw (0.5,3) node[left] 
  {${\color{red}\hat{c}}$};

  \draw (3, 2.3) node[right] 
  {${\color{red}\hat{c}'}$};

  \draw[dashed] (2,0) -- (2,3.5);
  \draw[dashed] (3,0) -- (3,3.5);
  \draw[<->] (2.1,3.5) -- (2.9,3.5);  
  \draw (2.5,4) node
  {$\mathcal{B}((x_0,\Pb_0),\rho/2)$};
  
  \draw[<->] (1.7,1.6) -- (1.7,2.43);  
  \draw (1.7,2.03) node[left]
  {$\frac{\varepsilon}{4}$};

  \draw[<->, color=blue] (1.7,0.1) -- (1.7,0.7);  
  \draw[color=blue] (1.7,0.4) node[left]
  {$\frac{\varepsilon}{8}$};

  \draw (4,0.7) node
  {${\color{blue}\eta(x,\Pb)}$};
  \begin{axis}[
    xmin = -0.1,
    xmax = 7,
    ymin = -0.01,
    ymax = 5, 
    xticklabels={,,},
    yticklabels={..},
    axis line style={draw=none},
    tick style={draw=none}
    ] 
    
    \plot[color= blue ,thick, opacity=1] plot[domain=2.005:2.995,samples=300, smooth]
    expression{30*exp(-1/(0.5^2-(x-2.5)^2))};
    \plot[color= blue ,thick]
    plot [smooth,tension=0.9] coordinates{
      (0.45,0)
      (2,0)};
    \plot[color= blue ,thick]
    plot [smooth,tension=0.9] coordinates{
      (3,0)
      (6.5,0)};
      
     \plot[color= red ,thick, dotted] plot[domain=2.1:2.9,samples=300, smooth]
    expression{
    2.19 + (x-2.1)*(2.305-2.19)/(2.9-2.1)
    - 30*exp(-1/(0.5^2-(x-2.5)^2))};
  \end{axis}

\end{tikzpicture}

  \caption{Illustration of the construction of $\hat{c}' \preceq_{\cC} \hat{c}$ and $\hat{c}' \not\equiv \hat{c}$ when $\hat{c}$ is not consistent.}
  \label{fig: proof of consistency}
\end{figure}
\end{proof}

\begin{remark}\label{rmk: consistency of estimator}
Notice that consistency in the sense of Proposition \ref{prop: consitency} implies consistency of the estimator $(\hat{c}(x,\hat{\Pb}_T,T))_{T\geq 1}$ of the true cost $c(x,\Pb) = \Eb_{\Pb}(\loss(x,\xi))$, for each $x\in \cX$ and $\Pb \in \cP$. In fact, point-wise convergence of $(\hat{c}(x,\cdot,T))_{T\geq 1}$ with the equicontinuity of $\hat{c}$ (as $\hat{c} \in \cC$) implies its uniform convergence. Combined with the almost sure convergence of $(\hat{\Pb}_T)_{T\geq 1}$ to $\Pb$, a consequence of the strong law large numbers, it implies almost sure convergence of $(\hat{c}(x,\hat{\Pb}_T,T))_{T\geq 1}$ to $c(x,\Pb)$.
\end{remark}

We will show that a strong optimal solution to the optimal prediction Problem \eqref{eq:optimal-predictor} exists also in the subexponential regime. Consider the \textit{sample variance penalization} (SVP) predictor defined as
\begin{equation}\label{eq: Robust predictor SVP.}
  \cSVP(x,{\mb P},T)
=
    c (x,\Pb)
    + 
    \sqrt{\frac{2a_T}{T}\Var_{\Pb}(\loss(x, \xi))},
    \quad \forall x \in \cX, \; \forall \Pb \in \cP,\; \forall T \in \integ,
\end{equation}
where $\Var_{\Pb}(\loss(x, \xi)) := \mathbb E_{\Pb}((\loss(x, \xi) - \mathbb E_{ \Pb}(\loss(x, \xi)) )^2)$ is the variance of the cost under distribution $\Pb$.
SVP predictors were considered previously by \cite{maurer2009empirical} as an alternative to naive ERM. Their consideration of SVP may be understood through the perspective of a classical ``bias-variance'' trade-off between minimizing an empirical risk and a variance-sensitive regularization term motivated by concentration inequalities such as the one found in Equation \eqref{eq: bound for SVP}. We will show that also in our framework the SVP predictor will play a protagonist role in the subexponential regime.

For all $\Pb \in \cP$ consider the local ellipsoid norm associated to $\Pb$
\begin{equation*}
    \|\Delta\|_{\Pb}^2 := \frac{1}{2} \sum_{i\in \Sigma} \frac{1}{\Pb(i)}\Delta_i^2, \quad \forall \Delta \in \Re^d.
\end{equation*}
Whereas the KL divergence $I(\Pb', \Pb)$ has been shown to be the right notion of distance in the exponential regime between distributions, we will indicate that the local ellipsoidal norm or $\chi^2$-divergence induces the right notion of distance $\Pb' \rightarrow \|\Pb'-\Pb\|_{\Pb}$ in the subexponential regime. We first indicate that the SVP predictor can alternatively be understood as a DRO predictor with ellipsoid uncertainty set. This alternative perspective will also shed light on key properties of SVP such as the tractability of its associated prescription problem which we discuss in Section \ref{sec: prescriptor subexp}.
We prove the next result in Appendix \ref{Appendix: robust-interpr} and show that it also holds in the general case of continuous distributions.

\begin{proposition}[DRO interpretation of SVP]\label{prop: robust predictor is DRO.}
For all $x, \Pb\in \mathcal{X}\times \cPin$ and $T \in \integ$ sufficiently large such that {\color{black}$\sqrt{2a_T/T} \leq \frac{\sqrt{\Var_{\Pb}(\loss(x,\xi))}}{|\loss(x,i) - c(x,\Pb)|}$} for all $i\in \Sigma$, the robust predictor \eqref{eq: Robust predictor SVP.} can be written as
\begin{equation}\label{eq: robust predictor DRO.}
  \cSVP(x,{\mb P},T) = \sup_{\Pb' \in \mathcal{P}}
  \left\{
    c(x,\Pb')\; : \; 
    \norm{\Pb'-\Pb}^2_\Pb
    \leq 
    \frac{a_T}{T}
  \right\},
\end{equation}
The supremum in \eqref{eq: robust predictor DRO.} is in fact attained, i.e.,
\[
\cSVP(x,{\mb P},T) = c(x, \Pb + \sqrt{2a_T/T}\varphi_x(\Pb)) \quad {\rm{for}} \quad \varphi_x(\Pb) := {(\loss(x,\cdot) \odot \Pb - c(x,\Pb)\Pb)}/{\sqrt{\Var_{\Pb}(\loss(x,\xi))}}.
\]
where $\loss(x,\cdot) = (\loss(x,1),\ldots,\loss(x,d))$, $\odot$ denotes the Hadamard product and when $\Var_{\Pb}(\loss(x,\xi))=0$, $\varphi_x(\Pb)$ is taken by convention as any vector in the boundary of\footnote{Take for example the vector $\sqrt{\frac{2\Pb(1)}{1-\Pb(1)}}(e_1 - \Pb)$ where $e_1= (1,0,\ldots,0)^\top$.} $\{\Delta \in \Re^d\; : \; e^\top \Delta =0, \; 2\|\Delta\|_{\Pb}^2\leq a_T/T\}$.
Moreover, $\varphi_x(\Pb)$ verifies the identities $\|\sqrt{2}\varphi_x(\Pb)\|_{\Pb} =1$ and $c(x,\varphi_x(\Pb)) = \sqrt{\Var_{\Pb}(\loss(x,\xi))}$ for all $\Pb \in \cPin$.
\end{proposition}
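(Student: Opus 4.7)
The plan is to reformulate the DRO problem on the right-hand side of \eqref{eq: robust predictor DRO.} as a constrained linear program over the perturbation $\Delta := \Pb' - \Pb$ and to solve a natural relaxation via Cauchy--Schwarz. Writing $\Pb' = \Pb + \Delta$, the objective becomes $c(x,\Pb') = c(x,\Pb) + \sum_i \ell(x,i)\Delta_i$, the simplex constraints reduce to $\sum_i \Delta_i = 0$ and $\Delta_i \geq -\Pb(i)$, and the ellipsoidal constraint becomes $\tfrac{1}{2}\sum_i \Delta_i^2/\Pb(i) \leq a_T/T$. I would first drop the non-negativity constraints $\Delta_i \geq -\Pb(i)$ to obtain a cleaner relaxation, solve that relaxation in closed form, and then verify that under the stated scaling of $a_T/T$ the relaxed maximiser is in fact feasible for the original problem.

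For the relaxation, since $\sum_i \Delta_i = 0$, the objective can be recentered as $\sum_i \ell(x,i)\Delta_i = \sum_i (\ell(x,i) - c(x,\Pb))\Delta_i$. Applying Cauchy--Schwarz with weights $\Pb(i)$ yields
\begin{equation*}
\sum_i (\ell(x,i)-c(x,\Pb))\Delta_i \leq \sqrt{\sum_i \Pb(i)(\ell(x,i)-c(x,\Pb))^2}\cdot\sqrt{\sum_i \Delta_i^2/\Pb(i)} \leq \sqrt{\Var_{\Pb}(\ell(x,\xi))}\sqrt{2 a_T/T},
\end{equation*}
with equality precisely when $\Delta_i$ is proportional to $\Pb(i)(\ell(x,i)-c(x,\Pb))$. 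Normalising so that the ellipsoidal constraint is saturated identifies the candidate maximiser $\Delta^\star = \sqrt{2 a_T/T}\,\varphi_x(\Pb)$, and the centering identity $\sum_i \Pb(i)(\ell(x,i)-c(x,\Pb))=0$ guarantees $\sum_i \Delta^\star_i = 0$. Plugging back into the objective recovers $c(x,\Pb) + \sqrt{2a_T/T\,\Var_\Pb(\ell(x,\xi))} = \cSVP(x,\Pb,T)$.

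The main obstacle is showing that $\Pb + \Delta^\star$ actually lies in the simplex, i.e., $\Pb(i) + \sqrt{2a_T/T}\,\varphi_x(\Pb)_i \geq 0$ for every $i$. The key lemma I would establish is the pointwise variance bound
\begin{equation*}
\Var_{\Pb}(\ell(x,\xi)) \;\geq\; \frac{\Pb(i)}{1-\Pb(i)}(\ell(x,i)-c(x,\Pb))^2 \quad \forall i \in \Sigma,
\end{equation*}
which follows by grouping the atom $i$ against the complementary mass $1-\Pb(i)$ of $\Sigma\setminus\{i\}$, noting that the two group means are exactly opposite relative to $c(x,\Pb)$, and applying Jensen's inequality to the complementary group. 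This yields the uniform envelope $|\varphi_x(\Pb)_i| \leq \sqrt{\Pb(i)(1-\Pb(i))}$, so feasibility reduces to $\sqrt{2a_T/T}\leq\sqrt{\Pb(i)/(1-\Pb(i))}$ for every $i$. A short verification confirms that the stated condition $\sqrt{2a_T/T}\leq \min_i\Pb(i)\cdot\min_i\min(\Pb(i),1-\Pb(i))$ suffices: the right-hand side is bounded above by $(\min_i\Pb(i))^2$, and the elementary inequality $p^2 \leq \sqrt{p/(1-p)}$ (equivalent to $p^3(1-p)\leq 1$) holds throughout $[0,1/2]$, which contains $\min_i\Pb(i)$ whenever $|\Sigma|\geq 2$.

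The two asserted identities $\|\sqrt{2}\varphi_x(\Pb)\|_\Pb = 1$ and $c(x,\varphi_x(\Pb))=\sqrt{\Var_{\Pb}(\ell(x,\xi))}$ are then a direct substitution exercise, the second telescoping to $[\mathbb{E}_\Pb(\ell(x,\xi)^2) - c(x,\Pb)^2]/\sqrt{\Var_{\Pb}(\ell(x,\xi))}$. The degenerate case $\Var_{\Pb}(\ell(x,\xi))=0$ is handled separately: on $\cPin$, zero variance forces $\ell(x,\cdot)$ to be constant $\Pb$-almost surely and hence everywhere on $\Sigma$, so $c(x,\Pb')=c(x,\Pb)$ for every $\Pb'\in\cP$ and \eqref{eq: robust predictor DRO.} is trivial, consistent with the proposition's convention for $\varphi_x(\Pb)$.
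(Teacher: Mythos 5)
Your proposal is correct, and it takes a genuinely different route from the paper. The paper proves the identity via a general Lemma on linear optimization over the intersection of an ellipsoid $\{\Delta : \Delta^\top A \Delta \leq \alpha\}$ with the simplex: it first shows that under the scaling condition the \emph{entire} ellipsoid restricted to the hyperplane $e^\top\Delta = 0$ is contained in the simplex (so the simplex constraints become inactive), and then solves the resulting problem by Lagrangian duality, explicitly computing the dual multipliers $\lambda$ and $\mu$. Your approach replaces both steps with more elementary tools. For the relaxation, you use recentering plus Cauchy--Schwarz instead of duality; the saturating direction $\Delta^\star_i \propto \Pb(i)(\ell(x,i)-c(x,\Pb))$ drops out immediately, and the hyperplane constraint is satisfied automatically since $\sum_i \Pb(i)(\ell(x,i)-c(x,\Pb)) = 0$. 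For feasibility, instead of containing the whole ellipsoid in the simplex, you verify that the specific maximiser $\Delta^\star$ lies in the simplex, using the pointwise variance bound $\Var_\Pb(\ell(x,\xi)) \geq \frac{\Pb(i)}{1-\Pb(i)}(\ell(x,i)-c(x,\Pb))^2$ obtained from Jensen's inequality applied to the complementary group. This bound, which yields the envelope $|\varphi_x(\Pb)_i| \leq \sqrt{\Pb(i)(1-\Pb(i))}$, is not in the paper and is an appealing ingredient in its own right. The trade-off is that the paper's whole-ellipsoid containment result is reused elsewhere (notably for the convexity result in Proposition~\ref{prop: convexity of robust pred}, which needs the DRO representation to hold as a supremum over a convex set contained in $\cP$), whereas your pointwise feasibility check is tailored to this one proposition; on the other hand, your argument only requires $\sqrt{2a_T/T}\leq\min_i\sqrt{\Pb(i)/(1-\Pb(i))}$, a genuinely weaker sufficient condition that the stated hypothesis comfortably implies. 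The verification of the two identities and the handling of the degenerate zero-variance case are both correct.
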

\begin{proof}[Sketch of Proof]
We first show that the proposed solution of the supremum has cost equal to the SVP formulation $\cSVP$. We then show that any feasible distribution on the supremum problem has cost lower than SVP. The condition on $a_T$ ensures that the proposed solution verifies $\Pb(i) + \sqrt{2a_T/T}\varphi_x(\Pb)(i) \geq 0$ for all $i$. As $\Pb + \sqrt{2a_T/T}\varphi_x(\Pb)$ sums to 1, this implies that $\Pb + \sqrt{2a_T/T}\varphi_x(\Pb) \in \cP$.
\end{proof}

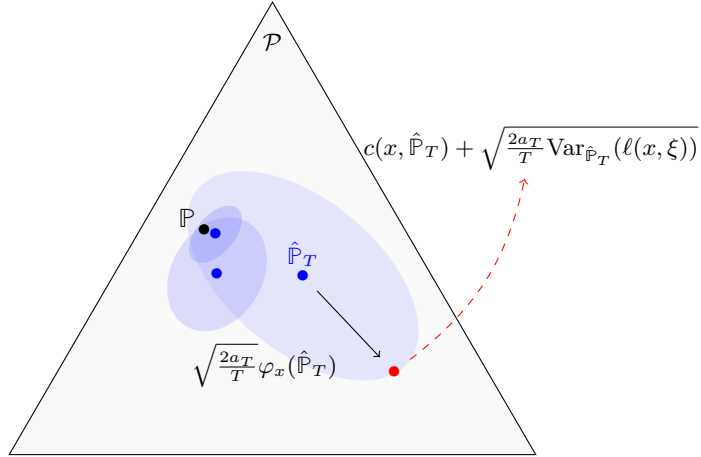
\begin{figure}
    \centering
    \begin{tikzpicture}
  \node[isosceles triangle,
  isosceles triangle apex angle=60,
  draw,
  fill=gray!5,
  rotate=90,
  minimum size =6cm] at (0,0){};
  \draw (0,3.5) node {\small$\cP$};
  
  \fill[blue!30,rotate around={-40:(0.4,0.4)}, opacity = 0.3]
  (0.4,0.4) ellipse (1.8 and 1);
  \node[blue] (PT) at (0.4,0.4) {\textbullet};
  \draw[blue] (0.4,0.4) node[above] {\small$\hat{\Pb}_{T}$};

  \fill[blue!40,rotate around={-120:(-0.73,0.42)}, opacity = 0.3]
  (-0.73,0.42) ellipse (0.8 and 0.6);
  \draw[blue] (-0.73,0.42) node {\textbullet};
  
  \fill[blue!50,rotate around={-130:(-0.75,0.95)}, opacity = 0.3]
  (-0.75,0.95) ellipse (0.45 and 0.25);
  \draw[blue] (-0.75,0.95) node {\textbullet};

  \node (P) at (-0.9,1) {\textbullet};
  \draw (-0.9,1) node[label={[label distance=-0.2cm]100:$\small\Pb$}]{};
  \node[red] (Sup) at (1.6,-0.88) {\textbullet};
  
  \path[->] (PT) edge   node[label=-100:\small$ \sqrt{\frac{2a_T}{T}}\varphi_x(\hat{\Pb}_T)$]{} (Sup) ;
  
  \draw (3.4,2.1) node (sup cost) {\small$c(x,\hat{\Pb}_T) + \sqrt{\frac{2a_T}{T} \Var_{\hat{\Pb}_T}(\loss(x,\xi))}$};
  
  \path[->] (Sup)  edge[bend right = 20, dashed,color=red] node[]{} (sup cost) ;
\end{tikzpicture}

    \caption{Illustration of the DRO expression of the robust predictor in the subexponential regime. The shrinking ellipsoids around the blue points represent the ambiguity set of \eqref{eq: robust predictor DRO.} around $\hat{\Pb}_T$ for increasing values of $T$. The arrow gives the cost at the pointed distribution, attaining the maximum cost in the ellipsoid.}
    \label{fig: robust predictor.}
\end{figure}

Intuitively, the DRO perspective shows that given the observed empirical distribution $\hat{\Pb}_T$, the SVP predictor guards against all distribution in the ellipsoidal ambiguity set $\{ \Pb'\in \cP \; : \; \|\Pb'-\hat{\Pb}_T\|^2_{\hat{\Pb}_T}
\leq 
a_T/T \}$ when the imposed out-of-sample disappointment \eqref{eq: out-of-sample proba} is of order $e^{-a_T}$.
Proposition \ref{prop: robust predictor is DRO.} identifies the distribution $\hat{\Pb}_T + \sqrt{2a_T/T}\varphi_x(\hat{\Pb}_T)$ as the worst-case probability distribution which is still sufficiently likely to have generated the data.
Figure \ref{fig: robust predictor.} illustrates this perspective. {\color{black}We note finally that the connection between $f$-divergence DRO and variance regularization has been well established in prior work \cite{gotoh2021calibration, gotoh2018robust, lam2019recovering, duchi2016variance, duchi2021statistics}.}

We now show that the SVP predictor $\cSVP$ is strong optimal in the optimal prediction Problem \eqref{eq:optimal-predictor} in the subexponential regime. We first show that the predictor $\cSVP$ has the desired regularities of predictors (Definition \ref{def: predictors.}). We defer the proof to Appendix \ref{Appendix: proof of SVP regularity}.

\begin{proposition}[Regularity of $\cSVP$]\label{prop: regularity of SVP}
  The predictor $\cSVP \in \cC$ is regular, i.e., $\cSVP$ verifies the regularity conditions of predictors (Definition \ref{def: predictors.}).
\end{proposition}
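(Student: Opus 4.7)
The plan is to verify the two conditions of Definition \ref{def: regular pred} in turn for the sequence $\cSVP(\cdot,\cdot,T)=c(\cdot,\cdot)+\sqrt{\alpha_T\,V(\cdot,\cdot)}$, where I abbreviate $\alpha_T:=2a_T/T$ and $V(x,\Pb):=\Var_\Pb(\loss(x,\xi))$. The recurring ingredients are: $\loss$ is continuous in $x$ on the compact set $\cX$ and $\Sigma$ is finite, so $\loss$ is uniformly bounded by some $L$; both $c$ and $V$ are polynomial in $\Pb$ with coefficients continuous in $x$ on the compact product $\cX\times\cP$, so they are themselves bounded and uniformly continuous there; and in the subexponential regime $\alpha_T\to 0$, in particular $\sup_T\alpha_T\leq M$ for some finite $M$.

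Uniform boundedness of $(\cSVP(\cdot,\cdot,T))_T$ is then immediate from $|\cSVP|\leq L+L\sqrt{M}$, uniform in $T$. For equicontinuity I would exploit the elementary inequality $|\sqrt a-\sqrt b|\leq\sqrt{|a-b|}$ for $a,b\geq 0$ to pass from the uniform modulus of continuity of $V$ on the compact $\cX\times\cP$ to one for $\sqrt{\alpha_T V}$ via $|\sqrt{\alpha_T V_1}-\sqrt{\alpha_T V_2}|\leq\sqrt{M\,|V_1-V_2|}$, valid for every $T$ simultaneously. The $c$-term is $T$-free and contributes its own modulus trivially.

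For the second regularity condition I would first settle differentiability in $\Pb$ on $\cPin$. Since $c(x,\cdot)$ is linear with gradient $\loss(x,\cdot)$, only $\sqrt{V(x,\cdot)}$ needs attention. A direct calculation (using $\sum_j\Pb(j)=1$ to reduce to intrinsic coordinates) gives $\partial V/\partial\Pb(j)=(\loss(x,j)-c(x,\Pb))^2$, and on $\cPin$ one has the dichotomy: either $\loss(x,\cdot)$ is constant over $\Sigma$, in which case $V(x,\cdot)\equiv 0$ and $\sqrt{V(x,\cdot)}$ is trivially smooth with zero gradient; or else $V(x,\Pb)>0$ everywhere on $\cPin$ (since $\Pb$ has full support and the mean cannot agree with every $\loss(x,i)$ when $\loss(x,\cdot)$ is non-constant), and $\sqrt V$ is smooth with gradient $\nabla_\Pb V/(2\sqrt V)$.

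The main obstacle is the uniform boundedness and equicontinuity of the sequence of derivatives, because the direct formula $\nabla_\Pb\sqrt V=\nabla_\Pb V/(2\sqrt V)$ is delicate where $V$ is small. I would exploit the DRO representation of Proposition \ref{prop: robust predictor is DRO.} together with its identities $c(x,\varphi_x(\Pb))=\sqrt{V(x,\Pb)}$ and $\|\sqrt{2}\varphi_x(\Pb)\|_\Pb=1$ to rewrite $\cSVP(x,\Pb,T)=c(x,\Pb)+\sqrt{\alpha_T}\,c(x,\varphi_x(\Pb))$, so that $\nabla_\Pb\cSVP=\loss(x,\cdot)+\sqrt{\alpha_T}\,\nabla_\Pb c(x,\varphi_x(\Pb))$. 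The first summand is $T$-free, bounded by $L$, and uniformly continuous. The second summand carries the damping factor $\sqrt{\alpha_T}\leq\sqrt{M}$, which reduces the claim to a single $T$-free analysis of the smooth map $(x,\Pb)\mapsto\nabla_\Pb c(x,\varphi_x(\Pb))$ on $\cX\times\cPin$: on each closed level set $\{V\geq\varepsilon\}$ a compactness argument using the explicit formula for $\varphi_x$ yields the bound and modulus of continuity, while the complementary region is dealt with by the vanishing branch of the dichotomy above, the two estimates gluing together into the desired uniform control.
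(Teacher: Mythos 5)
Your outline tracks the paper's proof for uniform boundedness and for pointwise differentiability, and your equicontinuity argument is in fact cleaner: the paper splits the index set at a threshold $T_0$ chosen so that $\sqrt{a_T/T}\le\varepsilon/(4K)$ and handles the finitely many smaller $T$ by ordinary continuity, whereas your inequality $|\sqrt a-\sqrt b|\le\sqrt{|a-b|}$ produces a modulus of continuity for $\sqrt{\alpha_T V}$ valid for all $T$ simultaneously. The dichotomy you record (for fixed $x$, either $\loss(x,\cdot)$ is constant on $\Sigma$, hence $V(x,\cdot)\equiv 0$, or $V(x,\cdot)>0$ on all of $\cPin$) is a sharper statement of the differentiability step than the paper's one-line assertion.

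Where the proposal breaks down is the last step, and the gap is substantive. Passing to the DRO representation buys you nothing: by the identity $c(x,\varphi_x(\Pb))=\sqrt{\Var_\Pb(\loss(x,\xi))}$ of Proposition \ref{prop: robust predictor is DRO.}, the map $\varphi_x$ already carries the factor $1/\sqrt{V}$, so $\nabla_\Pb\, c(x,\varphi_x(\Pb))$ is nothing but $\nabla_\Pb\sqrt{V}=\nabla V/(2\sqrt V)$ in disguise. Your proposed gluing of $\{V\ge\varepsilon\}$ with its complement then fails, because the ``vanishing branch'' of your dichotomy covers only the $x$'s for which $\loss(x,\cdot)$ is literally constant and $V(x,\cdot)\equiv 0$; it does not touch the region of $\{V<\varepsilon\}$ where $\loss(x,\cdot)$ is non-constant but $\Pb$ approaches $\partial\cP$. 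On that region $\nabla\sqrt V$ genuinely diverges: with $\Sigma=\{1,2\}$, $\loss_1\neq\loss_2$, and $\Pb=(1-p,p)$, one has $\sqrt V=\sqrt{p(1-p)}\,|\loss_1-\loss_2|$, whose $p$-derivative is $\tfrac{1-2p}{2\sqrt{p(1-p)}}|\loss_1-\loss_2|\to\pm\infty$ as $p\to 0,1$. So the two pieces do not glue and the claimed uniform control over the derivatives cannot hold on all of $\cX\times\cPin$. (For what it is worth, the paper's own Appendix proof has the same hole: it asserts that $\Pb\mapsto\nabla\sqrt{\Var_\Pb(\loss(x,\xi))}$ is continuous, hence bounded, on the compact $\cP$, which the example above contradicts.)
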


We next show that SVP satisfied the desired out-of-sample guarantee \eqref{eq: out-of-sample ganrantee}. We note that existing finite sample guarantees, c.f., \cite[Theorem 1]{maurer2009empirical} and \cite[Theorem 4]{audibert2009exploration} and Proposition \ref{prop: predictor finite sample guarantees.} in this paper, do not directly imply the desired asymptotic guarantee \eqref{eq: out-of-sample ganrantee} for SVP. A finer analysis will hence be required. Interestingly, our proof uses completely different techniques than \cite{maurer2009empirical} and \cite{audibert2009exploration}.
While their proofs rely on concentration inequalities on the empirical standard deviation, our proof uses the DRO form \eqref{eq: robust predictor DRO.} of SVP combined with the Moderate Deviation Principle (Theorem \ref{thm: MDP finite space}) from Large Deviation Theory.  

\begin{proposition}[$\cSVP$ out-of-sample guarantees]\label{prop: feasibility of robust predictor}
The predictor $\cSVP\in \cC$ verifies the out-of-sample guarantee \eqref{eq: out-of-sample ganrantee} when $a_T \ll T$.
\end{proposition}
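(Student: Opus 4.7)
The argument combines the distributionally robust reformulation of $\cSVP$ from Proposition~\ref{prop: robust predictor is DRO.} with the Moderate Deviation Principle (Theorem~\ref{thm: MDP finite space}) at speed $a_T$. The key observation is that the sublevel sets $\{\Pb' : \|\Pb'-\hat{\Pb}_T\|_{\hat{\Pb}_T}^2 \leq a_T/T\}$ defining the SVP ambiguity set and the rate function $I(\nu)=\|\nu\|_\Pb^2$ governing the moderate deviations of $\hat{\Pb}_T-\Pb$ on the natural scale $\sqrt{a_T/T}$ are the same quadratic form, and this matching is exactly what forces the sharp constant $-1$ in \eqref{eq: out-of-sample ganrantee}.

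\textbf{Reduction to a deviation event.} Fix $x\in\cX$ and $\Pb\in\cPin$; the boundary case $\Pb\in\cP\setminus\cPin$ reduces to this one by restricting to the support of $\Pb$ (on which $\hat{\Pb}_T$ lies almost surely, and to which $\cSVP$ is insensitive), and the degenerate case $\Var_\Pb(\loss(x,\xi))=0$ gives disappointment probability exactly zero. Fix a $\delta\in(0,1)$ and choose an open neighborhood $V\subset\cPin$ of $\Pb$ that is bounded away from the boundary of $\cP$ and on which $\Pb''(i)\geq (1-\delta)\Pb(i)$ for every $\Pb''\in V$ and $i\in\Sigma$. For $T$ large enough that $\sqrt{2a_T/T}$ is smaller than the minimum of $\Pb''(i)\min(\Pb''(i),1-\Pb''(i))$ over $\Pb''\in V$, Proposition~\ref{prop: robust predictor is DRO.} yields on the event $\{\hat{\Pb}_T\in V\}$ the DRO representation $\cSVP(x,\hat{\Pb}_T,T)=\sup\{c(x,\Pb'):\|\Pb'-\hat{\Pb}_T\|_{\hat{\Pb}_T}^2\leq a_T/T\}$, and hence
\begin{equation*}
  \{c(x,\Pb)>\cSVP(x,\hat{\Pb}_T,T)\}\cap\{\hat{\Pb}_T\in V\}\subseteq\{\|\Pb-\hat{\Pb}_T\|_{\hat{\Pb}_T}^2>a_T/T\}.
\end{equation*}
The complementary event $\{\hat{\Pb}_T\notin V\}$ has probability decaying like $e^{-cT}$ by Sanov's theorem (Theorem~\ref{thm: LDP finite space}), which is $o(e^{-a_T})$ because $a_T\ll T$ and therefore negligible on the $1/a_T$ logarithmic scale.

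\textbf{Switching norms and applying the MDP.} The choice of $V$ implies $\|\Delta\|_{\Pb''}^2\leq (1-\delta)^{-1}\|\Delta\|_\Pb^2$ for all $\Pb''\in V$, so on $\{\hat{\Pb}_T\in V\}$ the event $\{\|\Pb-\hat{\Pb}_T\|_{\hat{\Pb}_T}^2>a_T/T\}$ is contained in $\{\|\Pb-\hat{\Pb}_T\|_\Pb^2>(1-\delta)a_T/T\}=\{\|\sqrt{T/a_T}(\hat{\Pb}_T-\Pb)\|_\Pb^2>1-\delta\}$. By Theorem~\ref{thm: MDP finite space}, the rescaled sequence $\sqrt{T/a_T}(\hat{\Pb}_T-\Pb)$ satisfies a large deviation principle at speed $a_T$ with good rate function $I(\nu)=\|\nu\|_\Pb^2$ on $\{\nu:e^\top\nu=0\}$. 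The upper bound applied to the closed set $\{\nu:\|\nu\|_\Pb^2\geq 1-\delta\}$ therefore gives
\begin{equation*}
  \limsup_{T\to\infty}\frac{1}{a_T}\log\Pb^\infty\left(\|\sqrt{T/a_T}(\hat{\Pb}_T-\Pb)\|_\Pb^2\geq 1-\delta\right)\leq -\inf_{\|\nu\|_\Pb^2\geq 1-\delta}\|\nu\|_\Pb^2=-(1-\delta).
\end{equation*}
Combining this estimate with the Sanov bound on $\{\hat{\Pb}_T\notin V\}$ yields $\limsup_T(1/a_T)\log\Pb^\infty(c(x,\Pb)>\cSVP(x,\hat{\Pb}_T,T))\leq -(1-\delta)$, and sending $\delta\downarrow 0$ produces the required bound $-1$.

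\textbf{Main obstacle.} The delicate step is the transition between the data-dependent ambiguity norm $\|\cdot\|_{\hat{\Pb}_T}$ and the deterministic MDP norm $\|\cdot\|_\Pb$; it is what forces the localization of $\hat{\Pb}_T$ in a neighborhood of $\Pb$, introduces the $(1-\delta)$ slack, and must be paid for by the Sanov estimate on the complement. The fact that Sanov operates at the much faster scale $T\gg a_T$ is precisely what allows the slack to be sent to zero harmlessly and the sharp rate $-1$ to be recovered in the limit.
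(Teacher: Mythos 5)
Your proof is correct, and it takes a genuinely different technical route from the paper's proof even though the two share the same skeleton: both first use Proposition~\ref{prop: robust predictor is DRO.} to reduce the disappointment event to the deviation event $\{\|\Pb-\hat{\Pb}_T\|_{\hat{\Pb}_T}^2 > a_T/T\}$, and both finish with the MDP upper bound (Theorem~\ref{thm: MDP finite space}) applied to a set of the form $\{\nu \in \cP_{0,\infty} : \|\nu\|_\Pb^2 \geq 1-\text{slack}\}$. Where you differ from the paper is in how the data-dependence of the ellipsoidal norm $\|\cdot\|_{\hat{\Pb}_T}$ is replaced by the fixed norm $\|\cdot\|_\Pb$ before the MDP can bite. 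The paper absorbs the data-dependence into the $T$-dependent rescaled set $\Gamma_T := \sqrt{T/a_T}(E_T^c - \Pb)$ and proves a technical sandwich lemma (Claim~\ref{claim: set inclusions for Gammas}) showing $\sqrt{1+\varepsilon_T}\Gamma \subset \Gamma_T \subset \sqrt{1-\varepsilon_T}\Gamma$ with $\varepsilon_T \downarrow 0$, which requires bounding elements of $\Gamma_T$ and controlling the truncation $\sqrt{T/a_T}\cP_0(\Pb)$. You instead localize $\hat{\Pb}_T$ to a fixed open set $V \ni \Pb$ where the norm comparison $\|\Delta\|_{\Pb''}^2 \leq (1-\delta)^{-1}\|\Delta\|_\Pb^2$ is immediate, and you pay for the localization with a Sanov bound at speed $T \gg a_T$ on the complement $\{\hat{\Pb}_T \notin V\}$, which the paper's proof of this proposition does not need to invoke. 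The trade-off is a cleaner, lower-tech MDP application on your side (at the cost of an auxiliary LDP argument and a $\delta \downarrow 0$ limit at the end) versus the paper's more self-contained but more delicate convergence-of-ellipsoids lemma. Your localization is in fact implicit in the paper's argument as well, since the deterministic $T_0$ there can only be chosen because the event $\hat{\Pb}_T \in E_T$ forces $\hat{\Pb}_T$ into a shrinking neighborhood of $\Pb$; you simply make this explicit and decouple it from the deviation set.
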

\begin{proof}
  Let $\Pb \in \cPin$ and $x \in \mathcal{X}$. Let $T_0 \in \integ$ be such that for all $T\geq T_0$ the DRO form \eqref{eq: robust predictor DRO.} of $\cSVP$ holds. Observe that \eqref{eq: robust predictor DRO.} implies that for all $T\geq T_0$
  \[
    \hat{\Pb}_T \in E_T :=\set{\Pb' \in \mathcal{P}}{\norm{\Pb-\Pb'}^2_{\Pb'}\leq \frac{a_T}{T}} \implies c(x,\Pb) \leq \cSVP(x,\hat{\Pb}_T,T).
  \]
  Hence, we have
  \begin{align*}
    \limsup_{T\to\infty}
    \frac{1}{a_T} \log \Pb^{\infty}
        \left(
                c(x,\Pb) > \cSVP(x,\hat{\Pb}_T,T)
        \right)
    \leq& 
        \limsup_{T\to\infty}
    \frac{1}{a_T} \log \Pb^{\infty}
    \left(
      \hat{\Pb}_T \not\in E_T
          \right)\\
    \leq & \limsup_{T\to\infty}
    \frac{1}{a_T} \log \Pb^{\infty}
    \left(
      \hat{\Pb}_T - \Pb \in \sqrt{ \frac{a_T}{T}}\Gamma_T
          \right)           
  \end{align*}
  where $\Gamma_T := \sqrt{T/a_T} (E_T^c -\Pb) = \{\Delta \in \sqrt{T/a_T} \mathcal{P}_0(\Pb) : \norm{\Delta}^2_{\Pb+\Delta\sqrt{a_T/T}}>1\}$, $\mathcal{P}_0(\Pb) = \set{\Pb - \Pb'}{\Pb' \in \mathcal{P}}$.
  The goal now is to apply an MDP (Theorem \ref{thm: MDP finite space}). In other to do that, we need to analyse the asymptotic behavior of the sequence of sets $\Gamma_T$. 
  We show in the following claim that it converges in a precise sense to $\Gamma := \set{\Delta \in \mathcal{P}_{0,\infty}}{\norm{\Delta}^2_{\Pb} > 1}$ where $\mathcal{P}_{0,\infty}:= \{ \Delta \in \Re^d\; : \; e^\top \Delta =0 \}$ is the hyperplane containing differences of distributions. 

\begin{claim}\label{claim: set inclusions for Gammas}
There exists a sequences $(\varepsilon_T)_{T\geq 1}\in \Re^{\integ}_+$ decreasing to $0$ and $T_1\in \integ$ such that $\sqrt{1+\varepsilon_T}\Gamma \subset \Gamma_T \subset \sqrt{1-\varepsilon_T}\Gamma$ for all $T \geq T_1$.
\end{claim}
\begin{proof}
See Appendix \ref{Appendix: Convergence of Ellipsoids}.
\end{proof}

  Let $(\varepsilon_T)_{T\geq 1}$ and $T_1$ be given by Claim \ref{claim: set inclusions for Gammas}. We have $\Gamma_T \subset \sqrt{1-\varepsilon_T}\Gamma$ for all $T \geq \max(T_0,T_1)$. 
  Hence, using the MDP, Theorem \ref{thm: MDP finite space}, we have for all $t \geq \max(T_0,T_1)$
  \begin{align*}
    \limsup_{T\to\infty}
      \frac{1}{a_T} \log \Pb^{\infty}
      \left(
      c(x,\Pb) > \cSVP(x,\hat{\Pb}_T,T)
      \right)
    \leq & \limsup_{T\to\infty}
    \frac{1}{a_T} \log \Pb^{\infty}
    \left(
      \hat{\Pb}_T - \Pb \in \sqrt{\frac{a_T}{T}}\Gamma_{t}
          \right) \\
    \leq& -\inf_{\Delta\in \bar\Gamma_{t}}\norm{\Delta}^2_{\Pb} 
    = -(1-\varepsilon_t) \inf_{\Delta\in \bar\Gamma}\norm{\Delta}^2_{\Pb}
    = -(1-\varepsilon_t)
  \end{align*}
  Hence,
    \(
    \limsup_{T\to\infty}
      \frac{1}{a_T} \log \Pb^{\infty}
      \left(
      c(x,\Pb) > \cSVP(x,\hat{\Pb}_T,T)
      \right)
    \leq -1.
  \)
\end{proof}

We now prove that the SVP predictor \eqref{eq: Robust predictor SVP.} is preferred to any other predictor verifying the out-of-sample guarantee in the subexponential regime establishing therefore strong optimality.

\begin{theorem}[Strong Optimality of $\cSVP$]\label{thm: strong optimality subexp}
Consider the subexponential regime in which $a_T\ll T$. The predictor $\cSVP$ is feasible in the prediction problem \eqref{eq:optimal-predictor} and
for any predictor $\hat{c} \in \cC$ satisfying the out-of-sample guarantee \eqref{eq: out-of-sample ganrantee}, we have $\cSVP \orderleq \hat{c}$. That is, $\cSVP$ is a strong optimal predictor in the subexponential regime.
\end{theorem}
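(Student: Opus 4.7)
The plan is to prove strong optimality by contradiction, combining a compactness argument on the contradicting subsequence with the Moderate Deviation Principle (MDP) of Theorem~\ref{thm: MDP finite space}. Feasibility of $\cSVP$ is already contained in Proposition~\ref{prop: feasibility of robust predictor}. Suppose then that there exists a feasible $\hat c\in\cC$ with $\cSVP \not\preceq_{\cC}\hat c$; as in the exponential-regime proof of Theorem~\ref{thm: strong opt exponential regime predictor.}, this yields $(x_0,\Pb_0)\in\cX\times\cPin$, some $\varepsilon>0$, and an increasing sequence $(t_T)$ along which $|\cSVP(x_0,\Pb_0,t_T)-c(x_0,\Pb_0)|\geq(1+\varepsilon)|\hat c(x_0,\Pb_0,t_T)-c(x_0,\Pb_0)|$. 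Set $\sigma_0^2:=\Var_{\Pb_0}(\ell(x_0,\xi))$ and assume $\sigma_0>0$ (the case $\sigma_0=0$ is immediate since then $\cSVP(x_0,\Pb_0,T)=c(x_0,\Pb_0)$). Writing $\alpha_T:=\hat c(x_0,\Pb_0,T)-c(x_0,\Pb_0)$ and $\beta_T:=-\alpha_T/\sqrt{a_T/T}$, and using $\cSVP(x_0,\Pb_0,T)-c(x_0,\Pb_0)=\sqrt{2a_T/T}\,\sigma_0$, the hypothesis becomes $|\beta_{t_T}|\leq\sqrt{2}\sigma_0/(1+\varepsilon)$, which in particular forces $\hat c(x_0,\Pb_0,t_T)\to c(x_0,\Pb_0)$.

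The first main step is to extract a limit of $\hat c(x_0,\cdot,t_T)$ together with its $\Pb$-gradient. By the regularity of $\hat c\in\cC$ (Definition~\ref{def: regular pred}) and Arzel\`a--Ascoli, after passing to a further subsequence we may assume $\hat c(x_0,\cdot,t_T)\to\phi$ and $\nabla_{\Pb}\hat c(x_0,\cdot,t_T)\to\nabla\phi$ uniformly on a closed neighborhood $V\subset\cPin$ of $\Pb_0$, and $\beta_{t_T}\to\beta_\infty$ with $|\beta_\infty|\leq\sqrt{2}\sigma_0/(1+\varepsilon)$. The feasibility of $\hat c$ at every $\Pb\in V$ then forces $\phi\geq c(x_0,\cdot)$ on $V$: if instead $\phi(\Pb_1)<c(x_0,\Pb_1)$ for some $\Pb_1\in V$, uniform convergence and equicontinuity would yield a neighborhood of $\Pb_1$ and a fixed gap on which $\hat c(x_0,\cdot,t_T)<c(x_0,\cdot)$, and then a.s.\ convergence $\hat{\Pb}_{t_T}\to\Pb_1$ under $\Pb_1^\infty$ would send the disappointment probability at $(x_0,\Pb_1)$ to $1$, contradicting feasibility (this is the same gap-plus-equicontinuity argument used in Proposition~\ref{prop: consitency}). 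Combined with $\phi(\Pb_0)=c(x_0,\Pb_0)$, the point $\Pb_0$ is a local minimizer of $\phi-c(x_0,\cdot)$ on $\cPin$, so $\nabla\phi(\Pb_0)-\ell(x_0,\cdot)$ lies in the normal cone of the simplex at $\Pb_0$, which is $\mathrm{span}(e)$; hence $\nabla\phi(\Pb_0)=\ell(x_0,\cdot)+\mu e$ for some $\mu\in\Re$, and therefore $\Var_{\Pb_0}(\nabla\phi(\Pb_0))=\sigma_0^2$.

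The second main step is an MDP lower bound at $\Pb_0$. A first-order Taylor expansion of $\hat c(x_0,\cdot,t_T)$ around $\Pb_0$, with remainder $o(\|\hat{\Pb}_{t_T}-\Pb_0\|)$ uniform in $T$ by equicontinuity of gradients, rewrites the disappointment event $\{\hat c(x_0,\hat{\Pb}_{t_T},t_T)<c(x_0,\Pb_0)\}$ in the rescaled variable $\Delta:=(\hat{\Pb}_{t_T}-\Pb_0)/\sqrt{a_{t_T}/t_T}$ as a superset of $\{g_{t_T}\cdot\Delta\leq\beta_{t_T}-\eta\}\cap\mathcal{P}_{0,\infty}\cap\{\|\Delta\|\leq R\}$ for arbitrary $\eta>0$ and $R$ large, where $g_{t_T}:=\nabla_{\Pb}\hat c(x_0,\Pb_0,t_T)\to\ell(x_0,\cdot)+\mu e$. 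The MDP of Theorem~\ref{thm: MDP finite space} together with a Lagrange-multiplier computation for $\inf\{\|\Delta\|_{\Pb_0}^2:(\ell(x_0,\cdot)+\mu e)\cdot\Delta\leq\beta_\infty,\ e^\top\Delta=0\}$ yields the value $\beta_\infty^2/(2\sigma_0^2)\leq 1/(1+\varepsilon)^2<1$. Consequently
\[
\limsup_{T\to\infty}\frac{1}{a_T}\log\Pb_0^\infty\!\left(c(x_0,\Pb_0)>\hat c(x_0,\hat{\Pb}_T,T)\right)\geq -\frac{1}{(1+\varepsilon)^2}>-1,
\]
contradicting the out-of-sample guarantee \eqref{eq: out-of-sample ganrantee}.

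The main obstacle is the middle step: the contradiction hypothesis supplies only pointwise information on $\hat c(x_0,\Pb_0,t_T)$, whereas the MDP bound requires knowing the direction $g_{t_T}$ along which $\hat c(x_0,\hat{\Pb}_{t_T},t_T)$ fluctuates. The identification $g_\infty=\ell(x_0,\cdot)+\mu e$, and thus $\Var_{\Pb_0}(g_\infty)=\sigma_0^2$, hinges on the feasibility of $\hat c$ at \emph{every} nearby $\Pb$ combined with the equicontinuous-gradient condition built into the class $\cC$; once this normal-cone calculation is in place, the rest reduces to a rescaling and a standard quadratic minimization on the hyperplane $\{e^\top\Delta=0\}$.
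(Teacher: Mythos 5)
Your proof is correct, but it takes a genuinely different and in some ways cleaner route than the paper's. The paper proves Theorem~\ref{thm: strong optimality subexp} via Proposition~\ref{prop: liminf>limsup for a_T.} and its Claim~\ref{claim: unif conv.}, whose proof also passes through an Arzel\`a--Ascoli extraction of $\hat c(x_0,\cdot,t_T)$ and its gradients, but then bifurcates into two cases according to whether $\nabla\hat{\delta}_\infty(x_0,\cdot)(\Pb_0)$ vanishes, possibly relocating the analysis to a new point $\Pb_1\neq\Pb_0$, and compares $\hat c$ to $\cSVP$ itself (via the DRO representation and the worst-case direction $\varphi_{x_0}$ of Proposition~\ref{prop: robust predictor is DRO.}) in a ball around $\Pb_1$. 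Your normal-cone observation short-circuits this: since feasibility (through the argument of Lemma~\ref{lemma: hat c > true c}) forces $\phi\geq c(x_0,\cdot)$ near $\Pb_0$ while the contradiction hypothesis forces $\phi(\Pb_0)=c(x_0,\Pb_0)$, the point $\Pb_0$ is a local minimizer of $\phi-c(x_0,\cdot)$ on $\cPin$, so the tangent-space gradient of $\phi$ at $\Pb_0$ equals that of $c(x_0,\cdot)$, i.e.\ $\nabla\phi(\Pb_0)=\ell(x_0,\cdot)+\mu e$. This identification makes the paper's Case I vacuous, keeps the entire argument anchored at $\Pb_0$, and replaces the comparison with $\cSVP$ by a direct Taylor expansion of $\hat c$ around $\Pb_0$. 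What your approach buys is that the MDP rate is then a one-line Lagrange-multiplier computation on the hyperplane $\{e^\top\Delta=0\}$ yielding $\beta_\infty^2/(2\sigma_0^2)\leq(1+\varepsilon)^{-2}<1$, with the edge cases $\sigma_0=0$ and $\beta_\infty=0$ trivial; what the paper's approach emphasizes instead is the role of $\cSVP$'s own worst-case perturbation direction $\varphi_{x_0}$, tying the optimality argument to the DRO interpretation. Both share the Arzel\`a--Ascoli extraction and the moderate-deviation machinery. One small point: feasibility of $\cSVP$ requires not only Proposition~\ref{prop: feasibility of robust predictor} (the guarantee) but also Proposition~\ref{prop: regularity of SVP} (membership in $\cC$); you cite only the former.
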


To prove Theorem \ref{thm: strong optimality subexp}, we show that in order to verify an out-of-sample guarantee with speed $(a_T)_{T\geq1}$, a predictor must necessarily add a regularization to the empirical cost larger than $\sqrt{2a_T/T} \sqrt{\Var_{\Pb}(\loss(x,\xi))}$. This quantity is therefore a fundamental minimal amount of regularization for predictors with out-of-sample guarantee.
It also happens to be exactly the regularization added by the SVP predictor.

\begin{proposition}\label{prop: liminf>limsup for a_T.}
Let $\hat{c} \in \mathcal{C}$. If $\hat{c}$ is feasible in \eqref{eq:optimal-predictor} with $a_T \ll T$, then for all $x \in \mathcal{X}$, for all $\Pb \in \cPin$,
\begin{equation*}
    \liminf_{T \to \infty} \sqrt{\frac{T}{a_T}}(\hat{c}(x,\Pb,T)-c(x,\Pb)) \geq \limsup_{T \to \infty} \sqrt{\frac{T}{a_T}}|\cSVP(x,\Pb,T)-c(x,\Pb)| = \sqrt{2\Var_{\Pb}(\loss(x, \xi))}.
\end{equation*}
\end{proposition}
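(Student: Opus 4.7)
The equality on the right-hand side is immediate: since $\cSVP(x,\Pb,T) - c(x,\Pb) = \sqrt{(2a_T/T)\Var_\Pb(\loss(x,\xi))} \geq 0$, the quantity $\sqrt{T/a_T}|\cSVP(x,\Pb,T)-c(x,\Pb)| = \sqrt{2\Var_\Pb(\loss(x,\xi))}$ is independent of $T$. The substantive content is therefore the inequality
\begin{equation*}
  \liminf_{T\to\infty}\sqrt{T/a_T}\bigl(\hat c(x,\Pb,T)-c(x,\Pb)\bigr) \geq \sqrt{2\Var_\Pb(\loss(x,\xi))}
\end{equation*}
for every feasible $\hat c\in\cC$, which I will establish by contradiction.

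Fix $x_0\in\cX$, $\Pb_0\in\cPin$, and suppose a feasible $\hat c$ and a subsequence $t_T$ exist with $\tilde r_{t_T} := \sqrt{t_T/a_{t_T}}(\hat c(x_0,\Pb_0,t_T)-c(x_0,\Pb_0))\to r^* < \sqrt{2\Var_{\Pb_0}(\loss(x_0,\xi))}$; since $R_T(\Pb):=\hat c(x_0,\Pb,T)-c(x_0,\Pb) < 0$ on any subsequence would force a non-vanishing disappointment probability at $\Pb$, we may assume $r^* \in [0,\sqrt{2\Var_{\Pb_0}(\loss(x_0,\xi))})$. My first step is a linearisation: equicontinuity and uniform boundedness of $v_T(\Pb):=\nabla_\Pb\hat c(x_0,\Pb,T)$, built into the definition of $\cC$, give $\hat c(x_0,\Pb+h,T)-\hat c(x_0,\Pb,T)-\langle v_T(\Pb),h\rangle = o(\|h\|)$ uniformly in $T$. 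Substituting $\hat\Pb_T = \Pb + \sqrt{a_T/T}\Delta$ with $\|\Delta\|$ bounded, and using that $c(x_0,\cdot)$ is linear with gradient $\loss(x_0,\cdot)$, the disappointment event $\hat c(x_0,\hat\Pb_T,T)<c(x_0,\Pb)$ becomes $\langle v_T(\Pb),\Delta\rangle < -\tilde R_T(\Pb) + o(1)$, where $\tilde R_T(\Pb):=\sqrt{T/a_T}R_T(\Pb)$. A Lagrangian computation shows that $\min\{\|\Delta\|_\Pb^2 : \langle v_T(\Pb),\Delta\rangle \leq -\tilde R_T(\Pb),\, e^\top\Delta = 0\}$ equals $\tilde R_T(\Pb)^2/(2\Var_\Pb(v_T(\Pb)))$; pairing this with the MDP lower bound (Theorem~\ref{thm: MDP finite space}) applied to a small open neighbourhood of the minimiser yields the a~priori estimate
\begin{equation*}
  \tilde R_T(\Pb) \geq \sqrt{2\Var_\Pb(v_T(\Pb))} + o(1), \quad \forall \Pb\in\cPin,
\end{equation*}
valid for every feasible predictor.

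The main obstacle is closing the gap between $\Var_{\Pb_0}(v_T(\Pb_0))$ and $\Var_{\Pb_0}(\loss(x_0,\cdot))$, since nothing a~priori forces the gradient of a feasible predictor to track $\loss(x_0,\cdot)$. I would close it by exploiting feasibility at nearby distributions. By Arzelà--Ascoli applied to the equicontinuous, uniformly bounded family $\{v_T(\cdot)\}$, pass to a further subsequence along which $v_{t_T}(\cdot)\to v^\star(\cdot)$ uniformly on a neighbourhood of $\Pb_0$. For any $\delta$ with $e^\top\delta=0$ and all $s>0$ small enough that $\Pb_0+s\delta\in\cPin$, the a~priori bound at $\Pb_0+s\delta$ gives $R_{t_T}(\Pb_0+s\delta)\geq -o(\sqrt{a_{t_T}/t_T})$, while the Taylor expansion at $\Pb_0$ yields
\begin{equation*}
  R_{t_T}(\Pb_0+s\delta) = R_{t_T}(\Pb_0) + s\langle v_{t_T}(\Pb_0) - \loss(x_0,\cdot), \delta\rangle + s\rho(s),
\end{equation*}
with $\rho(s)\to 0$ as $s\to 0$, uniformly in $T$. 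Finiteness of $r^*$ forces $R_{t_T}(\Pb_0)\to 0$ along the subsequence, so letting first $T\to\infty$, then dividing by $s$ and letting $s\to 0^+$, yields $\langle v^\star(\Pb_0)-\loss(x_0,\cdot),\delta\rangle\geq 0$; repeating the argument with $-\delta$ forces equality. Hence $v^\star(\Pb_0)-\loss(x_0,\cdot)$ is proportional to $e$, and therefore $\Var_{\Pb_0}(v^\star(\Pb_0)) = \Var_{\Pb_0}(\loss(x_0,\cdot))$. Combining the a~priori bound at $\Pb_0$ with this variance identity along the subsequence gives $r^* = \lim \tilde r_{t_T} \geq \lim\sqrt{2\Var_{\Pb_0}(v_{t_T}(\Pb_0))} = \sqrt{2\Var_{\Pb_0}(v^\star(\Pb_0))} = \sqrt{2\Var_{\Pb_0}(\loss(x_0,\xi))}$, contradicting $r^*<\sqrt{2\Var_{\Pb_0}(\loss(x_0,\xi))}$.
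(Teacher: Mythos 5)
Your proof is correct in outline and takes a genuinely different route from the paper's. The paper proves this proposition by first establishing (Claim~\ref{claim: unif conv.}) a uniform comparison $\hat c(x_0,\Pb,t_T) + \varepsilon\sqrt{\alpha_{t_T}} \leq \cSVP(x_0,\Pb,t_T) + \varepsilon'\|\Pb-\Pb_1\|$ over a ball around a (possibly shifted) reference point $\Pb_1$, distinguishing cases according to whether the limiting gradient $\nabla\hat\delta_\infty$ vanishes; it then manufactures a family of disappointing distributions $\Pb_T(u) = \Pb_1 - (1-\eta)\sqrt{\alpha_T}u$ indexed by directions $u$ near the SVP worst-case direction $\varphi_{x_0}(\Pb_1)$ and invokes the MDP. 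You instead derive a pointwise \emph{a~priori} lower bound $\tilde R_T(\Pb) \geq \sqrt{2\Var_\Pb(v_T(\Pb))} + o(1)$ in terms of the predictor's own gradient $v_T$, and then pin down the gradient by a first-order variational argument: perturbing the base point to $\Pb_0+s\delta$, invoking the bound there, Taylor-expanding, and sending $s\to 0^+$ (then $-\delta$) forces $v^\star(\Pb_0) - \loss(x_0,\cdot)$ to be a multiple of $e$, which collapses $\Var_{\Pb_0}(v^\star(\Pb_0))$ to $\Var_{\Pb_0}(\loss(x_0,\xi))$. What your route buys: it makes explicit the role of the predictor's gradient (and the intermediate bound $\tilde R_T \gtrsim \sqrt{2\Var_\Pb(v_T)}$ is of independent interest), without ever referencing the SVP predictor or its DRO direction $\varphi_{x_0}$. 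What the paper's route buys: by comparing with $\cSVP$ directly, the contradiction sits at a distribution moving along $\varphi_{x_0}$, which mirrors the construction that is then reused nearly verbatim in the prescription analogue (Proposition~\ref{prop: lower bound strong opt presc}), whereas the gradient-identification step does not transfer as cleanly to the prescription setting since one must also track the prescribed decision. Two details you compress that merit care when writing it out: (i) the a~priori estimate, as stated, is an asymptotic inequality whose proof needs a subsequence extraction via Arzel\`a--Ascoli on $v_T$ (the minimiser in your Lagrangian moves with $T$, and the MDP requires a fixed open set), and the case $\Var_\Pb(v^\star(\Pb))=0$ requires a separate argument where you work with a neighbourhood of $\Delta=0$ rather than a shifted ball; (ii) your opening reason for $r^\star\geq 0$ is slightly off --- $R_{t_T}<0$ with $R_{t_T}\to 0$ does not by itself force a non-vanishing disappointment probability --- but this is harmless since the a~priori bound you establish shortly after (with $\Var\geq 0$) gives $\liminf_T\tilde R_T(\Pb_0)\geq 0$, which is what you actually use.
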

\begin{proof}
Assume for the sake of contradiction that there exists $\hat{c} \in \mathcal{C}$ feasible in \eqref{eq:optimal-predictor} not verifying the result. There hence exists $(x_0, \Pb_0) \in \mathcal{X}\times \cPin$ such that
$$
 \liminf_{T \to \infty} \sqrt{\frac{T}{a_T}}(\hat{c}(x_0,\Pb_0,T)-c(x_0,\Pb_0)) <\limsup_{T \to \infty} \sqrt{\frac{T}{a_T}}|\cSVP(x_0,\Pb_0,T)-c(x_0,\Pb_0)|
$$
We start the proof by showing in the following claim that this inequality extends to an open ball. This will allow us to examine $\hat{c}$ in an open neighborhood. 
In what follows and throughout the proof $\alpha_T \defn 2a_T/T$ for all $T \in \integ$.

\begin{claim}\label{claim: unif conv.}
There exists $\Pb_1 \in \cPin$, $\varepsilon>0$, and an increasing sequence $(t_T)_{T\geq 1} \in \integ^\integ$, such that for all $\varepsilon'>0$, there exists an open ball $\mathcal{B}(\Pb_1,r)$ around $\Pb_1$ of radius $r>0$ such that
$$\hat{c}(x_0,\Pb,t_T) + \varepsilon \sqrt{\alpha_{t_T}}
\leq 
\cSVP(x_0,\Pb,t_T)
+ \varepsilon' \|\Pb - \Pb_1\|,
\quad \forall \Pb \in \mathcal{B}(\Pb_1,r),\;
\forall T \in \integ
.$$
\end{claim}
\begin{proof}
See Appendix \ref{Appendix: proof of long claim storng optimality pred}.
\end{proof}

Let $\Pb_1 \in \cPin$, $\varepsilon>0$, and $(t_T)_{T\geq 1} \in \integ^\integ$ given by Claim \ref{claim: unif conv.}. In the reminder of the proof, we will show that the out-of-sample guarantee \eqref{eq: out-of-sample ganrantee} condition for $\hat{c}$ fails to hold at $(x_0,\Pb_1)$, which contradicts the feasibility of $\hat{c}$. We first construct key elements for the proof.
Set $\varepsilon'>0$ verifying
$\varepsilon' \leq \varepsilon/3$. Let $r>0$ given by Claim \ref{claim: unif conv.} such that
\begin{equation}\label{eq: robustOpt: uniformoty ineq.}
    \hat{c}(x_0,\Pb,t_T) 
    \leq 
    \cSVP(x_0,\Pb,t_T)
     + \varepsilon' \|\Pb - \Pb_1\|
     -\varepsilon \sqrt{\alpha_{t_T}}
    , 
    \quad \forall \Pb \in \mathcal{B}(\Pb_1,r),
    \; \forall T \in \integ.
\end{equation}
Without loss of generality, we can chose $r<1$.

Denote $\varphi:= \varphi_{x_0}$ given by Proposition \ref{prop: robust predictor is DRO.}. The proposition ensures that $\norm{\sqrt{2}\varphi(\Pb_1)}_{\Pb_1}=1$, therefore $\norm{\varphi(\Pb_1)} \neq 0$.
Let $\eta \in (0,1)$ such that
$\eta \sqrt{\Var_{\Pb_1}(\loss(x_0,\xi))}< \varepsilon/6$
and define
$$\Gamma
= 
\{u \in \mathcal{P}_{0,\infty} \; : \; \|u\|<2 \|\varphi(\Pb_1)\|, \;
(1-\eta)c(x_0,u) - \sqrt{\Var_{\Pb_1}(\loss(x_0,\xi))} > -\varepsilon/3\}.$$
The set $\Gamma$ is clearly open in $\mathcal{P}_{0,\infty} := \{\Delta \in \Re^d \; : \; e^\top \Delta =0\}$ and contains $\varphi(\Pb_1)$. 
In fact, $(1-\eta)c(x_0,\varphi(\Pb_1))  = (1- \eta) \sqrt{\Var_{\Pb_1}(\loss(x_0,\xi))} > \sqrt{\Var_{\Pb_1}(\loss(x_0,\xi))} - \varepsilon/6$.
For $u \in \Gamma$, consider the sequence 
$$\Pb_T(u) := \Pb_1 - (1-\eta)\sqrt{\alpha_T} \cdot u,$$ 

for all $T \in \integ$. See Figure \ref{fig: PTu construction} for an illustration of this construction. 

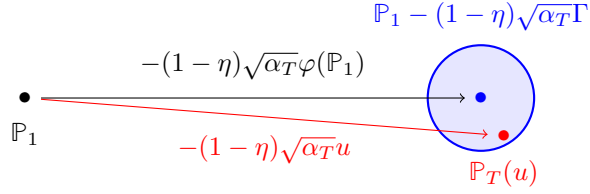
\begin{figure}\label{fig: PTu construction}
        \centering
\begin{tikzpicture}

\node[black, label=below:$\Pb_1$] (P0) at (2,2) {\textbullet};


\filldraw[fill=blue!10!white, draw=blue, thick] (8,2) circle (0.7cm) node [above, label={[label distance=0.5cm]90:${\color{blue}\Pb_1 - (1-\eta)\sqrt{\alpha_T} \Gamma} $}] {} ;

\node[blue](PT) at (8,2) {\textbullet};
\node[red, label=below:${\color{red}\Pb_T(u)}$](PTu) at (8.3,1.5) {\textbullet};

\path[->] (P0) edge   node[label=above:$-(1-\eta)\sqrt{\alpha_T}\varphi(\Pb_1)$]{} (PT) ;

\path[->,red] (P0) edge   node[label=below:$-(1-\eta)\sqrt{\alpha_T}u$]{} (PTu) ;

\end{tikzpicture}
\caption{Illustration of the construction of $\Gamma$ and $\Pb_T(u)$, $u\in \Gamma$. $\Gamma$ is an open set of directions around $\varphi(\Pb_1)$. When $\eta =0$, the blue point represents the distribution $\Pb'_T:= \Pb_1 -\sqrt{\alpha_T} \varphi(\Pb_1)$ such that $ \cSVP(x_0,\Pb'_T,T) = c(x_0,\Pb_1)$.}
\end{figure}

Let $T_0 \in \integ$ such that for all $T\geq T_0$ the DRO expression of $\cSVP$ (Proposition \ref{prop: robust predictor is DRO.}) holds and  $\sqrt{\alpha_{t_T}} \leq r/ (2\|\varphi(\Pb_1)\|)$. The later condition ensures that for $T\geq T_0$ and $u \in \Gamma$, $\Pb_{t_T}(u) \in \mathcal{B}(\Pb_1,r)$, and therefore \eqref{eq: robustOpt: uniformoty ineq.} holds in $\Pb_{t_T}(u)$. We will show that $(\hat{c}(\cdot,\cdot,t_T))_{T\geq 1}$ always disappoints at $(x_0,\Pb_{t_T}(u))_{T\geq 1}$.  That is, $c(x_0,\Pb_1) > \hat{c}(x_0,\Pb_{t_T}(u),t_T)$, for all $T\geq T_0$ and $u\in \Gamma$. We then use the Moderate Deviation Principle, Theorem \ref{thm: MDP finite space}.

Let $T\geq T_0$ and $u \in \Gamma$. Let us examine the sign of $c(x_0,\Pb_1) - \hat{c}(x_0,\Pb_{t_T}(u),t_T)$. $\Pb_{t_T}(u)$ verifies \eqref{eq: robustOpt: uniformoty ineq.}, therefore,
\begin{align}\label{eq: robustOpt: computations.}
    c(x_0,\Pb_1) - \hat{c}(x_0,\Pb_{t_T}(u),t_T) 
    &\geq 
    c(x_0,\Pb_1) - \cSVP(x_0,\Pb_{t_T}(u),t_T) + \varepsilon \sqrt{\alpha_{t_T}} 
    - \varepsilon' \|\Pb_{t_T}(u) - \Pb_1 \|.
\end{align}
We first analyse the term $c(x_0,\Pb_1) - \cSVP(x_0,\Pb_{t_T}(u),t_T)$. We have
\begingroup
\allowdisplaybreaks
\begin{align*}
    c(x_0,\Pb_1) - \cSVP(x_0,\Pb_{t_T}(u),t_T)
    &= 
    c(x_0,\Pb_1) - c(x_0,\Pb_{t_T}(u))
    - \sqrt{\alpha_{t_T}} \sqrt{\Var_{\Pb_{t_T}(u)}(\loss(x_0,\xi))}
    \\
    &= 
    c(x_0,\Pb_1-\Pb_{t_T}(u)) - \sqrt{\alpha_{t_T}} \sqrt{\Var_{\Pb_{t_T}(u)}(\loss(x_0,\xi))}
    \\
    &= (1-\eta)\sqrt{\alpha_{t_T}}c(x_0,u) - \sqrt{\alpha_{t_T}} \sqrt{\Var_{\Pb_{t_T}(u)}(\loss(x_0,\xi))}.
\end{align*}
\endgroup
Plugging this result in \eqref{eq: robustOpt: computations.} and using 
$- \varepsilon' \|\Pb_{t_T}(u) - \Pb_1 \|
\geq
-\varepsilon'r \geq -\varepsilon'$, which results from $\Pb_{t_T}(u) \in \mathcal{B}(\Pb_1,r)$,
we get,
\begin{align*}
    c(x_0,\Pb_1) - \hat{c}(x_0,\Pb_{t_T}(u),t_T) 
&\geq 
    \sqrt{\alpha_{t_T}} \bigg[(1-\eta)c(x_0,u) 
    - \sqrt{\Var_{\Pb_{t_T}(u)}(\loss(x_0,\xi))}
    + \varepsilon 
    -\varepsilon'
    \bigg]\\
&\geq 
    \sqrt{\alpha_{t_T}} \bigg[(1-\eta)c(x_0,u) 
    - \sqrt{\Var_{\Pb_{t_T}(u)}(\loss(x_0,\xi))}
   + 2\varepsilon/3 \bigg]
\end{align*}
where the last inequality is by definition of $\varepsilon'$. As $\Pb_{t_T}(u)$ converges to $\Pb_1$, this inequality becomes
\begin{align*}
    c(x_0,\Pb_1) - \hat{c}(x_0,\Pb_{t_T}(u),t_T) 
&\geq 
    \sqrt{\alpha_{t_T}} 
    \bigg[
    (1-\eta)c(x_0,u) 
    - \sqrt{\Var_{\Pb_1}(\loss(x_0,\xi))} +o(1)
    +2\varepsilon/3
    \bigg] \\
&\geq 
    \sqrt{\alpha_{t_T}} 
    [
    \varepsilon/3 +o(1)
    ] >0
\end{align*}
where the last inequality is by definition of $\Gamma$.
This inequality is uniform in $u$ as
$u \in \Gamma \rightarrow\sqrt{\Var_{\Pb_{t_T}(u)}(\loss(x_0,\xi))}$ converges uniformly to the constant function equal to $\sqrt{\Var_{\Pb_1}(\loss(x_0,\xi))}$, therefore, 
there exists $T_1>T_0$ such that
\begin{equation}\label{eq: main proof: disap in seq t_T.}
c(x_0,\Pb_1) > \hat{c}(x_0,\Pb_{t_T}(u),t_T), 
\quad \forall T \geq T_1,
\; \forall u \in \Gamma.
\end{equation}

Denote $\mathcal{D}_T = \{\Pb'\; : \; c(x_0,\Pb_1)> \hat{c}(x_0,\Pb',T) \}$ the set of disappointing distributions at time $T$ when the true distribution is $\Pb_1$. The inequality \eqref{eq: main proof: disap in seq t_T.} implies that for all $T\geq T_1$, 
$\Pb_1 -(1-\eta)\sqrt{\alpha_{t_T}} \Gamma \subset \mathcal{D}_{t_T}$.
We have therefore
\begin{align}
    \limsup_{T\to\infty} \frac{1}{a_T} \log \Pb^\infty \left( c(x, \Pb_1) > \hat c(x, \hat{\Pb}_T,T)\right)
    &=
    \limsup_{T\to\infty} \frac{1}{a_T} \log \Pb^\infty \left( \hat{\Pb}_T \in \mathcal{D}_T\right) \nonumber\\
    &\geq
    \limsup_{T\to\infty} \frac{1}{a_{t_T}} \log \Pb^\infty \left( \hat{\Pb}_{t_T} \in \mathcal{D}_{t_T}\right) \nonumber\\
    &\geq \limsup_{T\to\infty} \frac{1}{a_{t_T}} \log \Pb^\infty \left( \hat{\Pb}_{t_T} - \Pb_1 \in -\sqrt{\alpha_{t_T}}(1-\eta) \sqrt{2}\Gamma\right) \nonumber \\
    &= \limsup_{T\to\infty} \frac{1}{a_{t_T}} \log \Pb^\infty \left( \hat{\Pb}_{t_T} - \Pb_1 \in \sqrt{\frac{a_{t_T}}{t_T}}\cdot -(1-\eta) \sqrt{2}\Gamma\right) \nonumber\\
    &\geq \liminf_{T\to\infty} \frac{1}{a_{T}} \log \Pb^\infty \left( \hat{\Pb}_{T} - \Pb_1 \in \sqrt{\frac{a_T}{T}}\cdot -(1-\eta) \sqrt{2}\Gamma\right) 
    \nonumber \\ 
    &\geq - \inf_{\Delta \in \Gamma^{\into}} \|(1-\eta)\sqrt{2}\Delta\|_{\Pb_1}^2 
     \label{proof eq: series of ineq 3} \\
    &\geq - (1-\eta)^2 \|\sqrt{2}\varphi(\Pb_1)\|_{\Pb_1}^2 =  - (1-\eta)^2 >-1
    \label{proof eq: series of ineq 4} 
\end{align}
where \eqref{proof eq: series of ineq 3} uses the Moderate Deviation principle (Theorem \ref{thm: MDP finite space}) and  \eqref{proof eq: series of ineq 4} is justified by $\varphi(\Pb_1) \in \Gamma = \Gamma^{\into}$ and $\|\sqrt{2}\varphi(\Pb_1)\|_{\Pb_1}=1$. This inequality contradicts the feasibility of $\hat{c}$ which completes the proof.
\end{proof}

\begin{proof}[Proof of Theorem \ref{thm: strong optimality subexp}]
Let $\hat{c}\in \cC$ be a feasible predictor in the optimal prediction Problem \eqref{eq:optimal-predictor}. Let $(x,\Pb) \in \cX \times \cPin$. The goal is to show that 
$$
\limsup_{T\to \infty} \frac{|\cSVP(x,\Pb,T)-c(x,\Pb)|}{|\hat{c}(x,\Pb,T)-c(x,\Pb)|} \leq 1.
$$
It suffices to show that 
for all sequence $(\beta_T)_{T\geq 1} \in \Re_+^{\integ}$ we have\footnote{This is actually equivalent to the desired result, see Lemma \ref{lemma: equivalence of order notions.}.}
\begin{equation*}
    \limsup_{T\to\infty} \frac{1}{\beta_T} |\hat{c}(x,\Pb,T) - c(x,\Pb)| \geq \limsup_{T\to\infty} \frac{1}{\beta_T} |\cSVP(x,\Pb,T) - c(x,\Pb)|.
\end{equation*}
The result indeed follows with $\beta_T = |\hat{c}(x,\Pb,T)-c(x,\Pb)|$ for all $T$.
We distinguish two cases depending on whether $\Var_{\Pb}(\loss(x,\xi)) = 0$. If $\Var_{\Pb}(\loss(x,\xi)) = 0$, then  $\cSVP(x,\Pb,T) = c(x,\Pb)$ for all $T$ and therefore the desired inequality becomes trivial. 

Suppose now $\Var_{\Pb}(\loss(x,\xi)) > 0$. Denote $\alpha_T = 2a_T/T$ for all $T$.
We have
\begin{equation*}
    \limsup_{T\to\infty} \frac{1}{\beta_T} |\hat{c}(x,\Pb,T) - c(x,\Pb)| 
    =
    \limsup_{T\to\infty}  \frac{\sqrt{\alpha_T}}{\beta_T} \cdot \frac{1}{\sqrt{\alpha_T}} |\hat{c}(x,\Pb,T) - c(x,\Pb)|
\end{equation*}
Proposition \ref{prop: liminf>limsup for a_T.} ensures that 
\begin{align*}
    \liminf_{T\to\infty} \frac{1}{\sqrt{\alpha_T}} |\hat{c}(x,\Pb,T) - c(x,\Pb)|
    \geq
    \limsup_{T\to\infty} \frac{1}{\sqrt{\alpha_T}} |\cSVP(x,\Pb,T) - c(x,\Pb)|
    =
    \sqrt{\Var_{\Pb}(\loss(x,\xi))} >0
\end{align*}
We can apply, therefore, a $\limsup$-$\liminf$ inequality (Lemma \ref{lemma: limsup liminf ineq}) to get
\begin{align*}
    \limsup_{T\to\infty} \frac{\sqrt{\alpha_T}}{\beta_T} \cdot \frac{1}{\sqrt{\alpha_T}} |\hat{c}(x,\Pb,T) - c(x,\Pb)| 
    &\geq 
    \limsup_{T\to\infty} \frac{\sqrt{\alpha_T}}{\beta_T} \cdot
    \liminf_{T\to\infty} \frac{1}{\sqrt{\alpha_T}}  |\hat{c}(x,\Pb,T) - c(x,\Pb)|
\end{align*} 
Using Proposition \ref{prop: liminf>limsup for a_T.}, we have
\begingroup
\allowdisplaybreaks
\begin{align*}
    \limsup_{T\to\infty} \frac{\sqrt{\alpha_T}}{\beta_T} \cdot
    \liminf \frac{1}{\sqrt{\alpha_T}}  |\hat{c}(x,\Pb,T) - c(x,\Pb)|
    &\geq
    \limsup_{T\to\infty} \frac{\sqrt{\alpha_T}}{\beta_T} \cdot
    \limsup_{T\to\infty} \frac{1}{\sqrt{\alpha_T}}  |\cSVP(x,\Pb,T) - c(x,\Pb)| \\
    &= 
    \limsup_{T\to\infty} \frac{\sqrt{\alpha_T}}{\beta_T} \cdot
    \lim_{T\to\infty} \frac{1}{\sqrt{\alpha_T}}  |\cSVP(x,\Pb,T) - c(x,\Pb)|
    \\
    &= 
    \limsup_{T\to\infty} \frac{1}{\beta_T} |\cSVP(x,\Pb,T) - c(x,\Pb)|,
\end{align*}
\endgroup
where the last equality is justified by Lemma \ref{lemma: lim in limsup.} and the fact that $\lim \frac{1}{\sqrt{\alpha_T}} |\cSVP(x,\Pb,T) - c(x,\Pb)| = \sqrt{\Var_{\Pb}(\loss(x,\xi))} \not \in \{0,\infty\}$.
\end{proof}

We point out that although our required out-of-sample guarantee on the predictor is asymptotic (Proposition \ref{prop: feasibility of robust predictor}), SVP does enjoy also finite sample guarantees.

\begin{proposition}[Finite Sample Guarantees]\label{prop: predictor finite sample guarantees.}
Let $x,\Pb \in \cX \times \cPin$. For all $T\in \integ$, the following holds
\begin{align*}
\Pb^{\infty}\left(c(x,\Pb) \leq \cSVP(x,\hat{\Pb}_T,T) +  \frac{7K}{3}\frac{a_T}{T}\right) 
&\geq 1- 2e^{-a_T} \\
\Pb^{\infty}\left(
c(x,\Pb)
\geq
\cSVP(x,\hat{\Pb}_T,T) 
- \sqrt{\frac{8a_T}{T}\Var_{\Pb}(\loss(x,\xi))}
-\frac{7K}{3}\frac{a_T}{T}
\right) 
&\geq 1- 2e^{-a_T}
\end{align*}
where $K = 2\sup_{x\in \cX}\|\loss(x,\cdot)\|_{\infty}$.
\end{proposition}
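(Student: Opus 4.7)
The plan is to invoke two classical concentration inequalities on the loss variables $Z_t := \loss(x,\xi_t)$ (which satisfy $|Z_t| \leq K/2$) and combine them via a union bound. Throughout, set $\sigma^2 := \Var_{\Pb}(\loss(x,\xi))$ and $\hat\sigma_T^2 := \Var_{\hat\Pb_T}(\loss(x,\xi))$.

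The first inequality is essentially a direct invocation of the Maurer--Pontil empirical Bernstein inequality recalled as bound \eqref{eq: bound for SVP} in the introduction. Applied with confidence parameter $a_T$ to losses bounded in $[-K/2,K/2]$, that inequality yields, with probability at least $1 - 2e^{-a_T}$,
\[
  c(x,\Pb) \leq c(x,\hat\Pb_T) + \sqrt{\tfrac{2a_T}{T}\hat\sigma_T^2} + \tfrac{7K}{3}\tfrac{a_T}{T} = \cSVP(x,\hat\Pb_T,T) + \tfrac{7K}{3}\tfrac{a_T}{T},
\]
which is precisely the first statement.

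For the second inequality a single empirical Bernstein bound does not suffice, since the deviation must be controlled in terms of the \emph{true} variance $\sigma^2$. I would therefore combine two bounds. Standard Bernstein applied to $Z_t - \Eb_{\Pb} Z$ gives, with probability $\geq 1 - e^{-a_T}$,
\[
  c(x,\hat\Pb_T) - c(x,\Pb) \leq \sqrt{\tfrac{2a_T\sigma^2}{T}} + \tfrac{Ka_T}{3T}.
\]
The Maurer--Pontil deviation bound for the sample standard deviation gives, with probability $\geq 1-e^{-a_T}$,
\[
  \hat\sigma_T \leq \sigma + K\sqrt{\tfrac{2a_T}{T}},
\]
and multiplying by $\sqrt{2a_T/T}$ yields $\sqrt{2a_T\hat\sigma_T^2/T} \leq \sqrt{2a_T\sigma^2/T} + 2Ka_T/T$. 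Adding these two inequalities on the intersection of both events (probability $\geq 1-2e^{-a_T}$ by the union bound) produces
\[
  \cSVP(x,\hat\Pb_T,T) - c(x,\Pb) \leq 2\sqrt{\tfrac{2a_T\sigma^2}{T}} + \tfrac{7Ka_T}{3T} = \sqrt{\tfrac{8a_T\sigma^2}{T}} + \tfrac{7K}{3}\tfrac{a_T}{T},
\]
which rearranges to the second claim.

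The only delicate point is the bookkeeping of constants: the textbook Maurer--Pontil statement is most cleanly stated with $T-1$ rather than $T$ in the denominator of the variance deviation bound, and the split of the $7/3$ constant between the Bernstein term $K/(3T)$ and the cross-term $2K/T$ must be verified carefully. This discrepancy is harmless for $T$ sufficiently large, and for the remaining small values of $T$ the claim is trivial given the uniform bound $|c(x,\Pb') - \cSVP(x,\Pb',T)| \leq K$, which follows from $|\loss| \leq K/2$. No conceptual obstacle arises beyond this arithmetic care.
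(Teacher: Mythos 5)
Your proof is correct and follows essentially the same route as the paper: for the second inequality, combine a Bernstein/Bennett bound (controlling $c(x,\hat\Pb_T)-c(x,\Pb)$ in terms of the \emph{true} variance) with the Maurer--Pontil concentration of the empirical standard deviation, then take a union bound; for the first inequality, invoke the empirical Bernstein bound directly, which the paper itself notes is an admissible alternative via \citet{audibert2009exploration} or Theorem 4 of \citet{maurer2009empirical}. One small caveat: your fallback claim that the statement is ``trivial for small $T$ because $|c(x,\Pb')-\cSVP(x,\Pb',T)|\leq K$'' is not quite right --- that difference equals $\sqrt{2a_T/T}\,\hat\sigma$, which can exceed $K$ when $a_T/T$ is large --- but this bookkeeping at the $T-1$ versus $T$ boundary is glossed over in the paper's own proof as well and does not affect the substance of the argument.
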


The proof of these bounds relies essentially on concentration inequalities on the empirical standard deviation shown by \cite{maurer2009empirical} and \cite{audibert2009exploration}. See Appendix \ref{Appendix: proof of finite guarantees} for a full proof.

{\color{black}
While \cite{van2020data} points out that in the exponential regime, only the KL predictor is strongly optimal, we have shown here that SVP and $\chi^2$ divergence DRO are strongly optimal in the subexponential regime. A key question now is whether this optimality in the subexponential regime extends to KL DRO as well, as a Taylor expansion of the KL divergence reveals
$$
I(\Pb',\Pb) = \|\Pb' - \Pb\|_{\Pb}^2 +o\left( \|\Pb' - \Pb\|_{\Pb}^2\right).
$$
We show that this is indeed true. Consider the KL predictor with decreasing radius in $a_T/T$
\begin{equation}\label{eq: KL in subexp}
\hat{c}_{\text{KL}}'(x,\Pb,T) = \sup_{\Pb' \in \cP} \left\{ c(x,\Pb') \; : \; I(\Pb,\Pb') \leq \frac{a_T}{T} \right\}, \quad \forall x \in \cX, \; \forall \Pb \in \cP, \; \forall T \in \integ.
\end{equation}

The SVP predictor and Pearson DRO predictor approximate the KL predictor with a precision scaling with the square root of the KL ball radius.
The proof is in Appendix \ref{App: proof of KL=SVP}.

\begin{proposition}\label{prop: KL equiv to SVP}
We have for all $x \in \cX$ and $\Pb \in \cPin$
\begin{align*}
\sup_{\Pb' \in \cP} \left\{ c(x,\Pb') \; : \; I(\Pb,\Pb') \leq \frac{a_T}{T} \right\}
&=
\sup_{\Pb' \in \cP} \left\{ c(x,\Pb') \; : \; \|\Pb' - \Pb \|_{\Pb}^2 \leq \frac{a_T}{T} \right\} + o\left( \sqrt{\frac{a_T}{T}}\right)\\
&=
\cSVP(x,\Pb,T) +o\left( \sqrt{\frac{a_T}{T}}\right),
\end{align*}
where the assymtptotic notation $o$ is uniform in $x$.
Furthermore, if $\Var_{\Qb}(\loss(x,\xi)) >0$ for all $\Qb \in \cPin$ and $x\in \cX$, then
$\cKL' \equiv \cSVP$ with respect to the partial order in the subexponential regime. 
\end{proposition}

When the KL ball radius is chosen appropriately, the KL predictor also verifies the out-of-sample guarantee in the subexponential regime. The proof is in Appendix \ref{App: Proof of KL feasibility in subexp}.

\begin{proposition}\label{prop: KL feasibility in subexp}
The predictor $\hat{c}_{\text{KL}}'$ verifies the out-of-sample guarantee \eqref{eq: out-of-sample ganrantee} when $a_T \ll T$.
\end{proposition}
}

\section{Optimal Data-Driven Prescription}\label{sec: presc}
In this section we study the problem of optimal data-driven prescription as formalized in Problem (\ref{eq:optimal-prescriptor}).
The key question can informally be stated as one of optimal approximation of the unknown objective function of Problem \eqref{eq: stochastic opt} who's minimum provides the best approximation to its optimal solution.
Typically in machine learning and decision making, predictors (approximating the expectation) are first established with provable prediction guarantees as was done in Section \ref{sec: predictor}.
Subsequently, these guarantees are extended to the prescribed solution using the structure of the decision set $\cX$.
A key questions is whether circumventing the prediction step results in better formulations.
In particular, would such a formulation improve the quality of the cost of the prescribed solution at the expense of a reduced quality of the overall cost prediction of any other decision?
We prove that this is not the case.

We will show that in each regime, the strong optimal predictors identified in Section \ref{sec: predictor} induce strong optimal prescriptors as well. This result suggests that the classical approach in machine learning and decision-making of constructing estimators with guarantees on prediction and then extending such guarantees to the prescription is justified: the optimal predictor also induces an optimal prescriptor. 

{\color{black} We briefly present our results in the exponential and superexponential regime, then we discuss more extensively the subexponential regime which bares much of the novel insights.}

\subsection{The Exponential Regime}\label{sec: exp presc}

Consider the exponential regime in which $a_T \sim rT$, $r>0$. \cite{van2020data} showed that the distributionally robust predictor \eqref{eq: KL predictor} and its associated prescriptor
\begin{align*}
    \xKL{,T}(\Pb) \in & \argmin_{x\in \cX} \cKL(x,\Pb,T), 
    \quad \forall \Pb \in \cP, \; \forall T \in \integ
    \\
  \in & \argmin_{x\in \cX} \sup_{\Pb' \in \cP} \set{c(x,\Pb')}{\D{\Pb} {\Pb'}\leq r}, 
    \quad \forall \Pb \in \cP, \; \forall T \in \integ
\end{align*}
are also optimal among all prescriptors which are not an explicit function of the data size $T$. Notice that the minimizer $\xKL{}$ exists as $\cX$ is compact and $\cKL$ is continuous in the first argument.
We will extend this result and show that also our more general setting where predictors can be a function of $T$ this distributionally robust predictor remains optimal. {\color{black}We note that dependence in $T$ involves further technical hurdles and our proof|in particular of strong optimality|involves finer analysis. Proofs of results of this section are in Appendix \ref{App: proof feasibility Exp presc}.}

\begin{proposition}[Feasibility]\label{prop: feasibility presc exp}
The prescriptor $(\cKL,\xKL{}) \in \cC \times \hat{\cX}$ verifies the prescription out-of-sample guarantee \eqref{eq: prescriptor out-of-sample ganrantee} when $a_T \sim rT$.
\end{proposition}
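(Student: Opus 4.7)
The plan is to mimic very closely the feasibility argument used in Proposition \ref{prop: frasibility pred exp} for the prediction problem. The crucial observation is that the KL predictor provides a uniform upper bound on the true cost whenever the empirical distribution is close (in KL divergence) to $\Pb$, and this bound therefore transfers automatically to \emph{any} data-driven choice of decision, including the prescribed one $\xKL{,T}(\hat{\Pb}_T)$.

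First, I would fix $\Pb \in \cP$ and introduce the ``good'' event
\[
  G_T := \set{\hat{\Pb}_T \in \cP}{I(\hat{\Pb}_T, \Pb) \leq r}.
\]
On $G_T$, the true distribution $\Pb$ lies in the KL ambiguity set around $\hat{\Pb}_T$, and hence by definition of $\cKL$ we have $c(x, \Pb) \leq \cKL(x, \hat{\Pb}_T, T)$ for \emph{every} $x \in \cX$. Applying this at the particular decision $x = \xKL{,T}(\hat{\Pb}_T)$ yields
\[
  c(\xKL{,T}(\hat{\Pb}_T), \Pb)
  \leq
  \cKL(\xKL{,T}(\hat{\Pb}_T), \hat{\Pb}_T, T)
  =
  \hat{c}^{\star}_{\mathrm{KL}}(\hat{\Pb}_T, T).
\]
Thus no disappointment can occur on $G_T$, so the disappointment event is contained in the complement $\{\hat{\Pb}_T \in \Gamma\}$, where $\Gamma := \set{\Pb' \in \cP}{I(\Pb', \Pb) > r}$.

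Second, I would invoke Sanov's theorem (the large deviation principle for the empirical distribution, Theorem \ref{thm: LDP finite space}) to obtain
\[
  \limsup_{T\to\infty} \frac{1}{a_T} \log \Pb^\infty\!\left( \hat{\Pb}_T \in \Gamma \right)
  \leq
  -\frac{1}{r} \inf_{\Pb' \in \bar{\Gamma}} I(\Pb', \Pb),
\]
since $a_T \sim rT$. It then suffices to show $\inf_{\Pb' \in \bar{\Gamma}} I(\Pb', \Pb) \geq r$. This follows exactly as in the proof of Proposition \ref{prop: frasibility pred exp}: by convexity and lower semicontinuity of $\Pb' \mapsto I(\Pb', \Pb)$, the closure $\bar{\Gamma}$ is contained in $\{\Pb' : I(\Pb', \Pb) \geq r\}$. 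Combining the two displays gives the required bound $-1$.

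The argument is essentially a direct transport of the prediction proof, so there is no genuine obstacle; the only point worth noting is that the prescribed decision $\xKL{,T}(\hat{\Pb}_T)$ depends on the data and therefore one cannot simply invoke the pointwise prediction guarantee \eqref{eq: out-of-sample ganrantee}. What saves us is that the KL predictor majorizes the true cost \emph{uniformly in $x$} on the event $G_T$, so the data dependence of $\xKL{,T}(\hat{\Pb}_T)$ is harmless. This is precisely the reason distributionally robust predictors lift so cleanly from prediction to prescription in the exponential regime.
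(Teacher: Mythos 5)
Your proof is correct and takes essentially the same approach as the paper: define the good event where $I(\hat{\Pb}_T,\Pb)\leq r$, observe that on this event the KL predictor majorizes $c(\cdot,\Pb)$ uniformly in $x$ (and hence at the prescribed decision), bound the complementary event via Sanov's LDP, and close the gap with the $\bar{\Gamma}\subset\{I(\cdot,\Pb)\geq r\}$ continuity argument. The remark you add about why uniformity in $x$ is exactly what makes the prescription case go through is precisely the key observation that makes this proposition nontrivial relative to the prediction version, and the paper's proof relies on it implicitly in the same way.
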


\begin{theorem}[Strong Optimality]\label{thm: strong opt exponential regime prescriptor.}
Consider the exponential regime in which $a_T\sim rT$. The pair of predictor and prescriptor $(\cKL,\xKL{})\in \cC \times \hat{\cX}$ is feasible in the prescription problem \eqref{eq:optimal-prescriptor} and
for any pair of predictor and prescriptor $(\hat{c},\hat{x})\in \cC \times \hat{\cX}$ satisfying the out-of-sample guarantee \eqref{eq: prescriptor out-of-sample ganrantee}, we have $(\cKL,\xKL{}) \orderpresc (\hat{c},\hat{x})$. That is, $(\cKL,\xKL{})$ is a strong optimal prescriptor in the exponential regime.
\end{theorem}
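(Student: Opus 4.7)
Feasibility of $(\cKL, \xKL{})$ is Proposition \ref{prop: feasibility presc exp}, so only strong optimality needs proof. My plan is to mimic the architecture of Theorem \ref{thm: strong opt exponential regime predictor.}, but work with the value function $\hat c^\star(\cdot, T) := \min_{x \in \cX} \hat c(x, \cdot, T)$ instead of $\hat c(x_0, \cdot, T)$ for a fixed $x_0$. The principal new difficulty is that $\hat x_T(\hat{\Pb}_T)$ carries no regularity in $\hat{\Pb}_T$ beyond what is indirectly inherited through $\hat c \in \cC$; accordingly, the single worst-case distribution $\bar{\Pb}_1$ used in Claim \ref{claim: dro proof 2} must be replaced by a \emph{finite} menu of adversarial distributions, one for each possible reaction of $\hat x_T$. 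Building this finite cover of $\cX$ is the main obstacle; once it is in hand, a change of measure transports a pigeonhole mass computed under $\Pb_0^\infty$ to one of the adversarial $\bar{\Pb}^{(i^\star),\infty}$, producing the required contradiction with the prescription out-of-sample guarantee.

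\textbf{Constant gap on a neighbourhood of $\Pb_0$.} Suppose for contradiction that $(\cKL, \xKL{}) \not\orderpresc (\hat c, \hat x)$. Then there exist $\Pb_0 \in \cPin$, $\varepsilon > 0$ and an increasing subsequence $(t_T)_{T \geq 1}$ with $\cKL^\star(\Pb_0, t_T) - c^\star(\Pb_0) \geq (1+\varepsilon)|\hat c^\star(\Pb_0, t_T) - c^\star(\Pb_0)|$. Since $\cKL$ does not depend on $T$ neither does $\cKL^\star$; the case $\cKL^\star(\Pb_0) = c^\star(\Pb_0)$ already falsifies the inequality, so $\cKL^\star(\Pb_0) > c^\star(\Pb_0)$ and the bound yields a constant $\varepsilon_1 > 0$ with $\hat c^\star(\Pb_0, t_T) \leq \cKL^\star(\Pb_0) - \varepsilon_1$ for every $T$. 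The standard inequality $|\min_x f(x) - \min_x g(x)| \leq \sup_x |f(x) - g(x)|$ transfers the equicontinuity of $\{\hat c(\cdot, \cdot, T)\}_T \subseteq \cC$ to $\{\hat c^\star(\cdot, T)\}_T$, so there is $\rho > 0$ with $\hat c^\star(\Pb', t_T) \leq \cKL^\star(\Pb_0) - \varepsilon_1/2$ for every $\Pb' \in U := B(\Pb_0, \rho)$ and every $T \in \integ$.

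\textbf{Finite cover, pigeonhole and change of measure.} For each $x \in \cX$, the bound $\cKL(x, \Pb_0) \geq \cKL^\star(\Pb_0)$ supplies a maximiser $\bar{\Pb}_x \in \set{\Pb'}{I(\Pb_0, \Pb') \leq r}$; setting $\bar{\Pb}_x^{(\lambda)} := (1-\lambda) \bar{\Pb}_x + \lambda \Pb_0$ for a small $\lambda \in (0,1)$ chosen uniformly in $x$ (possible because $|c|$ is bounded on the compact set $\cX \times \cP$), convexity of $I(\Pb_0, \cdot)$ and linearity of $c(x, \cdot)$ give $I(\Pb_0, \bar{\Pb}_x^{(\lambda)}) \leq (1-\lambda) r < r$ and $c(x, \bar{\Pb}_x^{(\lambda)}) > \cKL^\star(\Pb_0) - \varepsilon_1/3$. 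Therefore the open sets $V_{\bar{\Pb}} := \set{x \in \cX}{c(x, \bar{\Pb}) > \cKL^\star(\Pb_0) - \varepsilon_1/3}$ cover $\cX$ as $\bar{\Pb}$ ranges over $\{\bar{\Pb}_x^{(\lambda)} : x \in \cX\}$, and compactness of $\cX$ extracts a finite subcover $V_{\bar{\Pb}^{(1)}}, \ldots, V_{\bar{\Pb}^{(N)}}$ with $I(\Pb_0, \bar{\Pb}^{(i)}) < r$ for each $i$. The events $A_T^{(i)} := \{\hat{\Pb}_T \in U\} \cap \{\hat x_{t_T}(\hat{\Pb}_T) \in V_{\bar{\Pb}^{(i)}}\}$ jointly cover $\{\hat{\Pb}_T \in U\}$, whose $\Pb_0^\infty$-probability tends to $1$ by the strong law of large numbers, so after a further subsequence extraction pigeonhole produces a fixed $i^\star$ with $\Pb_0^\infty(A_{t_T}^{(i^\star)}) \geq 1/(2N)$. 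The change of measure identity
\[
\bar{\Pb}^{(i^\star), \infty}(A_{t_T}^{(i^\star)}) = \Eb_{\Pb_0^\infty}\!\left[\mathbf{1}_{A_{t_T}^{(i^\star)}} \exp\!\left(-t_T \, I(\hat{\Pb}_{t_T}, \bar{\Pb}^{(i^\star)}) + t_T \, I(\hat{\Pb}_{t_T}, \Pb_0)\right)\right]
\]
combined with the continuity of $I$ at $\Pb_0$ and the inclusion $A_{t_T}^{(i^\star)} \subseteq \{\hat{\Pb}_T \in U\}$ yields $\bar{\Pb}^{(i^\star),\infty}(A_{t_T}^{(i^\star)}) \geq \tfrac{1}{2N} e^{-(I(\Pb_0, \bar{\Pb}^{(i^\star)}) + \delta) t_T}$ for any preset $\delta > 0$ once $U$ is shrunk accordingly. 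By the previous paragraph and the construction of $V_{\bar{\Pb}^{(i^\star)}}$, $A_{t_T}^{(i^\star)}$ lies inside the disappointment event for $(\hat c, \hat x)$ at the true distribution $\bar{\Pb}^{(i^\star)}$. Dividing the logarithm by $a_{t_T} \sim r t_T$ and letting $\delta \downarrow 0$ produces $\limsup_T \tfrac{1}{a_T} \log \bar{\Pb}^{(i^\star),\infty}(\mathrm{disappoint}) \geq -I(\Pb_0, \bar{\Pb}^{(i^\star)})/r > -1$, contradicting \eqref{eq: prescriptor out-of-sample ganrantee}.
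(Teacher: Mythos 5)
Your proposal is correct and arrives at the contradiction by a route that is visibly different from the paper's.  Both proofs hinge on two ingredients: (i) the key observation that, because $\cKL^\star$ cannot be beaten, the value estimate $\hat c^\star$ must be separated from $\cKL^\star(\Pb_0)$ by a uniform gap on a whole neighbourhood of $\Pb_0$, and (ii) an exponential change of measure from $\Pb_0$ to an adversarial $\bar{\Pb}$ satisfying $I(\Pb_0,\bar{\Pb})<r$.  Where you diverge is in how the adversarial distribution is located.  The paper appeals to Lemma \ref{lemma: prob compactness}: it first extracts, for the random sequence $\hat x_T(\hat{\Pb}_T)$, an accumulation point $x_\infty\in\cX$ carrying positive probability mass along a subsequence, and only then builds a single $\bar{\Pb}_1$ adapted to that $x_\infty$.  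You instead work \emph{offline}: you manufacture, for every $x\in\cX$, a distribution $\bar{\Pb}_x^{(\lambda)}\in\cPin$ with $I(\Pb_0,\bar{\Pb}_x^{(\lambda)})<r$ and $c(x,\bar{\Pb}_x^{(\lambda)})>\cKL^\star(\Pb_0)-\varepsilon_1/3$, extract a finite subcover $\{V_{\bar{\Pb}^{(i)}}\}_{i\le N}$ by compactness, and pigeonhole on the finitely many events $\{\hat{\Pb}_{t_T}\in U\}\cap\{\hat x_{t_T}(\hat{\Pb}_{t_T})\in V_{\bar{\Pb}^{(i)}}\}$ to isolate one index $i^\star$ whose associated adversary succeeds with probability at least $1/(2N)$ under $\Pb_0^\infty$.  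In effect you inline a concrete version of the compactness lemma; the paper's route is more modular, yours is more quantitative (it gives an explicit lower bound on the probability mass rather than only a $\limsup>0$).  Both are valid.

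A few minor imprecisions, none fatal.  First, "$\hat c^\star(\Pb_0, t_T) \leq \cKL^\star(\Pb_0) - \varepsilon_1$ for every $T$'' should read "for $T$ along a further subsequence'': if $|\hat c^\star(\Pb_0,t_T)-c^\star(\Pb_0)|\to 0$ the gap is eventual rather than universal, which is precisely why the paper devotes Claim \ref{claim: dro prescriptor} to this case split.  Since you pass to subsequences later anyway, this is cosmetic.  Second, the order of quantifiers at the very end is backwards: you cannot "let $\delta\downarrow 0$'' after taking the $T$-limit, because $\delta$ dictates how small $U$ must be, and $U$ feeds into the pigeonhole; instead, fix $\delta<r-\max_i I(\Pb_0,\bar{\Pb}^{(i)})$ at the outset (possible since the finite family of $\bar{\Pb}^{(i)}$'s is already determined before $U$ is shrunk), and the conclusion $\geq -(I(\Pb_0,\bar{\Pb}^{(i^\star)})+\delta)/r>-1$ follows directly.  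Third, the definition of $A_T^{(i)}$ mixes subscripts ($\hat{\Pb}_T$ vs.\ $\hat x_{t_T}(\hat{\Pb}_T)$); they should both refer to the data at time $t_T$.  With these adjustments the argument is airtight and provides a self-contained alternative to the paper's appeal to Lemma \ref{lemma: prob compactness}.
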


\subsection{The Superexponential Regime}\label{sec: presc Superexp}
We prove the same result for the superexponential regime where $a_T \gg T$. Even when the out-of-sample guarantee is required only for the prescribed solution, superexponential guarantees require the predictor to cover the worst scenario of the uncertainty, no matter the data size. Consider a prescriptor associated to the robust predictor $\cRob$ defined in \eqref{eq: overly robust predictor}
\begin{align*}
    \xRob{,T}(\Pb) &\in \argmin_{x\in \cX} \cRob(x,\Pb,T), 
    \quad \forall \Pb \in \cP, \; \forall T \in \integ, \\
    &\in \argmin_{x\in \cX} \max_{i\in \Sigma} \,\ell(x,i), 
    \quad \forall \Pb \in \cP, \; \forall T \in \integ.
\end{align*}
This minimizer exists as $\cX$ is compact and $\cRob$ is continuous in the first argument.
\begin{proposition}
The prescriptor $(\cRob,\xRob{}) \in \cC \times \hat{\cX}$ verifies the prescription out-of-sample guarantee \eqref{eq: prescriptor out-of-sample ganrantee} when $a_T \gg T$.
\end{proposition}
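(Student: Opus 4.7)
The plan is to observe that the robust predictor $\cRob(x,\Pb,T) = \max_{i\in\Sigma} \ell(x,i)$ is independent of both the empirical distribution $\Pb$ and the sample size $T$, so the prescribed decision and its predicted cost are data-independent quantities, and we can show the disappointment event is identically empty. This makes the out-of-sample guarantee trivial, in fact with the ratio $\frac{1}{a_T}\log \Pb^\infty(\cdots)$ equal to $-\infty$ for all $T$.

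First, I would unwind the definitions. Since $\cRob(x,\Pb,T)$ does not depend on $\Pb$ or $T$, its minimizer set $\argmin_{x\in\cX} \max_{i\in\Sigma} \ell(x,i)$ is a fixed subset of $\cX$, and the optimal value $\cRob^\star(\hat\Pb_T,T) = \min_{x\in\cX}\max_{i\in\Sigma}\ell(x,i)$ is a fixed constant, call it $V$. In particular, for any realization $\hat{\Pb}_T$, any selection $\hat x := \xRob{,T}(\hat\Pb_T)$ lies in this argmin, so $\max_{i\in\Sigma} \ell(\hat x, i) = V$.

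Next, for the true cost side, I would use that for every decision $x \in \cX$ and every distribution $\Pb \in \cP$,
\[
c(x,\Pb) = \Eb_\Pb[\ell(x,\xi)] \leq \max_{i\in\Sigma} \ell(x,i).
\]
Applied at $x = \hat x = \xRob{,T}(\hat\Pb_T)$, this gives $c(\xRob{,T}(\hat\Pb_T),\Pb) \leq \max_{i\in\Sigma} \ell(\xRob{,T}(\hat\Pb_T),i) = V = \cRob^\star(\hat\Pb_T,T)$. Hence the event $\{c(\xRob{,T}(\hat\Pb_T),\Pb) > \cRob^\star(\hat\Pb_T,T)\}$ is empty, and its probability is $0$, for every $T$ and every $\Pb \in \cP$. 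Consequently
\[
\limsup_{T\to\infty} \frac{1}{a_T}\log \Pb^\infty\!\left(c(\xRob{,T}(\hat\Pb_T),\Pb) > \cRob^\star(\hat\Pb_T,T)\right) = -\infty \leq -1,
\]
which is the desired out-of-sample guarantee \eqref{eq: prescriptor out-of-sample ganrantee} in the superexponential regime. There is no real obstacle here — the proof is essentially one line once one observes that $\cRob^\star$ is the worst-case loss at the minimax decision, which trivially upper bounds any true expected loss at that decision. The regularity conditions needed for $(\cRob,\xRob{}) \in \cC\times\hat\cX$ were already verified in Proposition \ref{prop: feasibility or predictor}, so no additional work is required on that front.
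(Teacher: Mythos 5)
Your proof is correct and uses the same key observation as the paper: since $\cRob^\star(\hat\Pb_T,T) = \max_{i\in\Sigma}\ell(\xRob{,T}(\hat\Pb_T),i) = \sup_{\Pb'\in\cP} c(\xRob{,T}(\hat\Pb_T),\Pb')$ upper bounds $c(\xRob{,T}(\hat\Pb_T),\Pb)$ for every $\Pb\in\cP$, the disappointment event has probability zero for all $T$. Your write-up is somewhat more detailed, but the argument is identical in substance to the paper's one-line proof.
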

\begin{proof}
Let $\Pb \in \cPin$. The guarantee is directly implied by
$$\Pb^\infty 
    \left(
       c(\xRob{,T}(\hat{\Pb}_T), \Pb) > \cRob^{\star}(\hat{\Pb}_T,T)
    \right) = \Pb^\infty 
    \left(
       c(\xRob{,T}(\hat{\Pb}_T), \Pb) > \sup_{\Pb' \in \cP} c(\xRob{,T}(\hat{\Pb}_T),\Pb')
    \right) = 0.
$$%
\end{proof}
\begin{theorem}\label{thm: optimality presc superexp}
Consider the superexponential regime in which $a_T\gg T$. The pair of predictor and prescriptor $(\cRob,\xRob{})\in \cC \times \hat{\cX}$ is feasible in the prescription problem \eqref{eq:optimal-prescriptor} and
for any pair of predictor and prescriptor $(\hat{c},\hat{x})\in \cC \times \hat{\cX}$ satisfying the out-of-sample guarantee \eqref{eq: prescriptor out-of-sample ganrantee}, we have $(\cRob,\xRob{}) \orderpresc (\hat{c},\hat{x})$. That is, $(\cRob,\xRob{})$ is a strong optimal prescriptor in the superexponential regime.
\end{theorem}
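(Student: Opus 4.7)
The plan is to mimic the reduction sketched for Theorem~\ref{thm: overly strong predictor} and exploit the strong optimality of the KL-divergence DRO prescriptor in the exponential regime, then let the radius $r\to\infty$. Feasibility of $(\cRob,\xRob{})$ has already been established, so only the ordering claim remains to be addressed.

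First I would observe that any pair $(\hat{c},\hat{x})\in \cC\times\hat{\cX}$ satisfying the superexponential prescription out-of-sample guarantee \eqref{eq: prescriptor out-of-sample ganrantee} automatically satisfies the weaker exponential guarantee for every constant $r>0$. Indeed, $a_T\gg T$ implies $a_T\geq rT$ for all $T$ large enough, and dividing the upper bound on $\log \Pb^\infty(\cdots)$ by $rT$ instead of by $a_T$ only weakens it. Consequently $(\hat{c},\hat{x})$ is feasible in problem \eqref{eq:optimal-prescriptor} in the exponential regime at every $r>0$, and Theorem~\ref{thm: strong opt exponential regime prescriptor.} gives, writing $\cKL^\star_r(\Pb)$ for the optimal KL-DRO value to emphasize its $r$-dependence (and noting it is independent of $T$),
\[
\limsup_{T\to\infty} \frac{|\cKL^\star_r(\Pb) - c^\star(\Pb)|}{|\hat c^\star(\Pb,T) - c^\star(\Pb)|} \leq 1, \quad \forall \Pb\in\cPin,~\forall r>0.
\]
Since $\cKL^\star_r(\Pb)\geq c^\star(\Pb)$ (because $\Pb$ itself lies in every KL-ball), this rearranges to
\[
\cKL^\star_r(\Pb) - c^\star(\Pb) \;\leq\; \liminf_{T\to\infty} |\hat c^\star(\Pb,T) - c^\star(\Pb)|, \quad \forall r>0.
\]

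The crux is then to send $r\to\infty$ on the left and identify the limit as $\cRob^\star(\Pb) - c^\star(\Pb)$, from which $(\cRob,\xRob{})\orderpresc (\hat c,\hat x)$ follows by reversing the rearrangement. I would establish $\cKL^\star_r(\Pb)\uparrow \cRob^\star(\Pb)$ as $r\to\infty$ as follows. The sublevel sets $A_r := \set{\Pb'\in\cP}{I(\Pb,\Pb')\leq r}$ are nested in $r$; for $\Pb\in\cPin$ their union equals $\set{\Pb'\in\cP}{\Pb\ll\Pb'} = \cPin$, whose closure in $\cP$ is all of $\cP$. Since $c(x,\cdot)$ is linear and continuous in the distribution argument, $\cKL(x,\Pb;r) = \sup_{A_r} c(x,\cdot)$ increases monotonically to $\sup_{\cP} c(x,\cdot) = \cRob(x,\Pb)$ for every $x$. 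To pass this through $\inf_{x\in\cX}$, I would pick $x_r\in\argmin_x \cKL(x,\Pb;r)$, extract a subsequence $x_{r_k}\to x_\infty$ by compactness of $\cX$, and combine monotonicity in $r$ with continuity of $\cKL(\cdot,\Pb;r)$ in $x$ to obtain $\liminf_k \cKL^\star_{r_k}(\Pb) \geq \cKL(x_\infty,\Pb;r)$ for every fixed $r$; letting $r\to\infty$ then yields $\liminf_r \cKL^\star_r(\Pb)\geq \cRob(x_\infty,\Pb)\geq \cRob^\star(\Pb)$, with the reverse inequality $\cKL^\star_r\leq\cRob^\star$ being immediate.

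The main obstacle I anticipate is precisely this monotone minimax interchange. The one-sided bound $\cKL^\star_r\leq \cRob^\star$ is trivial; producing the matching liminf lower bound requires the compactness-continuity argument outlined above, and one has to verify that $\cKL(\cdot,\Pb;r)$ is indeed continuous in $x$ uniformly enough to let the subsequential limit pass through. Once the monotone convergence $\cKL^\star_r(\Pb)\uparrow\cRob^\star(\Pb)$ is secured, the remainder is bookkeeping, and the overall structure faithfully mirrors the reduction carried out for the prediction counterpart in Theorem~\ref{thm: overly strong predictor}.
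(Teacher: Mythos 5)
Your proof is correct, but it takes a genuinely different route from the paper's. The paper proves this theorem via a self-contained contradiction argument (Appendix~\ref{Appendix: proofs presc superexp}) that literally replays the exponential prescriptor proof of Theorem~\ref{thm: strong opt exponential regime prescriptor.} ``with $r=\infty$'': it assumes a violating pair $(\hat c, \hat x)$ exists, extracts a probabilistic accumulation point $x_\infty$ of $\hat x_T(\Pb_0)$ via Lemma~\ref{lemma: prob compactness}, perturbs the worst-case distribution into $\bar\Pb_1\in\cPin$, and invokes the distribution-shift Lemma~\ref{lemma: change of probability} together with the LDP to exhibit a finite exponential disappointment rate, which contradicts the superexponential requirement. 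You instead carry out a \emph{reduction} to Theorem~\ref{thm: strong opt exponential regime prescriptor.}: observe that a superexponential prescriptor is exponentially feasible at every $r>0$, invoke the exponential optimality theorem as a black box, and let $r\to\infty$. The paper uses exactly this reduction strategy for the \emph{predictor} counterpart (first proof of Theorem~\ref{thm: overly strong predictor}, Appendix~\ref{Appendix: proof of thm superexp pred}) but does not use it for prescriptors, presumably because the infimum over $x$ blocks the pointwise $r\to\infty$ argument. You correctly identify this as the crux and patch it cleanly: monotonicity of $\cKL^\star_r$ in $r$, compactness of $\cX$ to extract a convergent minimizing sequence $x_{r_k}\to x_\infty$, and continuity of $\cKL(\cdot,\Pb;r)$ in $x$ give $\liminf_k\cKL^\star_{r_k}(\Pb)\ge\cKL(x_\infty,\Pb;r)$ for every fixed $r$, then $r\to\infty$ yields $\ge\cRob(x_\infty)\ge\cRob^\star$; the reverse inequality is trivial. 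Your approach is shorter once Theorem~\ref{thm: strong opt exponential regime prescriptor.} is in hand and produces a pleasingly unified treatment of predictors and prescriptors in the superexponential regime, at the cost of needing the monotone minimax interchange lemma; the paper's approach is more self-contained and avoids the interchange, at the cost of repeating the entire machinery of the exponential proof. Both are valid.
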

We defer the proof to Appendix \ref{Appendix: proofs presc superexp}. The proof is in essence the same as the proof of Theorem \ref{thm: strong opt exponential regime prescriptor.} with $r \to \infty$.

\subsection{The subexponential Regime}\label{sec: prescriptor subexp}

We now turn to the subexponential regime where $a_T \ll T$. Akin the prediction problem, consistency is a necessary condition for optimality in the optimal prescription problem in the subexponential regime.

\begin{proposition}[Consistency of weakly optimal prescriptors]\label{prop: consitency of prescriptors}
Consider the subexponential regime in which $a_T\ll T$. Every weakly optimal pair of predictor-precriptor in \eqref{eq:optimal-prescriptor} is consistent. That is, for every pair of predictor-precriptor $(\hat{x},\hat{c})$ verifying the prescription out-of-sample guarantee \eqref{eq: prescriptor out-of-sample ganrantee}, either $(\hat{c}(\cdot,\cdot,T))_{T\geq1}$ converges point-wise to $c(\cdot,\cdot)$,
or there exists a pair of predictor precriptor $(\hat{x}',\hat{c}')$ verifying the out-of-sample guarantee that is strictly preferred to $(\hat{x},\hat{c})$, ie $(\hat{x}',\hat{c}') \orderpresc (\hat{x},\hat{c})$ and $(\hat{x}',\hat{c}') \not \equiv_{\hat{\cX}} (\hat{x},\hat{c})$. 
\end{proposition}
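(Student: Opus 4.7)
The plan is to adapt the bump-function construction from the proof sketch of Proposition \ref{prop: consitency} for predictor consistency to the coupled prediction--prescription setting, taking care that the perturbation actually registers in the order $\orderpresc$ which sees the predictor only through its minimum value. Suppose $(\hat{c}, \hat{x}) \in \cC \times \hat{\cX}$ is feasible in \eqref{eq:optimal-prescriptor} and $\hat{c}$ is not pointwise consistent; the goal is to exhibit a feasible $(\hat{c}', \hat{x}')$ with $(\hat{c}', \hat{x}') \orderpresc (\hat{c}, \hat{x})$ and $(\hat{c}', \hat{x}') \not\equiv_{\hat{\cX}} (\hat{c}, \hat{x})$. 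As a preliminary, I would establish a prescription analog of Proposition \ref{prop: liminf>limsup for a_T.} via the same MDP--based contradiction argument evaluated along the prescribed trajectory $(\hat{x}_T(\Pb), \Pb)$, yielding $\liminf_{T\to\infty} \sqrt{T/a_T}\,(\hat{c}^\star(\Pb, T) - c^\star(\Pb)) \ge 0$ for every $\Pb \in \cPin$. Together with the failure of consistency and the equicontinuity of $\hat{c}$, this produces $(x_0, \Pb_0) \in \cX \times \cPin$, $\varepsilon > 0$, an infinite $\mathcal{T} \subset \integ$, and a ball $B = \mathcal{B}((x_0, \Pb_0), \rho)$ on which $\hat{c}(x, \Pb, T) - c(x, \Pb) > \varepsilon/2$ for every $(x, \Pb) \in B$ and $T \in \mathcal{T}$.

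The crucial refinement beyond the predictor proof is to arrange that the bump interacts with the argmin, since otherwise $\hat{c}^\star$ is unchanged and the perturbation is $\orderpresc$--invisible. Using compactness of $\cX$, one passes to a subsubsequence of $\mathcal{T}$ along which $\hat{x}_T(\Pb_0)$ converges to some $x_\infty \in \cX$, and then re-centres the ball to $(x_\infty, \Pb_0)$: provided $\hat{c}^\star(\Pb_0, \cdot) - c^\star(\Pb_0)$ does not vanish along the subsequence, the point $(x_\infty, \Pb_0)$ is itself a non-consistency point of $\hat{c}$ (because $\hat{c}^\star(\Pb_0, T) = \hat{c}(\hat{x}_T(\Pb_0), \Pb_0, T)$ and $\hat{x}_T(\Pb_0) \to x_\infty$), while in the residual case one uses the original $(x_0, \Pb_0)$ and exploits the $\varepsilon$--gap there to make $x_0$ the new minimizer. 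With the location fixed, introduce an infinitely differentiable bump $\eta : \cX \times \cP \to [0, \varepsilon/8]$ supported in $\mathcal{B}((x_\infty, \Pb_0), \rho/2)$ with $\eta(x_\infty, \Pb_0) = \varepsilon/8$ and set
\[
\hat{c}'(x, \Pb, T) := \hat{c}(x, \Pb, T) - \eta(x, \Pb)\mathbf{1}_{T \in \mathcal{T}}, \qquad \hat{x}'_T(\Pb) \in \argmin_x \hat{c}'(x, \Pb, T).
\]
Regularity $\hat{c}' \in \cC$ is immediate since $\eta$ is independent of $T$ with bounded derivatives, and pointwise $\hat{c}' \le \hat{c}$ gives $\hat{c}'^{\star} \le \hat{c}^\star$.

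Feasibility of $(\hat{c}', \hat{x}')$ would be checked by splitting on $\Pb$. For $\Pb$ whose support is away from the $\Pb$--projection $B_\Pb$ of $B$, the LDP gives $\Pb^\infty(\hat{\Pb}_T \in B_\Pb) \le e^{-cT}$ for some $c>0$, which is $o(e^{-a_T})$ since $a_T \ll T$; on the complementary event $\hat{c}'(\cdot, \hat{\Pb}_T, T) = \hat{c}(\cdot, \hat{\Pb}_T, T)$, so $\hat{x}'_T(\hat{\Pb}_T) = \hat{x}_T(\hat{\Pb}_T)$ and $\hat{c}'^{\star}(\hat{\Pb}_T, T) = \hat{c}^\star(\hat{\Pb}_T, T)$, inheriting feasibility. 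For $\Pb$ close to $B_\Pb$, the residual constant gap $\hat{c}'(x, \Pb', T) - c(x, \Pb') \ge 3\varepsilon/8$ on $B$ together with the MDP yields a disappointment probability decaying strictly faster than $e^{-a_T}$, regardless of whether $\hat{x}'_T(\hat{\Pb}_T)$ lies inside or outside $B_x$. Strict preference then follows because, by the re-centring, $\hat{x}_T(\Pb_0) \in B_x$ along the subsequence, giving
\[
\hat{c}'^{\star}(\Pb_0, T) \le \hat{c}(\hat{x}_T(\Pb_0), \Pb_0, T) - \eta(\hat{x}_T(\Pb_0), \Pb_0) \le \hat{c}^\star(\Pb_0, T) - \zeta
\]
for some $\zeta > 0$ on the subsequence; combined with the lower bound $\hat{c}^\star \ge c^\star$ from the prescription analog of Proposition \ref{prop: liminf>limsup for a_T.}, this forces $\limsup_T |\hat{c}'^{\star}(\Pb_0, T) - c^\star(\Pb_0)| / |\hat{c}^\star(\Pb_0, T) - c^\star(\Pb_0)| < 1$, so $(\hat{c}', \hat{x}') \prec_{\hat{\cX}} (\hat{c}, \hat{x})$.

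The main obstacle is the re-centring step: in the predictor proof the non-consistency point can be chosen completely freely, but here a bump at an arbitrary non-consistency point might leave the minimum of $\hat{c}(\cdot, \Pb_0, T)$ untouched and therefore be invisible to $\orderpresc$. The compactness-plus-cases argument is what converts the hypothesized pointwise non-consistency of $\hat{c}$ into a non-consistency at an accumulation point of the argmin trajectory, which is the only place where a perturbation can strictly decrease $\hat{c}^\star$.
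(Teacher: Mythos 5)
You have correctly put your finger on the central difficulty, which the paper's own argument also glosses over: the appendix proof places the bump at an arbitrary non-consistency point $(x_0,\Pb_0)$, verifies only feasibility of $(\hat{c}',\hat{x}')$, and the main-text sketch asserts strict preference solely because $\hat{c}'\le\hat{c}$ pointwise forces $\hat{c}'^{\star}\le\hat{c}^{\star}$. That yields $(\hat{c}',\hat{x}')\orderpresc(\hat{c},\hat{x})$ but not $\not\equiv_{\hat{\cX}}$; as you say, if $\hat{x}_T(\Pb_0)$ stays outside the support of $\eta(\cdot,\Pb_0)$ then $\hat{c}'^{\star}(\Pb_0,\cdot)=\hat{c}^{\star}(\Pb_0,\cdot)$ and the bump is $\orderpresc$-invisible. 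Your instinct to drag the bump onto the argmin trajectory is the right diagnosis.

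The re-centring does not, however, close the gap. In your Case A the inference that $(x_\infty,\Pb_0)$ is a non-consistency point is not automatic: equicontinuity gives $\hat{c}(x_\infty,\Pb_0,T)\to c^{\star}(\Pb_0)+\delta$, but if $c(x_\infty,\Pb_0)=c^{\star}(\Pb_0)+\delta$ the predictor is consistent there and a bump at $(x_\infty,\Pb_0)$ would destroy feasibility. (That branch is in any case resolved for free by the strong optimality of $\cSVP$, Theorem \ref{thm: strong optimality mdp prescriptor.}, with no bump needed.) The residual case is where every bump-type construction — yours and the paper's — actually breaks: feasibility caps the bump height at $\varepsilon/8$, yet when $\hat{c}^{\star}(\Pb_0,T)\to c^{\star}(\Pb_0)$ one has $\hat{c}(x_0,\Pb_0,T)-\hat{c}^{\star}(\Pb_0,T)\ge c(x_0,\Pb_0)-c^{\star}(\Pb_0)+\varepsilon/2+o(1)\ge\varepsilon/2+o(1)$, so $x_0$ can never become the argmin of $\hat{c}'(\cdot,\Pb_0,T)$ and $\hat{c}'^{\star}(\Pb_0,\cdot)$ is eventually unchanged. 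Indeed, a predictor of the form $\cSVP(x,\Pb,T)+g(x,\Pb)$ with $g\ge 0$ smooth, independent of $T$, and vanishing on $\{(x^{\star}(\Pb),\Pb)\colon\Pb\in\cPin\}$ is feasible, satisfies $\equiv_{\hat{\cX}}(\cSVP,\xSVP{})$, hence is weakly optimal, yet is pointwise inconsistent wherever $g>0$. This suggests the statement as written is too strong: weak optimality of the pair only constrains $\hat{c}^{\star}$, which is all that $\orderpresc$ ever compares, not $\hat{c}$ pointwise.
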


\begin{proof}[Sketch of proof]
The full proof is deferred to Appendix \ref{Appendix: proof consistence presc}. Suppose $(\hat{x},\hat{c})$ is weakly optimal and not consistent. There exists $x_0\in \mathcal{X}$ and $\Pb_0 \in \mathcal{P}$ such that $\limsup |\hat{c}(x_0,\Pb_0,T) - c(x,\Pb_0)| = \varepsilon>0$. We consider the same exact construction of $\hat{c}'$ as in the proof of Proposition \ref{prop: consitency} (illustrated in Figure \ref{fig: proof of consistency}).
Among the possible prescriptors $\hat{x}'$ associated to $\hat{c}'$, we consider the closest one to the prescriptor $\hat{x}_T$ of $\hat{c}$. We then show that $(\hat{x}',\hat{c}')$ is a feasible pair of predictor prescriptor and that $(\hat{x}',\hat{c}')$ is strictly preferred to $(\hat{x},\hat{c})$. The latter result follows essentially from the construction of $\hat{c}'$: $\hat{c}'$ is less or equal to $\hat{c}$ at every point, hence, its minimum is also lower than the minimum of $\hat{c}$. Furthermore, we show that a feasible predictor is necessarily larger than the true cost $c$ asymptotically, hence, $\hat{c}'$ and its minimum $\hat{c}'^{\star}$ are closer to the true cost $c$ and the optimal cost $c^{\star}$ respectively.

To show feasibility of the pair $(\hat{x}',\hat{c}')$, we prove that they verify the out-of-sample guarantee. In order to do that, we examine the pair in two regions of $\cP$. The first region consist of distributions $\Pb$ such that $(\hat{x}'_T(\Pb),\Pb)$ does not fall in the ball $\mathcal{B}\left((x_0,\Pb_0),\frac{\rho}{2} \right)$ where $\hat{c}$ was perturbed into $\hat{c}'$ (see Figure \ref{fig: proof of consistency}). In this region, $\hat{c}'$ is exactly $\hat{c}$, and as $\hat{x}'$ is chosen as the closest minimizer of $\hat{c}'$ to $\hat{x}$, we prove that $\hat{x}'$ is also exactly $\hat{x}$. Hence, $(\hat{x}',\hat{c}')$ inherits the out-of-sample guarantee of $(\hat{x},\hat{c})$ in this region. The second region is where $(\hat{x}'_T(\Pb),\Pb)$ falls in the ball $\mathcal{B}\left((x_0,\Pb_0),\frac{\rho}{2} \right)$ where $\hat{c}$ was perturbed into $\hat{c}'$. We show that a constant gap with the true cost suffices to verify a prescription subexponential guarantee. As $\hat{c}'$ is constructed such that in this region, it maintains a constant gap with the true cost, it follows that $(\hat{x}',\hat{c}')$ verifies the desired subexponential guarantee.
\end{proof}

\begin{remark}
As pointed out out in Remark \ref{rmk: consistency of estimator}, the consistency of predictors in the sense of Proposition \ref{prop: consitency of prescriptors} implies the consistency of the estimator $(\hat{c}(x,\hat{\Pb}_T,T))_{T\geq 1}$ of the true cost $c(x,\Pb)$. Notice that it also implies the consistency of the estimator $(\hat{c}^{\star}(\hat{\Pb}_T,T))_{T\geq 1}$ of the optimal cost $c^{\star}(\Pb)$, in the prescription problem. In fact, the point-wise convergence of $(\hat{c}(\cdot,\cdot,T))_{T\geq 1}$ to $c(\cdot,\cdot)$ along with equicontinuity of $(\hat{c}(\cdot,\cdot,T))_{T\geq 1}$ implies its uniform convergence. Uniform convergence of $(\hat{c}(\cdot,\cdot,T))_{T\geq 1}$ to $c(\cdot,\cdot)$ implies point-wise convergence of $(\hat{c}^{\star}(\cdot,T))_{T\geq 1}$ to $c^{\star}(\cdot)$ (see Lemma \ref{lemma: convergence of minimum}), which combined with compactness of $\cX$, implies its uniform convergence (see Lemma \ref{lemma: unif convergence of minimum}). Uniform convergence combined with the almost sure convergence of $\hat{\Pb}_T$ to $\Pb$ implies almost sure convergence of $(\hat{c}^{\star}(\hat{\Pb}_T,T))_{T\geq 1}$ to the optimal cost $c^{\star}(\Pb)$.
\end{remark}

Consider a prescriptor associated to the SVP predictor $\cSVP$ defined in Equation \eqref{eq: Robust predictor SVP.}
\begin{align*}
    \xSVP{,T}(\Pb) &\in \argmin_{x\in \cX} \cSVP(x,\Pb,T), 
    \quad \forall \Pb \in \cPin,\; \forall T \in \integ,\\
    &\in \argmin_{x\in \cX} c (x,\Pb)+ \sqrt{\frac{2a_T}{T}\Var_{\Pb}(\loss(x, \xi))}, 
    \quad \forall \Pb \in \cPin,\; \forall T \in \integ. 
\end{align*}
The minimum of $\cSVP(\cdot,\Pb,T)$ is indeed attained as $\cX$ is compact and $\cSVP(\cdot,\Pb,T)$ is continuous. The SVP predictor $\cSVP$ emerges in our frameworks |as we will prove momentarily| as the optimal prescriptor.
Our framework considers only the out-of-sample performance and accuracy of the considered prescriptors.
Nevertheless, the tractability of the resulting prescriptor is a key practical issue.
The following proposition shows that minimizing the SVP predictor $\cSVP$ is \textit{essentially} a convex optimization problem when the loss function is convex.

\begin{proposition}[Convexity of SVP]\label{prop: convexity of robust pred}
Suppose the loss function $x\rightarrow \loss(x,i)$ of each uncertain scenario $i\in \Sigma$ is convex. Let $T\in \integ$ and $\hat{\Pb}_T$ the observed empirical distribution.  If $a_T$ is chosen such that
such that 
$\sqrt{2a_T/T} \geq -\frac{\sqrt{\Var_{\hat{\Pb}_T}(\loss(x,\xi))}}{|\loss(x,i) - c(x,\hat{\Pb}_T)|} $ for all $i\in \Sigma$,
then $x\rightarrow c(x,\hat{\Pb}_T) + \sqrt{\frac{2a_T}{T} \Var_{\hat{\Pb}_T}(\loss(x,\xi))}$ is convex in $\cX$.
\end{proposition}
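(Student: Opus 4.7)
The plan is to reduce the statement to a standard fact about pointwise suprema of convex functions by invoking the DRO representation of $\cSVP$ established in Proposition \ref{prop: robust predictor is DRO.}. The hypothesis on $a_T$ is precisely the condition $\sqrt{2a_T/T}\leq \min_{i\in\Sigma}\hat{\Pb}_T(i)\min_{i\in\Sigma}\min(\hat{\Pb}_T(i),1-\hat{\Pb}_T(i))$ that triggers the DRO formula. Hence, under the assumption of the proposition, we may write
\[
 c(x,\hat{\Pb}_T) + \sqrt{\tfrac{2a_T}{T}\Var_{\hat{\Pb}_T}(\loss(x,\xi))}
 \;=\;\cSVP(x,\hat{\Pb}_T,T)
 \;=\;\sup_{\Pb'\in\mathcal{U}}\,\Eb_{\Pb'}[\loss(x,\xi)],
\]
where the ambiguity set $\mathcal{U}:=\{\Pb'\in\cP\,:\,\|\Pb'-\hat{\Pb}_T\|_{\hat{\Pb}_T}^2\leq a_T/T\}$ depends only on $\hat{\Pb}_T$ and $T$, and not on $x$.

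Next I would fix an arbitrary $\Pb'\in\mathcal{U}\subseteq\cP$ and expand $\Eb_{\Pb'}[\loss(x,\xi)]=\sum_{i\in\Sigma}\Pb'(i)\,\loss(x,i)$. Since $\Pb'(i)\geq 0$ and each map $x\mapsto\loss(x,i)$ is convex by assumption, the function $x\mapsto\Eb_{\Pb'}[\loss(x,\xi)]$ is a non-negative linear combination of convex functions and hence convex on $\cX$. The objective of interest is the pointwise supremum of this family of convex functions indexed by $\Pb'\in\mathcal{U}$, and pointwise suprema of convex functions are convex. This yields the claim.

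There is essentially no obstacle here once Proposition \ref{prop: robust predictor is DRO.} is in hand: the condition on $a_T$ in the statement is exactly the one that makes the DRO representation valid, and this representation expresses the SVP objective as a supremum over an $x$-independent ambiguity set of integrals of $\loss(x,\cdot)$ that are linear in $\Pb'$ and hence preserve convexity in $x$. The only small caveat is that the supremum is taken over distributions $\Pb'$ which may lie outside the interior, but since $\mathcal{U}\subseteq\cP$ ensures $\Pb'(i)\geq 0$ for all $i$, non-negativity of the coefficients in the convex combination is guaranteed, and no further argument is required.
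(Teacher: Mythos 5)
Your argument is correct and follows exactly the same route as the paper: invoke the DRO representation from Proposition \ref{prop: robust predictor is DRO.} (valid precisely under the stated condition on $a_T$), observe that each $x\mapsto c(x,\Pb')$ is convex as a non-negative combination of the convex losses, and conclude via the pointwise supremum of convex functions over the $x$-independent ambiguity set. Nothing further is needed.
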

\begin{proof}
Notice that for all $\Pb' \in \cP$, $x\rightarrow c(x,\Pb)$ is convex as a convex combination of the loss function on each uncertainty.
Under the condition on $a_T$, Proposition \ref{prop: robust predictor is DRO.} implies that the considered function, SVP $\cSVP$, is equal to $\sup_{\Pb'\in \cP} \{c(x,\Pb') \; : \; \|\Pb'-\hat{\Pb}_T \|^2_{\hat{\Pb}_T} \leq a_T/T\}$ which is the supremum of convex functions, and is therefore convex.
\end{proof}

This result is rather surprising. While the empirical expectation is clearly convex when the loss is convex, the empirical standard deviation $x \rightarrow \sqrt{\Var_{\hat{\Pb}_T}(\loss(x,\xi))}$ is in general non-convex even when the loss is convex \cite{maurer2009empirical,duchi2016variance,lam2019recovering}. This implies that SVP is a sum of a convex and a non-convex function which is, in general, not convex. However, Proposition \ref{prop: convexity of robust pred} shows that when the scaling $\sqrt{2a_T/T}$ of the standard deviation is small enough, the SVP predictor becomes convex (see Figure \ref{fig:convexity} for an illustration). 
  This result admits also a probabilistic interpretation which can be traced back to \cite{duchi2016variance}. As the empirical distribution $\hat{\Pb}_T$ is close to $\Pb \in \cPin$ with increasing probability with $T$, and the scaling $\sqrt{2a_T/T}$ converges to $0$ (subexponential regime), the convexity condition is verified with increasingly high probability with $T$. Hence, the SVP predictor is convex with high probability.

\begin{figure}
    \centering
    \includegraphics[width=0.5\linewidth]{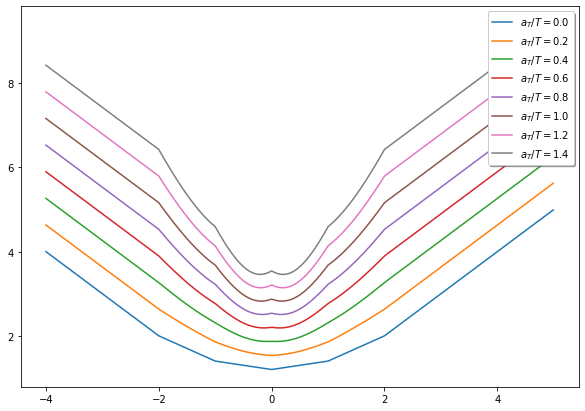}
    \caption{Plot of $x\rightarrow \cSVP(x,\hat{\Pb}_T,T) = c(x,\hat{\Pb}_T) + \sqrt{\frac{2a_T}{T} \Var_{\hat{\Pb}_T}(\loss(x,\xi))}$ for different values of empirical standard deviation scaling $a_T/T$. Here $\loss(x,\xi) = |x-\xi|$ and $\hat{\Pb}_T$ is a uniform distribution on $\{-2,1,0,1,2\}$. The higher curves correspond to higher scaling $a_T/T$. The curves transition from non-convexity to convexity when the scaling decreases illustrating Proposition \ref{prop: convexity of robust pred}.}
    \label{fig:convexity}
\end{figure}

The following proposition quantifies the amount of regularization the prescription of SVP adds. It also highlights how SVP naturally favors solutions with lower variance, and ultimately converges to the optimal solution with the lowest variance.

\begin{proposition}[Regularization of SVP prescription]\label{prop: rate of cv of prescriptor}
For all $\Pb \in \cPin$,
\begin{equation*}
|\cSVP^{\star}(\Pb,T) - c^{\star}(\Pb)| \leq \sqrt{\frac{2a_T}{T}\Var_{\Pb}(\loss(x^{\star}(\Pb),\xi))}, \quad \forall T \in \integ,
\end{equation*}
where $x^{\star}(\Pb)$ is a minimizer of $c(\cdot,\Pb)$ that has the lowest variance, that is, $x^{\star}(\Pb) \in \argmin \{\Var_{\Pb}(\loss(x,\xi)) \; : \; x\in \argmin c(\cdot,\Pb)\}$. Furthermore
\begin{equation*}
\cSVP^{\star}(\Pb,T) - c^{\star}(\Pb) = \sqrt{\frac{2a_T}{T}\Var_{\Pb}(\loss(x^{\star}(\Pb),\xi))} + o\left(\sqrt{\frac{a_T}{T}}\right),
\end{equation*}
and any prescriptor $\xSVP{,T}(\Pb)$ associated to the SVP predictor verifies $$\Var_{\Pb}(\loss(\xSVP{,T}(\Pb),\xi)) \xrightarrow[T\to \infty]{}  \Var_{\Pb}(\loss(x^{\star}(\Pb),\xi)).$$
\end{proposition}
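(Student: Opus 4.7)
The plan is to handle the three claims in sequence, leveraging that $\xSVP{,T}(\Pb)$ is by definition a minimizer of $\cSVP(\cdot,\Pb,T)$ and that $\cSVP(x,\Pb,T) \geq c(x,\Pb)$ always.

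\textbf{Step 1 (non-asymptotic upper bound).} First I would note the trivial lower bound $\cSVP^{\star}(\Pb,T) \geq c^{\star}(\Pb)$, which follows because $\cSVP(x,\Pb,T) = c(x,\Pb) + \sqrt{(2a_T/T)\Var_{\Pb}(\loss(x,\xi))} \geq c(x,\Pb) \geq c^{\star}(\Pb)$ for every $x\in \cX$. For the matching upper bound I would simply evaluate $\cSVP$ at the particular feasible minimizer $x^{\star}(\Pb)$:
\[
\cSVP^{\star}(\Pb,T) \leq \cSVP(x^{\star}(\Pb),\Pb,T) = c^{\star}(\Pb) + \sqrt{\tfrac{2a_T}{T}\Var_{\Pb}(\loss(x^{\star}(\Pb),\xi))}.
\]
Subtracting $c^{\star}(\Pb)$ and taking absolute values (the difference being non-negative) gives the first claim. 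This step is essentially one inequality and presents no difficulty.

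\textbf{Step 2 (convergence of the prescribed variance).} Let $x_T \defn \xSVP{,T}(\Pb)$. The comparison $\cSVP(x_T,\Pb,T) \leq \cSVP(x^{\star}(\Pb),\Pb,T)$ rearranges into
\[
0 \leq c(x_T,\Pb)-c^{\star}(\Pb) \leq \sqrt{\tfrac{2a_T}{T}}\Bigl[\sqrt{\Var_{\Pb}(\loss(x^{\star}(\Pb),\xi))} - \sqrt{\Var_{\Pb}(\loss(x_T,\xi))}\Bigr]. \tag{$\star$}
\]
The right-hand side is bounded in absolute value by $\sqrt{(2a_T/T)\Var_{\Pb}(\loss(x^{\star}(\Pb),\xi))} \to 0$, so $c(x_T,\Pb)\to c^{\star}(\Pb)$. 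By compactness of $\cX$, any accumulation point $\bar x$ of $(x_T)$ satisfies $c(\bar x,\Pb)=c^{\star}(\Pb)$ and hence lies in the set of minimizers $\mathcal{X}^{\star}(\Pb)$. Moreover the left-hand side of $(\star)$ is non-negative, so $\Var_{\Pb}(\loss(x_T,\xi)) \leq \Var_{\Pb}(\loss(x^{\star}(\Pb),\xi))$ for every $T$. Passing to the limit along any convergent subsequence and using continuity of $x\mapsto \Var_{\Pb}(\loss(x,\xi))$ gives $\Var_{\Pb}(\loss(\bar x,\xi)) \leq \Var_{\Pb}(\loss(x^{\star}(\Pb),\xi))$. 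Since $x^{\star}(\Pb)$ is by construction the minimum-variance element of $\mathcal{X}^{\star}(\Pb)$ and $\bar x \in \mathcal{X}^{\star}(\Pb)$, the reverse inequality is automatic. Hence every subsequential limit of $\Var_{\Pb}(\loss(x_T,\xi))$ equals $\Var_{\Pb}(\loss(x^{\star}(\Pb),\xi))$, yielding the third claim.

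\textbf{Step 3 (asymptotic expansion).} Writing
\[
\cSVP^{\star}(\Pb,T)-c^{\star}(\Pb) = \bigl(c(x_T,\Pb)-c^{\star}(\Pb)\bigr) + \sqrt{\tfrac{2a_T}{T}\Var_{\Pb}(\loss(x_T,\xi))},
\]
Step 2 gives $\sqrt{\Var_{\Pb}(\loss(x_T,\xi))} = \sqrt{\Var_{\Pb}(\loss(x^{\star}(\Pb),\xi))} + o(1)$, so the second summand equals $\sqrt{(2a_T/T)\Var_{\Pb}(\loss(x^{\star}(\Pb),\xi))} + o(\sqrt{a_T/T})$. For the first summand, $(\star)$ together with the variance convergence shows $c(x_T,\Pb)-c^{\star}(\Pb) = o(\sqrt{a_T/T})$. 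Adding the two estimates yields the middle claim.

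The only non-routine point is the variance-convergence argument in Step 2: one has to combine the one-sided inequality coming from optimality of $x_T$ with the minimality defining $x^{\star}(\Pb)$ to pin down the limit among possibly many minimizers of $c(\cdot,\Pb)$. Compactness of $\cX$ and continuity of both $c(\cdot,\Pb)$ and $\Var_{\Pb}(\loss(\cdot,\xi))$ are used implicitly and are the only structural assumptions required.
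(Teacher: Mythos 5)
Your proof is correct and follows essentially the same route as the paper: evaluate $\cSVP$ at the minimum-variance minimizer $x^{\star}(\Pb)$ to get the upper bound, compare with the lower bound coming from minimality of $c^{\star}(\Pb)$ to sandwich $\cSVP^{\star}(\Pb,T)-c^{\star}(\Pb)$, then use compactness of $\cX$ and the minimum-variance defining property of $x^{\star}(\Pb)$ to pin down the limit of $\Var_{\Pb}(\loss(\xSVP{,T}(\Pb),\xi))$, from which the asymptotic expansion follows. Your inequality $(\star)$ is just a convenient rearrangement of the paper's sandwich inequality, and is a slightly cleaner way to package the two one-sided bounds.
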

\begin{proof}
Let $\Pb \in \cPin$ and $T\in \integ$. Set $\alpha_T = 2a_T/T$. By minimality of $c^{\star}(\Pb)$, we have
\begin{align*}
    \cSVP^{\star}(\Pb,T) &= c(\xSVP{,T}(\Pb),\Pb) + \sqrt{\alpha_T \Var_{\Pb}(\loss(\xSVP{,T}(\Pb),\xi))} \numberthis \label{proof eq: robust precriptor eq 1}\\
    &\geq c^{\star}(\Pb) + \sqrt{\alpha_T \Var_{\Pb}(\loss(\xSVP{,T}(\Pb),\xi))} \numberthis \label{proof eq: robust precriptor ineq 1}
\end{align*}
Let $x^{\star}(\Pb)$ as defined in the proposition.
By minimality of $\cSVP^{\star}$, we have
\begin{align*}
    \cSVP^{\star}(\Pb,T) &\leq \cSVP(x^{\star}(\Pb),\Pb,T) = c^{\star}(\Pb) + \sqrt{\alpha_T \Var_{\Pb}(\loss(x^{\star}(\Pb),\xi))} \numberthis \label{proof eq: robust precriptor ineq 2}
\end{align*}
Hence, combining \eqref{proof eq: robust precriptor ineq 1} and \eqref{proof eq: robust precriptor ineq 2}, we get,
\begin{equation}\label{proof eq: gendarmes}
\sqrt{\frac{a_T}{T}\Var_{\Pb}(\loss(\xSVP{,T}(\Pb),\xi))} \leq 
\cSVP^{\star}(\Pb,T) - c^{\star}(\Pb)
\leq 
\sqrt{\frac{a_T}{T}\Var_{\Pb}(\loss(x^{\star}(\Pb),\xi))}.
\end{equation}
which proves the desired inequality. It remains to prove the convergence of $(\Var_{\Pb}(\loss(\xSVP{,T}(\Pb),\xi)))_{T\geq 1}$ to $\Var_{\Pb}(\loss(x^{\star}(\Pb),\xi))$ and the asymptotic development of $|\cSVP^{\star}(\Pb,T) - c^{\star}(\Pb)|$. Notice that the convergence of the variance along with \eqref{proof eq: gendarmes} implies the asymptotic development, hence it suffices to prove the converenge of the variance.

Inequality \eqref{proof eq: gendarmes} implies that $\limsup_{T\to \infty} \Var_{\Pb}(\loss(\xSVP{,T}(\Pb),\xi)) \leq \Var_{\Pb}(\loss(x^{\star}(\Pb),\xi))$.  It remains to the show that $\liminf_{T\to \infty} \Var_{\Pb}(\loss(\xSVP{,T}(\Pb),\xi)) \geq \Var_{\Pb}(\loss(x^{\star}(\Pb),\xi))$.
As the sequence $\xSVP{,T}(\Pb)$ lives in the compact set $\cX$, it has accumulation points. Let $(\xSVP{,t_T}(\Pb))_{T\geq 1}$ be subsequence converging to a limit $x_{\infty} \in \cX$. Let us show that $x_{\infty}$ is a minimizer of $c(\cdot,\Pb)$. Inequality \eqref{proof eq: gendarmes} implies that $\cSVP^{\star}(\Pb,T) \xrightarrow[T\to \infty]{} c^{\star}(\Pb)$, therefore using \eqref{proof eq: robust precriptor eq 1}, we have $c(\xSVP{,T}(\Pb),\Pb)\xrightarrow[T\to \infty]{} c^{\star}(\Pb)$. Hence, by continuity of $c(\cdot,\Pb)$, we have $c(x_{\infty},\Pb) = c^{\star}(\Pb)$ which proves that $x_{\infty}$ is a minimizer of $c(\cdot,\Pb)$. As $x^{\star}(\Pb)$ is a minimizer of $c(\cdot,\Pb)$ with the lowest variance by definition, we have $\Var_{\Pb}(\loss(x_{\infty},\xi)) \geq \Var_{\Pb}(\loss(x^{\star}(\Pb),\xi))$. We have therefore shown that every accumulation point $v$ of $\Var_{\Pb}(\loss(\xSVP{,T}(\Pb),\xi))$ verifies $v \geq \Var_{\Pb}(\loss(x^{\star}(\Pb),\xi))$. Hence $\liminf_{T\to \infty} \Var_{\Pb}(\loss(\xSVP{,T}(\Pb),\xi)) \geq \Var_{\Pb}(\loss(x^{\star}(\Pb),\xi))$ which completes the proof.
\end{proof}

We now prove the strong optimality of the SVP prescriptor. As before, we first establish its feasibility.
\begin{proposition}[Feasibility of the SVP Prescriptor]\label{prop: feasibility of prescriptors}
The prescriptor $(\cSVP,\xSVP{}) \in \cC \times \hat{\cX}$ verifies the prescription out-of-sample guarantee \eqref{eq: prescriptor out-of-sample ganrantee} when $a_T \ll T$.
\end{proposition}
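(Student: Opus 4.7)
The plan is to reduce this feasibility claim directly to the calculation carried out in the proof of Proposition~\ref{prop: feasibility of robust predictor} by exploiting the fact that the DRO identity \eqref{eq: robust predictor DRO.} of Proposition~\ref{prop: robust predictor is DRO.} is uniform in the decision $x$. The key observation is that although the prescribed action $\xSVP{,T}(\hat{\Pb}_T)$ depends on the data, the event that forces disappointment can be bounded by a data-only event in which the empirical distribution leaves an appropriate ellipsoid around $\Pb$.

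Fix $\Pb \in \cPin$ and let $T_0$ be large enough that the DRO representation from Proposition~\ref{prop: robust predictor is DRO.} is valid. For $T \geq T_0$ and any $\Pb' \in \cP$ with $\|\Pb - \Pb'\|_{\Pb'}^2 \leq a_T/T$, the DRO identity yields $c(x,\Pb) \leq \cSVP(x,\Pb',T)$ for \emph{every} $x \in \cX$. Specialising to $\Pb' = \hat{\Pb}_T$ and $x = \xSVP{,T}(\hat{\Pb}_T)$, we obtain
\[
  \hat{\Pb}_T \in E_T := \set{\Pb' \in \cP}{\|\Pb-\Pb'\|_{\Pb'}^2 \leq a_T/T}
  \implies
  c\bigl(\xSVP{,T}(\hat{\Pb}_T), \Pb\bigr) \leq \cSVP\bigl(\xSVP{,T}(\hat{\Pb}_T), \hat{\Pb}_T, T\bigr) = \cSVP^{\star}(\hat{\Pb}_T,T).
\]
Hence the disappointment event is contained in $\{\hat{\Pb}_T \notin E_T\}$, which is a pure empirical-measure event that no longer involves the prescriptor.

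From here the argument is exactly the one used in Proposition~\ref{prop: feasibility of robust predictor}: write $\hat{\Pb}_T - \Pb \in \sqrt{a_T/T}\,\Gamma_T$ where $\Gamma_T = \sqrt{T/a_T}(E_T^c - \Pb)$, invoke Claim~\ref{claim: set inclusions for Gammas} to sandwich $\Gamma_T$ between $\sqrt{1\pm\varepsilon_T}\,\Gamma$ with $\Gamma = \set{\Delta \in \mathcal{P}_{0,\infty}}{\|\Delta\|_{\Pb}^2 \geq 1}$, and apply the Moderate Deviation Principle (Theorem~\ref{thm: MDP finite space}) to get
\[
  \limsup_{T\to\infty} \frac{1}{a_T}\log \Pb^\infty\!\left(\hat{\Pb}_T \notin E_T\right)
  \leq
  -\inf_{\Delta \in \bar\Gamma}\|\Delta\|_{\Pb}^2 = -1.
\]
Combining with the inclusion above gives the desired bound.

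There is essentially no new obstacle: once one recognises that the DRO inequality holds uniformly in $x$ and therefore survives composition with the data-dependent prescriptor, the prescription feasibility reduces to the prediction feasibility without any additional machinery. The only point to verify carefully is that the implication $\hat{\Pb}_T \in E_T \Rightarrow \text{no disappointment}$ uses the exact DRO form \eqref{eq: robust predictor DRO.} at the empirical measure $\hat{\Pb}_T$ rather than at $\Pb$, which is why we require $T \geq T_0$ large enough that the hypothesis of Proposition~\ref{prop: robust predictor is DRO.} holds along the trajectory; since this condition is needed only asymptotically, it does not affect the $\limsup$.
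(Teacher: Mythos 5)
Your proposal is correct and follows essentially the same route as the paper's proof: both use the DRO representation of $\cSVP$ at $\hat{\Pb}_T$ to show that $\hat{\Pb}_T \in E_T$ precludes disappointment at the prescribed decision (because the DRO inequality $c(x,\Pb)\leq\cSVP(x,\hat{\Pb}_T,T)$ holds uniformly in $x$, hence in particular at $x=\xSVP{,T}(\hat{\Pb}_T)$), and then reduce to the MDP bound $\limsup_{T\to\infty}\frac{1}{a_T}\log\Pb^{\infty}(\hat{\Pb}_T\notin E_T)\leq -1$ already established in the proof of Proposition~\ref{prop: feasibility of robust predictor}. The only cosmetic difference is that you re-derive that last bound via Claim~\ref{claim: set inclusions for Gammas} and Theorem~\ref{thm: MDP finite space}, whereas the paper simply cites it.
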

\begin{proof}
Let $\Pb \in \cPin$ and $T_0 \in \integ$ be such that for all $T\geq T_0$ the DRO form \eqref{eq: robust predictor DRO.} of $\cSVP$ holds. For $T \in \integ$, we have
$$
\hat{\Pb}_T \in E_T:= \set{\Pb' \in \mathcal{P}}{\norm{\Pb'-\Pb}^2_{\Pb'}\leq \frac{a_T}{T}} \implies
c(\xSVP{,T}(\hat{\Pb}_T), \Pb) \leq
\cSVP(\xSVP{,T}(\hat{\Pb}_T), \hat{\Pb}_T,T) =\cSVP^{\star}(\hat{\Pb}_T,T)
$$
Hence,
$$\frac{1}{a_T} \log \Pb^\infty 
    \left(
        c(\xSVP{,T}(\hat{\Pb}_T), \Pb) > \cSVP^{\star}( \hat{\Pb}_T,T)
    \right)
\leq
    \frac{1}{a_T} \log \Pb^{\infty}
    \left(
      \hat{\Pb}_T \not\in E_T
          \right)
    $$
We have shown in the proof of Proposition \ref{prop: feasibility of robust predictor} that $\limsup_{T\to\infty}
    \frac{1}{a_T}  \log \Pb^{\infty}
    \left(
      \hat{\Pb}_T \not\in E_T
          \right) \leq -1$ which gives the desired result.
\end{proof}

To establish strong optimality in the sub-exponential regime we will need to impose further regularity on problem (\ref{eq: stochastic opt}) which we wish to approximate. In order to establish the following theorem, we suppose that the minimizer $x^{\star}(\Pb)$ of the true cost $c(\cdot,\Pb)$ is unique, for all $\Pb \in \cPin$. This is the  case for instance when the loss of each uncertain scenario $i$, $x \rightarrow \loss(x,i)$ is strictly convex. Notice that the imposed assumption is only on the actual cost $c$ and not on the predictors $\hat{c}$ which can have multiple minima. The imposed restriction is necessary as $x^{\star}(\Pb)$ can behave very erratically at any distribution $\Pb$ where $\argmin c(x, \Pb)$ should fail to be single-valued and consequently seems too be hard to approximate optimally.

\begin{theorem}[Strong optimality]\label{thm: strong optimality mdp prescriptor.}
Consider the subexponential regime in which $a_T\ll T$. The pair of predictor and prescriptor $(\cSVP,\xSVP{})\in \cC \times \hat{\cX}$ is feasible in the prescription problem \eqref{eq:optimal-prescriptor} and
for any pair of predictor and prescriptor $(\hat{c},\hat{x})\in \cC \times \hat{\cX}$ satisfying the out-of-sample guarantee \eqref{eq: prescriptor out-of-sample ganrantee}, we have $(\cSVP,\xSVP{}) \orderpresc (\hat{c},\hat{x})$. That is, $(\cSVP,\xSVP{})$ is a strong optimal prescriptor in the subexponential regime.
\end{theorem}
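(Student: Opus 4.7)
Feasibility is provided by Proposition \ref{prop: feasibility of prescriptors}, so the task reduces to establishing $(\cSVP,\xSVP{}) \orderpresc (\hat{c},\hat{x})$ for an arbitrary feasible pair $(\hat{c},\hat{x})\in \cC \times \hat{\cX}$. My plan is to mirror the structure of Theorem \ref{thm: strong optimality subexp}. The decisive ingredient will be a prescription analogue of Proposition \ref{prop: liminf>limsup for a_T.} stating that for every feasible $(\hat{c},\hat{x})$ and every $\Pb \in \cPin$,
$$\liminf_{T\to\infty}\sqrt{T/a_T}\bigl(\hat{c}^\star(\Pb,T) - c^\star(\Pb)\bigr) \geq \sqrt{2\Var_{\Pb}(\loss(x^\star(\Pb),\xi))}.$$
Once this lower bound is in hand, combining it with the matching asymptotic for $\cSVP^\star$ supplied by Proposition \ref{prop: rate of cv of prescriptor} and applying the $\limsup/\liminf$ manipulations (Lemmas \ref{lemma: limsup liminf ineq} and \ref{lemma: lim in limsup.}) already used at the end of Theorem \ref{thm: strong optimality subexp} will immediately yield $(\cSVP,\xSVP{}) \orderpresc (\hat{c},\hat{x})$.

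To prove the lower bound I will proceed by contradiction. If it fails at some $\Pb_0 \in \cPin$, then I can extract a subsequence $(t_T)$ and $\varepsilon>0$ along which $\hat{c}^\star(\Pb_0, t_T) - c^\star(\Pb_0) < (1-\varepsilon)\sqrt{\alpha_{t_T}}\sqrt{\Var_{\Pb_0}(\loss(x^\star(\Pb_0),\xi))}$ with $\alpha_T := 2a_T/T$. Because $\hat{c}\in\cC$ is equicontinuous, Lemma \ref{lemma: hein for equicontinuity} transfers equicontinuity to the minimum $\hat{c}^\star$, and combined with the continuity of $c^\star$ and of $\Pb\mapsto\Var_{\Pb}(\loss(x^\star(\Pb_0),\xi))$ this promotes the strict inequality to a uniform one on some ball $\mathcal{B}(\Pb_0, r)$ with a slightly reduced slack, in perfect parallel with Claim \ref{claim: unif conv.} in the prediction case.

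Next, since the minimizer $x^\star(\Pb_0)$ is unique by assumption and $c(x,\Pb)$ is linear in $\Pb$, a standard Danskin/envelope argument shows that $c^\star$ is differentiable at $\Pb_0$ with gradient $\loss(x^\star(\Pb_0),\cdot)$. Mimicking the construction in the proof of Proposition \ref{prop: liminf>limsup for a_T.}, I will set $\varphi := \varphi_{x^\star(\Pb_0)}(\Pb_0)$, choose a small $\eta\in (0,\varepsilon)$ and a sufficiently small open neighbourhood $\Gamma \subset \mathcal{P}_{0,\infty}$ of $\varphi$, and consider the perturbed distributions $\Pb_T(u) := \Pb_0 - (1-\eta)\sqrt{\alpha_T}\, u$ for $u\in \Gamma$. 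A first-order expansion of $c^\star$ at $\Pb_0$, together with the identity $c(x^\star(\Pb_0),\varphi) = \sqrt{\Var_{\Pb_0}(\loss(x^\star(\Pb_0),\xi))}$ from Proposition \ref{prop: robust predictor is DRO.}, gives for $T$ large and $u$ close to $\varphi$
$$c^\star(\Pb_0) - \hat{c}^\star(\Pb_T(u), t_T) \geq \sqrt{\alpha_{t_T}}\,(\varepsilon - \eta)\sqrt{\Var_{\Pb_0}(\loss(x^\star(\Pb_0),\xi))} + o(\sqrt{\alpha_{t_T}}) > 0.$$
Since $c(\hat{x}_{t_T}(\Pb_T(u)), \Pb_0) \geq c^\star(\Pb_0)$, this forces the disappointment event $c(\hat{x}_{t_T}(\hat{\Pb}_{t_T}),\Pb_0) > \hat{c}^\star(\hat{\Pb}_{t_T}, t_T)$ whenever $\hat{\Pb}_{t_T} \in \Pb_0 - (1-\eta)\sqrt{\alpha_{t_T}}\Gamma$. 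The Moderate Deviation Principle (Theorem \ref{thm: MDP finite space}), together with $\|\sqrt{2}\varphi\|_{\Pb_0} = 1$ from Proposition \ref{prop: robust predictor is DRO.}, then lower bounds this disappointment probability by $e^{-(1-\eta)^2 a_{t_T} + o(a_{t_T})}$, which contradicts the prescription out-of-sample guarantee \eqref{eq: prescriptor out-of-sample ganrantee} at $\Pb_0$ since $(1-\eta)^2 < 1$.

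I expect the Danskin differentiability step to be the main obstacle: the entire construction requires expanding $c^\star(\Pb_T(u)) - c^\star(\Pb_0)$ to first order with the specific coefficient $c(x^\star(\Pb_0), u)$ in every perturbation direction $u$, and this is where the standing uniqueness assumption on $x^\star$ is indispensable, since otherwise the concave function $c^\star$ would have kinks where the first-order expansion is direction-dependent and carries a strictly larger supergradient, preventing the clean matching of the SVP regularization rate. A minor separate case to handle is $\Var_{\Pb}(\loss(x^\star(\Pb),\xi)) = 0$, for which the lower bound is vacuous, but Proposition \ref{prop: rate of cv of prescriptor} still gives $\cSVP^\star(\Pb, T) - c^\star(\Pb) = o(\sqrt{a_T/T})$, so the $\orderpresc$-ratio is trivially controlled regardless.
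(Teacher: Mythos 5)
Your high-level strategy is exactly right and matches the paper's: reduce to a lower-bound proposition mirroring Proposition~\ref{prop: liminf>limsup for a_T.}, prove it by contradiction via the MDP, and then feed it through the $\limsup/\liminf$ lemmas as in Theorem~\ref{thm: strong optimality subexp}. You also correctly anticipate the role of the uniqueness of $x^\star(\Pb)$. However, there is a genuine gap in the step where you promote the pointwise inequality at $\Pb_0$ to a uniform one on a ball, and it is not where you expected.

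You write that equicontinuity of $\hat{c}^\star$ (via Lemma~\ref{lemma: hein for equicontinuity}) ``promotes the strict inequality to a uniform one ... with a slightly reduced slack, in perfect parallel with Claim~\ref{claim: unif conv.}.'' This cannot work, because the inequality to be propagated has the form $\hat{c}^\star(\Pb_0,t_T) \leq \cSVP^\star(\Pb_0,t_T) - \varepsilon\sqrt{\alpha_{t_T}}$ and the gap $\varepsilon\sqrt{\alpha_{t_T}}$ shrinks to zero, while you need to evaluate at $\Pb_T(u) = \Pb_0 - (1-\eta)\sqrt{\alpha_T}u$, which is at distance $O(\sqrt{\alpha_T})$ from $\Pb_0$. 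Plain equicontinuity only controls variations at a fixed (non-shrinking) scale and cannot tell apart a predictor that wiggles by $O(1)$ over a $\delta$-ball from one that wiggles by $O(\sqrt{\alpha_T})$. In the paper, Claim~\ref{claim: unif conv.} is not an equicontinuity argument: it exploits the differentiability half of Definition~\ref{def: regular pred}, invokes Arzel\`a--Ascoli twice to extract a subsequence along which both $\hat{c}$ and $\nabla\hat{c}$ converge uniformly, and then splits on whether the limit gradient $\nabla\hat{\delta}_\infty(x_0,\cdot)(\Pb_0)$ vanishes. In the nonvanishing case the gap is a \emph{constant} $\tilde\varepsilon$ independent of $T$; in the vanishing case the gradients become uniformly $O(\varepsilon')$ and a mean-value-theorem estimate with slack $\varepsilon'\|\Pb-\Pb_0\|$ goes through.

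The difficulty is compounded in the prescription setting because $\hat{c}^\star$, being an envelope, is \emph{not} differentiable even though $\hat{c}$ is. The paper's Claim~\ref{claim: prescriptor unif ineq} therefore never works with $\hat{c}^\star$ directly as a function of $\Pb$; it freezes the prescription point and applies the gradient argument to $\Pb\mapsto\hat{c}(\hat{x}_{t_T}(\Pb_0),\Pb,t_T)$, at the cost of introducing the correction terms $c(\xSVP{,t_T}(\Pb_1),\Pb_1-\Pb) - c(\hat{x}_{t_T}(\Pb),\Pb_1-\Pb)$ which account for the displacement between the prescription points of $\hat{c}$ and $\cSVP$. These terms are exactly what your sketch omits, and they are later controlled precisely because of the uniqueness assumption on $x^\star(\Pb)$ (which makes $x^\star(\cdot)$ continuous by Lemma~\ref{lemma: Continuity of a unique minimizer} and forces both $\hat{x}_{t_T}(\Pb)$ and $\xSVP{,t_T}(\Pb)$ to cluster near the same point). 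Finally, the Danskin differentiability of $c^\star$ you flag as the ``main obstacle'' is actually unproblematic and is used implicitly in the paper; the real obstacle is the non-differentiability of $\hat{c}^\star$, and your sketch provides no mechanism to handle it.
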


The key step to prove Theorem \ref{thm: strong optimality mdp prescriptor.} is to show that in order to verify an out-of-sample guarantee with speed $(a_T)_{T\geq1}$, a prescriptor must minimize a cost that necessarily adds a regularization to the empirical cost larger than $\sqrt{2a_T/T}\sqrt{\Var_{\Pb}(\loss(x^{\star}(\Pb),\xi))}$, where $x^{\star}(\Pb)$ is a minimizer of the cost $c(x,\Pb)$. This quantity happens to be exactly the regularization added by the SVP prescriptor by Proposition \ref{prop: rate of cv of prescriptor}.
\begin{proposition}\label{prop: lower bound strong opt presc}
Let $(\hat{c}, \hat{x}) \in \hat{\cX}$ be a pair of predictor prescriptor verifying the out-of-sample guarantee \eqref{eq: prescriptor out-of-sample ganrantee}. The following inequality holds
\begin{equation*}
    \liminf_{T\to \infty} \sqrt{\frac{T}{a_T}}|\hat{c}^{\star}(\Pb,T) - c^{\star}(\Pb)|
    \geq
    \limsup_{T\to \infty} \sqrt{\frac{T}{a_T}}|\cSVP^{\star}(\Pb,T) - c^{\star}(\Pb)| = \sqrt{2\Var_{\Pb}(\loss(x^{\star}(\Pb),\xi))}, \quad \forall \Pb \in \cPin.
\end{equation*}
\end{proposition}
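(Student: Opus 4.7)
My plan is to argue by contradiction, adapting the proof of Proposition \ref{prop: liminf>limsup for a_T.} to the prescription setting. The equality on the right follows immediately from the asymptotic expansion in Proposition \ref{prop: rate of cv of prescriptor}, which gives $\sqrt{T/a_T}\,|\cSVP^\star(\Pb,T) - c^\star(\Pb)| \to \sqrt{2\Var_\Pb(\loss(x^\star(\Pb),\xi))}$ (a factor $\sqrt{2}$ seems missing from the stated RHS).

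For the inequality, suppose for contradiction that some $\Pb_0 \in \cPin$ violates it. Since $(\hat c, \hat x)$ is feasible, $\hat c^\star(\Pb,T)$ must asymptotically dominate $c^\star(\Pb)$ pointwise, so the absolute value may be dropped: along a subsequence $(t_T)$ there exists $\varepsilon>0$ with $\hat c^\star(\Pb_0,t_T) - c^\star(\Pb_0) \leq \sqrt{\alpha_{t_T}}(\sqrt{2V_0}-\varepsilon)$, where $V_0 := \Var_{\Pb_0}(\loss(x^\star(\Pb_0),\xi))$ and $\alpha_T := 2a_T/T$. Mirroring Claim \ref{claim: unif conv.}, the equicontinuity of $\hat c^\star$ (inherited from $\hat c$ via compactness of $\cX$ and Lemma \ref{lemma: hein for equicontinuity}) together with continuity of $c^\star$ and $\cSVP^\star$ extends this gap to an open ball $\mathcal{B}(\Pb_1,r)$ around some $\Pb_1 \in \cPin$: for every $\varepsilon'>0$,
\[
\hat c^\star(\Pb,t_T) + \varepsilon\sqrt{\alpha_{t_T}} \leq \cSVP^\star(\Pb,t_T) + \varepsilon'\|\Pb-\Pb_1\|, \qquad \forall \Pb \in \mathcal{B}(\Pb_1,r),\ T\in\integ.
\]

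The heart of the argument is then to build, from the direction $\varphi := \varphi_{x^\star(\Pb_1)}(\Pb_1)$ supplied by Proposition \ref{prop: robust predictor is DRO.}, an open cone $\Gamma \subset \mathcal{P}_{0,\infty}$ around $\varphi$ and perturbations $\Pb_{t_T}(u) := \Pb_1 - (1-\eta)\sqrt{\alpha_{t_T}}\,u$ with $\eta>0$ small, such that for every $u\in\Gamma$ and $T$ large,
\[
c^\star(\Pb_1) > \hat c^\star(\Pb_{t_T}(u), t_T).
\]
This is a disappointment event when the true distribution is $\Pb_1$ and the empirical equals $\Pb_{t_T}(u)$, since $c(\hat x_{t_T}(\Pb_{t_T}(u)),\Pb_1) \geq c^\star(\Pb_1)$. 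To obtain it, I would chain the preceding gap inequality to replace $\hat c^\star$ by $\cSVP^\star$, and then use the envelope bound $\cSVP^\star(\Pb_{t_T}(u),t_T) \leq \cSVP(x^\star(\Pb_1),\Pb_{t_T}(u),t_T)$. Expanding this via linearity of $\Pb \mapsto c(x^\star(\Pb_1),\Pb)$, the identities $c(x^\star(\Pb_1),\varphi) = \sqrt{V_1}$ and $\|\sqrt{2}\,\varphi\|_{\Pb_1}=1$ from Proposition \ref{prop: robust predictor is DRO.}, and continuity of $\Pb \mapsto \Var_\Pb(\loss(x^\star(\Pb_1),\xi))$ at $\Pb_1$ produces a positive surplus of order $\varepsilon\sqrt{\alpha_{t_T}}$ once $\eta,\varepsilon'$ are chosen small relative to $\varepsilon$.

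Concluding with the Moderate Deviation Principle (Theorem \ref{thm: MDP finite space}) applied to the set $-(1-\eta)\sqrt{2}\,\Gamma$, together with $\|\sqrt{2}\,\varphi\|_{\Pb_1}=1$, yields
\[
\limsup_{T\to\infty} \frac{1}{a_T} \log \Pb_1^\infty\bigl(c(\hat x_T(\hat\Pb_T),\Pb_1) > \hat c^\star(\hat\Pb_T,T)\bigr) \geq -(1-\eta)^2 > -1,
\]
contradicting feasibility. The main obstacle, distinguishing this from the pointwise prediction case, is the strict inequality $c^\star(\Pb_1) > \hat c^\star(\Pb_{t_T}(u),t_T)$: because $c^\star$ is only concave in $\Pb$ and hence not directly linearizable, the uniqueness of $x^\star(\Pb_1)$ is essential to identify the envelope-optimal direction $\varphi_{x^\star(\Pb_1)}(\Pb_1)$, which aligns the argument with the prediction-case bound applied at the single decision $x^\star(\Pb_1)$.
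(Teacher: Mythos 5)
Your high-level strategy — contradiction, extension of the gap to a ball around some $\Pb_1$, perturbations $\Pb_{t_T}(u) := \Pb_1 - (1-\eta)\sqrt{\alpha_{t_T}}\,u$ in a neighborhood of $\varphi_{x^\star(\Pb_1)}(\Pb_1)$, MDP — matches the paper, and several of your observations are correct and useful: the $\sqrt{2}$ is indeed missing from the stated right-hand side (this is a typo in the paper; Proposition \ref{prop: rate of cv of prescriptor} gives $\sqrt{2\Var_\Pb(\loss(x^\star(\Pb),\xi))}$), the bound $c(\hat x_T(\hat\Pb_T),\Pb_1) \geq c^\star(\Pb_1)$ to pass from $c^\star(\Pb_1) > \hat c^\star$ to the prescription disappointment event is exactly what the paper's inequality \eqref{proof eq: first term} uses, and you correctly identify uniqueness of $x^\star$ as essential. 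But there is a genuine gap in the intermediate claim and the subsequent expansion.

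Your ``ball-extension'' claim, $\hat c^\star(\Pb,t_T) + \varepsilon\sqrt{\alpha_{t_T}} \leq \cSVP^\star(\Pb,t_T) + \varepsilon'\|\Pb-\Pb_1\|$, omits the correction terms $c(\xSVP{,t_T}(\Pb_1),\Pb_1-\Pb) - c(\hat{x}_{t_T}(\Pb),\Pb_1- \Pb)$ that appear in the paper's Claim \ref{claim: prescriptor unif ineq}, and these are not absorbable into $\varepsilon'\|\Pb-\Pb_1\|$. The point is that $\hat c^\star$ and $\cSVP^\star$ are not differentiable — only $\hat c$ and $\cSVP$ are — so the mean-value step must go through the envelope, leaving the gradient contributions $\loss(\hat x_{t_T}(\Pb_1),\cdot)^\top(\Pb-\Pb_1)$ and $\loss(\xSVP{,t_T}(\Pb),\cdot)^\top(\Pb-\Pb_1)$. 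Their difference is of order $\|\Pb-\Pb_1\|$ with an $O(1)$ coefficient that only becomes small as $T\to\infty$ and the two minimizers approach $x^\star(\Pb_1)$; it is not $\leq \varepsilon'\|\Pb-\Pb_1\|$ uniformly in $T$ for a fixed $\varepsilon'$. Along the MDP sequence, $\|\Pb_{t_T}(u)-\Pb_1\| \asymp \sqrt{\alpha_{t_T}}$, so these correction terms are of the same order as the margin $\varepsilon\sqrt{\alpha_{t_T}}$ — they cannot be dismissed. In the paper, controlling them requires Claim \ref{claim: optimality of x_0} (to show $\hat x_{t_T}(\Pb_1)\to x_1\in\argmin c(\cdot,\Pb_1)$), the covariance identity from Lemma \ref{lemma: Cov expression}, and the Cauchy--Schwarz/variance expansion of Claim \ref{claim: c_r star - c star P0}, producing a contribution $\gtrsim -(2\delta/\eta+2\delta)\sqrt{\alpha_{t_T}}$ that must be swallowed by choosing $\delta$, $\eta$, $\varepsilon'$ small relative to $\varepsilon$ in the right order. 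Your sketch treats this step as if it were the same as in the prediction case (Claim \ref{claim: unif conv.}), but the envelope correction is precisely what is new here, and omitting it both renders the intermediate claim unprovable as stated and skips the most delicate part of the proof.
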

\begin{proof}
Suppose for the sake of argument that there exists $\Pb_0 \in \cPin$ such that   
\begin{equation} \label{proof eq: initial assumption presc}
    \liminf_{T\to \infty} \sqrt{\frac{T}{a_T}}|\hat{c}^{\star}(\Pb_0,T) - c^{\star}(\Pb_0)|
    <
    \limsup_{T\to \infty} \sqrt{\frac{T}{a_T}}|\cSVP^{\star}(\Pb_0,T) - c^{\star}(\Pb_0)|.
\end{equation}

We start the proof by showing how this inequality extends to an open ball. This will allow us then to examine $\hat{c}$ in an open neighborhood. In all what follow, we denote $\alpha_T = 2a_T/T$, for all $T$.

\begin{claim}\label{claim: prescriptor unif ineq}
There exists $\Pb_1 \in \cPin$, $\varepsilon>0$, and an increasing sequence $(t_T)_{T\geq 1} \in \integ^\integ$, such that for all $\varepsilon'>0$, there exists an open ball $\mathcal{B}(\Pb_1,r)$ around $\Pb_1$ of radius $r>0$ such that
$$
\hat{c}^{\star}(\Pb,t_T) 
+\varepsilon  \sqrt{\alpha_{t_T}} 
-\varepsilon' \|\Pb-\Pb_1\| + c(\xSVP{,t_T}(\Pb_1),\Pb_1-\Pb)
   - c(\hat{x}_{t_T}(\Pb),\Pb_1- \Pb)
\leq
\cSVP^{\star}(\Pb,t_T),
$$
for all $\Pb \in \mathcal{B}(\Pb_1,r)$ and $T\in \integ$. 
\end{claim}
\begin{proof}
See Appendix \ref{Appendix: proofs of lower bound presc}.
\end{proof}

Let $\varepsilon$, $\Pb_1$ and $(t_T)_{T \geq 1}$ given by Claim \ref{claim: prescriptor unif ineq}. Set $\varepsilon'>0$ to be specified later. Let $r>0$ given by Claim \ref{claim: prescriptor unif ineq}, such that for all $\Pb \in \mathcal{B}(\Pb_1,r)$ and $T\geq 1$
\begin{equation}\label{eq: robustOpt precriptor: uniformoty ineq. }
    \hat{c}^{\star}(\Pb,t_T) 
    \leq
    \cSVP^{\star}(\Pb,t_T) 
    - \varepsilon \sqrt{\alpha_{t_T}} 
    + \varepsilon' \|\Pb - \Pb_1\|
    +c(\xSVP{,t_T}(\Pb),\Pb_1- \Pb)
    - c(\hat{x}_{t_T}(\Pb_1),\Pb_1-\Pb).
\end{equation}
In the reminder of the proof, we will show that the out-of-sample guarantee \eqref{eq: prescriptor out-of-sample ganrantee} for $\hat{c}$ fails at $\Pb_1$, which contradicts the feasibility of $(\hat{c},\hat{x})$. In order to do that, we will construct a sequence of distributions $(\Pb_T)_{T\geq1}$ converging sufficiently slowly to $\Pb_1$ and where the out-of-sample guarantee always fails. We will then use the Moderate Deviation Principle, Theorem \ref{thm: MDP finite space}.  \\
Let us first recall some notions that will be used in this proof. Recall the function $\varphi$ defined in Proposition \ref{prop: robust predictor is DRO.}. The proposition ensures that $\cSVP(x,\Pb,T) = c(x,\Pb + \sqrt{\alpha_T} \varphi_x(\Pb))$ for all $x,\Pb \in \cX \times \cPin$ and $T\in \integ$. Moreover, immediate computations\footnote{See Lemma \ref{lemma: Cov expression}} imply that $c(x_1,\varphi_{x_2}(\Pb)) = \Cov_{\Pb}(\loss(x_1,\xi),\loss(x_2,\xi)) / \sqrt{\Var_{\Pb}(\loss(x_2,\xi))}$ where 
$\Cov_{\Pb}(\loss(x_1,\xi),\loss(x_2,\xi)):= \Eb_{\Pb}(\loss(x_1,\xi)\loss(x_2,\xi)) - \mathbb{E}_{\Pb}(\loss(x_1,\xi))\mathbb{E}_{\Pb}(\loss(x_2,\xi))$ for all $x_1,x_2 \in \cX$, when $\Var_{\Pb}(\loss(x_2,\xi)) >0$.

We start by setting key ingredients in constructing the sequence $\Pb_T$. $(\hat{x}_T(\Pb_1))_{T\geq 1}$ lives in the compact $\cX$, we can therefore assume without loss of generality that $(\hat{x}_{t_T}(\Pb_1))_{T\geq 1}$ converges to a limit $x_1 \in \cX$.

\begin{claim}\label{claim: optimality of x_0}
$x_1 \in \argmin_{x \in \cX}c(x,\Pb_1)$.
\end{claim}
\begin{proof}
We have $|c(\hat{x}_{t_T}(\Pb_1),\Pb_1)-c^{\star}(\Pb_1)| \leq |c(\hat{x}_{t_T}(\Pb_1), \Pb_1)-\hat{c}(\hat{x}_{t_T}(\Pb_1),\Pb_1,t_T)| + |\hat{c}^{\star}(\Pb_1,t_T) -c^{\star}(\Pb_1)| \xrightarrow[T\to \infty]{} 0$ by uniform convergence of $\hat{c}(\cdot,\cdot,t_T)$ to $c(\cdot,\cdot)$ (see Lemma \ref{lemma: convergence of minimum} for details on the convergence of the second term).  This implies by continuity of $c(\cdot,\Pb_1)$ and convergence of $(\hat{x}_{t_T}(\Pb_1))_{T\geq 1}$ to $x_1$ that $c(x_1,\Pb_1) = c^{\star}(\Pb_1)$ which gives the desired result.
\end{proof}

Let $\eta \in (0,1)$ and $\delta>0$ to be specified later.
Let
$\Gamma = \mathcal{B}(\varphi_{x_1}(\Pb_1), \frac{\delta}{\sup_x \|\loss(x,\cdot)\|})$ the open ball in the set of measures that sum to zero, $\cP_{0,\infty}$, centered around $\varphi_{x_1}(\Pb_1)$ of radius $\frac{\delta}{\sup_x \|\loss(x,\cdot)\|}$.
For all $u \in \Gamma$ and $T \in \integ$, we consider the sequence
$$\Pb_T(u) = \Pb_1 - (1-\eta)\sqrt{\alpha_T} u,$$

for we which we show that $(\hat{x},\hat{c})$ is always disappointing, i.e., the guarantee \eqref{eq: prescriptor out-of-sample ganrantee} fails to hold, in the subsequence $(t_T)_{T\geq 1}$. See Figure \ref{fig: PTu construction} for an illustration of the construction.

Fix $u\in \Gamma$. Let us examine the sign of $c(\hat{x}_{t_T}(\Pb_{t_T}(u)),\Pb_1) - \hat{c}^{\star}(\Pb_{t_T}(u),t_T)$. To simplify notations, we denote $\Pb_T:= \Pb_T(u)$. 
As $\Pb_T \to \Pb_1$, we can assume without loss of generality that $\Pb_{t_T} \in \mathcal{B}(\Pb_1,r)$ for all $T$ by extraction accordingly from $(t_T)_{T\geq 1}$. Using \eqref{eq: robustOpt precriptor: uniformoty ineq. } to bound $\hat{c}^{\star}(\Pb_{t_T},t_T)$, we have for all $T \in \integ$
\begin{align*}
    c(\hat{x}_{t_T}(\Pb_T),\Pb_1) - \hat{c}^{\star}(\Pb_{t_T},t_T)
    &\geq 
     c(\hat{x}_{t_T}(\Pb_{t_T}),\Pb_1) - \cSVP^{\star}(\Pb_{t_T},t_T) \\
    &  \;  + \varepsilon \sqrt{\alpha_{t_T}} - \varepsilon' \|\Pb_{t_T} - \Pb_1\| \\
    &   \;  - c(\xSVP{,t_T}(\Pb_{t_T}), \Pb_1 - \Pb_{t_T})
       +c(\hat{x}_{t_T}(\Pb_1),\Pb_1 - \Pb_{t_T}). \numberthis \label{proof eq: prescriptor main ineq}
\end{align*}

We start by analyzing the first term $c(\hat{x}_{t_T}(\Pb_{t_T}),\Pb_1)
- \cSVP^{\star}(\Pb_{t_T},t_T)$. Using the minimality of $c^{\star}(\Pb_1)$, we have for all $T \in \integ$
\begin{align}\label{proof eq: first term}
c(\hat{x}_{t_T}(\Pb_{t_T}),\Pb_1)
- \cSVP^{\star}(\Pb_{t_T},t_T) 
&\geq
c^{\star}(\Pb_1) - \cSVP^{\star}(\Pb_{t_T},t_T)
\end{align}
In all what follow, the $o$ notation hides constants independent of $u$, therefore, the asymptotic notation is uniform in $u$.

\begin{claim}\label{claim: c_r star - c star P0}
The follow inequalities hold
$$\eta \sqrt{\Var_{\Pb_1}(\loss(\xSVP{,T}(\Pb_T),\xi))} 
- \delta
+ o(1)
\leq
\frac{1}{\sqrt{\alpha_T}}[\cSVP^{\star}(\Pb_T,T) - c^{\star}(\Pb_1)]
\leq
\eta  \sqrt{\Var_{\Pb_1}(\loss(x_1,\xi))} 
+ \delta
+ o(1).
$$
\end{claim}
\begin{proof}
Let $T\in \integ$. Let us first prove the LHS inequality.
\begingroup
\allowdisplaybreaks
\begin{align*}
    \cSVP^{\star}(\Pb_T,T) 
    &=
    c(\xSVP{,T}(\Pb_T),\Pb_T) + \sqrt{\alpha_T} \sqrt{\Var_{\Pb_T}(\loss(\xSVP{,T}(\Pb_T),\xi))} \\
    &= c(\xSVP{,T}(\Pb_T),\Pb_1) 
    - (1-\eta)\sqrt{\alpha_T} c(\xSVP{,T}(\Pb_T),u) 
    +\sqrt{\alpha_T} \sqrt{\Var_{\Pb_T}(\loss(\xSVP{,T}(\Pb_T),\xi))}
    \\
    &\geq c(\xSVP{,T}(\Pb_T),\Pb_1) 
    - (1-\eta)\sqrt{\alpha_T} c(\xSVP{,T}(\Pb_T),\varphi_{x_1}(\Pb_1)) 
    +\sqrt{\alpha_T} \sqrt{\Var_{\Pb_T}(\loss(\xSVP{,T}(\Pb_T),\xi))}
   -\delta  \sqrt{\alpha_T}
    \\
    &\geq c^{\star}(\Pb_1) 
    - (1-\eta)\sqrt{\alpha_T}\frac{\Cov_{\Pb_1}(\loss(\xSVP{,T}(\Pb_T),\xi),\loss(x_1,\xi))}{\sqrt{\Var_{\Pb_1}(\loss(x_1,\xi))}}
    +\sqrt{\alpha_T} \sqrt{\Var_{\Pb_T}(\loss(\xSVP{,T}(\Pb_T),\xi))}
    -\delta  \sqrt{\alpha_T}
    \\
    &\geq c^{\star}(\Pb_1) 
    - (1-\eta)\sqrt{\alpha_T} \sqrt{\Var_{\Pb_1}(\loss(\xSVP{,T}(\Pb_T),\xi))} 
    +\sqrt{\alpha_T} \sqrt{\Var_{\Pb_T}(\loss(\xSVP{,T}(\Pb_T),\xi))}
    -\delta  \sqrt{\alpha_T}
    \\
    &= c^{\star}(\Pb_1) + \eta \sqrt{\alpha_T} \sqrt{\Var_{\Pb_1}(\loss(\xSVP{,T}(\Pb_T),\xi))} 
    -\delta  \sqrt{\alpha_T}
    + o(\sqrt{\alpha_T})
\end{align*}
\endgroup
where the first equality uses the robust predictor's formula \eqref{eq: Robust predictor SVP.}, the first inequality uses the definition of $\Gamma$, the second inequality uses the minimality of $c^{\star}$, the last inequality Cauchy-Schartz inequality  and the last equality uses the fact that $\Var_{\Pb_T}(\loss(x_T,\xi)) = \Var_{\Pb_1}(\loss(x_T,\xi)) + o(1)$ for all $(x_T)_{T\geq 1} \in \cX^\integ$ (see Lemma \ref{lemma: convergence of Var} for details). 

We now turn to the RHS. Similarly, we have
\begingroup
\allowdisplaybreaks
\begin{align*}
    \cSVP^{\star}(\Pb_T,T) 
    \leq 
    \cSVP(x_1,\Pb_T,T)
    &= c(x_1,\Pb_T) + \sqrt{\alpha_T} \sqrt{\Var_{\Pb_T}(\loss(x_1,\xi))}\\
    &= c(x_1,\Pb_1) 
    -(1-\eta)\sqrt{\alpha_T} c(x_1,u)
    + \sqrt{\alpha_T} \sqrt{\Var_{\Pb_T}(\loss(x_1,\xi))}\\
    &\leq c(x_1,\Pb_1) 
    -(1-\eta)\sqrt{\alpha_T} c(x_1,\varphi_{x_1}(\Pb_1))
    + \sqrt{\alpha_T} \sqrt{\Var_{\Pb_T}(\loss(x_1,\xi))}
    + \delta \sqrt{\alpha_T} \\
    &= c(x_1,\Pb_1) 
    -(1-\eta)\sqrt{\alpha_T} \sqrt{\Var_{\Pb_1}(\loss(x_1,\xi))}
    + \sqrt{\alpha_T} \sqrt{\Var_{\Pb_T}(\loss(x_1,\xi))}
    + \delta \sqrt{\alpha_T} 
    \\
    &= c(x_1,\Pb_1) + \eta \sqrt{\alpha_T} \sqrt{\Var_{\Pb_1}(\loss(x_1,\xi))} 
    + \delta \sqrt{\alpha_T} 
    + o(\sqrt{\alpha_T})
\end{align*}
\endgroup
Using Claim \ref{claim: optimality of x_0}, we have $c(x_1,\Pb_1) = c^{\star}(\Pb_1)$, therefore, the last inequality gives the desired result.
\end{proof}

Using the RHS of Claim \ref{claim: c_r star - c star P0} with \eqref{proof eq: first term}, we get
\begin{align*}
c(\hat{x}_{t_T}(\Pb_{t_T}),\Pb_1)
- \cSVP^{\star}(\Pb_{t_T},t_T) 
&\geq
-\eta \sqrt{\alpha_{t_T}} \sqrt{\Var_{\Pb_1}(\loss(x_1,\xi))} 
- \delta  \sqrt{\alpha_{t_T}}
+o(\sqrt{\alpha_{t_T}}). \numberthis \label{proof eq: first term ineq}
\end{align*}

The second term of \eqref{proof eq: prescriptor main ineq} can be written as 
\begin{equation}\label{proof eq: second term}
\varepsilon \sqrt{\alpha_{t_T}} 
- \varepsilon' \|\Pb_{t_T} - \Pb_1\|
= (\varepsilon - \varepsilon'(1-\eta)\|u\|) \sqrt{\alpha_{t_T}}.
\end{equation}
Let us now examine the last term of \eqref{proof eq: prescriptor main ineq}. We have
\begingroup
\allowdisplaybreaks
\begin{align*}
c(\hat{x}_{t_T}(\Pb_1),&\Pb_1 - \Pb_{t_T}) - c(\xSVP{,t_T}(\Pb_{t_T}), \Pb_1 - \Pb_{t_T}) \\
&=
\sqrt{\alpha_{t_T}} 
    \left(
    c(\hat{x}_{t_T}(\Pb_1),u)
    -
    c(\xSVP{,t_T}(\Pb_{t_T}), u)
    \right)\\
&\geq
\sqrt{\alpha_{t_T}} 
    \left(
    c(\hat{x}_{t_T}(\Pb_1),\varphi_{x_1}(\Pb_1))
    -
    c(\xSVP{,t_T}(\Pb_{t_T}), \varphi_{x_1}(\Pb_1))
    - 2\delta
    \right)  \\
&=
    \sqrt{\alpha_{t_T}} 
    \left(
    \sqrt{\Var_{\Pb_1}(\loss(x_1,\xi))}
    -
    \frac{\Cov_{\Pb_1}(\loss(\xSVP{,T}(\Pb_{t_T}),\xi), \loss(x_1,\xi))}{\sqrt{\Var_{\Pb_1}(\loss(x_1,\xi))}}
    \right) 
    - 2\delta \sqrt{\alpha_{t_T}}  
    \\
&\geq
    \sqrt{\alpha_{t_T}} 
    \left(
    \sqrt{\Var_{\Pb_1}(\loss(x_1,\xi))}
    -
    \sqrt{\Var_{\Pb_1}(\loss(\xSVP{,t_T}(\Pb_{t_T}),\xi))}
    \right)
    - 2\delta \sqrt{\alpha_{t_T}} 
\end{align*}
\endgroup
where the first inequality is by definition of $\Gamma$ and the last inequality is due to Cauchy–Schwarz inequality. Hence, using Claim \ref{claim: c_r star - c star P0}, we have\footnote{Notice that we can also show that this quantity is non-positive, giving therefore upper and lower bounds.}
\begin{equation}\label{proof eq: presc third term}
  c(\hat{x}_{t_T}(\Pb_1),\Pb_1 - \Pb_{t_T}) - c(\xSVP{,t_T}(\Pb_{t_T}), \Pb_1 - \Pb_{t_T})
  \geq
  -\left(2\delta/\eta + 2\delta \right) \sqrt{\alpha_{t_T}}  +o(\sqrt{\alpha_{t_T}})
\end{equation}

Combining the lower bounds \eqref{proof eq: first term ineq}, \eqref{proof eq: second term}, \eqref{proof eq: presc third term} on the three terms of \eqref{proof eq: prescriptor main ineq}, we get
\begin{align*}
    c(\hat{x}_{t_T}(\Pb_{t_T}),\Pb_1) - \hat{c}^{\star}(\Pb_{t_T},t_T)
    &\geq
    \sqrt{\alpha_{t_T}} 
    \left(
    \varepsilon
    - 
    \varepsilon'(1-\eta)\|u\|
    -
    \eta \sqrt{\Var_{\Pb_1}(\loss(x_1,\xi))}
    -3\delta - 2\delta/\eta
    + o(1)
    \right)\\
    &\geq
    \sqrt{\alpha_{t_T}} 
    \left(
    \varepsilon
    - 
    \varepsilon'\frac{(1-\eta)\delta}{\sup_x \|\loss(x,\cdot)\|}
    -
    \eta \sqrt{\Var_{\Pb_1}(\loss(x_1,\xi))}
    -3\delta - 2\delta/\eta
    + o(1)
    \right)
\end{align*}
By choosing $\eta>0$ such that $\eta \sup_{x\in \cX}\sqrt{\Var_{\Pb_1}(\loss(x,\xi))}< \varepsilon/4$ (which is possible as $\cX$ is compact, $\loss(\cdot,i)$ is continuous for all $i\in \Sigma$, and therefore the $\sup$ is finite), then $\delta>0$ such that $\delta (3+2/\eta) < \varepsilon/4$ and then $\varepsilon'>0$ such that $\varepsilon'\frac{(1-\eta)\delta}{\sup_x \|\loss(x,\cdot)\|} < \varepsilon/4$ we get 
$$
\varepsilon
    >
    \varepsilon'\frac{(1-\eta)\delta}{\sup_{x\in \cX} \|\loss(x,\cdot)\|}
    -
    \eta \sqrt{\Var_{\Pb_1}(\loss(x_1,\xi))}
    -3\delta - 2\delta/\eta.
$$
Hence, there exist $T_0 \in \integ$ such that for all $T\geq T_0$ and $u\in \Gamma$, we have 
    $\Pb_{t_T}(u) \in \mathcal{D}_{t_T}$
where
    $$
    \mathcal{D}_{T} :=
    \set{\Pb'\in \cP}{c(\hat{x}_{T}(\Pb'),\Pb_1) - \hat{c}^{\star}(\Pb',T) >0}, $$
is the set of disappointing distributions at time $T$. This implies that $\Pb_1 -(1-\eta)\sqrt{\alpha_{t_T}} \cdot\Gamma \subset \mathcal{D}_{t_T}$ for all $T\geq T_0$.
We have
\begingroup
\allowdisplaybreaks
\begin{align}
    \limsup_{T\to\infty} 
    \frac{1}{a_T} \log \Pb^\infty 
    \left(
        c(\hat{x}_T(\hat{\Pb}_T), \Pb_1) > \hat c^{\star}(\hat{\Pb}_T,T)
    \right)
    &=
    \limsup_{T\to\infty} 
    \frac{1}{a_T} \log \Pb^\infty \left( \hat{\Pb}_T \in \mathcal{D}_T\right) \nonumber\\
    &\geq
    \limsup_{T\to\infty} 
    \frac{1}{a_{t_T}} \log \Pb^\infty \left( \hat{\Pb}_{t_T} \in \mathcal{D}_{t_T}\right) \nonumber\\
    &\geq \limsup_{T\to\infty} 
    \frac{1}{a_{t_T}}  \log \Pb^\infty \left( \hat{\Pb}_{t_T} - \Pb_1 \in -\sqrt{\alpha_{t_T}}(1-\eta) \Gamma\right) \nonumber \\
    &= \limsup_{T\to\infty} 
    \frac{1}{a_{t_T}} \log \Pb^\infty \left( \hat{\Pb}_{t_T} - \Pb_1 \in \sqrt{\frac{a_{t_T}}{t_T}}\cdot -(1-\eta) \sqrt{2}\Gamma\right) \nonumber\\
    &\geq \liminf_{T\to\infty} 
    \frac{1}{a_{T}} \log \Pb^\infty \left( \hat{\Pb}_{T} - \Pb_1 \in \sqrt{\frac{a_{T}}{T}}\cdot -(1-\eta) \sqrt{2}\Gamma\right) 
    \nonumber \\ 
    &\geq - \inf_{\Delta \in \Gamma^{\into}} \|(1-\eta)\sqrt{2}\Delta\|_{\Pb_1}^2 
     \label{proof eq: precriptor series of ineq 3} \\
    &\geq - (1-\eta)^2 \|\sqrt{2}\varphi_{x_1}(\Pb_1)\|_{\Pb_1}^2 =  - (1-\eta)^2 >-1     \label{proof eq: precriptor series of ineq 4} 
\end{align}
\endgroup
where \eqref{proof eq: precriptor series of ineq 3} uses the MDP (Theorem \ref{thm: MDP finite space}) and  \eqref{proof eq: precriptor series of ineq 4} is justified by $\varphi_{x_1}(\Pb_1) \in \Gamma = \Gamma^{\into}$ and $\|\sqrt{2}\varphi_{x_1}(\Pb_1)\|_{\Pb_1} =1$. This inequality contradicts the feasibility of $(\hat{x},\hat{c})$ which completes the proof.
\end{proof}

\begin{proof}[Proof of Theorem \ref{thm: strong optimality mdp prescriptor.}]
Let $(\hat{x},\hat{c})\in \cX \times \cC$ be a feasible pair of predictor-prescriptor in \eqref{eq:optimal-predictor}. Let $\Pb \in \cPin$. The goal is to show that 
$$
\limsup_{T\to \infty} \frac{|\cSVP^{\star}(\Pb,T) - c^{\star}(\Pb)|}{|\hat{c}^{\star}(\Pb,T) - c^{\star}(\Pb)|} \leq 1.
$$
It suffices to show that 
for all sequence $(\beta_T)_{T\geq 1} \in \Re_+^{\integ}$ we have\footnote{Similarly to the case of predictors, this is equivalent to the desired result, see Lemma \ref{lemma: equivalence of order notions.}.}
\begin{equation*}
    \limsup_{T\to\infty} \frac{1}{\beta_T} |\hat{c}^{\star}(\Pb,T) - c^{\star}(\Pb)| \geq \limsup_{T\to\infty} \frac{1}{\beta_T} |\cSVP^{\star}(\Pb,T) - c^{\star}(\Pb)|.
\end{equation*}
The result follows with $\beta_T = |\hat{c}^{\star}(\Pb,T) - c^{\star}(\Pb)|$ for all $T$.
Let $\Pb \in \cPin$. Proposition \ref{prop: rate of cv of prescriptor} ensures that $\lim_{T\to \infty} \sqrt{T/a_T}(\cSVP^{\star}(\Pb,T) - c^{\star}(\Pb)) = \sqrt{2\Var_{\Pb}(\loss(x^{\star}(\Pb),\xi))}$ where $x^{\star}(\Pb)$ is a minimizer of $c(\cdot,\Pb)$ that has the lowest variance. When $\sqrt{\Var_{\Pb}(\loss(x^{\star}(\Pb),\xi))}>0$, the proof is analogue to the proof of Theorem \ref{thm: strong optimality subexp} using Proposition \ref{prop: lower bound strong opt presc}. Suppose $\sqrt{\Var_{\Pb}(\loss(x^{\star}(\Pb),\xi))}=0$. For every $x\in \cX$, we have
$
\cSVP(x,\Pb,T) = c(x,\Pb) + \sqrt{2a_T/T}\sqrt{\Var_{\Pb}(\loss(x,\xi))} \geq c(x^{\star}(\Pb),\Pb) = c(x^{\star}(\Pb),\Pb)+ \sqrt{2a_T/T}\sqrt{\Var_{\Pb}(\loss(x^{\star}(\Pb),\xi))} = \cSVP(x^{\star}(\Pb),\Pb,T)$, where the first inequality uses the minimality of $c(x^{\star}(\Pb),\Pb)$ and the second equality uses $\Var_{\Pb}(\loss(x^{\star}(\Pb),\xi))=0$. Hence $\cSVP^{\star}(\Pb,T) = \cSVP(x^{\star}(\Pb),\Pb,T) = c^{\star}(\Pb)$ which implies that the RHS of the desired inequality is 0 and the result is immediate.
\end{proof}

We note that the SVP prescriptor enjoys also finite sample guarantees. These guarantees usually depend on complexity measures of the decision set $\cX$ (function class in machine learning). We refer to Theorem 6 of \cite{maurer2009empirical} for such a finite sample bound.

{\color{black}
Finally, as in the predictor case, we note that the KL prescriptor is asymptotically equivalent to the SVP prescriptor in the subexponential regime, which indicates the optimality of the KL prescriptor both in the exponential and subexponential regimes. This is in contrast with SVP and $\chi^2$ DRO which are optimal only in the subexponential regime. Consider the prescription $\hat{x}'_{\text{KL},T}(\Qb) \in \argmin_{x\in \cX} \hat{c}_{\text{KL}}'(x,\Qb)$ for all $\Qb \in \cP$ associated with KL predictor $\hat{c}_{\text{KL}}'$ with radius $a_T/T$ as defined in \eqref{eq: KL in subexp}.

The following result follows immediately from Proposition \ref{prop: KL equiv to SVP}. The proof is in Appendix \ref{App: proof of KL prescriptor approx}.

\begin{proposition}\label{prop: KL prescriptor approx}
We have for all $\Pb \in \cPin$
\begin{align*}
\min_{x\in\cX} \sup_{\Pb' \in \cP} \left\{ c(x,\Pb') \; : \; I(\Pb,\Pb') \leq \frac{a_T}{T} \right\}
&=
\min_{x\in\cX}
\sup_{\Pb' \in \cP} \left\{ c(x,\Pb') \; : \; \|\Pb' - \Pb \|_{\Pb}^2 \leq \frac{a_T}{T} \right\} + o\left( \sqrt{\frac{a_T}{T}}\right)\\
&=
\cSVP^\star(\Pb,T) +o\left( \sqrt{\frac{a_T}{T}}\right).
\end{align*}
Furthermore, is $\Var_{\Qb}(\loss(x^\star(\Qb),\xi)) >0$ for all $\Qb \in \cPin$, then $(\xSVP,\cSVP) \equiv_{\hat{\cX}} (\hat{x}'_{\text{KL}},\cKL')$.
\end{proposition}

The KL prescriptor is also feasible in the subexponential regime with an appropriately chosen radius. The proof is in Appendix \ref{App: proof of feasibility of KL presc in subsexp}.

\begin{proposition}\label{prop: feasibility of KL presc in subsexp}
    The prescriptor $(\hat{c}_{\text{KL}}', \hat{x}_{\text{KL}}')$ verifies the out-of-sample guarantee \eqref{eq: prescriptor out-of-sample ganrantee} when $a_T \ll T$.
\end{proposition}

}

\section{Discussion and Generalizations}
In this paper we propose a framework to construct optimal data-driven learning and decision-making formulations.
We prove that within our framework such optimal formulations do indeed exist and can be made explicit.
In this section we discuss the limitations of our approach and discuss potential directions of generalization.

Perhaps the most restrictive assumption we make is that the set of possible uncertain scenarios $\Sigma$ is finite.
Our framework holds also for continuous distributions, however, our optimality results use the assumption.
We believe that this assumption is warranted as it avoids intricate topological problems which arise when working with continuous probability measures.
It allows us in particular to state optimal formulations and discuss interesting phase transitions using mostly elementary arguments.
That being said, we suspect that our optimality results can be at least partially generalized to the continuous setting following the strategy proposed by \cite{van2020data}.
Indeed, our constructed optimal predictors and prescriptors have each a natural continuous generalization in all of the considered regimes.
We must point out however, that the subexponential regime considered here requires more delicate arguments than the exponential regime considered in \cite{van2020data}. For instance, our proofs for this regime require using the notions of continuity, differentiability, uniform convergence, gradient and asymptotic development of predictors as functions of finite distributions. If predictors are functions of continuous measures, these topological and analytical properties become delicate and give rise to complex considerations and potential pathological behaviors.
Hence we stop short from claiming that this generalization is straightforward.

We assume in our paper, as is common, that the data samples are independent and identically distributed. We note though that we took some care in our proofs to not exploit this fact explicitly. Indeed, we merely use that the empirical distribution $\hat{\Pb}_T$ constructed enjoys certain deviation principles (Theorem \ref{thm: LDP finite space} and \ref{thm: MDP finite space}).
Hence, our analysis should with some effort remain applicable in case the data is generated by any process for which $\hat{\Pb}_T$ verifies the appropriate deviation principles. We refer to \cite{zeitouni1998large} for several examples of such processes.

Naturally, our optimal formulations depend on the precise notion of optimality considered. The optimality criteria related to the considered out-of-sample guarantee and our considered particular partial order. As for the out-of-sample guarantee, this is a rather standard choice for enforcing out-of-sample performance and is widely adopted by the operations research and machine learning community as we do point out in Section \ref{sec: data-driven decision making}. For the partial order, other choices may be of interest too as we discuss in Section \ref{sec: data-driven decision making}. An interesting question is whether the optimal formulations we identify remain optimal also for slightly different partial order and perhaps more fundamentally whether such alternatives even admit a similar notion of strong optimal formulation. Our chosen order has the benefit of being stronger than some other natural orders, such as bias and $L^1$ error. This implies that at the very least our formulations remain optimal also when considering any such weaker orders.


%
%

\section*{Statements and Declarations}

\paragraph{Funding} No funding to be reported.

\paragraph{Conflicts of interests/Competing interests} Authors do not identify any potential conflict or competing interests relative to this work.

\bibliographystyle{spmpsci}      
\bibliography{References}   

\newpage

\appendix

\allowdisplaybreaks

\section{Topological notions and results}
\begin{definition}[Uniform boundedness]\label{def: unif boundedness}
A sequence of real valued functions $(f_T)_{T\geq 1}$ on a topological space $\mathcal{Y}$ is said to be uniformly bounded if there exists $K>0$ such that for all $T \in \integ$, $\|f_T\|_{\infty}\leq K$.
\end{definition}

\begin{definition}[Equicontinuity]\label{def: Equicontinuity}
A sequence of real valued functions $(f_T)_{T\geq 1}$ on a topological space $\mathcal{Y}$ is said to be equicontinuous if for all $y \in \mathcal{Y}$ and every $\varepsilon>0$, $y$ has a neighborhood $U_y$ such that
$$
\forall z \in U_y, \; \forall T \in \integ, \; |f_T(y)-f_T(z)| < \varepsilon.
$$
It is said to be uniformally equicontinuous when $U_y$ does not depend on $y$. 
\end{definition}

\begin{definition}[Uniform convergence]
A sequence of real valued functions $(f_T)_{T\geq 1}$ on a set $\mathcal{Y}$ is said to be uniformly convergent to a function $f: \mathcal{Y} \longrightarrow \Re$ if
$$ \forall \varepsilon >0, \; \exists t \in \integ, \; 
\forall T \geq t, \;
\forall x \in \mathcal{Y}, \quad
|f_T(x)-f(x)| \leq \varepsilon
$$
\end{definition}

\begin{theorem}[Arzelà–Ascoli \cite{arzela1895sulle,ascoli1884curve}, (\cite{dunford1958linear}, IV.6.7)]
\label{thm: Arzela–Ascoli}
Let $\mathcal{Y}$ be a compact Hausdorff space. Let $f_T: \mathcal{Y} \rightarrow \Re$ be a sequence of continuous functions. If the sequence $(f_T)_{T\geq 1}$ is equicontinuous and uniformly bounded, then $(f_T)_{T\geq 1}$ admits a sub-sequence that converges uniformly.
\end{theorem}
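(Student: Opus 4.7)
The plan is the classical diagonal-plus-equicontinuity argument, which works cleanly when $\mathcal{Y}$ is additionally assumed to be second countable (or metrizable); this suffices for the paper's application, where $\mathcal{Y}$ will be of the form $\mathcal{X}\times\mathcal{P}$ with $\mathcal{X}$ a compact subset of Euclidean space and $\mathcal{P}$ a finite-dimensional simplex, hence a compact metric space. For the general compact Hausdorff version one upgrades the diagonal step to a net argument using the unique uniform structure induced by compactness, but all the analytic content is already in the metric case.

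First I would fix a countable dense subset $\{y_n\}_{n\geq1}\subset\mathcal{Y}$. Since $(f_T(y_1))_{T\geq1}$ is a bounded sequence in $\mathbf{R}$, Bolzano--Weierstrass yields an infinite set of indices $S_1\subset\mathbf{N}$ along which $f_T(y_1)$ converges. Repeating at $y_2$ inside $S_1$ gives $S_2\subset S_1$ along which $f_T(y_2)$ also converges, and so on. A standard Cantor diagonal extraction then produces a single increasing sequence $(T_k)_{k\geq1}$ such that $(f_{T_k}(y_n))_{k\geq1}$ converges in $\mathbf{R}$ for every $n$. This uses only uniform boundedness and separability of $\mathcal{Y}$.

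Next I would promote pointwise convergence on the dense set to uniform convergence on all of $\mathcal{Y}$ using equicontinuity. Let $\varepsilon>0$. By equicontinuity, every $y\in\mathcal{Y}$ has an open neighbourhood $U_y$ on which $|f_T(y)-f_T(z)|<\varepsilon/3$ for all $T$ and all $z\in U_y$. By compactness of $\mathcal{Y}$, finitely many such neighbourhoods $U_{y_{n_1}},\ldots,U_{y_{n_N}}$, centered at points of the dense sequence, cover $\mathcal{Y}$. Because $(f_{T_k}(y_{n_j}))_k$ is Cauchy in $\mathbf{R}$ for each of the finitely many indices $j=1,\ldots,N$, there is $K$ such that $|f_{T_k}(y_{n_j})-f_{T_{k'}}(y_{n_j})|<\varepsilon/3$ for all $k,k'\geq K$ and all $j$. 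For an arbitrary $y\in\mathcal{Y}$, pick $j$ with $y\in U_{y_{n_j}}$ and apply the triangle inequality
\[
|f_{T_k}(y)-f_{T_{k'}}(y)|\leq|f_{T_k}(y)-f_{T_k}(y_{n_j})|+|f_{T_k}(y_{n_j})-f_{T_{k'}}(y_{n_j})|+|f_{T_{k'}}(y_{n_j})-f_{T_{k'}}(y)|<\varepsilon.
\]
Thus $(f_{T_k})$ is uniformly Cauchy on $\mathcal{Y}$, hence uniformly convergent to a (necessarily continuous) limit, which is the desired conclusion.

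The main obstacles are really two bookkeeping points rather than deep difficulties. First, the diagonal extraction requires a countable dense subset; in the genuinely non-metrizable compact Hausdorff case one replaces sequences by nets and invokes the fact that the product of copies of a compact interval (via uniform boundedness and Tychonoff) is compact in the product topology, then uses equicontinuity to upgrade pointwise to uniform convergence as above. Second, one must check that equicontinuity on a compact space promotes to uniform equicontinuity so that the finitely many neighbourhoods in the covering argument can be chosen with a single modulus; this is a direct compactness argument and is where the hypothesis that $\mathcal{Y}$ is compact is genuinely used, beyond the finite subcover step.
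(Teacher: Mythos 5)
The paper does not prove this statement: it cites the Arzel\`a--Ascoli theorem directly from \citet{dunford1958linear} (IV.6.7) as a black-box ingredient, so there is no in-paper argument to compare against. Your proposal is the standard diagonal-plus-equicontinuity proof of the metrizable case, and it is correct in substance and, as you observe, sufficient for every application in the paper, where $\mathcal{Y}$ is always a compact subset of a finite-dimensional Euclidean space (products of $\cX$ and $\cP$).

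One small bookkeeping point in the covering step deserves care. As written, equicontinuity supplies a neighbourhood $U_y$ for \emph{every} $y\in\mathcal{Y}$, and compactness then yields a finite subcover $U_{z_1},\ldots,U_{z_N}$, but the centres $z_i$ of that subcover need not a priori lie in your dense sequence. Two standard repairs: either pick a dense point $y_{n_i}\in U_{z_i}$ for each $i$ and pay one extra application of the triangle inequality (yielding a $2\varepsilon/3$ in place of $\varepsilon/3$ at the dense points, which is harmless), or first upgrade to \emph{uniform} equicontinuity (exactly the point you raise in your final paragraph, and which the paper itself records as Lemma~\ref{lemma: Uniform equicontinuity}), take the $U_{y_n}$ to be balls of a common radius $\delta$, and then density directly implies that $\{U_{y_n}\}_{n\ge 1}$ covers $\mathcal{Y}$ so that a finite subcover can be extracted with centres in the dense set. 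Either patch is routine and you have already identified the relevant issue, so I would regard the proof as correct modulo this cleanup. Your remark about the genuinely non-metrizable compact Hausdorff version (replace sequences by nets, use uniform boundedness and Tychonoff to get a pointwise convergent subnet in the product topology, then upgrade via equicontinuity) is also the right way to recover the full generality of the cited reference, though it is not needed for the paper.
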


\begin{lemma}[Uniform equicontinuity]\label{lemma: Uniform equicontinuity}
Let $f_T: \mathcal{Y} \times \mathcal{Z} \longrightarrow \Re$, $T\geq 1$, be an equicontinuous real valued functions where $\mathcal{Y},\mathcal{Z}$ are metric spaces. If $\mathcal{Y}$ is compact, then the following uniform continuity holds: for all $\varepsilon>0$, for all $z_0\in \mathcal{Z}$, there exists $\delta>0$ such that
$$
\forall T \in \integ, \; \forall y \in \mathcal{Y}, \; \forall z\in \mathcal{Z},  \quad  d_{\mathcal{Z}}(z_0,z) \leq \delta \implies |f_T(y,z_0)-f_T(y,z)| \leq \varepsilon,
$$
where $d_{\mathcal{Z}}$ is the distance associated to $\mathcal{Z}$.
\end{lemma}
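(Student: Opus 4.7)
The plan is to combine pointwise equicontinuity along the ``slice'' $\mathcal{Y} \times \{z_0\}$ with the compactness of $\mathcal{Y}$ via a standard finite-subcover argument, so that a single $\delta$ controls the oscillation in $z$ uniformly over all $y \in \mathcal{Y}$ and all $T \in \integ$.

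First I would fix $\varepsilon > 0$ and $z_0 \in \mathcal{Z}$, and for each $y \in \mathcal{Y}$ invoke the equicontinuity of $(f_T)_{T \geq 1}$ at the point $(y, z_0) \in \mathcal{Y} \times \mathcal{Z}$ to obtain a neighborhood $U_y$ of $(y, z_0)$ on which $|f_T(y, z_0) - f_T(y', z')| < \varepsilon / 2$ for all $(y', z') \in U_y$ and every $T$. Since the product topology on $\mathcal{Y} \times \mathcal{Z}$ has a base of open products of balls, I may shrink $U_y$ to a basic neighborhood of the form $B_{\mathcal{Y}}(y, r_y) \times B_{\mathcal{Z}}(z_0, \delta_y)$ with $r_y, \delta_y > 0$.

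Next I would apply compactness: the collection $\{B_{\mathcal{Y}}(y, r_y)\}_{y \in \mathcal{Y}}$ is an open cover of $\mathcal{Y}$, so I extract a finite subcover $\{B_{\mathcal{Y}}(y_i, r_{y_i})\}_{i=1}^{n}$ and set $\delta \defn \min_{i=1,\dots,n} \delta_{y_i} > 0$. For an arbitrary $y \in \mathcal{Y}$ and $z \in \mathcal{Z}$ with $d_{\mathcal{Z}}(z, z_0) \leq \delta$, I pick an index $i$ with $y \in B_{\mathcal{Y}}(y_i, r_{y_i})$, so that both $(y, z_0)$ and $(y, z)$ lie in the neighborhood $B_{\mathcal{Y}}(y_i, r_{y_i}) \times B_{\mathcal{Z}}(z_0, \delta_{y_i}) \subseteq U_{y_i}$ of the base point $(y_i, z_0)$. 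Applying the defining bound of $U_{y_i}$ twice and using the triangle inequality then gives $|f_T(y, z_0) - f_T(y, z)| \leq |f_T(y, z_0) - f_T(y_i, z_0)| + |f_T(y_i, z_0) - f_T(y, z)| < \varepsilon$ uniformly in $T$, as desired.

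There is no real obstacle here beyond the bookkeeping of choosing basic product neighborhoods and applying the finite-subcover argument correctly; the one point deserving a little care is ensuring that the neighborhoods $U_y$ produced by equicontinuity can be taken of product form (so that the $\delta$ coming from the $\mathcal{Z}$-factor is independent of the $\mathcal{Y}$-factor within each ball), which is immediate from the definition of the product metric but should be stated explicitly.
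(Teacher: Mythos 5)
Your proof is correct, and it takes a genuinely different route from the paper's. The paper argues by contradiction: it supposes the uniform statement fails, extracts a sequence of witnesses $(T_k, y_k, z_k)$ with $z_k \to z_0$ and $|f_{T_k}(y_k, z_0) - f_{T_k}(y_k, z_k)| > \varepsilon$, uses compactness of $\mathcal{Y}$ to pass to a convergent subsequence $y_{k_n} \to y_\infty$, and then applies equicontinuity at the single limit point $(y_\infty, z_0)$ to reach a contradiction via the triangle inequality. You instead give a direct, constructive argument: at every $y$ you extract a product neighborhood from equicontinuity at $(y, z_0)$, cover $\mathcal{Y}$ by the resulting $\mathcal{Y}$-balls, pass to a finite subcover, and take $\delta$ to be the minimum of the finitely many $\mathcal{Z}$-radii. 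The paper's sequential argument is slightly shorter to write and relies only on sequential compactness of $\mathcal{Y}$, whereas your finite-subcover argument exhibits an explicit $\delta$ and is perhaps closer to the textbook proof of Heine's theorem; both are standard and correct. One small bookkeeping point: with open balls $B_{\mathcal{Z}}(z_0, \delta_{y_i})$ you should take $\delta = \tfrac{1}{2}\min_i \delta_{y_i}$ (or state the membership with strict inequality) so that $d_{\mathcal{Z}}(z, z_0) \leq \delta$ actually forces $z \in B_{\mathcal{Z}}(z_0, \delta_{y_i})$; this is cosmetic and does not affect the argument.
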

\begin{proof}
Suppose this claim is not true. There exists $\varepsilon>0$ and $z_0 \in \mathcal{Z}$ such that for all $k\in \integ$, there exists $T_k\in \integ$, $y_k \in \mathcal{Y}$ and $z_k \in \mathcal{Z}$ such that $d_{\mathcal{Z}}(z_0,z_k) \leq 1/k$ and $|f_{T_k}(y_k,z_0)-f_{T_k}(y_k,z_k)| > \varepsilon$. Here, we used the contraposition of the previous claim with $\delta=1/k$. As $\mathcal{Y}$ is compact, there exists a sub-sequence $(y_{k_n})_{n\geq 1}$ of $(y_{k})_{k\geq 1}$ that converges to some $y_{\infty} \in \mathcal{Y}$. By using the equicontinuity of $(f_T)_{T\geq 1}$ in $(z_0,y_{\infty})$, there exists $\delta_{z_0},\delta_{y_{\infty}}>0$ such that for all $y,z \in \mathcal{Y}\times \mathcal{Z}$
\begin{equation}\label{proof eq: unif equicont}
d_{\mathcal{Y}}(z_0,z) \leq \delta_{z_0} \; \text{and}\;  d_{\mathcal{Z}}(y_{\infty},y) \leq \delta_{y_{\infty}} 
\implies 
\forall T \in \integ, \; |f_T(y_{\infty},z_0)-f_T(y,z)| \leq \varepsilon/2
\end{equation}
As $(y_{k_n})_{n\geq 1}$ converges to $y_{\infty}$ and $(z_k)_{k\geq1}$ converges to $z_0$, there exists $k'\geq 1$ such that $d_{\mathcal{Y}}(y_{\infty},y_{k'}) \leq \delta_{y_{\infty}}$ and $d_{\mathcal{Z}}(z_0,z_{k'}) \leq \delta_{z_0}$. Hence, using \eqref{proof eq: unif equicont}, we have 
\begin{align*}
    |f_{T_{k'}}(y_{k'},z_0)-f_{T_{k'}}(y_{k'},z_{k'})|
    &\leq 
    |f_{T_{k'}}(y_{k'},z_0)-f_{T_{k'}}(y_{\infty},z_0)|
    +
    |f_{T_{k'}}(y_{\infty},z_0)-f_{T_{k'}}(y_{k'},z_{k'})| \\
    &\leq 
    \varepsilon/2 + \varepsilon/2 = \varepsilon
\end{align*}
which contradicts the assumption.
\end{proof}

\begin{lemma}[Uniform continuity]
\label{lemma: uniform continuity}
Let $f: \mathcal{Y} \times \mathcal{Z} \longrightarrow \Re$, $T\geq 1$, be a real valued functions where $\mathcal{Y},\mathcal{Z}$ are metric spaces. If $\mathcal{Y}$ is compact, then the following uniform continuity holds: for all $\varepsilon>0$, for all $z_0\in \mathcal{Z}$, there exists $\delta>0$ such that
$$
\forall y \in \mathcal{Y}, \; \forall z\in \mathcal{Z},  \quad  d_{\mathcal{Z}}(z_0,z) \leq \delta \implies |f(y,z_0)-f(y,z)| \leq \varepsilon,
$$
where $d_{\mathcal{Z}}$ is the distance associated to $\mathcal{Z}$.
\end{lemma}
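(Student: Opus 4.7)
The plan is to derive this statement as an immediate corollary of the preceding Lemma \ref{lemma: Uniform equicontinuity} (Uniform equicontinuity). The only implicit hypothesis we need is that $f$ is continuous on $\mathcal{Y}\times \mathcal{Z}$; without such regularity the claim is clearly false (e.g.\ $f(y,z)=\mathbb{1}_{z=z_0}$ on a compact $\mathcal{Y}$), so I would treat continuity of $f$ as the intended unstated assumption.

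Given continuity of $f$, consider the constant sequence of functions $(f_T)_{T\geq 1}$ with $f_T := f$ for every $T\in \integ$. This sequence is trivially equicontinuous: for any $(y_0,z_0)\in \mathcal{Y}\times \mathcal{Z}$ and any $\varepsilon>0$, the neighborhood of $(y_0,z_0)$ witnessing continuity of the single function $f$ works simultaneously for every $f_T$, since they are all equal. Lemma \ref{lemma: Uniform equicontinuity} then applies verbatim to $(f_T)_{T\geq 1}$ on the compact space $\mathcal{Y}$ and returns, for every $\varepsilon>0$ and every $z_0\in\mathcal{Z}$, a $\delta>0$ such that $d_{\mathcal{Z}}(z_0,z)\leq\delta$ implies $|f_T(y,z_0)-f_T(y,z)|\leq\varepsilon$ for all $y\in\mathcal{Y}$ and all $T$. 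Specializing to any (in fact, to the unique) $T$ yields precisely the stated inequality.

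There is essentially no obstacle to this argument since all the work has already been done in Lemma \ref{lemma: Uniform equicontinuity}; the only care needed is to note that a single continuous function can legitimately be viewed as a one-element (or constant) equicontinuous family. Should one prefer a self-contained proof, the standard tube-lemma-style argument would instead proceed by fixing $\varepsilon>0$ and $z_0$, using continuity of $f$ at each $(y,z_0)$ to extract open neighborhoods $U_y\times V_y$ of $(y,z_0)$ on which $|f-f(y,z_0)|\leq\varepsilon/2$, covering the compact $\mathcal{Y}$ by finitely many $U_{y_1},\ldots,U_{y_n}$, and taking $\delta>0$ small enough that the ball of radius $\delta$ around $z_0$ lies in $\bigcap_{i=1}^n V_{y_i}$; a triangle inequality then finishes the proof.
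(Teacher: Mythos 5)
Your argument is exactly the paper's: the paper's proof reads ``This is a special case of Lemma \ref{lemma: Uniform equicontinuity} by taking the constant sequence $f_T = f$ for all $T$.'' You are also right to flag the implicit continuity hypothesis on $f$ (and the stray ``$T\geq 1$'' in the statement is a copy-paste artifact from the equicontinuity lemma); the lemma is false without continuity and the paper clearly intends $f$ to be continuous, so your reading is the correct one.
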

\begin{proof}
This is a special case of Lemma \ref{lemma: uniform continuity} by taking the constant sequence $f_T = f$ for all $T$.
\end{proof}

\begin{lemma}[Equicontinuity of minimum]
\label{lemma: hein for equicontinuity}
Let $\mathcal{Y}$, $\mathcal{Z}$ be metric spaces. If $f_T: \mathcal{Y} \times \mathcal{Z} \longrightarrow \Re$, $T\geq 1$, is an equicontinuous sequence of real valued functions, and $\mathcal{Y}$ is compact, then $(f^{\star}_T)_{T\geq 1}$ is equicontinuous, where $f^{\star}_T(z) = \inf_{y \in \mathcal{Y}} f_T(y,z)$ for all $T\in \integ$ and $z \in \mathcal{Z}$.
\end{lemma}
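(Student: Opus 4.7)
The plan is to deduce equicontinuity of $(f_T^\star)_{T\geq 1}$ directly from the uniform equicontinuity of $(f_T)_{T\geq 1}$ in the $z$-argument, which is exactly what Lemma~\ref{lemma: Uniform equicontinuity} provides under the compactness of $\mathcal{Y}$. The key observation is that taking an infimum over a common set preserves uniform closeness: if two functions of $y$ differ by at most $\varepsilon$ pointwise, then their infima over $y$ differ by at most $\varepsilon$ as well.

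First, I would fix $z_0 \in \mathcal{Z}$ and $\varepsilon > 0$. Applying Lemma~\ref{lemma: Uniform equicontinuity} (with the roles of $\mathcal{Y}$ and $\mathcal{Z}$ exactly as in the current statement), I obtain $\delta > 0$ such that
\[
  |f_T(y,z_0) - f_T(y,z)| \leq \varepsilon
  \qquad \forall T \in \integ,\; \forall y \in \mathcal{Y},\; \forall z \in \mathcal{Z}\text{ with } d_{\mathcal{Z}}(z_0,z)\leq \delta.
\]

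Next, for any such $z$, any $T$, and any $y \in \mathcal{Y}$, I would write
\[
  f_T^\star(z) \;\leq\; f_T(y,z) \;\leq\; f_T(y,z_0) + \varepsilon,
\]
and take the infimum over $y$ on the right to obtain $f_T^\star(z) \leq f_T^\star(z_0) + \varepsilon$. Swapping the roles of $z$ and $z_0$ via the same inequality yields $f_T^\star(z_0) \leq f_T^\star(z) + \varepsilon$, hence $|f_T^\star(z_0) - f_T^\star(z)| \leq \varepsilon$ uniformly in $T$. Setting $U_{z_0} = \{z \in \mathcal{Z} : d_{\mathcal{Z}}(z_0,z) \leq \delta\}$ gives the required neighborhood in Definition~\ref{def: Equicontinuity}.

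There is really no substantive obstacle here: the compactness of $\mathcal{Y}$ was already used to upgrade equicontinuity in $(y,z)$ to uniform equicontinuity in $z$ at the level of Lemma~\ref{lemma: Uniform equicontinuity}, and the passage from this to equicontinuity of the infimum is a one-line monotonicity argument. The only mild subtlety to flag is that the infimum $f_T^\star(z) = \inf_{y \in \mathcal{Y}} f_T(y,z)$ is well defined and finite for each $T$ and $z$ because $\mathcal{Y}$ is compact and $f_T(\cdot,z)$ is continuous (being equicontinuous as a sequence of functions of $(y,z)$ implies in particular continuity of each $f_T$), so the infimum is attained and the bounds above are meaningful.
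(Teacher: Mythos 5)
Your proof is correct and takes essentially the same approach as the paper: both invoke Lemma~\ref{lemma: Uniform equicontinuity} to obtain a $\delta$-neighborhood of $z_0$ on which $f_T(y,\cdot)$ varies by at most $\varepsilon$ uniformly in $y$ and $T$, and then pass that bound to the infimum over $y$. The only cosmetic difference is that the paper's proof instantiates explicit minimizers $y_{0,T}$ and $y_{1,T}$, whereas you argue directly that taking infima preserves the uniform bound; the two are interchangeable.
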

\begin{proof}
Denote $d_{\mathcal{Y}}$ and $ d_{\mathcal{Z}}$ the distance metrics associated to $\mathcal{Y}$ and $\mathcal{Z}$ respectively.
Let $\varepsilon>0$ and $z_0\in \mathcal{Z}$. Using Lemma \ref{lemma: Uniform equicontinuity}, there exists $\delta>0$ such that for all $z \in \mathcal{Z}$ verifying $d_{\mathcal{Z}}(z_0,z) \leq \delta$, for all $y \in \mathcal{Y}$ and $T\in \integ$, we have $|f_T(y,z_0)-f_T(y,z)| \leq \varepsilon$. Let $T\in \integ$, $z\in \mathcal{Z}$, $y_{0,T} \in \argmin_{y\in \mathcal{Y}}f_T(y,z_0)$ and $y_{1,T} \in \argmin_{y\in \mathcal{Y}}f_T(y,z)$. We have
\begin{align*}
    f_T^{\star}(z_0) \leq f_T(y_{1,T},z_0) \leq f_T(y_{1,T},z)+\varepsilon = f_T^{\star}(z) +\varepsilon \\
    f_T^{\star}(z) \leq f_T(y_{0,T},z) \leq f_T(y_{0,T},z_0)+\varepsilon = f_T^{\star}(z_0)+\varepsilon
\end{align*}
Hence $|f_T^{\star}(z)-f_T^{\star}(z_0)| \leq \varepsilon$ which proves the equicontinuity.
\end{proof}

\begin{lemma}[Continuity of minimum]
\label{lemma: continutiy of minumum}
Let $\mathcal{Y}$, $\mathcal{Z}$ be metric spaces. If $f: \mathcal{Y} \times \mathcal{Z} \longrightarrow \Re$ is a continuous real valued function, and $\mathcal{Y}$ is compact, then $f^{\star}$ is continuous, where $f^{\star}(z) = \inf_{y \in \mathcal{Y}} f(y,z)$ for all $z \in \mathcal{Z}$.
\end{lemma}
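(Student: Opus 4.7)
The plan is to derive this as an immediate corollary of Lemma \ref{lemma: hein for equicontinuity} by applying it to the constant sequence $f_T = f$ for all $T \in \integ$. A constant sequence of continuous functions is trivially equicontinuous (the modulus of continuity of each term is the same as the modulus of continuity of $f$). Since $\mathcal{Y}$ is compact, Lemma \ref{lemma: hein for equicontinuity} then yields that the sequence $f_T^{\star} = f^{\star}$ is equicontinuous, which for a constant sequence is exactly the statement that $f^{\star}$ is continuous.

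Alternatively, one can give a self-contained direct argument via Lemma \ref{lemma: uniform continuity}. Fix $z_0 \in \mathcal{Z}$ and $\varepsilon > 0$. By Lemma \ref{lemma: uniform continuity} applied to $f$ (using compactness of $\mathcal{Y}$), there exists $\delta > 0$ such that $|f(y,z) - f(y,z_0)| \leq \varepsilon$ for every $y \in \mathcal{Y}$ whenever $d_{\mathcal{Z}}(z,z_0) \leq \delta$. Pick minimizers $y_0 \in \argmin_{y} f(y,z_0)$ and $y_1 \in \argmin_{y} f(y,z)$, which exist since $\mathcal{Y}$ is compact and $f(\cdot,z), f(\cdot,z_0)$ are continuous. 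Then
\[
f^{\star}(z_0) \leq f(y_1, z_0) \leq f(y_1, z) + \varepsilon = f^{\star}(z) + \varepsilon,
\]
and symmetrically $f^{\star}(z) \leq f(y_0, z) \leq f(y_0, z_0) + \varepsilon = f^{\star}(z_0) + \varepsilon$, giving $|f^{\star}(z) - f^{\star}(z_0)| \leq \varepsilon$ and hence continuity at $z_0$.

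There is really no obstacle here since the hard content (the uniform control of $f$ in its first argument over compact $\mathcal{Y}$) is already packaged in Lemma \ref{lemma: uniform continuity}, and the min-swap argument is standard. My preferred presentation is the one-line reduction to Lemma \ref{lemma: hein for equicontinuity} via the constant sequence, mirroring how Lemma \ref{lemma: uniform continuity} was itself deduced from Lemma \ref{lemma: Uniform equicontinuity} earlier in the appendix.
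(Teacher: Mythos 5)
Your primary argument (reduction to Lemma \ref{lemma: hein for equicontinuity} via the constant sequence $f_T = f$) is exactly the paper's proof, and it is correct; the alternative self-contained argument is also sound but unnecessary.
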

\begin{proof}
This is a special case of Lemma \ref{lemma: hein for equicontinuity} by taking the constant sequence $f_T = f$ for all $T$.
\end{proof}

\begin{lemma}[Continuity of a unique minimizer]
\label{lemma: Continuity of a unique minimizer}
Let $\mathcal{Y}$, $\mathcal{Z}$ be metric spaces. Let $f: \mathcal{Y} \times \mathcal{Z} \longrightarrow \Re$ be a continuous real valued function, where $\mathcal{Y}$ is a compact. Suppose for all $z \in \mathcal{Z}$, the minimizer $y^{\star}(z) \in \argmin_{y \in \mathcal{Y}}f(y,z)$ is unique. Then $y^{\star}: \mathcal{Z} \rightarrow \mathcal{Y}, \; z \rightarrow y^{\star}(z)$ is continuous.
\end{lemma}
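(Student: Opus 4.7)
The plan is to prove sequential continuity of $y^{\star}$, which is equivalent to continuity since $\mathcal{Z}$ is a metric space. Fix $z_0 \in \mathcal{Z}$ and let $(z_n)_{n \geq 1}$ be any sequence in $\mathcal{Z}$ converging to $z_0$. I would like to show $y^{\star}(z_n) \to y^{\star}(z_0)$ in $\mathcal{Y}$.

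First, since $(y^{\star}(z_n))_{n \geq 1}$ lives in the compact space $\mathcal{Y}$, every subsequence admits a further convergent subsequence. The key step is to show that any such accumulation point must equal $y^{\star}(z_0)$. Let $(y^{\star}(z_{n_k}))_{k \geq 1}$ be a subsequence converging to some $y_\infty \in \mathcal{Y}$. On the one hand, by joint continuity of $f$,
\[
f(y^{\star}(z_{n_k}), z_{n_k}) \xrightarrow[k\to\infty]{} f(y_\infty, z_0).
\]
On the other hand, $f(y^{\star}(z_{n_k}), z_{n_k}) = f^{\star}(z_{n_k})$, and by Lemma \ref{lemma: continutiy of minumum} (continuity of the minimum over the compact $\mathcal{Y}$), we have $f^{\star}(z_{n_k}) \to f^{\star}(z_0)$. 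Identifying the two limits gives $f(y_\infty, z_0) = f^{\star}(z_0)$, so $y_\infty$ is a minimizer of $f(\cdot, z_0)$. By the uniqueness assumption, $y_\infty = y^{\star}(z_0)$.

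Finally, I would invoke the standard topological fact that a sequence in a compact metric space converges to a point $y^\star$ if and only if every convergent subsequence has $y^\star$ as its limit. Since we have just shown that every convergent subsequence of $(y^{\star}(z_n))_{n\geq 1}$ must converge to $y^{\star}(z_0)$, it follows that the whole sequence converges to $y^{\star}(z_0)$, establishing continuity of $y^{\star}$ at $z_0$.

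There is no serious obstacle here: the proof is the standard ``compactness plus uniqueness forces convergence'' argument. The only subtlety worth flagging is that without uniqueness one would only get upper hemicontinuity of the argmin correspondence, which is precisely why the assumption that $y^{\star}(z)$ is single-valued is imposed; it collapses the set-valued limit to a genuine continuous function.
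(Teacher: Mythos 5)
Your proof is correct, and it takes a genuinely different route from the paper's. The paper runs a quantitative $\varepsilon$--$\delta$ argument: uniqueness of $y^{\star}(z)$ plus compactness of $\mathcal{Y}\setminus\mathcal{B}(y^{\star}(z),\tilde\varepsilon)$ yield a strictly positive gap $\delta$ between $f^{\star}(z)$ and $\inf\{f(y',z): y'\notin\mathcal{B}(y^{\star}(z),\tilde\varepsilon)\}$; it then invokes the uniform continuity of $f$ in $z$ (Lemma \ref{lemma: uniform continuity}) and the continuity of $f^{\star}$ (Lemma \ref{lemma: continutiy of minumum}) to show the gap persists for all $z'$ near $z$, forcing $y^{\star}(z')$ into the small ball. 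Your sequential-compactness argument achieves the same conclusion without ever making the gap explicit and without invoking uniform continuity as a separate lemma: it extracts a convergent subsequence, passes to the limit in $f(y^{\star}(z_{n_k}),z_{n_k})=f^{\star}(z_{n_k})$ using only joint continuity of $f$ and Lemma \ref{lemma: continutiy of minumum}, identifies the accumulation point as $y^{\star}(z_0)$ by uniqueness, and closes with the standard compactness fact that a sequence converges to $L$ once every convergent subsequence does. The tradeoff is the usual one: the paper's route is constructive and produces explicit moduli of continuity, which can be reused quantitatively, while your route is shorter, more topological, and has fewer moving parts. Both lean on the same three ingredients (compactness of $\mathcal{Y}$, uniqueness, continuity of the value function) in essentially interchangeable ways.
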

\begin{proof}
Let $\varepsilon>0$ and $z \in \mathcal{Z}$. As $y^{\star}(z)$ is unique, there exists $\delta>0$ and $\varepsilon >\tilde{\varepsilon}>0$ such that
\begin{equation}\label{proof eq: continuity of min 1}
\forall y' \in \mathcal{Y} \setminus \mathcal{B}(y^{\star}(z),\tilde{\varepsilon}), \quad |f(y',z)-f^{\star}(z)|>\delta,
\end{equation}
where  $f^{\star}(z) = f(y^{\star}(z),z) = \min_{z\in \mathcal{Z}}f(y,z)$, and $\mathcal{B}(z,\tilde{\varepsilon})$ is the ball of center $z$ and radius $\tilde{\varepsilon}$ in the topology of $\mathcal{Z}$. By uniform continuty of $f$ (see Lemma \ref{lemma: uniform continuity}) and continuity of $f^{\star}$ (see Lemma \ref{lemma: continutiy of minumum}), there exists $\eta>0$ such that
\begin{equation}\label{proof eq: continuity of min 2}
\forall z'\in \mathcal{Z},  \quad  d_{\mathcal{Z}}(z,z') \leq \eta \implies
\begin{cases}
|f(y',z)-f(y',z')| \leq \delta/4, \; \forall y' \in \mathcal{Y} \\
|f^{\star}(z)-f^{\star}(z')| \leq \delta/4
\end{cases}
\end{equation}
where $d_{\mathcal{Z}}$ is the distance of $\mathcal{Z}$ topology. Let $y' \in \mathcal{Y} \setminus \mathcal{B}(y^{\star}(z),\tilde{\varepsilon})$. Using \eqref{proof eq: continuity of min 1} and \eqref{proof eq: continuity of min 2} we get
$$
\forall z' \in \mathcal{B}(z,\eta),
\quad |f(y',z') - f^{\star}(z')|>\delta/2.
$$
Hence $y^{\star}(z') \in \mathcal{B}(y^{\star}(z),\tilde{\varepsilon}) \subset \mathcal{B}(y^{\star}(z),\varepsilon)$ for all $z' \in \mathcal{B}(z,\eta)$, which completes the proof of continuity.
\end{proof}

\begin{lemma}[Convergence of minimum]\label{lemma: convergence of minimum}
Let $\mathcal{Y}$ be a compact and $\mathcal{Z}$ be metric spaces. Let $f_T: \mathcal{Y} \times \mathcal{Z} \longrightarrow \Re$, $T\geq 1$, be a sequence of continuous real valued functions. If $(f_T)_{T\geq 1}$ converges uniformally to $f$, then
$(f^{\star}_T)_{T\geq 1}$ converges point-wise to $f^{\star}$, where $f^{\star}_T(z) = \inf_{y \in \mathcal{Y}} f_T(y,z)$ for all $T\in \integ$ and $z \in \mathcal{Z}$, and $f^{\star}(z) = \inf_{y \in \mathcal{Y}} f(y,z)$ for all $z \in \mathcal{Z}$.
\end{lemma}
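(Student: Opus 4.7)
The plan is to fix $z \in \mathcal{Z}$ and use a standard sandwiching argument on the two infima, driven by the uniform convergence $f_T \to f$ on $\mathcal{Y} \times \mathcal{Z}$. The key observation is that uniform convergence controls the gap between $f_T$ and $f$ by a single $\varepsilon$ that is independent of the point $(y,z)$, which lets us swap one function for the other underneath the $\inf_{y\in\mathcal{Y}}$ without losing more than $\varepsilon$.

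More concretely, I would first argue that both infima are attained. Since $f$ is a uniform limit of continuous functions, $f$ is itself continuous; combined with compactness of $\mathcal{Y}$, this guarantees the existence of a minimizer $y^\star \in \argmin_{y\in\mathcal{Y}} f(y,z)$. Likewise, continuity of each $f_T$ and compactness of $\mathcal{Y}$ give a minimizer $y_T \in \argmin_{y\in\mathcal{Y}} f_T(y,z)$. Next, fix $\varepsilon>0$ and use uniform convergence to choose $T_0$ such that $|f_T(y,z') - f(y,z')| \leq \varepsilon$ for all $T \geq T_0$ and all $(y,z') \in \mathcal{Y} \times \mathcal{Z}$. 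Then for every $T \geq T_0$ I would derive the two inequalities
\[
f^\star(z) \;\leq\; f(y_T,z) \;\leq\; f_T(y_T,z) + \varepsilon \;=\; f_T^\star(z) + \varepsilon,
\]
\[
f_T^\star(z) \;\leq\; f_T(y^\star,z) \;\leq\; f(y^\star,z) + \varepsilon \;=\; f^\star(z) + \varepsilon,
\]
which together yield $|f_T^\star(z) - f^\star(z)| \leq \varepsilon$, establishing $f_T^\star(z) \to f^\star(z)$.

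There is essentially no obstacle here; the argument only uses uniform convergence (so that the $\varepsilon$ is the same at every $y$) together with existence of minimizers (so that the $\inf$ can be replaced by the value at a witness). In particular, one does not even need equicontinuity of the sequence $(f_T)$, only uniform convergence. I would write the proof as above, noting in passing that continuity of the limit $f$ is a standard consequence of uniform convergence of continuous functions, which is what makes $\inf_{y\in\mathcal{Y}} f(y,z)$ attained on the compact $\mathcal{Y}$.
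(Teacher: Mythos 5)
Your proposal is correct and matches the paper's proof essentially verbatim: both fix $z$ and $\varepsilon$, invoke uniform convergence to get a single $T_0$, and sandwich $f_T^\star(z)$ against $f^\star(z)$ by evaluating each function at the other's minimizer. Your remark that equicontinuity is unnecessary (only uniform convergence is used) is correct and is also reflected in the paper's statement of the lemma.
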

\begin{proof}
Let $z\in \mathcal{Z}$. Let $\epsilon>0$. By uniform convergence, there exists $T_0 \in \integ$ such that for all $T\geq T_0$, for all $y\in \mathcal{Y}$, $|f_T(y,z) - f(y,z)| \leq \epsilon$. Let $y^*(z) \in \argmin_{y\in \mathcal{Y}} f(y,z)$ and $y_T^*(z) \in \argmin_{y\in \mathcal{Y}} f_T(y,z)$. We have for all $T\geq T_0$, $f_T^*(z) \leq f_T(y^*(z),z) \leq f(y^*(z),z) + \epsilon = f^*(z) + \epsilon$. Furthermore $f^*(z) \leq f(y^*_T(z),z) \leq f_T(y^*_T(z),z) +\epsilon = f_T^*(z) + \epsilon$. Hence $|f^*_T(z) - f^*(z)| \leq \epsilon$, which proves the convergence.
\end{proof}

\begin{lemma}[Uniform convergence of minimum]\label{lemma: unif convergence of minimum}
Let $\mathcal{Y}$ be a compact and $\mathcal{Z}$ be metric spaces. Let $f_T: \mathcal{Y} \times \mathcal{Z} \longrightarrow \Re$, $T\geq 1$, be a sequence of continuous real valued functions. If $(f_T)_{T\geq 1}$is equicontinuous and converges uniformally to $f$, then
$(f^{\star}_T)_{T\geq 1}$ converges uniformally to $f^{\star}$, where $f^{\star}_T(z) = \inf_{y \in \mathcal{Y}} f_T(y,z)$ for all $T\in \integ$ and $z \in \mathcal{Z}$, and $f^{\star}(z) = \inf_{y \in \mathcal{Y}} f(y,z)$ for all $z \in \mathcal{Z}$.
\end{lemma}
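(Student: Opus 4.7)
The plan is to adapt the argument used in the proof of Lemma \ref{lemma: convergence of minimum} by making the infimum bound uniform in $z$ rather than just pointwise. I would first interpret the uniform convergence hypothesis in the natural way, namely $\sup_{(y,z) \in \mathcal{Y} \times \mathcal{Z}} |f_T(y,z) - f(y,z)| \to 0$ as $T \to \infty$ (this is consistent with how the hypothesis is used in the preceding lemma).

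Given $\varepsilon > 0$, I would choose $T_0 \in \integ$ such that $|f_T(y,z) - f(y,z)| \leq \varepsilon$ for every $(y,z) \in \mathcal{Y} \times \mathcal{Z}$ and every $T \geq T_0$. For each $z \in \mathcal{Z}$, compactness of $\mathcal{Y}$ together with continuity of $f_T$ and $f$ (the latter being a uniform limit of continuous functions and hence continuous) delivers minimizers $y^\star_T(z) \in \argmin_{y\in \mathcal{Y}} f_T(y,z)$ and $y^\star(z) \in \argmin_{y \in \mathcal{Y}} f(y,z)$. Then the two-sided sandwich
\begin{align*}
    f_T^\star(z) &\leq f_T(y^\star(z), z) \leq f(y^\star(z), z) + \varepsilon = f^\star(z) + \varepsilon, \\
    f^\star(z) &\leq f(y^\star_T(z), z) \leq f_T(y^\star_T(z), z) + \varepsilon = f_T^\star(z) + \varepsilon
\end{align*}
gives $|f_T^\star(z) - f^\star(z)| \leq \varepsilon$ uniformly in $z \in \mathcal{Z}$ for all $T \geq T_0$, which is precisely the claimed uniform convergence.

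I do not expect any real obstacle in carrying this out: the argument is exactly the one appearing in the proof of Lemma \ref{lemma: convergence of minimum}, except that the bound is now obtained simultaneously for all $z$ instead of for a single fixed $z$. The equicontinuity hypothesis does not enter the core inequality; however, by Lemma \ref{lemma: hein for equicontinuity} it additionally ensures that $(f_T^\star)_{T \geq 1}$ itself is equicontinuous, a regularity property relied upon elsewhere in the paper (notably in the consistency arguments of Section \ref{sec: prescriptor subexp}).
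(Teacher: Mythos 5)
Your proof is correct, and it takes a cleaner and genuinely different route from the paper's. The paper's proof argues indirectly: it invokes Lemma~\ref{lemma: convergence of minimum} for pointwise convergence of $(f_T^\star)$, invokes Lemma~\ref{lemma: hein for equicontinuity} for equicontinuity of $(f_T^\star)$, and then asserts that these two properties combined give uniform convergence. That last step is only valid on compact domains: on a general metric space $\mathcal{Z}$, pointwise convergence plus equicontinuity does \emph{not} imply uniform convergence (e.g., $\mathcal{Z}=\Re$, $f_T(z)=\max(0,1-|z-T|)$ is equicontinuous and converges pointwise to $0$ but $\sup_z|f_T(z)|=1$). The lemma statement does not assume $\mathcal{Z}$ compact, so the paper's proof has a subtle gap as written, though in all the applications in the paper $\mathcal{Z}=\cP$ is indeed compact and the argument goes through. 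Your direct sandwich argument has no such restriction, because the $T_0$ furnished by uniform convergence of $f_T$ over $\mathcal{Y}\times\mathcal{Z}$ is independent of $z$, so the bound $|f_T^\star(z)-f^\star(z)|\le\varepsilon$ holds simultaneously for all $z$. You also correctly observe that equicontinuity is not needed for the conclusion at all; in fact, the proof of Lemma~\ref{lemma: convergence of minimum} already establishes uniform (not merely pointwise) convergence once one notes $T_0$ does not depend on $z$. In short, your approach is strictly stronger: it proves the lemma as literally stated, drops an unnecessary hypothesis, and sidesteps a gap in the paper's argument.
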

\begin{proof}
Lemma \ref{lemma: convergence of minimum} provides the point-wise convergence of $(f^{\star}_T)_{T\geq 1}$. Lemma \ref{lemma: hein for equicontinuity} provides the equicontinuity of $(f^{\star}_T)_{T\geq 1}$. These properties combined provide uniform convergence.
\end{proof}

\section{Large Deviation Theory}\label{Appendix: LDT}

\begin{definition}[Relative entropy] 
  \label{def:relative_entropy}
  The relative entropy of an estimator realization $\Pb'\in\cP$ with respect to a model $\Pb\in\cP$ is defined as
  \begin{align*}
    \D{\Pb'}{\Pb} = \sum_{i\in\Sigma} \Pb'(i) \log\left(\frac{\Pb'(i)}{\Pb(i)}\right),
  \end{align*}
  where we use the conventions $0 \log(0/p)=0$ for any $p\geq 0$ and $p' \log(p'/0)=\infty$ for any $p'> 0$.
\end{definition}

As key result in our analysis of the exponential and superexponential regime is the \textit{Large Deviation Principle} (LDP) which the empirical distribution $\hat{\Pb}_T$ obeys.
\begin{theorem}[Large Deviation Principle]\label{thm: LDP finite space}
For all $\Pb \in \mathcal{P}$ and $\Gamma$ a Borel set of $\mathcal{P}$, the sequence of empirical distributions verifies the following inequalities
\begin{align*}
        -\inf_{\Pb'\in \Gamma^{\into}}
             I(\Pb',\Pb)
    \leq &
        \liminf_{T\rightarrow \infty } 
        \frac{1}{T} \log \Pb^\infty
        \left(
            \hat{\Pb}_T \in \Gamma
        \right) \\
    & \limsup_{T\rightarrow \infty } 
        \frac{1}{T} \log \Pb^\infty
        \left(
            \hat{\Pb}_T \in \Gamma
        \right)
        \leq 
        -\inf_{\Pb'\in \bar{\Gamma}}
             I(\Pb',\Pb)
\end{align*}
where $\Gamma^{\into}$ and $\bar{\Gamma}$ denote respectively the interior and the closure of $\Gamma$ in the weak topology on $\cP$ and $I(\Pb', \Pb)$ is relative entropy of $\Pb'$ with respect to $\Pb$.
\end{theorem}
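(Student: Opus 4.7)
My plan is to prove this by the method of types, which gives a clean combinatorial proof of Sanov's theorem in the finite-alphabet setting. Write $d = |\Sigma|$ and let $\mathcal{P}_T := \{\Pb' \in \cP : T\Pb'(i) \in \mathbb{Z}_+ \text{ for all } i \in \Sigma\}$ denote the set of possible realizations of $\hat{\Pb}_T$ (the ``types''). Two standard counting facts drive everything: first, $|\mathcal{P}_T| \leq (T+1)^d$, so the number of types grows only polynomially; second, for any $\Pb' \in \mathcal{P}_T$, a direct Stirling-style expansion of the multinomial coefficient yields the two-sided estimate
\begin{equation*}
\frac{1}{(T+1)^d} \, e^{-T I(\Pb',\Pb)} \;\leq\; \Pb^\infty(\hat{\Pb}_T = \Pb') \;\leq\; e^{-T I(\Pb',\Pb)},
\end{equation*}
with the convention that the left probability vanishes whenever $I(\Pb',\Pb) = \infty$ (i.e.\ $\Pb'$ charges some $i$ where $\Pb(i) = 0$).

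For the upper bound, I would decompose
\begin{equation*}
\Pb^\infty(\hat{\Pb}_T \in \Gamma) = \sum_{\Pb' \in \mathcal{P}_T \cap \Gamma} \Pb^\infty(\hat{\Pb}_T = \Pb') \;\leq\; |\mathcal{P}_T \cap \bar\Gamma|\cdot \exp\!\bigl(-T \inf_{\Pb' \in \bar\Gamma} I(\Pb',\Pb)\bigr),
\end{equation*}
use the polynomial bound on $|\mathcal{P}_T|$, take logarithms, divide by $T$, and let $T \to \infty$ so the polynomial prefactor contributes $0$. This gives the upper-bound half of the LDP. For the lower bound, fix any $\Pb_\star \in \Gamma^{\into}$ with $I(\Pb_\star,\Pb) < \infty$ (if no such $\Pb_\star$ exists the bound is vacuous). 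Since the types $\mathcal{P}_T$ become dense in $\cP$ at rate $1/T$, I can pick $\Pb'_T \in \mathcal{P}_T \cap \Gamma$ with $\Pb'_T \to \Pb_\star$, then keep only the single type in the sum:
\begin{equation*}
\Pb^\infty(\hat{\Pb}_T \in \Gamma) \;\geq\; \Pb^\infty(\hat{\Pb}_T = \Pb'_T) \;\geq\; \frac{1}{(T+1)^d} e^{-T I(\Pb'_T,\Pb)}.
\end{equation*}
Taking $\liminf \frac{1}{T}\log$ on both sides and using continuity of $I(\cdot,\Pb)$ on the (relatively open) subset of $\cP$ where it is finite yields $\liminf \tfrac{1}{T}\log \Pb^\infty(\hat{\Pb}_T \in \Gamma) \geq -I(\Pb_\star,\Pb)$, and taking the infimum over $\Pb_\star \in \Gamma^{\into}$ completes the proof.

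The main obstacle is the technical bookkeeping around the boundary of the simplex: the relative entropy $I(\cdot,\Pb)$ is only lower semicontinuous and may blow up on $\partial \cP$, and the types $\Pb'_T$ used in the lower bound might momentarily fail to be absolutely continuous with respect to $\Pb$. This is handled by restricting the approximating sequence $\Pb'_T$ to have support contained in $\mathrm{supp}(\Pb)$ (possible since $\Pb_\star \in \Gamma^{\into}$ with $I(\Pb_\star,\Pb)<\infty$ already enforces this), so that $I(\Pb'_T,\Pb) \to I(\Pb_\star,\Pb)$ by the continuity of $I(\cdot,\Pb)$ on $\{\Pb' : \mathrm{supp}(\Pb') \subseteq \mathrm{supp}(\Pb)\}$. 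Everything else is Stirling plus polynomial-versus-exponential scaling, and since $\Sigma$ is finite the usual topological subtleties (projective limits, exponential tightness) do not appear.
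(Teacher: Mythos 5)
The paper does not prove this statement at all; it simply cites \cite{zeitouni1998large}, Theorem 6.2.10, which is the general Polish-space version of Sanov's theorem. Your proposal instead gives a self-contained method-of-types argument, which is the standard elementary route for a finite alphabet $\Sigma$ (it is essentially Theorem 2.1.10 in the same reference, or Theorem 11.4.1 in Cover--Thomas). The two approaches buy different things: the paper's one-line citation keeps the exposition lean and invokes a result powerful enough to cover arbitrary state spaces, whereas your argument is completely elementary (Stirling, counting types, polynomial-versus-exponential scaling) and is perfectly matched to the paper's finite-$\Sigma$ setting, avoiding the topological machinery (exponential tightness, projective limits) that the general Sanov theorem requires. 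Your decomposition of the upper bound into $|\mathcal{P}_T \cap \bar{\Gamma}| \cdot \exp(-T\inf_{\bar{\Gamma}} I)$ and of the lower bound into a single well-chosen type are both correct, and your handling of the $I(\cdot,\Pb) = \infty$ boundary issue (restricting the approximating types to the face $\{\Pb' : \mathrm{supp}(\Pb') \subseteq \mathrm{supp}(\Pb)\}$) is the right fix. One small imprecision: you call the set $\{\Pb' : I(\Pb',\Pb) < \infty\}$ ``relatively open,'' but it is a closed face of the simplex whenever $\Pb$ is not of full support. This does not damage the proof, since all you actually use is that $I(\cdot,\Pb)$ is continuous when restricted to that face, which is true; the sequence $\Pb'_T$ is constructed to live there. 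Beyond that, the type-count bound $|\mathcal{P}_T|\le(T+1)^d$ and the two-sided exponential estimate on type probabilities are correctly stated and are exactly the two lemmas the method of types rests on.
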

We refer to \cite{zeitouni1998large}, Theorem 6.2.10, for a proof and further details. A discussion in the context of data-driven decision-making can be found in \cite{van2020data}.

In the subexponential regime, the key result in our analysis will be the \textit{Moderate Deviation Principle} (MDP).
For a given probability distribution $\Pb \in \cPin$, consider the norm $\|\cdot\|_{\Pb}$ associated to $\Pb$ defined as
\begin{equation*}
    \|\Delta\|_{\Pb}^2 := \frac{1}{2} \sum_{i\in \Sigma} \frac{1}{\Pb(i)}\Delta_i^2, \quad \forall \Delta \in \Re^d.
\end{equation*}
As the relative entropy $I(\cdot,\Pb)$ is the right notion of distance when analyzing the assymptotic behavior of the empirical distribution in the exponential regime, the norm $\|\cdot\|_{\Pb}$ induces the right notion of distance in the subexponential regime.
Denote $\cP_{0,\infty} = \set{\Delta \in \Re^d}{e^\top\Delta =0}$, where $e = (1,\ldots,1)^\top$, the hyper-plane containing differences of distributions.
\begin{theorem}[Moderate Deviation Principle]\label{thm: MDP finite space}
Let $(a_T)_{T\geq 1} \in \Re_+^{\integ}$ be a sequence of increasing numbers such that $a_T \to \infty$ and $a_T/T \to 0$. For all $\Pb \in \cPin$ and measurable set $\Gamma \subset \mathcal{P}_{0,\infty}$, the following inequalities holds
\begin{align*}
        -\inf_{\Delta\in \Gamma^{\into}}
             \norm{\Delta}^2_\Pb
    \leq &
        \liminf_{T\rightarrow \infty } 
        \frac{1}{a_T} \log \Pb^\infty
        \left(
            \hat{\Pb}_T-\Pb \in \sqrt{\frac{a_T}{T}} \cdot\Gamma
        \right) \\
    & \limsup_{T\rightarrow \infty } 
        \frac{1}{a_T} \log \Pb^\infty
        \left(
            \hat{\Pb}_T-\Pb \in \sqrt{\frac{a_T}{T}} \cdot\Gamma
        \right) \leq 
        -\inf_{\Delta\in \bar{\Gamma}}
             \norm{\Delta}^2_\Pb
\end{align*}
where $\Gamma^{\into}$ and $\bar{\Gamma}$ denote respectively the interior and the closure of $\Gamma$ in the induced topology of $\mathcal{P}_{0,\infty}$.
\end{theorem}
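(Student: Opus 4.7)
The plan is to derive the two inequalities from a large deviation principle (LDP) for the rescaled fluctuations $Z_T := \sqrt{T/a_T}\,(\hat{\Pb}_T - \Pb)$ on the affine subspace $\cP_{0,\infty}$, carried at speed $a_T$ with rate function $\|\cdot\|_{\Pb}^2$. Once such an LDP is in hand, the stated upper and lower bounds are just its standard form applied to $\Gamma$, since $\{Z_T \in \Gamma\} = \{\hat{\Pb}_T - \Pb \in \sqrt{a_T/T}\cdot\Gamma\}$. So the whole task is to establish this LDP.

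My first step is to write the fluctuations as a normalized sum of i.i.d.\ bounded vectors. Let $Y_t := \mathbf{e}_{\xi_t} - \Pb$, which are i.i.d., centered, bounded, and supported in $\cP_{0,\infty}$; then $Z_T = \frac{1}{\sqrt{T a_T}}\sum_{t=1}^T Y_t$. This is exactly the moderate-deviation scaling, lying strictly between the law of large numbers $(1/T)$ and the central limit theorem $(1/\sqrt{T})$. I would apply the Gärtner–Ellis theorem (\cite{zeitouni1998large}, Thm.~2.3.6) by computing
\[
    \Lambda_T(\lambda) \defn \tfrac{1}{a_T}\log \Eb_{\Pb}\!\left[\exp\!\left(a_T\langle \lambda, Z_T\rangle\right)\right]
    = \tfrac{T}{a_T}\log \Eb_{\Pb}\!\left[\exp\!\left(\sqrt{a_T/T}\,\langle \lambda, Y_1\rangle\right)\right].
\]
Since $a_T/T \to 0$ and $\langle \lambda, Y_1\rangle$ is bounded with mean zero, a Taylor expansion of the cumulant generating function gives $\log \Eb_{\Pb}[e^{\beta \langle\lambda,Y_1\rangle}] = \tfrac{1}{2}\beta^2 \Var_{\Pb}(\langle \lambda, Y_1\rangle) + O(\beta^3)$; multiplying by $T/a_T$ with $\beta = \sqrt{a_T/T}$ yields the pointwise limit
\[
    \Lambda(\lambda) = \tfrac{1}{2}\Var_{\Pb}(\langle\lambda, \mathbf{e}_{\xi}\rangle) = \tfrac{1}{2}\left(\textstyle\sum_i \lambda_i^2\Pb(i) - (\sum_i \lambda_i \Pb(i))^2\right).
\]

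Next I would identify the rate function as the Legendre–Fenchel conjugate $\Lambda^\star$. Because $Y_t \in \cP_{0,\infty}$, the functional $\Lambda$ is invariant under translations $\lambda\mapsto \lambda + c\mathbf{1}$, and $\langle \lambda+c\mathbf{1},\Delta\rangle=\langle\lambda,\Delta\rangle$ whenever $\Delta\in\cP_{0,\infty}$, so for $\Delta \in \cP_{0,\infty}$ one may restrict the supremum to $\lambda$ with $\Eb_{\Pb}\lambda = 0$, reducing to $\Lambda^\star(\Delta) = \sup_\lambda\{\langle\lambda,\Delta\rangle - \tfrac{1}{2}\sum_i\lambda_i^2\Pb(i)\}$; the unique optimizer $\lambda_i^\star = \Delta_i/\Pb(i)$ plugged back in gives exactly $\tfrac{1}{2}\sum_i\Delta_i^2/\Pb(i) = \|\Delta\|_{\Pb}^2$. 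Since $\Lambda$ is finite and $C^\infty$ on $\cP_{0,\infty}$ (in particular essentially smooth), Gärtner–Ellis applies on the subspace and yields the full LDP for $(Z_T)$ with good rate function $\|\cdot\|_{\Pb}^2$, from which the two displayed inequalities follow immediately.

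The main obstacle is bookkeeping about the subspace $\cP_{0,\infty}$: one must be careful that Gärtner–Ellis is invoked on $\cP_{0,\infty}$ with its relative topology (otherwise the rate would be $+\infty$ off the hyperplane, making the lower bound vacuous on interiors that meet this hyperplane trivially). Once the framework is set up on $\cP_{0,\infty}$, the rest is routine: the uniform control of the third-order Taylor remainder needed to justify $\Lambda_T \to \Lambda$ is immediate from the boundedness $\|Y_1\|_\infty \le 1$, and the essential smoothness of the quadratic $\Lambda$ makes every point in $\cP_{0,\infty}$ an exposed point, ensuring the Gärtner–Ellis lower bound applies on all open sets without additional convexification arguments.
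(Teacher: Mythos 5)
The paper does not prove this result; it is stated as a citation to \cite{zeitouni1998large}, Theorem 3.7.1 (the moderate deviation principle for empirical means of i.i.d.\ vectors). Your proposal is a correct, self-contained derivation of that result via G\"artner--Ellis, which is in fact the standard route used in the reference itself, so the mathematical content matches the paper's intent.

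A few remarks on what you did well and on small points that a written-up version should make explicit. The Taylor-expansion step is exactly the heart of the matter: since $Y_1$ is bounded and centered, $\log \Eb_{\Pb}[e^{\beta\langle\lambda,Y_1\rangle}] = \tfrac{1}{2}\beta^2\Var_\Pb(\langle\lambda,Y_1\rangle) + O(\beta^3)$ uniformly for $\beta\le 1$, and with $\beta=\sqrt{a_T/T}\to 0$ (using $a_T/T\to 0$) the $\tfrac{T}{a_T}O(\beta^3) = O(\sqrt{a_T/T})$ error vanishes, giving pointwise convergence $\Lambda_T\to\Lambda$. The hypothesis $a_T\to\infty$ is implicitly used when invoking G\"artner--Ellis, which is formulated only for speeds tending to infinity; worth saying. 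Your identification of $\Lambda^\star=\|\cdot\|_\Pb^2$ on $\cP_{0,\infty}$ is correct, and your note that the optimizer $\lambda_i^\star = \Delta_i/\Pb(i)$ automatically satisfies $\sum_i\lambda_i^\star\Pb(i)=\sum_i\Delta_i=0$ when $\Delta\in\cP_{0,\infty}$ quietly justifies the reduction to mean-zero $\lambda$. The observation about working on $\cP_{0,\infty}$ with its relative topology is exactly the right bookkeeping caution: on $\Re^d$ the quadratic $\Lambda$ is degenerate along $\mathbf{1}$ and $\Lambda^\star$ is $+\infty$ off the hyperplane, which would make the lower bound vacuous on open subsets of $\Re^d$. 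Finally, a complete write-up should state, even if only in one line, that $\Lambda$ is a finite $C^\infty$ (hence essentially smooth and lower semicontinuous) convex function on $\cP_{0,\infty}$ and that $\|\cdot\|_\Pb^2$ has compact sublevel sets, so the full LDP with a good rate function indeed follows from G\"artner--Ellis.
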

We refer to \cite{zeitouni1998large}, Theorem 3.7.1, for a proof and further details.

\section{Lemmas related to the partial order}\label{App: Partial order}
The proof of the following lemma uses results and notions stated in advance stages of the paper, namely Definition \ref{def: regular pred} and Proposition \ref{prop: liminf>limsup for a_T.}. 
{\color{black}
\begin{lemma}\label{lemma: order implies less bias and L1 error.}
Let $\hat{c}_1,\hat{c}_2 \in \cC$ be predictors verifying the out-of-sample guarantee \eqref{eq: out-of-sample ganrantee} with $a_T \to \infty$. Let $x \in \cX$.
Suppose
$$
\limsup_{T\to \infty} \frac{|\hat{c}_1(x,\Pb,T)-c(x,\Pb)|}{|\hat{c}_2(x,\Pb,T)-c(x,\Pb)|} \leq 1,
\quad \forall \Pb \in \cPin.
$$
If
$$
\Pb' \rightarrow \frac{|\hat{c}_1(x,\Pb',T) - c(x,\Pb')|}{|\hat{c}_2(x,\Pb',T) - c(x,\Pb')|}
\quad
\text{and}
\quad
\Pb' \rightarrow \frac{\hat{c}_2(x,\Pb',T) - c(x,\Pb')}{\Eb_{\Pb^\infty}(\hat{c}_2(x,\hat{\Pb}_T,T) - c(x,\hat{\Pb}_T))}
$$
are uniformly bounded in $\cP$,
and $(\sqrt{T/a_T} |\hat{c}_i(x,\cdot,T) - c(x,\cdot)|)_{T\geq 1}$ is equicontinuous for $i \in \{1,2\}$,
then the estimator $(\hat{c}_1(x,\hat{\Pb}_T,T))_{T\geq1}$ has less asymptotic bias than $(\hat{c}_2(x,\hat{\Pb}_T,T))_{T\geq1}$, i.e.,
$$
\limsup_{T\to \infty}\frac{\Eb_{\Pb^\infty}(\hat{c}_1(x,\hat{\Pb}_T,T) - c(x,\Pb))}{\Eb_{\Pb^\infty}(\hat{c}_2(x,\hat{\Pb}_T,T) - c(x,\Pb))} \leq 1,
\quad \forall \Pb \in \cPin,
$$
and less $L^1$ error, i.e.,
$$
\limsup_{T\to \infty}\frac{\Eb_{\Pb^\infty}(|\hat{c}_1(x,\hat{\Pb}_T,T) - c(x,\Pb)|)}{\Eb_{\Pb^\infty}(|\hat{c}_2(x,\hat{\Pb}_T,T) - c(x,\Pb)|)} \leq 1,
\quad \forall \Pb \in \cPin.
$$
\end{lemma}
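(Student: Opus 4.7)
The plan is to reduce both the bias and the $L^1$ error claims to controlling a single weighted expectation of the pointwise ratio $R_T(\Pb') := |\hat{c}_1(x,\Pb',T) - c(x,\Pb')| / |\hat{c}_2(x,\Pb',T) - c(x,\Pb')|$, and then to exploit the hypothesized pointwise bound $\limsup_T R_T(\Pb) \leq 1$ together with the strong law $\hat{\Pb}_T \to \Pb$. First, since $c(x,\cdot)$ is linear on $\cP$, we have $\Eb_\Pb[c(x,\hat{\Pb}_T)] = c(x,\Pb)$, and writing $\phi_i(\Pb',T) := \hat{c}_i(x,\Pb',T) - c(x,\Pb')$, it follows that $\Eb_\Pb[\hat{c}_i(x,\hat{\Pb}_T,T) - c(x,\Pb)] = \Eb_\Pb[\phi_i(\hat{\Pb}_T,T)]$. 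The out-of-sample guarantee yields $\Pb^\infty(\phi_i(\hat{\Pb}_T,T) < 0) \leq e^{-a_T(1-o(1))}$, and since $\hat{c}_i \in \mathcal{C}$ is uniformly bounded (Definition \ref{def: regular pred}), so is $\phi_i$; hence $\Eb_\Pb[\phi_i(\hat{\Pb}_T,T)^-] = O(e^{-a_T})$. The second uniform boundedness hypothesis on $\phi_2(\Pb',T)/\Eb_\Pb[\phi_2(\hat{\Pb}_T,T)]$ prevents $\Eb_\Pb[\phi_2(\hat{\Pb}_T,T)]$ from being of order $e^{-a_T}$ or smaller, since otherwise $\phi_2(\Pb',T)$ would be uniformly $O(e^{-a_T})$, contradicting that $\phi_2(\hat{\Pb}_T,T) \geq 0$ must hold with high probability. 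Thus $\Eb_\Pb[\phi_i] = \Eb_\Pb[|\phi_i|](1+o(1))$ for $i = 2$ and $\Eb_\Pb[\phi_1] \leq \Eb_\Pb[|\phi_1|]$, so the bias ratio and the $L^1$-error ratio share the same $\limsup$, and it suffices to bound the $L^1$ ratio.

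Second, set $w_T(\Pb') := |\phi_2(\Pb',T)| / \Eb_\Pb[|\phi_2(\hat{\Pb}_T,T)|]$, which is a probability density against $\Pb^\infty$ uniformly bounded in $(\Pb',T)$ by a constant $M$ (combining the second hypothesis with the equivalence from Step~1), and observe
$$
\frac{\Eb_\Pb[|\phi_1(\hat{\Pb}_T,T)|]}{\Eb_\Pb[|\phi_2(\hat{\Pb}_T,T)|]} = \Eb_\Pb\!\left[R_T(\hat{\Pb}_T)\, w_T(\hat{\Pb}_T)\right].
$$
Fix $\varepsilon > 0$. The pointwise hypothesis $\limsup_T R_T(\Pb) \leq 1$ gives $R_T(\Pb) \leq 1 + \varepsilon/2$ for $T \geq T_0$. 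Combining the equicontinuity (and uniform Lipschitz control) of $\hat{c}_1,\hat{c}_2$ from Definition \ref{def: regular pred} with the strong law $\hat{\Pb}_T \to \Pb$ almost surely and the uniform boundedness of $R_T$, one derives $\Pb^\infty(R_T(\hat{\Pb}_T) > 1 + \varepsilon) \to 0$. Splitting the expectation along this event and its complement, using $R_T w_T \leq M^2$ uniformly and $\Eb_\Pb[w_T(\hat{\Pb}_T)] = 1$, yields $\Eb_\Pb[R_T(\hat{\Pb}_T) w_T(\hat{\Pb}_T)] \leq (1+\varepsilon) + M^2\, o(1)$. Letting $\varepsilon \downarrow 0$ completes the proof.

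The main obstacle is upgrading the pointwise bound $\limsup_T R_T(\Pb) \leq 1$ to the concentration statement $\Pb^\infty(R_T(\hat{\Pb}_T) \leq 1 + \varepsilon) \to 1$. The delicate case is when $|\phi_2(\Pb,T)|$ itself decays (as in the subexponential regime), because then small perturbations of $\Pb$ can have large relative effect on $R_T$. The key input is that the uniform boundedness hypothesis on $\phi_2(\Pb',T)/\Eb_\Pb[\phi_2(\hat{\Pb}_T,T)]$ prevents $|\phi_2(\Pb,T)|$ from vanishing strictly faster than $T^{-1/2}$, while the uniform Lipschitz property of $\phi_i(\cdot,T)$ inherited from the equicontinuity of derivatives allows one to write $\phi_i(\hat{\Pb}_T,T) = \phi_i(\Pb,T) + O(\|\hat{\Pb}_T - \Pb\|)$ with $\|\hat{\Pb}_T - \Pb\| = O_{\Pb^\infty}(T^{-1/2})$; hence $R_T(\hat{\Pb}_T) = R_T(\Pb)(1 + o_{\Pb^\infty}(1))$. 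A cleaner alternative is to bypass this uniformization by invoking reverse Fatou's lemma to the uniformly dominated sequence $R_T(\hat{\Pb}_T) w_T(\hat{\Pb}_T) \leq M^2$ after identifying the almost-sure $\limsup$ of $R_T(\hat{\Pb}_T)$.
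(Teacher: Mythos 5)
Your overall strategy mirrors the paper's: reduce both claims to controlling $\Eb_\Pb[R_T w_T]$ with a weight $w_T$ that integrates to one, split on the event $\{R_T > 1+\varepsilon\}$, and use the uniform boundedness hypotheses to kill the bad part. This is essentially the same decomposition, just with a slightly different normalizer ($\Eb_\Pb[|\phi_2|]$ instead of $\Eb_\Pb[\phi_2]$), which avoids the paper's final step of verifying $\Eb[|\delta_T|]/\Eb[\delta_T] \to 1$ explicitly.

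There is, however, a genuine gap in your Step~1. You assert that the out-of-sample guarantee yields $\Pb^\infty(\phi_i(\hat{\Pb}_T,T) < 0) \leq e^{-a_T(1-o(1))}$ where $\phi_i(\Pb',T) = \hat{c}_i(x,\Pb',T) - c(x,\Pb')$. But the guarantee controls $\Pb^\infty\bigl(\hat{c}_i(x,\hat{\Pb}_T,T) < c(x,\Pb)\bigr)$, i.e., the event where the predictor drops below the \emph{true} cost $c(x,\Pb)$, not the event where it drops below the \emph{empirical} cost $c(x,\hat{\Pb}_T)$. These differ by $c(x,\hat{\Pb}_T)-c(x,\Pb)$, which fluctuates at scale $T^{-1/2}$, so the one event does not control the other at any exponential rate. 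Your subsequent ``contradiction'' argument for why $\Eb_\Pb[\phi_2(\hat{\Pb}_T,T)]$ cannot be of order $e^{-a_T}$ also rests on this same loose inference and is not rigorous: it is perfectly consistent for $\phi_2$ to be uniformly tiny while $\hat{c}_2$ still dominates $c(x,\Pb)$ with high probability. The paper's proof closes precisely this hole by invoking Proposition~\ref{prop: liminf>limsup for a_T.}, which supplies the nontrivial quantitative lower bound $\liminf_T \sqrt{T/a_T}\,\delta_T(\Pb') \geq \sqrt{2\Var_{\Pb'}(\ell(x,\xi))}$ for every $\Pb'\in\cPin$. It is this polynomial lower bound on $\delta_T$, not the disappointment probability, that delivers both the eventual nonnegativity of $\delta_T(\hat{\Pb}_T)$ in probability and the strict positivity needed for the ratio $\Eb[|\delta_T|]/\Eb[\delta_T]\to 1$. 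You gesture at this polynomial scale in your ``main obstacle'' paragraph, but it is doing the load-bearing work in Step~1, not merely in the concentration step, and should replace your appeal to the out-of-sample guarantee there.

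The remaining concentration step — upgrading the pointwise $\limsup_T R_T(\Pb') \leq 1$ to $\Pb^\infty(R_T(\hat{\Pb}_T) > 1+\varepsilon) \to 0$ — is one you correctly flag as delicate; the paper is also terse here (invoking almost-sure convergence of $\hat{\Pb}_T$ together with the pointwise bound), so you are not more incomplete than the reference on that point.
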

}
\begin{proof}
Let $\hat{c}_1, \hat{c}_2\in \cC$ verifying the assumptions of the lemma. 
Let $x,\Pb \in \cX \times \cPin$. 
We first show that\footnote{Notice that we can further drop the absolute value on the denominator (using \eqref{proof eq: L1 error 3}) and show that less bias implies less $L^1$ error when $\hat{c}_1$ and $\hat{c}_2$ verify the asymptotic guarantee and the proposition's assumption.}
$$
\limsup_{T\to \infty}\frac{\Eb_{\Pb^\infty}(|\hat{c}_1(x,\hat{\Pb}_T,T) - c(x,\Pb)|)}{\Eb_{\Pb^\infty}(|\hat{c}_2(x,\hat{\Pb}_T,T) - c(x,\Pb)|)}
\leq 
\limsup_{T\to \infty}\frac{\Eb_{\Pb^\infty}(|\hat{c}_1(x,\hat{\Pb}_T,T) - c(x,\hat{\Pb}_T)|)}{\Eb_{\Pb^\infty}(|\hat{c}_2(x,\hat{\Pb}_T,T) - c(x,\hat{\Pb}_T)|)}.
$$
Using successively the triangle inequality and Cauchy-Schwartz, we have for all $T \in \integ$

\begin{align*}
\Eb_{\Pb^\infty}(|\hat{c}_1(x,\hat{\Pb}_T,T) - c(x,\Pb)|) 
&\leq 
   \Eb_{\Pb^\infty}(|\hat{c}_1(x,\hat{\Pb}_T,T) - c(x,\hat{\Pb}_T)|) + \Eb_{\Pb^\infty}(|c(x,\hat{\Pb}_T) - c(x,\Pb)|) \\
&\leq
    \Eb_{\Pb^\infty}(|\hat{c}_1(x,\hat{\Pb}_T,T) - c(x,\hat{\Pb}_T)|) + \sqrt{\Eb_{\Pb^\infty}((c(x,\hat{\Pb}_T) - c(x,\Pb))^2)} \\
&=
    \Eb_{\Pb^\infty}(|\hat{c}_1(x,\hat{\Pb}_T,T) - c(x,\hat{\Pb}_T)|) 
    + \sqrt{\frac{1}{T}\Var_{\Pb}(\loss(x,\xi))}.\numberthis \label{proof eq: L1 error 1}
\end{align*}
We next show that the second term in negligible compared to the first term.
$\hat{c}_1$ verifies an out-of-sample guarantee \eqref{eq: out-of-sample ganrantee} for some $(a_T)_{T\geq 1}$, with $a_T \to \infty$. We can assume WLOG that $a_T\ll T$ as a stronger guarantee implies a weaker guarantee. 
{\color{black}Using Fatou's Lemma we have
$$
\liminf_{T\to \infty}\sqrt{\frac{T}{a_T}}\Eb_{\Pb^\infty}\!(|\hat{c}_1(x,\hat{\Pb}_T,T) - c(x,\hat{\Pb}_T)|) 
\geq  
\Eb_{\Pb^\infty}\!\left(\liminf_{T\to \infty}\sqrt{\frac{T}{a_T}}|\hat{c}_1(x,\hat{\Pb}_T,T) - c(x,\hat{\Pb}_T)|\right).
$$
As $(\sqrt{T/a_T}|\hat{c}_1(x,\cdot,T) - c(x,\cdot)|)_{T\geq 1}$ is equicontinuous, then for any sequence $\Pb'_T \to \Pb$, we have $\liminf_{T\to\infty} \sqrt{T/a_T}|\hat{c}_1(x,\Pb'_T,T) - c(x,\Pb'_T)| = \liminf_{T\to\infty} \sqrt{T/a_T}|\hat{c}_1(x,\Pb,T) - c(x,\Pb)|$. As $\hat{\Pb}_T$ converges almost surely to $\Pb$, this implies that 
\begin{align*}
\Eb_{\Pb^\infty}\!\left(\liminf_{T\to \infty}\sqrt{\frac{T}{a_T}}|\hat{c}_1(x,\hat{\Pb}_T,T) - c(x,\hat{\Pb}_T)|\right) 
&\!=\!
\Eb_{\Pb^\infty}\!\left(\liminf_{T\to \infty}\sqrt{\frac{T}{a_T}}|\hat{c}_1(x,\Pb,T) - c(x,\Pb)|\right) 
\!=\!
\liminf_{T\to \infty}\sqrt{\frac{T}{a_T}}|\hat{c}_1(x,\Pb,T) - c(x,\Pb)|
\end{align*}
which can be lowerbounded by $\sqrt{2\Var_{\Pb}(\loss(x,\xi))}$ using Proposition \ref{prop: liminf>limsup for a_T.}. Hence, we have
$$
\liminf_{T\to \infty}\sqrt{\frac{T}{a_T}}\Eb_{\Pb^\infty}(|\hat{c}_1(x,\hat{\Pb}_T,T) - c(x,\hat{\Pb}_T)|) 
\geq
\sqrt{2\Var_{\Pb}(\loss(x,\xi))}
$$
which implies}
\begin{equation}\label{proof eq: L1 error 2}
\liminf_{T\to \infty} \frac{\Eb_{\Pb^\infty}(|\hat{c}_1(x,\hat{\Pb}_T,T) - c(x,\hat{\Pb}_T)|)}{\sqrt{\Var_{\Pb}(\loss(x,\xi))/T}} = \infty.
\end{equation}
Using the inequality \eqref{proof eq: L1 error 1} along with \eqref{proof eq: L1 error 2}, we get 
\begin{align*}
\limsup_{T\to \infty}\frac{\Eb_{\Pb^\infty}(|\hat{c}_1(x,\hat{\Pb}_T,T) - c(x,\Pb)|)}{\Eb_{\Pb^\infty}(|\hat{c}_2(x,\hat{\Pb}_T,T) - c(x,\Pb)|)}
&\leq 
\limsup_{T\to \infty}\frac{\Eb_{\Pb^\infty}(|\hat{c}_1(x,\hat{\Pb}_T,T) - c(x,\hat{\Pb}_T)|)}{\Eb_{\Pb^\infty}(|\hat{c}_2(x,\hat{\Pb}_T,T) - c(x,\Pb)|)}.
\end{align*}

{\color{black}We now use the same arguments for $\hat{c}_2$. Using the triangular inequality, we have
\begin{align*}
\Eb_{\Pb^\infty}(|\hat{c}_2(x,\hat{\Pb}_T,T) - c(x,\Pb)|)
&\geq
\Eb_{\Pb^\infty}(|\hat{c}_2(x,\hat{\Pb}_T,T) - c(x,\hat{\Pb}_T)|)
-
\Eb_{\Pb^\infty}(|c(x,\hat{\Pb}_T) - c(x,\Pb)|)\\
&\geq
\Eb_{\Pb^\infty}(|\hat{c}_2(x,\hat{\Pb}_T,T) - c(x,\hat{\Pb}_T)|)
-
\sqrt{\frac{1}{T}\Var_{\Pb}(\loss(x,\xi))}
\end{align*}
Again, this implies with Proposition \ref{prop: liminf>limsup for a_T.} that
\begin{align*}
\limsup_{T\to \infty}\frac{\Eb_{\Pb^\infty}(|\hat{c}_1(x,\hat{\Pb}_T,T) - c(x,\Pb)|)}{\Eb_{\Pb^\infty}(|\hat{c}_2(x,\hat{\Pb}_T,T) - c(x,\Pb)|)}
&\leq 
\limsup_{T\to \infty}\frac{\Eb_{\Pb^\infty}(|\hat{c}_1(x,\hat{\Pb}_T,T) - c(x,\hat{\Pb}_T)|)}{\Eb_{\Pb^\infty}(|\hat{c}_2(x,\hat{\Pb}_T,T) - c(x,\hat{\Pb}_T)|)}.
\end{align*}
We now prove that the RHS term is bounded by 1. Denote $\rho_T(\Pb') = |\hat{c}_1(x,\Pb',T) - c(x,\Pb')|/|\hat{c}_2(x,\Pb',T) - c(x,\Pb')|$ and $\delta_T(\Pb') = \hat{c}_2(x,\Pb',T) - c(x,\Pb')$ for all $T \in \integ$ and $\Pb' \in \cPin$. 
By assumption, $(\rho_T(\cdot)|\delta_T(\cdot)|/\Eb_{\Pb^\infty}(|\delta_T(\hat{\Pb}_T)|))_{T\geq 1}$ is uniformly bounded.
Let $A>0$ be the uniform bound. Let $\varepsilon>0$.
\begin{align*}
    \limsup_{T\to \infty}\frac{\Eb_{\Pb^\infty}(|\hat{c}_1(x,\hat{\Pb}_T,T) - c(x,\hat{\Pb}_T)|)}{\Eb_{\Pb^\infty}(|\hat{c}_2(x,\hat{\Pb}_T,T) - c(x,\hat{\Pb}_T)|)}
    &=
    \limsup_{T\to \infty}
    \Eb_{\Pb^\infty}\left(
    \frac{|\hat{c}_1(x,\hat{\Pb}_T,T) - c(x,\hat{\Pb}_T)|}{|\hat{c}_2(x,\hat{\Pb}_T,T) - c(x,\hat{\Pb}_T)|}
    \frac{|\hat{c}_2(x,\hat{\Pb}_T,T) - c(x,\hat{\Pb}_T)|}{\Eb_{\Pb^\infty}(|\hat{c}_2(x,\hat{\Pb}_T,T) - c(x,\hat{\Pb}_T)|)}
    \right)
    \\
    &=
    \limsup_{T\to \infty}
    \Eb_{\Pb^\infty}
    \left[
    \rho_T(\hat{\Pb}_T) \frac{|\delta_T(\hat{\Pb}_T)|}{\Eb_{\Pb^\infty}(|\delta_T(\hat{\Pb}_T)|)}
    \right] \\
    &\leq
     \limsup_{T\to \infty}
    \Eb_{\Pb^\infty}
    \left[
    (1+\varepsilon) \mathbb{1}_{\rho_T(\hat{\Pb}_T)\leq1+\varepsilon} \frac{|\delta_T(\hat{\Pb}_T)|}{\Eb_{\Pb^\infty}(|\delta_T(\hat{\Pb}_T)|)}
    \right] \\
    & \quad \quad+
    \Eb_{\Pb^\infty}
    \left[
    A \mathbb{1}_{\rho_T(\hat{\Pb}_T)>1+\varepsilon}
    \right] \\
    &\leq
    (1+\varepsilon)+
    A\limsup_{T\to \infty} 
    \Pb^\infty(\rho_T(\hat{\Pb}_T)>1+\varepsilon)
\end{align*}
We have by assumption $\limsup_{T\to\infty} \rho_T(\Pb') \leq 1$ for all $\Pb' \in \cPin$, hence, as convergence almost surely implies convergence in probability, $\limsup_{T\to \infty} 
    \Pb^\infty(\rho_T(\hat{\Pb}_T)>1+\varepsilon) = 0$. This implies
\begin{align*}
    \limsup_{T\to \infty}\frac{\Eb_{\Pb^\infty}(|\hat{c}_1(x,\hat{\Pb}_T,T) - c(x,\hat{\Pb}_T)|)}{\Eb_{\Pb^\infty}(|\hat{c}_2(x,\hat{\Pb}_T,T) - c(x,\hat{\Pb}_T)|)}
    &\leq 1+\epsilon
\end{align*}
for all $\epsilon>0$ and hence proves the desired inequality.
We have shown therefore that
$$
\limsup_{T\to \infty}\frac{\Eb_{\Pb^\infty}(|\hat{c}_1(x,\hat{\Pb}_T,T) - c(x,\Pb)|)}{\Eb_{\Pb^\infty}(|\hat{c}_2(x,\hat{\Pb}_T,T) - c(x,\Pb)|)}
\leq 1+ \varepsilon
$$
for all $\varepsilon>0$, which completes the proof.\\
Let us now prove the result for the bias. We show that $\limsup_{T\to \infty} \frac{\Eb_{\Pb^\infty}(|\delta_T(\hat{\Pb}_T)|)}{\Eb_{\Pb^\infty}(\delta_T(\hat{\Pb}_T))} = 1$, which implies
\begin{align*}
    \limsup_{T\to \infty}\frac{\Eb_{\Pb^\infty}(\hat{c}_1(x,\hat{\Pb}_T,T) - c(x,\Pb))}{\Eb_{\Pb^\infty}(\hat{c}_2(x,\hat{\Pb}_T,T) - c(x,\Pb))} 
&=
\limsup_{T\to \infty}\frac{\Eb_{\Pb^\infty}(\hat{c}_1(x,\hat{\Pb}_T,T) - c(x,\hat{\Pb}_T))}{\Eb_{\Pb^\infty}(\hat{c}_2(x,\hat{\Pb}_T,T) - c(x,\hat{\Pb}_T))} \\
&=
\limsup_{T\to \infty}\frac{\Eb_{\Pb^\infty}(\hat{c}_1(x,\hat{\Pb}_T,T) - c(x,\hat{\Pb}_T))}{\Eb_{\Pb^\infty}(|\hat{c}_2(x,\hat{\Pb}_T,T) - c(x,\hat{\Pb}_T)|)} \\
&\leq 
\limsup_{T\to \infty}\frac{\Eb_{\Pb^\infty}(|\hat{c}_1(x,\hat{\Pb}_T,T) - c(x,\hat{\Pb}_T)|)}{\Eb_{\Pb^\infty}(|\hat{c}_2(x,\hat{\Pb}_T,T) - c(x,\hat{\Pb}_T)|)}.
\end{align*}
We have that $\left(\frac{\delta_T(\cdot)}{\Eb_{\Pb^\infty}(\delta_T(\hat{\Pb}_T))}\right)_{T>1}$ is uniformly bounded by assumption. Denote with $B>0$ its bound.
We have
\begin{align*}
    \limsup_{T\to \infty} \frac{\Eb_{\Pb^\infty}(|\delta_T(\hat{\Pb}_T)|)}{\Eb_{\Pb^\infty}(\delta_T(\hat{\Pb}_T))}
    &= 
    \limsup_{T\to \infty} \frac{
    \Eb_{\Pb^\infty}(\delta_T(\hat{\Pb}_T)\mathbb{1}_{\delta_T(\hat{\Pb}_T)\geq 0})
    -
    \Eb_{\Pb^\infty}(\delta_T(\hat{\Pb}_T)\mathbb{1}_{\delta_T(\hat{\Pb}_T)< 0})
    }{
    \Eb_{\Pb^\infty}(\delta_T(\hat{\Pb}_T))} \\
    &= 
    \limsup_{T\to \infty} 1-\frac{
    2\Eb_{\Pb^\infty}(\delta_T(\hat{\Pb}_T)\mathbb{1}_{\delta_T(\hat{\Pb}_T)< 0})
    }{
    \Eb_{\Pb^\infty}(\delta_T(\hat{\Pb}_T))} \\
    &=\limsup_{T\to \infty}
    1-
    \Eb_{\Pb^\infty}\left(
        \frac{2\delta_T(\hat{\Pb}_T)}{\Eb_{\Pb^\infty}(\delta_T(\hat{\Pb}_T))} 
        \mathbb{1}_{\delta_T(\hat{\Pb}_T)< 0}
    \right) \\
    &=
    1-2B \cdot \mathcal{O}\left(\limsup_{T\to \infty} \Pb^\infty(\delta_T(\hat{\Pb}_T)< 0)\right) 
    =
    1 - \mathcal{O}\left(\limsup_{T\to \infty} \Pb^\infty\left(\sqrt{\frac{T}{a_T}}\delta_T(\hat{\Pb}_T)< 0\right) \right)
\end{align*}
If $\Var_{\Pb}(\loss(x,\xi))=0$, then $c(x,\hat{\Pb}_T)=c(x,\Pb)$ almost surely, therefore $\Pb^\infty(\delta_T(\hat{\Pb}_T)<0) = \Pb^\infty(c(x,\Pb)>\hat{c}(x,\hat{\Pb}_T,T))$ which converges to zero as $\hat{c}_2$ verifies the out-of-sample guarantee. Suppose now $\Var_{\Pb}(\loss(x,\xi))>0$. Then, $\Var_{\Pb'}(\loss(x,\xi))>0$ for all $\Pb'\in \cPin$.
Proposition \ref{prop: liminf>limsup for a_T.} implies that $\liminf_{T\to \infty} \sqrt{T/a_T} \delta_T(\Pb') \geq  \sqrt{2\Var_{\Pb'}(\loss(x,\xi))}>0$, hence the almost sure convergence implies convergence in probability and we get $\limsup_{T\to \infty} \Pb^\infty\left(\sqrt{T/a_T}\delta_T(\hat{\Pb}_T)< 0\right) = 0$. Hence
\begin{equation}\label{proof eq: L1 error 3}
    \limsup_{T\to \infty} \frac{\Eb_{\Pb^\infty}(|\delta_T(\hat{\Pb}_T)|)}{\Eb_{\Pb^\infty}(\delta_T(\hat{\Pb}_T))} = 1.
\end{equation}
}
\end{proof}

{\color{black}
\begin{lemma}\label{lemma: order on prescriptors}
    Let $(\hat{c}_1,\hat{x}_1),(\hat{c}_2,\hat{x}_2) \in \hat{\cX}$ be prescriptors verifying the out-of-sample guarantee \eqref{eq: prescriptor out-of-sample ganrantee} with $a_T \to \infty$. Let $x \in \cX$.
Suppose
$$
\limsup_{T\to \infty} \frac{|\hat{c}_1^\star(\Pb,T)-c^\star(\Pb)|}{|\hat{c}_2^\star(\Pb,T)-c^\star(\Pb)|} \leq 1,
\quad \forall \Pb \in \cPin.
$$
Assume
$
\left(\frac{|\hat{c}_1^\star(\cdot,T)-c^\star(\cdot)|}{\Eb_{\Pb^\infty}(|\hat{c}_2^\star(\hat{\Pb}_T,T) - c^\star(\hat{\Pb}_T)|)}\right)_{T\geq 1}
$ is uniformly bounded, $(\sqrt{T/a_T}|\hat{c}_i(x,\cdot,T) - c(x,\cdot)|)_{T\geq 1}$ is equicontinuous for all $x \in \cX$, for $i\in \{1,2\}$ and $\Var_{\Pb}(\loss(x^{\star}(\Pb),\xi)) >0$ for all $\Pb \in \cP^\into$.
We have
$$
\limsup_{T\to \infty}\frac{\Eb_{\Pb^\infty}(|\hat{c}_1^\star(\hat{\Pb}_T,T) - c^\star(\Pb)|)}{\Eb_{\Pb^\infty}(|\hat{c}_2^\star(\hat{\Pb}_T,T) - c^\star(\Pb)|)} \leq 1,
\quad \forall \Pb \in \cPin.
$$
\end{lemma}
\begin{proof}
Let $(\hat{c}_1,\hat{x}_1),(\hat{c}_2,\hat{x}_2) \in \hat{\cX}$ verifying the conditions of the lemma. We first show that
$$
\limsup_{T\to \infty}\frac{\Eb_{\Pb^\infty}(|\hat{c}_1^\star(\hat{\Pb}_T,T) - c^\star(\Pb)|)}{\Eb_{\Pb^\infty}(|\hat{c}_2^\star(\hat{\Pb}_T,T) - c^\star(\Pb)|)} 
\leq 
\limsup_{T\to \infty}\frac{\Eb_{\Pb^\infty}(|\hat{c}_1^\star(\hat{\Pb}_T,T) - c^\star(\hat{\Pb}_T)|)}{\Eb_{\Pb^\infty}(|\hat{c}_2^\star(\hat{\Pb}_T,T) - c^\star(\hat{\Pb}_T)|)} 
\quad \forall \Pb \in \cPin.
$$
Using the triangular inequality, we have
\begin{align*}
\Eb_{\Pb^\infty}(|\hat{c}_1^\star(\hat{\Pb}_T,T) - c^\star(\Pb)|)
&\leq
\Eb_{\Pb^\infty}(|\hat{c}_1^\star(\hat{\Pb}_T,T) - c^\star(\hat{\Pb}_T)|)
+
\Eb_{\Pb^\infty}(|c^\star(\hat{\Pb}_T) - c^\star(\Pb)|)
\end{align*}
Let us now bound the second term. Using the optimality of $c^\star(\hat{\Pb}_T)$ for the first inequality, we have 
\begin{align*}
c^\star(\hat{\Pb}_T) - c^\star(\Pb)
&\leq
c(x^\star(\Pb),\hat{\Pb}_T) - c^\star(\Pb)\\
&=
\Eb_{\hat{\Pb}_T}[\loss(x^\star(\Pb), \xi)] - \Eb_{\Pb}[\loss(x^\star(\Pb),\xi)]\\
&=
\int \loss(x^\star(\Pb),\xi) d (\hat{\Pb}_T - \Pb)(\xi)\\
&\leq L\|\hat{\Pb}_T - \Pb\|_1 
\end{align*}
where $L$ is an upper bound on $\ell$.
Using the same argument with the optimality of $c^\star(\hat{\Pb})$, we get 
\begin{align*}
c^\star(\hat{\Pb}_T) - c^\star(\Pb)
&\geq
\Eb_{\hat{\Pb}_T}[\loss(x^\star(\hat{\Pb}_T), \xi)) - \Eb_{\Pb}(\loss(x^\star(\hat{\Pb}_T),\xi)]
\geq -L \|\hat{\Pb}_T - \Pb\|_1 
\end{align*}
Hence, 
$$
\Eb_{\Pb}(|c^\star(\hat{\Pb}_T) - c^\star(\Pb)|)
\leq
L \Eb_{\Pb}(\|\hat{\Pb}_T - \Pb\|_1) = \mathcal{O}\left(\frac{1}{\sqrt{T}}\right).
$$
Plugging this result, we get 
\begin{equation}\label{proof eq: order strength 1}
\Eb_{\Pb^\infty}(|\hat{c}_1^\star(\hat{\Pb}_T,T) - c^\star(\Pb)|)
\leq
\Eb_{\Pb^\infty}(|\hat{c}_1^\star(\hat{\Pb}_T,T) - c^\star(\hat{\Pb}_T)|)
+
\mathcal{O}\left(\frac{1}{\sqrt{T}}\right) 
\end{equation}
%
Using Fatou Lemma, we have
$$
\liminf_{T\to \infty} \sqrt{\frac{T}{a_T}} \Eb_{\Pb^\infty}(|\hat{c}_1^\star(\hat{\Pb}_T,T) - c^\star(\hat{\Pb}_T)|) 
\geq
 \Eb_{\Pb^\infty}\left(\liminf_{T\to \infty} \sqrt{\frac{T}{a_T}}|\hat{c}_1^\star(\hat{\Pb}_T,T) - c^\star(\hat{\Pb}_T)|\right) 
$$
The equicontinuty of $(\sqrt{T/a_T}|\hat{c}_1(x,\cdot,T) - c(x,\cdot)|)_{T\geq 1}$ for all $x \in \cX$ implies the equicontinuity of $(\sqrt{T/a_T}|\hat{c}_1^\star(\cdot,T) - c^\star(\cdot)|)_{T\geq 1}$ by Lemma \ref{lemma: hein for equicontinuity}. Hence, for all sequence $\Pb'_T \to \Pb$, we have $\liminf \sqrt{T/a_T}|\hat{c}_1^\star(\Pb'_T,T) - c^\star(\Pb'_T)| = \liminf \sqrt{T/a_T}|\hat{c}_1^\star(\Pb,T) - c^\star(\Pb)|$. As $\hat{\Pb}_T$ converges to $\Pb$ almost surely, we have 
\begin{align*}
 \Eb_{\Pb^\infty}\left(\liminf_{T\to \infty} \sqrt{\frac{T}{a_T}}|\hat{c}_1^\star(\hat{\Pb}_T,T) - c^\star(\hat{\Pb}_T)|\right) 
 &=
  \Eb_{\Pb^\infty}\left(\liminf_{T\to \infty} \sqrt{\frac{T}{a_T}}|\hat{c}_1^\star(\Pb,T) - c^\star(\Pb)|\right) \\
 &= \liminf_{T\to \infty} \sqrt{\frac{T}{a_T}}|\hat{c}_1^\star(\Pb,T) - c^\star(\Pb)|
\end{align*}
This last quantity is lowerbounded by $\sqrt{2\Var_{\Pb}(\loss(x^{\star}(\Pb),\xi))}$ using Proposition \ref{prop: lower bound strong opt presc}. Hence, we have
$$\liminf_{T\to \infty} \sqrt{\frac{T}{a_T}} \Eb_{\Pb^\infty}(|\hat{c}_1^\star(\hat{\Pb}_T,T) - c^\star(\hat{\Pb}_T)|) \geq \sqrt{2\Var_{\Pb}(\loss(x^{\star}(\Pb),\xi))} = \Omega(1).$$
Combining this results with \eqref{proof eq: order strength 1}, we get
$$
\Eb_{\Pb^\infty}(|\hat{c}_1^\star(\hat{\Pb}_T,T) - c^\star(\Pb)|)
\leq
\Eb_{\Pb^\infty}(|\hat{c}_1^\star(\hat{\Pb}_T,T) - c^\star(\hat{\Pb}_T)|)
+
o\left(\Eb_{\Pb^\infty}(|\hat{c}_1^\star(\hat{\Pb}_T,T) - c^\star(\hat{\Pb}_T)|)\right)
$$
We now apply a similar reasoning to $\hat{c}_2$. Using the triangular inequality, we have
\begin{align*}
\Eb_{\Pb^\infty}(|\hat{c}_2^\star(\hat{\Pb}_T,T) - c^\star(\Pb)|)
&\geq
\Eb_{\Pb^\infty}(|\hat{c}_2^\star(\hat{\Pb}_T,T) - c^\star(\hat{\Pb}_T)|)
-
\Eb_{\Pb^\infty}(|c^\star(\hat{\Pb}_T) - c^\star(\Pb)|),
\end{align*}
and similarly we obtain
\begin{align*}
\Eb_{\Pb^\infty}(|\hat{c}_2^\star(\hat{\Pb}_T,T) - c^\star(\Pb)|)
&\geq
\Eb_{\Pb^\infty}(|\hat{c}_2^\star(\hat{\Pb}_T,T) - c^\star(\hat{\Pb}_T)|)
+ 
o\left(
\Eb_{\Pb^\infty}(|\hat{c}_2^\star(\hat{\Pb}_T,T) - c^\star(\hat{\Pb}_T)|)
\right)
\end{align*}
Combining the results for $\hat{c}_1$ and $\hat{c}_2$, we get
\begin{align*}
\limsup_{T\to \infty}\frac{\Eb_{\Pb^\infty}(|\hat{c}_1^\star(\hat{\Pb}_T,T) - c^\star(\Pb)|)}{\Eb_{\Pb^\infty}(|\hat{c}_2^\star(\hat{\Pb}_T,T) - c^\star(\Pb)|)} 
&\leq
\limsup_{T\to \infty}\frac{
    \Eb_{\Pb^\infty}(|\hat{c}_1^\star(\hat{\Pb}_T,T) - c^\star(\hat{\Pb}_T)|)
    + 
    o\left( \Eb_{\Pb^\infty}(|\hat{c}_1^\star(\hat{\Pb}_T,T) - c^\star(\hat{\Pb}_T)|)\right)
}{
    \Eb_{\Pb^\infty}(|\hat{c}_2^\star(\hat{\Pb}_T,T) - c^\star(\hat{\Pb}_T)|) 
    +
    o\left( \Eb_{\Pb^\infty}(|\hat{c}_2^\star(\hat{\Pb}_T,T) - c^\star(\hat{\Pb}_T)|) \right)
}
\\
&=
\limsup_{T\to \infty}\frac{\Eb_{\Pb^\infty}(|\hat{c}_1^\star(\hat{\Pb}_T,T) - c^\star(\hat{\Pb}_T)|)}{\Eb_{\Pb^\infty}(|\hat{c}_2^\star(\hat{\Pb}_T,T) - c^\star(\hat{\Pb}_T)|)}
\end{align*}
We now prove that this term is bounded by 1. Denote $\rho_T(\Pb') = |\hat{c}_1^\star(\Pb',T) - c^\star(\Pb')|/|\hat{c}_2^\star(\Pb',T) - c^\star(\Pb')|$ and $\delta_T(\Pb') = |\hat{c}_2^\star(\Pb',T) - c^\star(\Pb')|$ for all $T \in \integ$ and $\Pb' \in \cPin$. By assumption, $(\rho_T(\cdot)\delta_T(\cdot)/\Eb_{\Pb}(\delta_T(\hat{\Pb}_T)))_{T\geq 1}$ is uniformly bounded.
Let $A>0$ be the uniform bound. Let $\varepsilon>0$.
\begin{align*}
    \limsup_{T\to \infty}\frac{\Eb_{\Pb^\infty}(|\hat{c}_1^\star(\hat{\Pb}_T,T) - c^\star(\hat{\Pb}_T)|)}{\Eb_{\Pb^\infty}(|\hat{c}_2^\star(\hat{\Pb}_T,T) - c^\star(\hat{\Pb}_T)|)}
&=
    \limsup_{T\to \infty}
    \Eb_{\Pb^\infty}\left(
    \frac{|\hat{c}_1^\star(\hat{\Pb}_T,T) - c^\star(\hat{\Pb}_T)|}{|\hat{c}_2^\star(\hat{\Pb}_T,T) - c^\star(\hat{\Pb}_T)|}
    \frac{|\hat{c}_2^\star(\hat{\Pb}_T,T) - c^\star(\hat{\Pb}_T)|}{\Eb_{\Pb^\infty}(|\hat{c}_2^\star(\hat{\Pb}_T,T) - c^\star(\hat{\Pb}_T)|)}
    \right)\\
&=
    \limsup_{T\to \infty}
    \Eb_{\Pb^\infty}
    \left[
    \rho_T(\hat{\Pb}_T) \frac{\delta_T(\hat{\Pb}_T)}{\Eb_{\Pb^\infty}(\delta_T(\hat{\Pb}_T))}
    \right] \\
&\leq
     \limsup_{T\to \infty}
    \Eb_{\Pb^\infty}
    \left[
    (1+\varepsilon) \mathbb{1}_{\rho_T(\hat{\Pb}_T)\leq1+\varepsilon} \frac{\delta_T(\hat{\Pb}_T)}{\Eb_{\Pb^\infty}(\delta_T(\hat{\Pb}_T))}
    \right] \\
    & \quad \quad+
    \Eb_{\Pb^\infty}
    \left[
    A \mathbb{1}_{\rho_T(\hat{\Pb}_T)>1+\varepsilon}
    \right] \\
&\leq
    (1+\varepsilon)
    +
    \limsup_{T\to \infty} 
    \Pb^\infty(\rho_T(\hat{\Pb}_T)>1+\varepsilon)
\end{align*}
As convergence almost surely of $\rho_T(\hat{\Pb}_T)$ imply convergence in probability, we have $\limsup_{T\to \infty} 
    \Pb^\infty(\rho_T(\hat{\Pb}_T)>1+\varepsilon) = 0$ which completes the proof.
\end{proof}
}

{\color{black}
\begin{lemma}\label{lemma: exact cov counter example}
Consider the family of predictors
    $$
  \hat{c}_{M,p,q}(x,\hat{\Pb}_T,T) := M \left(1-2 \cdot \mathbb{1}\left( T\hat{\Pb}_T(i_0) \; \text{mod} \; q \in \{0, \ldots, p-1\})\right)\right), \quad \forall x,T,
  $$
  for $M>0$ , $\tfrac{p}{q}$ a rational number, and $i_0 \in \Sigma$ is arbitrary.
  Then $\hat{c}_{M,p,q}$ verifies an exact out-of-sample guarantee at level $p/q$, i.e.,
  $$
    \lim_{T \to \infty}
    \Pb^{\infty}\left(
    c(x,\Pb) 
    > 
    \hat{c}_{M,p,q}(x,\hat{\Pb}_T,T)
    \right) = p/q
    $$
    for any $M$ large enough. Furthermore,
    for all $x\in \mathcal X,T\geq 1$ and $\Qb \in \cP^\circ$ we have
    $$
    \lim_{M\to\infty}|\hat{c}_{M,p,q}(x,\Qb,T)
    -
    c(x,\Qb)|
    \xrightarrow[M \to \infty]{} \infty
    $$
    as well as for all $x\in \mathcal X,T\geq 1$ we get
    $$
    \Eb_{\Pb^\infty}\left[ 
    \left|\hat{c}_{M,p,q}(x,\hat{\Pb}_T,T)
    -
    c(x,\Pb)
    \right|
    \right] \xrightarrow[M \to \infty]{} \infty.
    $$
\end{lemma}
\begin{proof}
    Notice first that $T\hat{\Pb}_T(i_0)$ follows a binomial distribution with parameters $T$ and $\Pb(i_0)>0$. One can easily show that for all $k \in \integ$
    $$
    \lim_{T\to \infty} 
    \Pb^\infty\left(T\hat{\Pb}_T(i_0) \; \text{mod} \; q = k \right) = 1/q.
    $$
    Hence, 
    $
    \lim_{T\to \infty} 
    \Pb^\infty\left( T\hat{\Pb}_T(i_0) \; \text{mod} \; q \in \{0, \ldots, p-1\} \right) = p/q$. Let $A_T:=\{T\hat{\Pb}_T(i_0) \; \text{mod} \; q \in \{0, \ldots, p-1\}\}$ and take
    $M$ be larger than a strict uniform upper bound on $|c(x,\Pb)|$.
    We have
    \begin{align*}
    \Pb^{\infty}\left(
        c(x,\Pb) 
        > 
        \hat{c}_{M,p,q}(x,\hat{\Pb}_T,T)
    \right)
&=
    \Pb^{\infty}\left(
            c(x,\Pb) 
            > 
            \hat{c}_{M,p,q}(x,\hat{\Pb}_T,T)
        , ~A_T
    \right)
     +
    \Pb^{\infty}\left(
            c(x,\Pb) 
            > 
            \hat{c}_{M,p,q}(x,\hat{\Pb}_T,T)
         , ~A_T^c
    \right)\\
&=
    \Pb^{\infty}\left(A_T
    \right)
=
    p/q.
    \end{align*}

Furthermore, for all $x,\Qb,T$, we have $|\hat{c}_{M,p,q}(x,\Qb,T) - c(x,\Qb)| \geq |\hat{c}_{M,p,q}(x,\Qb,T)| - |c(x,\Qb)| = M - |c(x,\Qb)| \xrightarrow[M \to \infty]{} \infty$. Similarly, for all $x,T$, $\Eb_{\Pb^\infty}\left[ 
    \left|\hat{c}_{M,p,q}(x,\hat{\Pb}_T,T)
    -
    c(x,\Pb)
    \right|
    \right]
\geq 
\Eb_{\Pb^\infty}\left[ 
    |\hat{c}_{M,p,q}(x,\hat{\Pb}_T,T)|
    -
    |c(x,\Pb)|
    \right]
= \Eb_{\Pb^\infty}\left[ 
    M
    -
    |c(x,\Pb)|
    \right] \xrightarrow[M \to \infty]{} \infty
$.
\end{proof}
}

\begin{lemma}\label{lemma: hat c > true c}
Let $\hat{c} \in \cC$ a predictor verifying the out-of-sample guarantee \eqref{eq: out-of-sample ganrantee} for any $(a_T)_{T\geq 1} \in \Re_+^{\integ}$ such that $a_T \to \infty$. We have for all $\Pb\in \cPin$ and $x\in \cX$,
\begin{equation*}
    \liminf_{T\to\infty} \hat{c}(x,\Pb,T) - c(x,\Pb) \geq 0.
\end{equation*}
\end{lemma}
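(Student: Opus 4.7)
The plan is a contradiction argument built on equicontinuity of $\hat{c}$ together with the almost-sure convergence of the empirical distribution. Suppose for contradiction that the conclusion fails at some $x_0 \in \cX$ and $\Pb_0 \in \cPin$. Then by the definition of $\liminf$ there exist $\varepsilon>0$ and an increasing subsequence $(t_T)_{T\geq 1} \in \integ^{\integ}$ with
\[
    \hat{c}(x_0,\Pb_0,t_T) \leq c(x_0,\Pb_0) - 2\varepsilon, \qquad \forall T \in \integ.
\]

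The second step is to propagate this pointwise gap at $\Pb_0$ to an open neighbourhood of $\Pb_0$, which is possible precisely because $\hat{c}\in \cC$ enforces equicontinuity (Definition \ref{def: regular pred}) of the sequence $(\hat{c}(x_0,\cdot,T))_{T\geq 1}$. Concretely, equicontinuity yields some $\rho>0$ such that $|\hat{c}(x_0,\Pb',T)-\hat{c}(x_0,\Pb_0,T)| \leq \varepsilon$ for every $\Pb' \in \mathcal{B}(\Pb_0,\rho)$ and every $T \in \integ$. Combining the two displays above, for every $\Pb' \in \mathcal{B}(\Pb_0,\rho)$ and every $T \in \integ$,
\[
    \hat{c}(x_0,\Pb',t_T) \leq \hat{c}(x_0,\Pb_0,t_T) + \varepsilon \leq c(x_0,\Pb_0) - \varepsilon < c(x_0,\Pb_0).
\]

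Next I exploit the concentration of $\hat{\Pb}_T$ around $\Pb_0$. By the strong law of large numbers, $\hat{\Pb}_T \to \Pb_0$ almost surely under $\Pb_0^{\infty}$, so $\Pb_0^{\infty}(\hat{\Pb}_{t_T} \in \mathcal{B}(\Pb_0,\rho)) \to 1$. The inclusion $\{\hat{\Pb}_{t_T} \in \mathcal{B}(\Pb_0,\rho)\} \subset \{c(x_0,\Pb_0) > \hat{c}(x_0,\hat{\Pb}_{t_T},t_T)\}$ established in the previous paragraph therefore gives
\[
    \Pb_0^{\infty}\bigl(c(x_0,\Pb_0) > \hat{c}(x_0,\hat{\Pb}_{t_T},t_T)\bigr) \xrightarrow[T\to\infty]{} 1.
\]

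To conclude, let $(a_T)_{T\geq 1}$ be any admissible sequence with $a_T \to \infty$ for which the hypothesised out-of-sample guarantee \eqref{eq: out-of-sample ganrantee} holds. Since the probability in the previous display tends to $1$ its logarithm tends to $0$, and since $a_{t_T} \to \infty$ the quotient $\frac{1}{a_{t_T}} \log \Pb_0^{\infty}(c(x_0,\Pb_0) > \hat{c}(x_0,\hat{\Pb}_{t_T},t_T))$ converges to $0$. Passing to the $\limsup$ over the full sequence, we obtain
\[
    \limsup_{T\to\infty} \frac{1}{a_T} \log \Pb_0^{\infty}\bigl( c(x_0,\Pb_0) > \hat{c}(x_0,\hat{\Pb}_T,T) \bigr) \geq 0 > -1,
\]
which directly contradicts \eqref{eq: out-of-sample ganrantee} and completes the proof. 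The only subtlety is the use of equicontinuity to turn a pointwise gap into a neighbourhood gap; this is exactly the regularity assumption built into $\cC$, and no deviation principle is required since mere concentration of $\hat{\Pb}_T$ at $\Pb_0$ suffices once the gap is uniform in $T$.
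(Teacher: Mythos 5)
Your proof is correct, and it takes a genuinely more elementary route than the paper's. After both proofs localize the gap $\hat{c}(x_0,\Pb',t_T) < c(x_0,\Pb_0)$ to a neighbourhood $\mathcal{B}(\Pb_0,\rho)$ via equicontinuity, the paper introduces an auxiliary rate $(b_T)$ with $b_T\ll a_T$, $b_T \ll T$, $b_T\to\infty$, applies the Moderate Deviation Principle (Theorem \ref{thm: MDP finite space}) at rate $(b_T)$ to the set $\Gamma = U-\Pb_0$ containing $0$, and then rescales to transfer the conclusion back to the rate $(a_T)$. You instead observe that the strong law of large numbers already forces $\Pb_0^\infty(\hat{\Pb}_{t_T}\in \mathcal{B}(\Pb_0,\rho))\to 1$, hence the disappointment probability along $(t_T)$ tends to $1$, so its logarithm tends to $0$, and dividing by $a_{t_T}\to\infty$ kills the term; passing to the $\limsup$ over the full index immediately contradicts \eqref{eq: out-of-sample ganrantee}. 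Both arguments are sound, but yours dispenses entirely with the MDP and the auxiliary sequence $(b_T)$: once the gap between $\hat{c}$ and $c$ is uniform in the subsequence index, mere consistency of $\hat{\Pb}_T$ suffices and no rate of concentration is required. The paper's machinery is heavier than this lemma needs, presumably for uniformity of method with the surrounding proofs; your version is the more transparent one for this particular statement.
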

\begin{proof}
Suppose for the sake of argument that there exists $x_0,\Pb_0 \in \cX \times \cPin$ such that $\liminf_{T\to\infty} \hat{c}(x_0,\Pb_0,T) - c(x,\Pb) < 0$. By definition of the limit inferior, there exists $(t_T)_{T\geq 1}\in \integ^{\integ}$ and $\varepsilon>0$ such that $\hat{c}(x_0,\Pb_0,T) < c(x_0,\Pb_0) - \varepsilon$ for all $T \in \integ$. By equicontinuity of $\hat{c}$, there exists an open set $U\subset \cPin$ containing $\Pb_0$ such that for all $\Pb' \in U$ and $T\in \integ$, we have $\hat{c}(x_0,\Pb',T) < c(x_0,\Pb_0) - \varepsilon/2$. Denote $\Gamma = U - \Pb_0$. Let $(b_T)_{T\geq 1} \in (\Re_+)^{\integ}$ be such that $b_T \ll a_T$, $b_T\ll T$ and $b_T \to \infty$ (take for example $b_T = \min( a_T/\log a_T, T/\log T)$). We have
\begin{align*}
    \limsup_{T \to \infty} \frac{1}{b_T}
    \log \Pb_0^{\infty}(c(x_0,\Pb_0)> \hat{c}(x_0,\hat{\Pb}_T,T))
    &\geq 
    \limsup_{T \to \infty} \frac{1}{b_{t_T}}
    \log\Pb_0^{\infty}(c(x_0,\Pb_0)> \hat{c}(x_0,\hat{\Pb}_{t_T},t_T)) \\
    &\geq
    \limsup_{T \to \infty} \frac{1}{b_{t_T}}
    \log \Pb_0^{\infty}(\hat{\Pb}_{t_T} \in U)\\
    &\geq
    \liminf_{T \to \infty} \frac{1}{b_T}
    \log \Pb_0^{\infty}(\hat{\Pb}_{T} \in U)\\
    &=
    \liminf_{T \to \infty} \frac{1}{b_T}
    \log \Pb_0^{\infty}(\hat{\Pb}_{T} -\Pb_0 \in \Gamma)\\
    &\geq \liminf_{T \to \infty} \frac{1}{b_T}
    \log \Pb_0^{\infty}\left(\hat{\Pb}_{T} -\Pb_0 \in \sqrt{\frac{b_T}{T}}\Gamma \right)
\end{align*}

where the last inequality is implied by the fact that $b_T \leq T$ eventually. Using the MDP (Theorem \ref{thm: MDP finite space}), and the fact that the distribution $0$ |that puts a zero weight on each uncertainty| is included in $\Gamma^{\into}$, we get
\begin{align*}
    \limsup_{T \to \infty} \frac{1}{b_T}
    \log \Pb_0^{\infty}(c(x_0,\Pb_0)> \hat{c}(x_0,\hat{\Pb}_T,T))
    &\geq \liminf_{T \to \infty} \frac{1}{b_T}
    \log \Pb_0^{\infty}\left(\hat{\Pb}_{T} -\Pb_0 \in \sqrt{\frac{b_T}{T}}\Gamma \right)\\
    &\geq - \inf_{\Delta \in \Gamma^{\into}} \|\Delta\|^2_{\Pb_0} \geq 0
\end{align*}
Hence
\begin{align*}
     \limsup_{T \to \infty} \frac{1}{a_T}
    \log \Pb_0^{\infty}(c(x_0,\Pb_0)> \hat{c}(x_0,\hat{\Pb}_T,T))
    &=
    \limsup_{T \to \infty} \frac{b_T}{a_T} \frac{1}{b_T}
    \log \Pb_0^{\infty}(c(x_0,\Pb_0)> \hat{c}(x_0,\hat{\Pb}_T,T)) \\
    &\geq \limsup_{T \to \infty} \frac{b_T}{a_T} \times 0 = 0
\end{align*}
as $\lim b_T/a_T =0$ by construction of $(b_T)_{T\geq 1}$. This last inequality contradicts the out-of-sample guarantee \eqref{eq: out-of-sample ganrantee} which completes the proof.
\end{proof}

\begin{lemma}\label{lemma: equivalence of order notions.}
Consider predictors $\hat{c}_1,\hat{c}_2 \in \cC$.
Let $x,\Pb \in \cX\times \cP$. We have
$$
\forall (\beta_T)_{\geq 1} \in \Re_+^{\integ}, \quad 
\limsup_{T\to \infty} \beta_T|\hat{c}_1(x,\Pb,T)-c(x,\Pb)|
\leq 
\limsup_{T\to \infty} \beta_T|\hat{c}_2(x,\Pb,T)-c(x,\Pb)|
$$
if and only if
$$
\limsup_{T\to \infty} \frac{|\hat{c}_1(x,\Pb,T)-c(x,\Pb)|}{|\hat{c}_2(x,\Pb,T)-c(x,\Pb)|} \leq 1,
$$
where the form $\frac{0}{0}$ is by convention considered $1$.
\end{lemma}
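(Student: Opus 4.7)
Write $u_T \defn |\hat{c}_1(x,\Pb,T)-c(x,\Pb)|$ and $v_T \defn |\hat{c}_2(x,\Pb,T)-c(x,\Pb)|$, both non-negative. The statement is then a purely elementary fact about non-negative sequences: $\limsup_T u_T/v_T \leq 1$ (with $0/0 = 1$) if and only if $\limsup_T \beta_T u_T \leq \limsup_T \beta_T v_T$ for every $(\beta_T) \in \Re_+^{\integ}$. The plan is to prove the two directions separately, the forward direction by a direct comparison and the reverse direction by contraposition with an explicit choice of weights.

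For the $(\Leftarrow)$ direction, I would fix $\varepsilon > 0$ and use the hypothesis $\limsup_T u_T/v_T \leq 1$ to deduce the existence of $T_0$ such that $u_T \leq (1+\varepsilon) v_T$ for all $T \geq T_0$. (The convention $0/0 = 1$ combined with $\limsup \leq 1$ guarantees that whenever $v_T = 0$ for large $T$, also $u_T = 0$; otherwise the ratio would eventually exceed $1+\varepsilon$.) Multiplying by $\beta_T \geq 0$ and taking $\limsup$ yields $\limsup_T \beta_T u_T \leq (1+\varepsilon)\limsup_T \beta_T v_T$, and sending $\varepsilon \to 0$ concludes.

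For the $(\Rightarrow)$ direction I argue by contraposition. Suppose $\limsup_T u_T/v_T > 1$, so there exist $\delta > 0$ and a strictly increasing sequence $(T_k)_{k\geq 1}$ with $u_{T_k}/v_{T_k} \geq 1 + \delta$ for all $k$. I split according to whether $v_{T_k}$ vanishes infinitely often: (i) if $v_{T_k} = 0$ (hence $u_{T_k} > 0$ by the convention) along a subsequence, set $\beta_{T_k} = 1/u_{T_k}$ on that subsequence and $\beta_T = 0$ elsewhere; (ii) otherwise one may restrict to a further subsequence where $v_{T_k} > 0$ and set $\beta_{T_k} = 1/v_{T_k}$ on it, $\beta_T = 0$ elsewhere. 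In either case $(\beta_T) \in \Re_+^{\integ}$ and a direct computation gives $\limsup_T \beta_T u_T \geq 1+\delta > 1 \geq \limsup_T \beta_T v_T$, contradicting the assumed inequality.

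The only mildly delicate point is handling the $0/0 = 1$ convention so that the two directions are genuinely equivalent (in particular so that ``$v_T = 0$ eventually with $u_T > 0$ infinitely often'' is correctly excluded by $\limsup u_T/v_T \leq 1$ and correctly caught by the choice $\beta_{T_k} = 1/u_{T_k}$ in the contrapositive). Once this bookkeeping is settled, no further technical work is needed.
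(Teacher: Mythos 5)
Your proof is correct and, at the structural level, matches the paper's: one direction is established by a comparison argument, the other by constructing a specific sequence $(\beta_T)$. The paper proves $P\Rightarrow Q$ (where $P$ is the universal $\beta$-inequality and $Q$ is the limsup-ratio bound) simply by "choose $\beta_T = 1/|\hat{c}_2(x,\Pb,T)-c(x,\Pb)|$", which is not well-defined when the denominator vanishes; your contrapositive with the explicit case split on whether $v_{T_k}=0$ infinitely often is the careful version of this, and it genuinely catches the case the paper glosses over. Likewise, for $Q\Rightarrow P$ the paper invokes the product inequality $\limsup(a_Tb_T)\le\limsup a_T\cdot\limsup b_T$ without remarking on $0\cdot\infty$ or on the identity $\beta_T u_T = \beta_T v_T\cdot(u_T/v_T)$ when $v_T=0$; your $\varepsilon$-argument — extract $T_0$ with $u_T\le(1+\varepsilon)v_T$ for $T\ge T_0$, multiply by $\beta_T$, take limsup, send $\varepsilon\to 0$ — avoids both issues and is the more self-contained route. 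In short: same approach, more scrupulous bookkeeping on the degenerate $v_T=0$ cases, which is exactly where the paper's terse phrasing leaves gaps.
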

\begin{proof}
Suppose the first property is true. It suffices to chose $\beta_T = \frac{1}{|\hat{c}_2(x,\Pb,T)-c(x,\Pb)|}$ to get the second property. 
Let $x,\Pb \in \cX \times \cP$. 
Let us now show the reverse implication. Suppose
$$\limsup_{T\to \infty} \frac{|\hat{c}_1(x,\Pb,T)-c(x,\Pb)|}{|\hat{c}_2(x,\Pb,T)-c(x,\Pb)|} \leq 1$$ and let $(\beta_T)_{T\geq 1} \in \Re_+^{\integ}$. We have 
\begin{align*}
    \limsup_{T \to \infty} \beta_T |\hat{c}_1(x,\Pb,T)-c(x,\Pb)|
    &= \limsup_{T \to \infty} \beta_T |\hat{c}_2(x,\Pb,T)-c(x,\Pb)| \frac{|\hat{c}_1(x,\Pb,T)-c(x,\Pb)|}{|\hat{c}_2(x,\Pb,T)-c(x,\Pb)|} \\
    &\leq \limsup_{T \to \infty} \beta_T |\hat{c}_2(x,\Pb,T)-c(x,\Pb)| \limsup_{T \to \infty} \frac{|\hat{c}_1(x,\Pb,T)-c(x,\Pb)|}{|\hat{c}_2(x,\Pb,T)-c(x,\Pb)|}  \\
    &\leq \limsup_{T \to \infty} \beta_T |\hat{c}_2(x,\Pb,T)-c(x,\Pb)|.
\end{align*}
\end{proof}

\section{Omitted proofs of Section \ref{sec: predictor}: Optimal 
Prediction}
\subsection{Omitted proofs of Subsection \ref{sec: pred exp}: Predictors in the Exponential Regime}
\subsubsection{Proof of Proposition \ref{prop: frasibility pred exp}: Feasibility}\label{App: proof feasibility pred exp}
\begin{proof}[Proof of Proposition \ref{prop: frasibility pred exp}]
We first show that $\cKL$ verifies the out-of-sample guarantee. Let $x,\Pb \in \cX \times \cPin$. Denote $\Gamma = \set{\Pb' \in \mathcal{P}}{I(\Pb',\Pb) > r}$. Observe that, by definition of $\cKL$ \eqref{eq: KL predictor}, for all $T\in \integ$ we have
  \[
    \hat{\Pb}_T \not \in \Gamma \implies c(x,\Pb) \leq \cKL(x,\hat{\Pb}_T,T).
  \]
 Hence, we have
  \begin{align*}
    \limsup_{T\to\infty}
    \frac{1}{a_T} \log \Pb^{\infty}
        \left(
                c(x,\Pb) > \cKL(x,\hat{\Pb}_T,T)
        \right)
    \leq& 
        \limsup_{T\to\infty}
    \frac{1}{rT} \log \Pb^{\infty}
    \left(
      \hat{\Pb}_T \in \Gamma
          \right)
    \leq -\frac{1}{r}\inf_{\Pb'\in \bar{\Gamma}}
             I(\Pb',\Pb),
  \end{align*}
where the last equality uses the Large Deviation Principle, Theorem \ref{thm: LDP finite space}. Notice that $I(\cdot,\cdot)$ is convex and hence continuous on $\cPin \times \cPin$. Hence,
\begin{align*}
\overline{\Gamma} 
&\subset
\set{\Pb' \in \mathcal{P}}{I(\Pb',\Pb) \geq r+1} \cup 
\overline{\set{\Pb' \in \mathcal{P}}{r+1>I(\Pb',\Pb) > r}} \\
&= 
\set{\Pb' \in \cP}{I(\Pb',\Pb) \geq r+1} 
\cup 
\overline{\set{\Pb' \in \cPin}{r+1>I(\Pb',\Pb) > r}}\\
&=
\set{\Pb' \in \mathcal{P}}{I(\Pb',\Pb) \geq r+1} \cup 
\set{\Pb' \in \mathcal{P}}{r+1 \geq I(\Pb',\Pb) \geq r} \\
&= \set{\Pb' \in \mathcal{P}}{I(\Pb',\Pb) \geq r}.
\end{align*}
This implies that 
$\inf_{\Pb'\in \bar{\Gamma}}
             I(\Pb',\Pb) \geq r$,
and therefore, using the previous inequality, we get
\begin{align*}
    \limsup_{T\to\infty}
    \frac{1}{a_T} \log \Pb^{\infty}
        \left(
                c(x,\Pb) > \cKL(x,\hat{\Pb}_T,T)
        \right) \leq -1
\end{align*}

We now show that $\cKL \in \cC$.
The equicontinuty is trivial as the predictor does not depend on $T$. It suffices to prove the continuity and differentiability. The uniform boundedness then is directly implied from the continuity on a compact, and the non-dependence on $T$.
 We will prove that the predictor $\Pb\mapsto \cKL(x, \Pb, T)$ is continuous and differentiable at any $\Pb$ in $\cPin$. Denote with $\gamma(x) = \max_{i\in \Sigma} \loss(x, i)$.

  \textbf{Case I:} Let $\min_{i\in \Sigma} \loss(x, i) = \max_{i\in \Sigma} \loss(x, i)= \gamma(x)$. In this case $\Pb\mapsto \cKL(x, \Pb, T) = \gamma(x)$ which is constant in $\Pb$ and hence differentiable.

  \textbf{Case II:} Let $\min_{i\in \Sigma} \loss(x, i) < \max_{i\in \Sigma} \loss(x, i)$. Following \cite[Proposition 5]{van2020data} it follows that the predictor can be characterized equivalently as the convex continuously differentiable minimization problem
  \begin{equation}
    \label{eq:strong-dual-final-finite}
    \cKL(x,\mb P, T)= \min_{\alpha\geq \gamma(x)} ~\{f(\alpha; x,\Pb, T)\defn \alpha - e^{-r} \exp(\textstyle\sum_{i\in \Sigma} \log \left(\alpha-\loss(x, i) \right) {\mb P(i)})\}
  \end{equation}
  whenever $\Pb'\in \cPin$.   Denote with $\alpha^\star(\Pb)$ its optimal solution.  We have that $\alpha^\star(\Pb)>\gamma(x)$ as 
  \begin{align*}
    & \lim_{\alpha\downarrow \gamma(x)} f'(\alpha; x,\mb P, T)\\
    = & 1-\lim_{\alpha\downarrow \gamma(x)} e^{-r}\exp(\textstyle\sum_{i\in \Sigma} \log \left(\alpha-\loss(x, i) \right) {\mb P(i)}) \cdot \sum_{i\in \Sigma} \frac{{\mb P(i)}}{\alpha-\loss(x, i)}\\
    = & 1- \lim_{\alpha\downarrow \gamma(x)} e^{-r} \prod_{i\in \Sigma}\left(\alpha-\loss(x, i) \right)^{\mb P(i)}\cdot \textstyle\sum_{i\in \Sigma} \frac{{\mb P(i)}}{\alpha-\loss(x, i)}\\
    \leq & 1- \lim_{\alpha\downarrow \gamma(x)} e^{-r}\left(\alpha-\gamma(x) \right)^{\mb P(\Sigma^\star(x))} \cdot \prod_{i\in \Sigma\setminus \Sigma^\star(x)}\left(\alpha-\loss(x, i) \right)^{\mb P(i)} \cdot \textstyle\sum_{i\in \Sigma} \frac{{\mb P(i)}}{\alpha-\gamma(x)}\\
    = & 1- \lim_{\alpha\downarrow \gamma(x)}e^{-r} \left(\alpha-\gamma(x) \right)^{\mb P(\Sigma^\star(x))-1} \prod_{i\in \Sigma\setminus \Sigma^\star(x)}\left(\alpha-\loss(x, i) \right)^{\mb P(i)} < 0
  \end{align*}
  where we denote $\Sigma^\star(x) \defn \set{i\in \Sigma}{\loss(x, i)=\gamma(x)}$ and use that $\alpha - \loss(x,i) \geq \alpha - \gamma(x)$ for all $i\in \Sigma$.
  A standard convex optimization result is hence that the optimal solution $\alpha^\star(\Pb)$ is characterized by the vanishing gradient condition
  \[
    f'(\alpha^\star(\Pb); x,\mb P, T) = 1-e^{-r}\exp(\textstyle\sum_{i\in \Sigma} \log \left(\alpha^\star(\Pb)-\loss(x, i) \right) {\mb P(i)}) \cdot \sum_{i\in \Sigma} \frac{{\mb P(i)}}{\alpha^\star(\Pb)-\loss(x, i)} = 0.
  \]
  Moreover, we have that
  \begin{align*}
    f''(\alpha^\star(\Pb)) & = e^{-r}\exp(\textstyle\sum_{i\in \Sigma} \log \left(\alpha^\star(\Pb)-\loss(x, i) \right) {\mb P(i)}) \left(\sum_{i\in \Sigma}\frac{{\mb P(i)}}{(\alpha^\star(\Pb)-\loss(x, i))^2} - \left(\sum_{i\in \Sigma} \frac{{\mb P(i)}}{\alpha^\star(\Pb)-\loss(x, i)}\right)^2 \right) > 0
  \end{align*}
  where strict positivity is a direct consequence of the Cauchy-Schwarz inequality, i.e.,
  \(
  \norm{a}^2 \norm{b}^2 > (a\tpose b)^2
  \)
  applied to the vectors $a_i\defn\sqrt{\Pb(i)}$ and $b_i = \sqrt{\mb P(i)}/(\alpha^\star(\Pb)-\gamma(x, i))$ for all $i\in \Sigma$.
  Remark indeed that the vectors $a$ and $b$ are not scalar multiples as $\min_{i\in \Sigma} \loss(x, i) < \max_{i\in \Sigma} \loss(x, i)$.
  By the implicit function theorem we have now that $\alpha^\star(\Pb)$ is differentiable at $\Pb$. Hence, the composition $\cKL(x,\mb P, T)= f(\alpha^\star(\Pb); x,\Pb, T)$ is differentiable as well at $\Pb$.
\end{proof}

\subsection{Proof of Theorem \ref{thm: strong opt exponential regime predictor.}: Strong Optimality}\label{Appendix: Claims of exp pred thm}

{\color{black}
\begin{proof}[Proof of Theorem \ref{thm: strong opt exponential regime predictor.}]
Proposition \ref{prop: frasibility pred exp} ensures feasiblity.
Assume that $\cKL$ is not strong optimal. Then, there must exist a predictor $\hat{c}\in \cC$ verifying the out-of-sample guarantee \eqref{eq: out-of-sample ganrantee} and $x_0,\Pb_0 \in \cX \times \cPin$ such that 
\begin{equation}\label{proof eq: dro ineq 1}
\limsup_{T\to \infty} \frac{|\cKL(x_0,\Pb_0,T)-c(x_0,\Pb_0)|}{|\hat{c}(x_0,\Pb_0,T)-c(x_0,\Pb_0)|} > 1
\end{equation}
with the convention $\frac{0}{0} =1$. From the definition of superior limit there must exist an increasing sequence $(t_T)_{T\geq 1} \in \integ^{\integ}$ and $\varepsilon>0$ such that 
\begin{equation}\label{proof eq: dro ineq 2}
|\cKL(x_0,\Pb_0,t_T)-c(x_0,\Pb_0)| \geq (1+\varepsilon)|\hat{c}(x_0,\Pb_0,t_T)-c(x_0,\Pb_0)|, \quad \forall T \in \integ.
\end{equation} 

Let $\bar{\Pb} \in \argmax_{\Pb'\in\cP} \{c(x_0,\Pb') \; : \; I(\Pb_0,\Pb') \leq r\}$ which exists as $\cP$ is compact and $\Pb'\mapsto I(\Pb_0,\Pb')$ lower semicontinuous. We have $\cKL(x_0,\Pb_0,T) = c(x_0,\bar{\Pb})$ for all $T\in \integ$ and $I(\Pb_0,\bar{\Pb}) \leq r$. Using this equality in inequality \eqref{proof eq: dro ineq 2} and the fact that $c(x_0,\bar{\Pb}) \geq c(x_0,\Pb_0)$ by definition of $\bar{\Pb}$, we get
\begin{equation}\label{proof eq: dro ineq 3}
c(x_0,\bar{\Pb}) \geq \hat{c}(x_0,\Pb_0,t_T) + \varepsilon |\hat{c}(x_0,\Pb_0,t_T) - c(x_0,\Pb_0)|,
\quad \forall T \in \integ.
\end{equation}
We use this inequality to prove successively the following claims.
\begin{claim}\label{claim: dro proof}
There exists $\varepsilon_1>0$ and $(l_T)_{T\geq 1} \in \integ^{\integ}$ such that $c(x_0,\bar{\Pb}) \geq \hat{c}(x_0,\Pb_0,l_T) + \varepsilon_1$, for all $T\in \integ$.
\end{claim}
\begin{proof}
    We distinguish two cases. If $c(x_0,\bar{\Pb}) > \limsup_{T\in \integ} \hat{c}(x_0,\Pb_0,t_T)$ then the result is immediate. Suppose now $c(x_0,\bar{\Pb}) \leq \limsup_{T\in \integ} \hat{c}(x_0,\Pb_0,t_T)$.
We have $ c(x_0,\bar{\Pb}) \geq c(x_0,\Pb_0)$ by definition of $\bar{\Pb}$, and $| c(x_0,\bar{\Pb}) - c(x_0,\Pb_0)|>0$ as otherwise the LHS of \eqref{proof eq: dro ineq 1} is zero or one and inequality \eqref{proof eq: dro ineq 1} would fail to hold. Therefore,  $0< c(x_0,\bar{\Pb}) - c(x_0,\Pb_0) \leq \limsup_{T \in \integ} \hat{c}(x_0,\Pb_0,t_T) - c(x_0,\Pb_0)$. Hence, by definition of the limit superior, there exists $\delta>0$ and $(l_T)_{T\geq 1}$, a sub-sequence of $(t_T)_{T\geq 1}$, such that $|\hat{c}(x_0,\Pb_0,l_T) - c(x_0,\Pb_0)| \geq \delta$ for all $T$. Plugging this inequality in \eqref{proof eq: dro ineq 3}, we get the desired result with $\varepsilon_1 = \varepsilon \delta$.
\end{proof}

\begin{claim}\label{claim: dro proof 2}
There exists $\bar{\Pb}_1 \in \cPin$ verifying $I(\Pb_0,\bar{\Pb}_1)<r$, and an open set $U \subset \cPin$ containing $\Pb_0$ such that for all $\Pb'\in U$ and $T \in \integ$, we have $c(x_0,\bar{\Pb}_1) > \hat{c}(x_0,\Pb', l_T)$.
\end{claim}
\begin{proof}
    Let $\varepsilon_1 >0$ and $(l_T)_{T\geq 1} \in \integ^{\integ}$ given by Claim \ref{claim: dro proof}. By continuity of $\Pb\mapsto c(x_0,\Pb)$, there exists $1\geq \lambda>0$ such that 
$\bar{\Pb}_1 = \lambda \Pb_0 + (1-\lambda) \bar{\Pb}$ verifies $c(x_0,\bar{\Pb}_1) \geq c(x_0,\bar{\Pb}) - \varepsilon_1/2$. Hence, by Claim \ref{claim: dro proof}, we have $c(x_0,\bar{\Pb}_1) \geq \hat{c}(x_0,\Pb_0,l_T) + \varepsilon_1/2$ for all $T\in \integ$. 
Using the equicontinuity of $\hat{c}$, there exists an open set $U \subset \cPin$ containing $\Pb_0$ such that for all $\Pb'\in U$ and $T \in \integ$, we have $c(x_0,\bar{\Pb}_1) > \hat{c}(x_0,\Pb', l_T)$. Furthermore, by convexity of the relative entropy, we have 
\(
I(\Pb_0,\bar{\Pb}_1) \leq \lambda I(\Pb_0,\Pb_0) + (1-\lambda)I(\Pb_0,\bar{\Pb}) \leq (1-\lambda)r <r.
\)
\end{proof}

The proof of the second claim uses the continuity of $c$ and equicontinuity of $\hat{c}$ to perturb $\bar{\Pb} \in \cP$ into $\bar{\Pb}_1 \in \cPin$ verifying $I(\Pb_0,\bar{\Pb}_1)<r$, and $\Pb_0$ into an open neighborhood $U$, while only losing a gap less than $\varepsilon_1$ in the inequality of the first claim.
Using the previous claim and then the Large Deviation Principle (Theorem \ref{thm: LDP finite space}) we get
\begin{align*}
    \limsup_{T\to\infty} \frac{1}{rT} \log \bar{\Pb}_1^\infty \left( c(x_0, \bar{\Pb}_1) > \hat c(x_0, \hat{\Pb}_T,T)\right)
    &\geq 
    \limsup_{T\to\infty} \frac{1}{rl_T} \log \Pb_1^\infty 
    \left( 
    \hat{\Pb}_{l_T} \in U
    \right) \\
    & \geq 
    \liminf_{T\to\infty} \frac{1}{rT} \log \Pb_1^\infty 
    \left( 
    \hat{\Pb}_{T} \in U
    \right)\\
    &\geq 
    - \frac{1}{r}\inf_{\Pb' \in U^{\into}}I(\Pb',\Pb_1) 
    \geq - \frac{1}{r}I(\Pb_0,\bar{\Pb}_1) > -1
\end{align*}
where the last two inequalities use $\Pb_0 \in U^{\into}$ and $I(\Pb_0,\bar{\Pb}_1)<r$ from Claim \ref{claim: dro proof 2}. This inequality contradicts the feasibility of $\hat{c}$ as it does not verify the out-of-sample guarantee \eqref{eq: out-of-sample ganrantee}, which completes the proof.
\end{proof}}

\subsection{Omitted proofs of Subsection \ref{sec: superexp pred}: Predictors in the Superexponential Regime}
\subsubsection{Proof of Theorem \ref{thm: overly strong predictor}: Strong Optimality}
\begin{proof}[Proof of Theorem \ref{thm: overly strong predictor}]
Proposition \ref{prop: feasibility or predictor} ensures the feasibility of $\cRob$. Let the predictor $\hat{c}\in \cC$ be a feasible solution in the optimal prediction Problem \eqref{eq:optimal-predictor} with $a_T \gg T$. Let us show that $\cRob \preceq_{\cC} \hat{c}$. Remark that the predictor $\hat{c}$ verifies the out-of-sample guarantee with $a_T \gg T$. Hence, it therefore also verifies the guarantee in the exponential case ($a_T \sim rT$) for all $r>0$. Let $r>0$ be arbitrary and consider the corresponding DRO predictor $\cKL$ defined in \eqref{eq: KL predictor}. Theorem \ref{thm: strong opt exponential regime predictor.} ensures that $\cKL \preceq_{\cC} \hat{c}$. Hence, for every $x,\Pb \in \cX \times \cPin$
\begin{equation*}
\limsup_{T\to \infty} \frac{|\cKL(x,\Pb,T)-c(x,\Pb)|}{|\hat{c}(x_0,\Pb,T)-c(x,\Pb)|} \leq 1
\end{equation*}
Let $x,\Pb \in \cX \times \cPin$.
As $\cKL$ is independent of $T$, we can denote $\cKL(x,\Pb) := \cKL(x,\Pb,T)$ for all $x,\Pb$ and $T$. Moreover, from the definition of the predictor $\cKL(x,\Pb)$ as the supremum $c(x,\cdot)$ over an ambiguity set containing $\Pb$ it follows that $\cKL(x,\Pb)-c(x,\Pb) \geq 0$. Hence we have
\begin{equation*}
 \frac{\cKL(x,\Pb)-c(x,\Pb)}{\liminf_{T\to \infty}|\hat{c}(x,\Pb,T)-c(x,\Pb)|} \leq 1.
\end{equation*}
Let $\Pb' \in \cPin$ be arbitrary. As $\Pb \in \cPin$, there exists $r>0$ such that $I(\Pb,\Pb')<r$. Therefore, for this choice of $r$, we have $c(x,\Pb') \leq \cKL(x,\Pb)$ which implies
\begin{equation*}
 \frac{c(x,\Pb')-c(x,\Pb)}{\liminf_{T\to \infty}|\hat{c}(x,\Pb,T)-c(x,\Pb)|} \leq 1.
\end{equation*}
Taking the supremum over all $\Pb' \in \cPin$, which equals the supremum over $\Pb'\in \cP$ by the continuity of $c$, we get
\begin{equation*}
 \frac{\cRob(x,\Pb)-c(x,\Pb)}{\liminf_{T\to \infty}|\hat{c}(x,\Pb,T)-c(x,\Pb)|} \leq 1.
\end{equation*}
The previous inequality can be rewritten as 
\begin{equation*}
 \limsup_{T\to \infty}\frac{|\cRob(x,\Pb)-c(x,\Pb)|}{|\hat{c}(x,\Pb,T)-c(x,\Pb)|} \leq 1.
\end{equation*}
We have proven this for all $x,\Pb \in \cX\times \cPin$. Hence $\cRob \preceq_{\cC} \hat{c}$ completing the proof.
\end{proof}\label{Appendix: proof of thm superexp pred}

\subsubsection{Self contained proof of Theorem \ref{thm: overly strong predictor}: Strong Optimality}\label{Appendix: alternative proof of strong opt superexp pred}
\begin{proof}[Proof of Theorem \ref{thm: overly strong predictor}]
Suppose for the sake of contradiction that there exists a predictor $\hat{c} \in \cC$ verifying the out-of-sample guarantee, i.e., feasible in \eqref{eq:optimal-predictor} such that $\hat{c}_{R} \not \orderleq \hat{c}$.
There must hence exists $x_0\in \cX$ and $\Pb_0\in \cP$ such that 
\begin{equation}\label{proof eq: strong guarantee 1}
\limsup_{T\to \infty} \frac{|\hat{c}_{R}(x_0,\Pb_0,T)-c(x_0,\Pb_0)|}{|\hat{c}(x_0,\Pb_0,T)-c(x_0,\Pb_0)|} > 1.
\end{equation}
with the convention $\frac{0}{0} =1$. From the definition of superior limit there must exist an increasing sequence $(t_T)_{T\geq 1} \in \integ^{\integ}$ and $\varepsilon>0$ such that 
\begin{equation}\label{proof eq: strong guarantee 2}
|\hat{c}_{R}(x_0,\Pb_0,t_T)-c(x_0,\Pb_0)| \geq (1+\varepsilon)|\hat{c}(x_0,\Pb_0,t_T)-c(x_0,\Pb_0)|, \quad \forall T \in \integ.
\end{equation} 
Let $\bar{\Pb} \in \argmax_{\Pb'\in \cP} c(x_0,\Pb')$. We have $\hat{c}_{R}(x_0,\Pb_0,t_T) = c(x_0,\bar{\Pb})$ for all $T$.
By substituting this equality in Equation \eqref{proof eq: strong guarantee 2} and dropping the absolute values as $c(x_0,\bar{\Pb})\geq c(x_0,\Pb_0)$, we get
\begin{equation}\label{proof eq: overly robust ineq}
c(x_0,\bar{\Pb}) \geq \hat{c}(x_0,\Pb_0,t_T) + \varepsilon |\hat{c}(x_0,\Pb_0,t_T) - c(x_0,\Pb_0)|,
\quad \forall T \in \integ.
\end{equation}
We next use this inequality to prove the following claim.
\begin{claim}
There exists $\varepsilon_1>0$ and a sub-sequence $(l_T)_{T\geq 1}$ such that $
c(x_0,\bar{\Pb}) \geq \hat{c}(x_0,\Pb_0,l_T) + \varepsilon_1
$ for all $T\in \integ$. 
\end{claim}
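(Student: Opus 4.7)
My plan is to mirror the argument given for Claim~\ref{claim: dro proof}, splitting into two cases depending on whether $c(x_0,\bar{\Pb})$ strictly exceeds $\limsup_{T\to\infty}\hat{c}(x_0,\Pb_0,t_T)$. In the easy case, $c(x_0,\bar{\Pb}) > \limsup_{T\to\infty}\hat{c}(x_0,\Pb_0,t_T)$, I would simply take $\varepsilon_1$ to be half of this positive gap and choose $(l_T)_{T\geq 1}$ to be any tail of $(t_T)_{T\geq 1}$ along which $\hat{c}(x_0,\Pb_0,l_T) \leq c(x_0,\bar{\Pb}) - \varepsilon_1$; such a tail exists by the very definition of limit superior.

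The substantive case is $c(x_0,\bar{\Pb}) \leq \limsup_{T\to\infty}\hat{c}(x_0,\Pb_0,t_T)$. Here the key preliminary step is to establish that $c(x_0,\bar{\Pb}) > c(x_0,\Pb_0)$. Since $\bar{\Pb}$ is a global maximizer of $c(x_0,\cdot)$ on $\cP$, we certainly have $c(x_0,\bar{\Pb}) \geq c(x_0,\Pb_0)$; if equality held, then the numerator $|\cRob(x_0,\Pb_0,T) - c(x_0,\Pb_0)|$ in \eqref{proof eq: strong garantee 1} would be identically zero, and under the convention $\tfrac{0}{0}=1$ this forces the limit superior in \eqref{proof eq: strong garantee 1} to be at most $1$, contradicting the strict inequality there. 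Hence $c(x_0,\bar{\Pb}) - c(x_0,\Pb_0) > 0$.

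Combining this strict inequality with the case assumption gives $\limsup_{T\to\infty}(\hat{c}(x_0,\Pb_0,t_T) - c(x_0,\Pb_0)) \geq c(x_0,\bar{\Pb}) - c(x_0,\Pb_0) > 0$, so by the definition of limit superior I can extract a subsequence $(l_T)_{T\geq 1}$ of $(t_T)_{T\geq 1}$ together with some $\delta > 0$ satisfying $\hat{c}(x_0,\Pb_0,l_T) - c(x_0,\Pb_0) \geq \delta$, and in particular $|\hat{c}(x_0,\Pb_0,l_T) - c(x_0,\Pb_0)| \geq \delta$ for all $T$. Substituting this lower bound into \eqref{proof eq: overly robust ineq} immediately delivers $c(x_0,\bar{\Pb}) \geq \hat{c}(x_0,\Pb_0,l_T) + \varepsilon \delta$ for every $T \in \integ$, and the claim is proved with $\varepsilon_1 = \varepsilon\delta$.

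The only delicate point is justifying the strict inequality $c(x_0,\bar{\Pb}) > c(x_0,\Pb_0)$ in the second case via the $\tfrac{0}{0}=1$ convention; everything else is a direct application of the definition of limit superior combined with \eqref{proof eq: overly robust ineq}. I therefore anticipate no substantive obstacle beyond this piece of bookkeeping, and the proof is essentially a verbatim transposition of the argument used for Claim~\ref{claim: dro proof}.
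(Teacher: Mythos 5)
Your proposal is correct and follows essentially the same two-case argument as the paper's proof of this claim (and of its analogue, Claim~\ref{claim: dro proof}): distinguish whether $c(x_0,\bar{\Pb})$ strictly exceeds $\limsup_{T}\hat{c}(x_0,\Pb_0,t_T)$, use the $\tfrac{0}{0}=1$ convention to rule out $c(x_0,\bar{\Pb})=c(x_0,\Pb_0)$ in the second case, extract a subsequence with $|\hat{c}(x_0,\Pb_0,l_T)-c(x_0,\Pb_0)|\geq\delta$, and substitute into \eqref{proof eq: overly robust ineq} with $\varepsilon_1=\varepsilon\delta$. You supply a bit more detail than the paper on why the zero numerator forces the limit superior in \eqref{proof eq: strong garantee 1} to be at most $1$, but this is the same reasoning, just spelled out.
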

\begin{proof}
We distinguish two cases. If 
$ c(x_0,\bar{\Pb}) > \limsup_{T \in \integ} \hat{c}(x_0,\Pb_0,t_T)
$, then the result follows immediately. Suppose now $c(x_0,\bar{\Pb}) \leq \limsup_{T \in \integ} \hat{c}(x_0,\Pb_0,t_T)$. 
We have $ c(x_0,\bar{\Pb}) \geq c(x_0,\Pb_0)$ by definition of $\bar{\Pb}$, and $| c(x_0,\bar{\Pb}) - c(x_0,\Pb_0)|>0$ as otherwise the numerator in the LHS of \eqref{proof eq: strong guarantee 1} is zero and \eqref{proof eq: strong guarantee 1} would fail to hold. Therefore,  $0< c(x_0,\bar{\Pb}) - c(x_0,\Pb_0) \leq \limsup_{T \in \integ} \hat{c}(x_0,\Pb_0,t_T) - c(x_0,\Pb_0)$. Hence, by definition of the limit superior, there exists $\delta>0$ and $(l_T)_{T\geq 1}$, a sub-sequence of $(t_T)_{T\geq 1}$, such that $|\hat{c}(x_0,\Pb_0,t_T) - c(x_0,\Pb_0)| \geq \delta$ for all $T$. Plugging this inequality in \eqref{proof eq: overly robust ineq}, we get the desired result with $\varepsilon_1 = \varepsilon \delta$.
\end{proof}

By continuity of $c(x_0,\cdot)$ there exists $\Pb_1 \in \cPin$ such that $
c(x_0,\Pb_1) \geq \hat{c}(x_0,\Pb_0,l_T) + \varepsilon_1/2
$ for all $T$. By equicontinuity of the sequence $(\hat{c}(x_0,\cdot,l_T))_{T\geq 1}$, there exists an open set $U\subset \cP$ containing $\Pb_0$ such that for all $\Pb' \in U$ and $T\in \integ$, we have $c(x_0,\Pb_1)> \hat{c}(x_0,\Pb',l_T)$.
Using the LDP (Theorem \ref{thm: LDP finite space}) we get
\begin{align*}
    \limsup_{T\to\infty} \frac{1}{T} \log \Pb_1^\infty \left( c(x_0, \Pb_1) > \hat c(x_0, \hat{\Pb}_T,T)\right)
    &\geq 
    \limsup_{T\to\infty} \frac{1}{l_T} \log \Pb_1^\infty 
    \left( 
    \hat{\Pb}_{l_T} \in U
    \right) \\
    & \geq 
    \liminf_{T\to\infty} \frac{1}{T} \log \Pb_1^\infty 
    \left( 
    \hat{\Pb}_{T} \in U
    \right)
    \geq 
    - \inf_{\Pb' \in U^{\into}}I(\Pb',\Pb_1) > -\infty,
\end{align*}
therefore,
\begin{align*}
    \limsup_{T\to\infty} \frac{1}{a_T} \log \Pb_1^\infty \left( c(x_0, \Pb_1) > \hat c(x_0, \hat{\Pb}_T,T)\right)
    =
    \limsup_{T\to\infty} \frac{T}{a_T} \frac{1}{T} \log \bar{\Pb}^\infty \left( c(x_0, \Pb_1) > \hat c(x_0, \hat{\Pb}_T,T)\right)
    =
    0,
\end{align*}
as $\lim T/a_T =0$. This implies that $\hat{c}$ violates the out-of-sample guarantee which contradicts our assumption. 
\end{proof}

\subsection{Omitted proofs of Subsection \ref{sec: subexp pred}: Predictors in the subexponential Regime}
\subsubsection{Proof of Proposition \ref{prop: consitency}: Consistency of weakly optimal predictors}
\begin{proof}[Proof of Proposition \ref{prop: consitency}]
Suppose that there exists a predictor $\hat{c} \in \cC$ which satisfies the out-of-sample guarantee and does not converge point-wise to $c$. That is, there must exist $x_0\in \mathcal{X}$ and distribution $\Pb_0 \in \mathcal{P}$ such that $\limsup_{T\to\infty} |\hat{c}(x_0,\Pb_0,T) - c(x_0,\Pb_0)| = \varepsilon>0$. Let $\delta(x,\Pb,T) = \hat{c}(x,\Pb,T) - c(x,\Pb)$ for all $x\in \cX$, $\Pb \in \cP$ and $T \in \integ$. 
Let $\mathcal{T} = \set{T \in \integ}{\delta(x_0,\Pb_0,T) >\frac{\varepsilon}{2}}$.
The out-of-sample guarantee \eqref{eq: out-of-sample ganrantee} implies
that $\liminf_{T\to\infty} \hat{c}(x_0,\Pb_0,T) - c (x_0,\Pb_0) \geq 0$ (see Lemma \ref{lemma: hat c > true c}).
Hence, we can drop the absolute values and conclude that $\limsup_{T\to\infty} \hat{c}(x_0,\Pb_0,T) - c(x,\Pb_0) = \varepsilon$. The previous observation also implies that $\mathcal{T}$ is a set of infinite cardinality.
By equicontinuity of $\hat{c}$ (as $\hat{c}\in \cC$) there exists $\rho>0$ such that the closed ball\footnote{Here the ball is taken for the product topology.} $\mathcal{B}((x_0,\Pb_0),\rho)$ centered around $(x_0,\Pb_0)$ of radius $\rho$ verifies
\begin{equation}\label{proof eq: cal T def}
\forall T \in \mathcal{T}, \; 
\forall (x,\Pb) \in \mathcal{B}((x_0,\Pb_0),\rho), \quad
\delta(x,\Pb,T) > \frac{\varepsilon}{4}
.
\end{equation}

Let the variation $\eta: \cX \times \cP \longrightarrow [0,\frac{\varepsilon}{8}]$ be an infinitely differentiable function\footnote{For example, take the bump function $x \rightarrow \exp(-1/(1-x^2))\mathbb{1}_{x\in (-1,1)}$ scaled accordingly.} of support $\mathcal{B}((x_0,\Pb_0),\frac{\rho}{2})$ such that $\eta(x_0,\Pb_0) = \frac{\varepsilon}{8}$.

Consider the predictor $\hat{c}'$ defined as $\hat{c}'(x,\Pb,T) = \hat{c}(x,\Pb,T) - \eta(x,\Pb) \mathbf{1}_{T \in \mathcal{T}}$. Figure \ref{fig: proof of consistency} illustrates this construction. We will show that $\hat{c}'$ is feasible in Problem \eqref{eq:optimal-predictor} and is strictly preferred to $\hat{c}$ hence establishing the claim.


Let us first show that the derived predictor $\hat{c}'$ is strictly preferred to $\hat{c}$. Let $(t_T)_{T\geq 1}$ be the increasing sequence of elements in $\mathcal{T}$ and $(l_T)_{T\geq 1}$ be the increasing sequence of elements in its complement $\integ \setminus \mathcal{T}$. 
We have the following chain of equalities
$\limsup_{T\to\infty} \hat{c}'(x_0,\Pb_0,T)-c(x_0,\Pb_0) 
= \limsup_{T\to\infty} \delta(x_0,\Pb_0,T)-\eta(x_0,\Pb_0) \mathbf{1}_{T \in \mathcal{T}}
= \max(
\limsup_{T\to\infty} \delta(x_0,\Pb_0,t_T)-\eta(x_0,\Pb_0),
\limsup_{T\to\infty} \delta(x_0,\Pb_0,l_T)
)
\leq 
\max(
\varepsilon-\varepsilon/8,
\varepsilon/2
)$ (see Lemma \ref{lemma: limsup of complementary sequences} for details on the second equality) which is strictly smaller than $ \limsup_{T\to\infty} \hat{c}(x_0,\Pb_0,T)-c(x_0,\Pb_0) =\varepsilon 
$.
Hence, $\hat{c}' \not \equiv \hat{c}$.
Furthermore, the positivity of the considered variation $\eta$ implies that 
$\limsup_{T\to\infty} |\hat{c}'(x,\Pb,T)-c(x,\Pb)|/
|\hat{c}(x,\Pb,T)-c(x,\Pb)| \leq 1
$ for all $(x,\Pb) \in \cX \times \cP$. Hence $\hat{c}'\orderleq \hat{c}$. It remains to show feasibility of $\hat{c}'$, i.e., the derived predictor $\hat{c}'$ is regular and verifies the required out-of-sample guarantee.

Notice first that the regularity of $\eta$ implies $\hat{c}' \in \cC$. In fact, as $\eta$ and $\hat{c}$ are differentiable, $\hat{c}'$ is also differentiable. Moreover, as $\eta$ is continuous and does not depend on $T$, subtracting it from $\hat{c}$ does not affect the uniform boundedness and equicontinuity of the predictor or its derivatives.

Let $(x,\Pb) \in \cX \times \cP$ be arbitrary. Let us verify the out-of-sample guarantee \eqref{eq: out-of-sample ganrantee} at $(x,\Pb)$. Let $p(x,\Pb,T) \defn  \frac{1}{a_T} \log \Pb^\infty ( c(x, \Pb) > \hat{c}'(x, \hat{\Pb}_T,T))$ for all $x,\Pb,T$. We distinguish two cases.

\textbf{Case I:} Suppose $(x,\Pb) \in \mathcal{B}((x_0,\Pb_0),\rho)$. The variation $\eta$ is bounded by $\varepsilon/8$, therefore, using inequality \eqref{proof eq: cal T def}, we have for all $T \in \mathcal{T}$, $\hat{c}'(x,\Pb,T)-c(x,\Pb)\geq \delta(x,\Pb,T) - \eta(x,\Pb) > \varepsilon/4 -\varepsilon/8 = \varepsilon/8$. By equicontinuity of $\hat{c}'$, there exists an open neighborhood $U$ of $\Pb$ independent of $T$ such that for all $\Pb' \in U$, for all $T\in \mathcal{T}$, $\hat{c}'(x,\Pb',T)>c(x,\Pb)$. Hence, 
$\limsup_{T\to\infty} p(x,\Pb,t_T) 
\leq
\limsup_{T\to\infty} \frac{1}{a_{t_T}} \log \Pb^\infty \left( \hat{\Pb}_{t_T} \notin U\right)$.
By the Large Deviation Principle (Theorem \ref{thm: LDP finite space}), 
$$\limsup_{T\to\infty} \frac{1}{t_T} \log \Pb^\infty \left( \hat{\Pb}_{t_T} \notin U\right) \leq \limsup_{T\to\infty} \frac{1}{T} \log \Pb^\infty \left( \hat{\Pb}_{T} \notin U\right) \leq - \inf_{\Pb' \in U^c} I(\Pb',\Pb)< 0$$
as $\Pb \not \in U^c$.
Furthermore, $a_T\ll T \implies \lim_{T\to\infty} T/a_T = \infty$. Therefore, we have that the limit $\limsup_{T\to\infty} \frac{1}{a_{t_T}} \log \Pb^\infty \left( \hat{\Pb}_{t_T} \notin U\right) = - \infty$ diverges.
Hence, $\limsup_{T\to\infty} p(x,\Pb,t_T)= -\infty < -1$ diverges as well. 
For all $T \notin \mathcal{T}$, $\hat{c}'(\cdot,\cdot,T) = \hat{c}(\cdot,\cdot,T)$, therefore, by feasibility of $\hat{c}$ we have $$\limsup_{T\to\infty} p(x,\Pb,l_T) = \frac{1}{a_{l_T}} \log \Pb^\infty \left( c(x, \Pb) > \hat{c}(x, \hat{\Pb}_{l_T},l_T)\right) \leq -1.$$ Combining the results on $p$ for both the sequences $(l_T)_{T\geq 1}$ and $(t_T)_{T\geq 1}$, we get the desired guarantee
$\limsup_{T \to \infty} p(x,\Pb,T) \leq \max(\limsup_{T \to \infty} p(x,\Pb,t_T), \limsup_{T \to \infty} p(x,\Pb,l_T)) <-1 $ (see Lemma \ref{lemma: limsup of complementary sequences} for details on the first inequality).

\textbf{Case II:} Suppose $(x,\Pb) \notin \mathcal{B}((x_0,\Pb_0),\rho)$. Denote with $U$ the compliment of  $\mathcal{B}((x_0,\Pb_0),3\rho/4)$. Notice that $U$ is open and that the perturbation function $\eta$ takes the value zero on $U$. Hence, for all $(x',\Pb') \in U$ and $T \in \integ$, $\hat{c}'(x',\Pb',T) = \hat{c}(x',\Pb',T)$. Furthermore, for $(x,\Pb) \in U$ there exists an open set $U_x$ containing $\Pb$ such that for all $\Pb'\in U_x$, $x,\Pb' \in U$. We have
\begin{align*}
    \frac{1}{a_T} \log \Pb^\infty \left( c(x, \Pb) > \hat{c}'(x, \hat{\Pb}_T,T)\right)
    & \leq 
    \frac{1}{a_T} \log \left [
    \Pb^\infty \left( c(x, \Pb) > \hat{c}'(x, \hat{\Pb}_T,T) 
    ~\&~
    \hat{\Pb}_T \in U_x
    \right)
    + \Pb^{\infty}(\hat{\Pb}_T \notin U_x)
    \right]\\
    &\leq 
    \frac{1}{a_T} \log \left [
    \Pb^\infty \left( c(x, \Pb) > \hat{c}(x, \hat{\Pb}_T,T) 
    \right)
    + \Pb^{\infty}(\hat{\Pb}_T \notin U_x)
    \right]
\end{align*}
Let $\mu = \inf_{\Pb' \in U^c_x} I(\Pb',\Pb)>0$ which is positive as $\Pb \not \in U^c_x$. Using the Large Deviations Principle (Theorem \ref{thm: LDP finite space}), $\Pb^{\infty}(\hat{\Pb}_T \notin U_x) \leq e^{-\mu T + o(T)}$. By feasibility of the predictor $\hat{c}$,  we have furthermore that
$\Pb^\infty ( c(x, \Pb) > \hat{c}(x, \hat{\Pb}_T,T) 
) \leq e^{-a_T + o(a_T)}$
. Hence,
\begin{align*}
    \limsup_{T\to\infty} \frac{1}{a_T} \log \Pb^\infty \left( c(x, \Pb) > \hat{c}'(x, \hat{\Pb}_T,T)\right)
    & \leq 
    \limsup_{T\to\infty}
    \frac{1}{a_T} \log \left [
    e^{-a_T + o(a_T)}
    + e^{-\mu T + o(T)}
    \right] = -1
\end{align*}%
as $a_T\ll T\implies 
    e^{-\mu T + o(T)} = o(e^{-a_T + o(a_T)})$.
\end{proof}

\subsubsection{
Proof  and generalizaiton of Proposition \ref{prop: robust predictor is DRO.}: Robust interpretation}
\label{Appendix: robust-interpr}
In the following theorem only, $\Sigma$ can be a continuous set and $\cP$ is the set of measures (possibly continuous) over $\Sigma$. We denote $\mathcal{B}(\Sigma)$ as the set of all events over $\Sigma$.
\begin{theorem}
  Let $\Pb \in \cP$, $x\in \cX$ and $r>0$. Suppose $\Var_{\Pb}(\loss(x,\xi)) \neq 0$.
  We have
  \begin{align*}
    \sup \left\{ \Eb_{\Pb'}[\loss(x,\xi)] 
      \; : \; 
      \Pb'\in \bar{\cP}, \;
      \int_{\Sigma}\frac{1}{2}\left(\frac{d\Pb'}{d\Pb} -1\right)^2 d\Pb \leq r \right\}
    =\Eb_{\Pb}(\loss(x,\xi))
      + 
      \sqrt{2r \Var_{\Pb}(\loss(x,\xi))}
  \end{align*}
  where $\Var_{\Pb}(\loss(x,\xi)) = \Eb_{\Pb}[(\loss(x,\xi) - \Eb_{\Pb}(\loss(x,\xi)))^2]$ and $\bar{\cP}$ is the set of signed measures summing to $1$.
  Furthermore, the optimal solution of the supremum is attained in the distribution
  $$
  \Pb'(A) = \Pb(A) + \sqrt{\frac{2r}{\Var_{\Pb}(\loss(x,\xi))}}\left( \int_{A}\loss(x,\xi) d\Pb(\xi) - \Eb_{\Pb}(\loss(x,\xi)) \Pb(A) \right),
  \quad \forall A \in \mathcal{B}(\Sigma).
  $$
  In particular, the equality is also true with $\cP$, the set of probability measures, instead of $\bar{\cP}$ when $r$ verifies for all events $A \in \mathcal{B}(\Sigma)$
    $$
    \sqrt{2r} \left( \int_{A}\loss(x,\xi)d\Pb(\xi) - \Eb_{\Pb}[\loss(x,\xi)]\Pb(A) \right) \geq 
    -\Pb(A)
    \sqrt{\Var_{\Pb}(\loss(x,\xi))}.
    $$
\end{theorem}
\begin{remark}
Let $C>0$ be an upper bound on $|\loss(x,\cdot) - \Eb_{\Pb}[\loss(x,\xi)]|$. Then any $r$ verifying the condition
$$
r 
\leq 
\frac{\Var_{\Pb}(\loss(x,\xi))}{2C^2}
$$
verifies the condition of Theorem \ref{Appendix: robust-interpr}. In particular, the set of $r$ verifying the condition of Theorem \ref{Appendix: robust-interpr} is non-empty.

In fact, denoting $\mathbb{1}_A(\xi) := \mathbb{1}(\xi \in A)$ for all $\xi \in \Sigma$ and event $A$, we have
\begin{align*}
    \left( \int_{A}\loss(x,\xi)d\Pb(\xi) - \Eb_{\Pb}[\loss(x,\xi)]\Pb(A) \right)^2 
    &=
    \left( \int (\loss(x,\cdot)- \Eb_{\Pb}[\loss(x,\xi)])\mathbb{1}_A(\cdot) d\Pb \right)^2 
    \leq C^2 \Pb(A)^2
\end{align*}
Hence, if $r \leq 
\frac{\Var_{\Pb}(\loss(x,\xi))}{2C^2}$, then 
$
2r \left( \int_{A}\loss(x,\xi)d\Pb(\xi) - \Eb_{\Pb}[\loss(x,\xi)]\Pb(A) \right)^2 \leq 
    \frac{\Var_{\Pb}(\loss(x,\xi))}{C^2} C^2\Pb(A)^2 = \Var_{\Pb}(\loss(x,\xi)) \Pb(A)^2$.
\end{remark}



\begin{proof}[Proof of Theorem \ref{Appendix: robust-interpr}]
  The LHS can be written explicitly as
  \begin{equation}\label{proof eq: SVP 1}
    \sup \left\{ \Eb_{\Pb'}[\loss(x,\xi)] 
      \; : \; 
      \Pb'\in \cP, \;
      \int_{\Sigma}\frac{1}{2}\left(\frac{d\Pb'}{d\Pb} -1\right)^2 d\Pb \leq r \right\}
  \end{equation}
  We will exhibit a feasible solution to this supremum problem that attains the RHS, then show that the cost of each feasible solution is no larger than the RHS.

  \textbf{Constructing a feasible solution attaining the RHS.} Consider the solution $\Pb' \in \bar{\cP}$ defined as
  $$
  \Pb'(A) = \Pb(A) + \sqrt{\frac{2r}{\Var_{\Pb}(\loss(x,\xi))}}\left( \int_{A}\loss(x,\xi) d\Pb(\xi) - \Eb_{\Pb}(\loss(x,\xi)) \Pb(A) \right),
  \quad \forall A \in \mathcal{B}(\Sigma).
  $$
  Let us verify the feasibility of the solution. We have 
  \begin{align*}
    \Pb'(\Sigma) 
    &=
      \Pb(\Sigma) + \sqrt{\frac{2r}{\Var_{\Pb}(\loss(x,\xi))}}\left( \int_{\Sigma}\loss(x,\xi) d\Pb(\xi) - \Eb_{\Pb}(\loss(x,\xi)) \Pb(\Sigma) \right) \\
    &= 
      1 + \sqrt{\frac{2r}{\Var_{\Pb}(\loss(x,\xi))}}\left( \Eb_{\Pb}(\loss(x,\xi)) - \Eb_{\Pb}(\loss(x,\xi)) \right)  =1
  \end{align*}

  Hence, $\Pb'$ is measure summing to 1, ie $\Pb'\in \bar{\cP}$. Furthermore, if the stated condition on $r$ is verified, we have for all events $A$
  \begin{align*}
      \sqrt{\frac{2r}{\Var_{\Pb}(\loss(x,\xi))}}\left| \int_{\Sigma}\loss(x,\xi) d\Pb(\xi) - \Eb_{\Pb}(\loss(x,\xi)) \Pb(\Sigma) \right| \leq \Pb(A)
  \end{align*}
  and
    \begin{align*}
      \sqrt{\frac{2r}{\Var_{\Pb}(\loss(x,\xi))}}\left| \int_{\Sigma}\loss(x,\xi) d\Pb(\xi) - \Eb_{\Pb}(\loss(x,\xi)) \Pb(\Sigma) \right| \leq 1-\Pb(A).
  \end{align*}
 These two inequalities imply that $\Pb'(A)\geq 0$ and $\Pb'(A)\leq 1$ respectively. Hence, $\Pb' \in \cP$.
  
  Let us now verify the second constraint. We have
  \begin{align*}
    \int_{\Sigma}\frac{1}{2}\left(\frac{d\Pb'}{d\Pb} -1\right)^2 d\Pb
    &=
      \frac{r}{\Var_{\Pb}(\loss(x,\xi))}
      \int_{\Sigma}\left(\loss(x,\cdot) - \Eb_{\Pb}(\loss(x,\xi)) \right)^2 d\Pb
      =
      r
  \end{align*}
  which concludes the proof of feasibility. We now compute the cost of the solution. We have
  \begin{align*}
    \Eb_{\Pb'}[\loss(x,\xi)] 
    &=
      \Eb_{\Pb}(\loss(x,\xi))
      +
      \int_{\Sigma}\loss(x,\xi) \sqrt{\frac{2r}{\Var_{\Pb}(\loss(x,\xi))}}\left( \loss(x,\xi) - \Eb_{\Pb}(\loss(x,\xi)) \right) d\Pb(\xi)\\
    &=
      \Eb_{\Pb}(\loss(x,\xi))
      +
      \sqrt{\frac{2r}{\Var_{\Pb}(\loss(x,\xi))}}
      \int_{\Sigma}\loss(x,\xi) \left( \loss(x,\xi) - \Eb_{\Pb}(\loss(x,\xi))  \right) d\Pb(\xi)\\
    &=
      \Eb_{\Pb}(\loss(x,\xi))
      +
      \sqrt{\frac{2r}{\Var_{\Pb}(\loss(x,\xi))}}
      \left(\Eb_{\Pb}( \loss(x,\xi)^2) - \Eb_{\Pb}(\loss(x,\xi))^2\right)\\
    &=
      \Eb_{\Pb}(\loss(x,\xi))
      +
      \sqrt{2r\Var_{\Pb}(\loss(x,\xi))}
  \end{align*}
  Hence $\Pb'$ is a feasible solution with cost the RHS which proves that LHS$\geq$RHS.

  \textbf{Proving LHS$\leq$RHS.} Let $\Pb' \in \cP$ be a feasible solution to the supremum problem \eqref{proof eq: SVP 1}. $\Pb'$ is absolutely continuous with respect to $\Pb$ by feasibility. Hence, we have
  \begin{align*}
    \Eb_{\Pb'}[\loss(x,\xi)]
    &=
      \Eb_{\Pb}(\loss(x,\xi))
      +
      \int_{\Sigma} \loss(x,\xi) d(\Pb' - \Pb)(\xi) \\
    &= 
      \Eb_{\Pb}(\loss(x,\xi))
      +
      \int_{\Sigma} [\loss(x,\xi)-\Eb_{\Pb}(\loss(x,\xi))] d(\Pb' - \Pb)(\xi) \\
    &= 
      \Eb_{\Pb}(\loss(x,\xi))
      +
      \int_{\Sigma} [\loss(x,\xi)-\Eb_{\Pb}(\loss(x,\xi))] \left(\frac{d\Pb'}{d\Pb}-1\right)(\xi)d\Pb(\xi) \\
    &\leq 
      \Eb_{\Pb}(\loss(x,\xi))
      +
      \sqrt{\int_{\Sigma} \left(\loss(x,\xi)-\Eb_{\Pb}(\loss(x,\xi))\right)^2d\Pb(\xi) }
      \sqrt{\int_{\Sigma}\left(\frac{d\Pb'}{d\Pb}-1\right)^2d\Pb} \\
    &\leq 
      \Eb_{\Pb}(\loss(x,\xi))
      +
      \sqrt{\Var_{\Pb}(\loss(x,\xi))}
      \sqrt{2r}
  \end{align*}
  where the second equality is justified by $\int_{\Sigma}\Pb'-\Pb =0$ , the first inequality is by Cauchy-Schwartz and the last inequality uses the constraint verified by $\Pb'$ in \eqref{proof eq: SVP 1}. Hence, any feasible solution of the supremum problem \eqref{proof eq: SVP 1} has cost no larger than the RHS, which completes the proof.
\end{proof}
\begin{proof}[Proof of Proposition \ref{prop: robust predictor is DRO.}]
The proof follows immediately from the previous theorem by choosing $r=a_T/T$, $\varphi_x(\Pb) = \Pb' - \Pb$ and identifying the condition on $a_T/T$ such that $0\leq \Pb'(i)\leq 1$ for all $i$.
The identities of $\varphi_x$ follows from the following more general lemma.
\begin{lemma}\label{lemma: Cov expression}
Let $\Pb \in \cPin$ and $x_1,x_2 \in \cX$. We have
\begin{align*}
    c(x_1,\varphi_{x_2}(\Pb)) 
    &=
    \Cov_{\Pb}(\loss(x_1,\xi),\loss(x_2,\xi)) / \sqrt{\Var_{\Pb}(\loss(x_2,\xi))}
\end{align*}
where 
$\Cov_{\Pb}(\loss(x_1,\xi),\loss(x_2,\xi)):= \Eb(\loss(x_1,\xi)\loss(x_2,\xi)) - \mathbb{E}_{\Pb}(\loss(x_1,\xi))\mathbb{E}_{\Pb}(\loss(x_2,\xi))$ and $\varphi_{x_1}(\Pb)$, $\varphi_{x_2}(\Pb)$ are defined in Proposition \ref{prop: robust predictor is DRO.}.
\end{lemma}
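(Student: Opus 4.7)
The proof is a short, direct computation from the explicit formula for $\varphi$ given in Proposition \ref{prop: robust predictor is DRO.}. The plan is to expand $c(x_1, \varphi_{x_2}(\Pb))$ using linearity of the cost functional in its second argument and then recognize the resulting expression as the covariance divided by the standard deviation.

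First I would recall that the cost $c(x, \cdot)$ is defined as the linear expectation functional $c(x, \mu) = \sum_{i \in \Sigma} \loss(x, i) \mu(i)$, which extends naturally from probability distributions to arbitrary signed measures on $\Sigma$; this is the meaning of the notation $c(x_1, \varphi_{x_2}(\Pb))$, since $\varphi_{x_2}(\Pb) \in \mathcal{P}_{0,\infty}$ is a signed measure, not a probability distribution. Writing out the formula
\[
  \varphi_{x_2}(\Pb) \;=\; \frac{\loss(x_2,\cdot)\odot \Pb - c(x_2,\Pb)\,\Pb}{\sqrt{\Var_{\Pb}(\loss(x_2,\xi))}}
\]
from Proposition \ref{prop: robust predictor is DRO.}, I would then apply linearity of $c(x_1, \cdot)$ to obtain
\[
  c(x_1, \varphi_{x_2}(\Pb)) \;=\; \frac{c(x_1, \loss(x_2,\cdot)\odot \Pb) - c(x_2,\Pb)\, c(x_1, \Pb)}{\sqrt{\Var_{\Pb}(\loss(x_2,\xi))}}.
\]

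The next step is just pattern recognition: the Hadamard product means $c(x_1, \loss(x_2,\cdot)\odot \Pb) = \sum_{i}\loss(x_1,i)\loss(x_2,i)\Pb(i) = \Eb_{\Pb}(\loss(x_1,\xi)\loss(x_2,\xi))$, while $c(x_2, \Pb) = \Eb_{\Pb}(\loss(x_2,\xi))$ and $c(x_1, \Pb) = \Eb_{\Pb}(\loss(x_1,\xi))$. Plugging these in, the numerator becomes exactly $\Eb_{\Pb}(\loss(x_1,\xi)\loss(x_2,\xi)) - \Eb_{\Pb}(\loss(x_1,\xi))\Eb_{\Pb}(\loss(x_2,\xi)) = \Cov_{\Pb}(\loss(x_1,\xi), \loss(x_2,\xi))$, which yields the claimed identity.

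There is no real obstacle here, as the lemma is essentially a restatement of the covariance formula once $\varphi$ is substituted. The only minor subtlety is the case where $\Var_{\Pb}(\loss(x_2,\xi)) = 0$, in which $\varphi_{x_2}(\Pb)$ is defined only by convention in Proposition \ref{prop: robust predictor is DRO.} (as any direction on the boundary of the admissible ellipsoid), and both sides of the identity become ill-defined; the statement of the lemma should therefore implicitly exclude this degenerate case, or one verifies that by Cauchy--Schwarz the covariance vanishes too so that both sides can be interpreted as $0$ by continuity.
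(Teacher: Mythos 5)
Your proof is correct and follows essentially the same route as the paper's own argument: expand $\varphi_{x_2}(\Pb)$ from its definition, use linearity of $c(x_1,\cdot)$, and recognize the covariance. Your extra remark about the degenerate case $\Var_{\Pb}(\loss(x_2,\xi))=0$ is a reasonable point of care that the paper's proof does not address.
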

\begin{proof}
We have
\begin{align*}
    c(x_1,\varphi_{x_2}(\Pb)) 
    &=
    \frac{1}{\sqrt{\Var(\loss(x_2,\xi))}}
    \left(
        c(x_1,\loss(x_2,\cdot) \odot \Pb)
        -
        c(x_1,\Pb)c(x_2,\Pb)
    \right)\\
    &=
    \frac{1}{\sqrt{\Var(\loss(x_2,\xi))}}
    \left(
        \sum_{i=1}^{d} \loss(x_1,i)\loss(x_2,i)\Pb(i)
        -
        c(x_1,\Pb)c(x_2,\Pb)
    \right)\\
    &=
    \frac{1}{\sqrt{\Var(\loss(x_2,\xi))}}
    \left(
        \Eb_{\Pb}(\loss(x_1,\xi)\loss(x_2,\xi))
        -
        \Eb_{\Pb}(\loss(x_1,\xi))\Eb_{\Pb}(\loss(x_2,\xi)
    \right)\\
    &= \Cov_{\Pb}(\loss(x_1,\xi),\loss(x_2,\xi))
\end{align*}%
\end{proof}%
\end{proof}%

\subsubsection{Proof of Proposition \ref{prop: regularity of SVP}: Regularity of SVP}\label{Appendix: proof of SVP regularity}
\begin{proof}[Proof of Proposition \ref{prop: regularity of SVP}]
\textit{Unifrom boundedness:} $\loss(\cdot,\cdot)$ is bounded, therefore, both its expectation and variance are bounded. Moreover, $a_T/T \to 0$, hence $(a_T/T)_{T\geq 1}$ is uniformally bounded. The predictor $\cSVP$ is a sum and product of the expectation, the square root of the variance and $a_T/T$, and is therefore uniformally bounded. \textit{Equicontinuity:} It is clear that for each $T$, $\cSVP(\cdot,\cdot,T)$ is continuous as it is defined as the elementary composition of continuous functions. Let $\varepsilon>0$. Let $K>0$ be a bound on the standard deviation $(x,\Pb) \rightarrow \sqrt{\Var_{\Pb}(\ell(x,\xi))}$. Let $T_0 \in \integ$ be such that for all $T\geq T_0$, $\sqrt{a_T/T} \leq \varepsilon /(4K)$. Denote $D$ the distance compatible with the product topology of $\cX \times \cP$. Let $\delta>0$ be such that for all $x_1,x_2 \in \cX$ and $\Pb_1,\Pb_2 \in \cP$ such that $D((x_1,\Pb_1),(x_2,\Pb_2))\leq \delta$, we have  $|c(x_1,\Pb_1) - c(x_2,\Pb_2)| \leq \varepsilon/2$ and $|\cSVP(x_1,\Pb_1,T) - \cSVP(x_2,\Pb_2,T)| \leq \varepsilon$ for all $T < T_0$. Such $\delta$ exists as the finite number of functions $c, \cSVP(\cdot,\cdot, 1), \ldots, \cSVP(\cdot,\cdot, T_0 -1)$ are continuous. For all $x_1,x_2 \in \cX$ and $\Pb_1,\Pb_2 \in \cP$ such that $D((x_1,\Pb_1),(x_2,\Pb_2))\leq \delta$, we have for $T\geq T_0$
\begin{align*}
    |\cSVP(x_1,\Pb_1,T) - \cSVP(x_2,\Pb_2,T)|
    &\leq
    |c(x_1,\Pb_1)-c(x_2,\Pb_2)|
    +
    \sqrt{\frac{a_T}{T}} \left| \sqrt{\Var_{\Pb_1}(\loss(x_1,\xi))} 
    -
    \sqrt{\Var_{\Pb_2}(\loss(x_2,\xi))} \right| \\
    &\leq 
    \varepsilon/2 + \frac{\varepsilon}{4K} 2K
    = \varepsilon.
\end{align*}
Hence, for all $T \in \integ$, $|\cSVP(x_1,\Pb_1,T) - \cSVP(x_2,\Pb_2,T)| \leq \varepsilon$ which proves the equicontinuity. \textit{Differentiable:} Both the expectation and the variance are infinitely differentiable in $\Pb$, therefore, $\cSVP$ is infinitely differentiable in $\Pb$. Hence, $\cSVP$ is differentiable with continuous derivatives. \textit{Regularity of derivatives:} We have $\nabla \cSVP(x,\cdot,T)(\Pb) = \ell_x + \sqrt{a_T/T} \nabla \sqrt{\Var_{(\cdot)}(\ell(x,\xi))} (\Pb)$ for all $x\in \cX$ and $T\in \integ$, where $\ell_x$ is the vector $(\ell(x,1),\ldots,\ell(x,d))^\top$. The function $\Pb \rightarrow \nabla \sqrt{\Var_{(\cdot)}(\ell(x,\xi))} (\Pb)$ is continuous and therefore bounded on the compact $\cP$, hence, we can apply the same proof as of boundedness and equicontinuity of $\cSVP$ to get boundedness and equicontinuity of the derivative $\Pb \rightarrow \nabla \cSVP(x,\cdot,T)(\Pb)$.
\end{proof}

\subsubsection{Proof of Claim \ref{claim: set inclusions for Gammas}: Convergence of Ellipsoids}\label{Appendix: Convergence of Ellipsoids}
\begin{proof}[Proof of Claim \ref{claim: set inclusions for Gammas}]
We show that $\sqrt{1-\varepsilon_T}\Gamma^c \subset \Gamma_T^c \subset \sqrt{1+\varepsilon_T}\Gamma^c$.

Notice first that $\Gamma^c$ and $\Gamma_T^c$ are bounded in infinity norm. In fact, $\Gamma^c$ is a bounded ellipsoid, therefore, there exists $B>0$ such that $\|\Delta\|_{\infty}\leq B$ for all $\Delta \in \Gamma$. Moreover, for all $T \in \integ$ and $\Delta \in \Gamma_T^c$. 
\begin{align*}
    1 &\geq 
    \frac{1}{2}\sum_{i\in \Sigma}\frac{\Delta_i^2}{\Pb(i) + \Delta_i \sqrt{a_T/T}} 
    \geq  
    \frac{1}{2}\sum_{i\in \Sigma}\frac{\Delta_i^2}{1} 
    \geq 
    \frac{1}{2} \|\Delta\|_{\infty}^2.
\end{align*}

where we used $\Delta \in \Gamma_T^c \implies \sqrt{a_T/T}\Delta \in \cP_0(\Pb) \implies \Pb + \sqrt{a_T/T}\Delta \in \cP \implies  \|\Pb + \sqrt{a_T/T}\Delta\|_{\infty} \leq 1$.
Hence, $\|\Delta\|_{\infty} \leq 2$ independently of $T$. Let $K=\max(B,2)$. Notice that all elements of $\Gamma^c$ and $\Gamma^c_T$ are bounded in infinity norm by $K$.
Let $\varepsilon_T = (K\sqrt{a_T/T})/\min_{i\in \Sigma} \Pb(i)$ for all $T$. We start by showing the second inclusion. Let $\Delta \in \Gamma^c_T$. We have
\begin{align*}
    1 \geq 
    \frac{1}{2}\sum_{i\in \Sigma}\frac{\Delta_i^2}{\Pb(i) + \Delta_i\sqrt{a_T/T}}
    &\geq 
    \frac{1}{2}\sum_{i\in \Sigma}\frac{\Delta_i^2}{\Pb(i) +  K\sqrt{a_T/T}}\\
    &= 
    \frac{1}{2}\sum_{i\in \Sigma}\frac{\Delta_i^2}{\Pb(i) + \varepsilon_T \min_{j\in \Sigma} \Pb(j)}\\
    & \geq 
    \frac{1}{2}\sum_{i\in \Sigma}\frac{\Delta_i^2}{\Pb(i) + \varepsilon_T \Pb(i)}  \\
    &=
    \frac{1}{2} \frac{1}{1 + \varepsilon_T}\sum_{i\in \Sigma}\frac{\Delta_i^2}{\Pb(i)}
\end{align*}
This implies that $\frac{1}{\sqrt{1 + \varepsilon_T}} \Delta \in \Gamma^c$. We have shown therefore that $\Gamma^c_T \subset (\sqrt{1 + \varepsilon_T})\Gamma^c$. 

We now show the first inclusion. Let $\Delta \in \mathcal{P}_{0,\infty}$ such that $\Delta \in \sqrt{1-\varepsilon_T}\Gamma^c$. We have
\begin{align*}
    1 &\geq 
    \frac{1}{2}\sum_{i\in \Sigma}
    \frac{\Delta_i^2}
    {(1-\varepsilon_T)\Pb(i)}\\
    &= 
    \frac{1}{2}\sum_{i\in \Sigma}\frac{\Delta_i^2}{\Pb(i) - (K\Pb(i)\sqrt{a_T/T}) /\min_{j\in \Sigma} \Pb(j)}\\
    & \geq 
    \frac{1}{2}\sum_{i\in \Sigma}\frac{\Delta_i^2}{\Pb(i) + \Delta_i\sqrt{a_T/T}}
\end{align*}
To complete the proof of the inclusion in $\Gamma_T^c$, it remains to prove that $\Delta \in \sqrt{T/a_T}\mathcal{P}_0(\Pb)$. As $\Pb \in \cPin$, $\mathcal{P}_0(\Pb)$ contains a non empty ball around $0\in \mathcal{P}_0(\Pb)$ for the norm infinity in the topology of $\cP$. Furthermore, $T/a_T \to \infty$, therefore, there exists $T_1 \in \integ$ such that for all $T\geq T_1$, $\sqrt{T/a_T}\mathcal{P}_0(\Pb)$ contains the ball of norm infinity, around $0$ of radius $K$. Hence, as $\|\Delta\|_{\infty} \leq K$, independently of $T$, for all $T\geq T_1$, $\Delta \in \sqrt{T/a_T}\mathcal{P}_0(\Pb)$. This completes the proof of the inclusion.
\end{proof}

\subsubsection{Omitted proofs in the proof of Proposition \ref{prop: feasibility of robust predictor}: Strong Optimality}\label{Appendix: proof of long claim storng optimality pred}
\begin{proof}[Proof of Claim \ref{claim: unif conv.}]
Let 
$$R_{\text{inf}}(\hat{c},x_0,\Pb_0) =\liminf_{T\to\infty} \frac{1}{\sqrt{\alpha_T}}(\hat{c}(x_0,\Pb_0,T)-c(x_0,\Pb_0))$$
and
$$R_{\text{sup}}(\cSVP,x_0,\Pb_0) = \lim_{T\to\infty} \frac{1}{\sqrt{\alpha_T}}|\cSVP(x_0,\Pb_0,T)-c(x_0,\Pb_0)|.$$
Let $\varepsilon = (R_{\text{inf}}(\cSVP, x_0, \Pb_0) - R_{\text{sup}}(\hat c, x_0, \Pb_0)) /4 >0$ which is positive by assumption of the proof. We have
$R_{\text{inf}}(\hat c, x_0, \Pb_0) +\varepsilon 
<
R_{\text{sup}}(\cSVP, x_0, \Pb_0)$.
As $R_{\text{sup}}(\cSVP, x_0, \Pb_0) = \lim_{T\to\infty} \frac{1}{\sqrt{\alpha_T}}|\cSVP(x,\Pb,T)-c(x,\Pb)| = \sqrt{\Var_{\Pb}(\loss(x, \xi))}$ there exists a sub-sequence $(l_T)_{T \geq 1}$ (corresponding to the inferior limit in the definition of $R_{\text{inf}}(\hat c, x_0, \Pb_0)$) such that for all $T \in \integ$,
$$
\frac{1}{\sqrt{\alpha_{l_T}}}(\hat{c}(x_0,\Pb_0,l_T) -c(x_0,\Pb_0))
+ \varepsilon 
\leq 
\frac{1}{\sqrt{\alpha_{l_T}}}|\cSVP(x_0,\Pb_0,l_T) -c(x_0,\Pb_0)|.
$$
We can drop the absolute values in the right hand-side as the SVP predictor is always greater than the true cost by construction. Therefore, for all $T \in \integ$
\begin{equation}\label{eq: claim epsilon gap.}
\hat{c}(x_0,\Pb_0,l_T)  
+ \varepsilon \sqrt{\alpha_{l_T}} 
\leq \cSVP(x_0,\Pb_0,l_T).
\end{equation}

We now consider the subsequence of $(\hat{c}(x_0,\cdot,l_T))_{T\geq 1}$ which enjoys the desired regularity the complete the analysis. 
This sequence is equicontinuous and uniformly bounded (as $\hat{c} \in \cC$), therefore, by Arzelà–Ascoli theorem (Theorem \ref{thm: Arzela–Ascoli}), there exists a sub-sequence of $(\hat{c}(x_0,\cdot,l_T))_{T\geq 1}$ that converges uniformly.
This sub-sequence, in turn, has equicontinuous and uniformally bounded sequence of gradients in $\Pb$, as $\hat{c} \in \cC$. By Arzelà–Ascoli theorem, we can extract a sub-sequence such that the corresponding sequence of gradients of this sub-sequence converges uniformly.
Denote by $(t_T)_{T\geq 1}$ the corresponding sequence of indices of this sub-sequence, and $\hat{c}_{\infty}(x_0,\cdot)$ the limit of $(\hat{c}(x_0,\cdot,t_T))_{T\geq 1}$. Uniform convergence of the derivatives further imply that $(\nabla\hat{c}(x_0,\cdot,t_T))_{T\geq 1}$ converges uniformly to $\nabla \hat{c}_{\infty}(x_0,\cdot)(\cdot): \cP \rightarrow \Re^{d}$.

For all $x,\Pb,T$, let $\hat{\delta}(x,\Pb,T) = \hat{c}(x,\Pb,T) - c(x,\Pb)$ and $\hat{\delta}_{\infty}(x,\Pb) = \hat{c}_{\infty}(x,\Pb) - c(x,\Pb)$ its limit. Similarly, denote $\deltaSVP(x,\Pb,T) = \cSVP(x,\Pb,T) - c(x,\Pb)$.
We consider two cases.

\textbf{Case I:} Consider the case where $\nabla \hat{\delta}_{\infty}(x_0,\cdot)(\Pb_0) \neq 0$. \\
$(\hat{\delta}(\cdot,\cdot,t_T))_{T\geq1}$ clearly inherits the regularity properties of $(\hat{c}(\cdot,\cdot,t_T))_{T\geq 1}$ (Definition \ref{def: regular pred}).
By uniform convergence of $(\nabla \hat{\delta}(x_0,\cdot,t_T))_{T\geq 1}$, there exists $T_0$ such that
\begin{equation}\label{proof eq: gradient ineq 2}
   \|\nabla \hat{\delta}(x_0,\cdot,t_T)(\Pb)
-
\nabla \hat{\delta}_{\infty}(x_0,\cdot)(\Pb) \|
\leq \|\nabla \hat{\delta}_{\infty}(x_0,\cdot)(\Pb_0)\|/4, 
\quad \forall \Pb \in \cPin, \; \forall T>T_0.   
\end{equation}
By continuity of $\nabla \hat{\delta}_{\infty}(x_0,\cdot)$ (inherited from the continuity of $\nabla \hat{\delta}(x_0,\cdot,t_T)$ by the uniform convergence), 
there exists an open ball $\mathcal{B}(\Pb_0,r_0)$ around $\Pb_0$ such that for all $\Pb \in \mathcal{B}(\Pb_0,r_0)$,
$\|\nabla \hat{\delta}_{\infty}(x_0,\cdot)(\Pb)
-
\nabla \hat{\delta}_{\infty}(x_0,\cdot)(\Pb_0) \|
\leq \|\nabla \hat{\delta}_{\infty}(x_0,\cdot)(\Pb_0)\|/4$, which implies with the previous inequality \eqref{proof eq: gradient ineq 2}
\begin{equation}\label{proof eq: gradient ineq 1}
\|\nabla \hat{\delta}(x_0,\cdot,t_T)(\Pb) 
-
\nabla \hat{\delta}_{\infty}(x_0,\cdot)(\Pb_0) \|
\leq 
\|\nabla \hat{\delta}_{\infty}(x_0,\cdot)(\Pb_0)\|/2, 
\quad 
\forall \Pb \in \mathcal{B}(\Pb_0,r_0),
\; \forall T \geq T_0
.
\end{equation}

Choose now $\Pb_1 = \Pb_0 - r_0 \frac{\nabla \hat{\delta}_{\infty}(x_0,\cdot)(\Pb_0)}{ \|\nabla \hat{\delta}_{\infty}(x_0,\cdot)(\Pb_0)\|}$.
Notice that as the gradient is of a function defined on the simplex $\cP$, we can chose $r_0$ small enough such that $\Pb_1\in \cPin$. 
Using the mean value theorem, for all $T\geq T_0$, there exists 
$\Pb'_T\in [\Pb_0,\Pb_1] \subset \mathcal{B}(\Pb_0,r_0)$ such that 
\begin{align*}
& \hat{\delta}(x_0,\Pb_1,t_T) - \hat{\delta}(x_0,\Pb_0,t_T)\\
=&
\nabla \hat{\delta}(x_0,\cdot,t_T)(\Pb'_T)^\top (\Pb_1-\Pb_0) \\
=& -r_0  
\frac{
    \nabla \hat{\delta}_{\infty}(x_0,\cdot)(\Pb_0)^\top
    \nabla \hat{\delta}(x_0,\cdot,t_T)(\Pb'_T)}
    { \|\nabla \hat{\delta}_{\infty}(x_0,\cdot)(\Pb_0)\|} \\
=&
-r_0  
\frac{
    \|\nabla \hat{\delta}_{\infty}(x_0,\cdot)(\Pb_0)\|^2 
    + \nabla \hat{\delta}_{\infty}(x_0,\cdot)(\Pb_0)^\top
        (\nabla \hat{\delta}(x_0,\cdot,t_T)(\Pb'_T)
        -
        \nabla \hat{\delta}_{\infty}(x_0,\cdot)(\Pb_0))
    }
    { \|\nabla \hat{\delta}_{\infty}(x_0,\cdot)(\Pb_0)\|}
\end{align*}
Using Cauchy-Schwarz and then inequality \eqref{proof eq: gradient ineq 1}, we can bound the previous term as
\begin{align*}
&\leq 
-r_0  
\frac{
    \|\nabla  \hat{\delta}_{\infty}(x_0,\cdot)(\Pb_0)\|^2 
    -  \|\nabla \hat{\delta}_{\infty}(x_0,\cdot)(\Pb_0)\|
        \|\nabla \hat{\delta}(x_0,\cdot,t_T)(\Pb'_T)
        -
        \nabla \hat{\delta}_{\infty}(x_0,\cdot)(\Pb_0)\|
    }
        { \|\nabla \hat{\delta}_{\infty}(x_0,\cdot)(\Pb_0)\|} \\ 
&\leq 
-r_0  
\frac{
    \|\nabla \hat{\delta}_{\infty}(x_0,\cdot)(\Pb_0)\|^2 
    -  \|\nabla\hat{\delta}_{\infty}(x_0,\cdot)(\Pb_0)\| \frac{1}{2}\|\nabla\hat{\delta}_{\infty}(x_0,\cdot)(\Pb_0)\|
    }
        { \|\nabla \hat{\delta}_{\infty}(x_0,\cdot)(\Pb_0)\|} \\ 
&=
-\frac{r_0}{2}  
\|\nabla\hat{\delta}_{\infty}(x_0,\cdot)(\Pb_0)\|
\end{align*}
Hence we get for all $T\geq T_0$
\begin{equation}\label{proof eq: opt-pred: first case ineq}
\hat{\delta}(x_0,\Pb_1,t_T) - \hat{\delta}(x_0,\Pb_0,t_T)
\leq 
-\frac{r_0}{2}  
\|\nabla\hat{\delta}_{\infty}(x_0,\cdot)(\Pb_0)\| 
:= -\tilde{\varepsilon} <0
\end{equation}

Inequality \eqref{eq: claim epsilon gap.} implies that $\lim_{T\to \infty}\hat{\delta}(x_0,\Pb_0,t_T) = 0$ and the consistency of the predictor $\cSVP$ ensures that $\lim_{T\to \infty}\deltaSVP(x_0,\Pb_1,t_T)=0$.

Hence there exists $T_1 \geq T_0$ such that for all $T\geq T_1$,
$\hat{\delta}(x_0,\Pb_0,t_T) - \deltaSVP(x_0,\Pb_1,t_T) \leq \tilde{\varepsilon}/2$. For $T\geq T_1$, using successively inequality \eqref{proof eq: opt-pred: first case ineq} and the previous inequality, we have
\begin{align*}
    \hat{c}(x_0,\Pb_1,t_T) 
    &= c(x_0,\Pb_1) + \hat{\delta}(x_0,\Pb_1,t_T)\\
    &\leq c(x_0,\Pb_1) + \hat{\delta}(x_0,\Pb_0,t_T) -\tilde{\varepsilon} \\
    &= \cSVP(x_0,\Pb_1,t_T) - \deltaSVP(x_0,\Pb_1,t_T) + \hat{\delta}(x_0,\Pb_0,t_T) -\tilde{\varepsilon}\\
    & \leq \cSVP(x_0,\Pb_1,t_T) - \tilde{\varepsilon}/2
\end{align*}
Using the equicontinuity of $\hat{c}$ and $\cSVP$ in $\Pb_1$, the previous inequality implies that there exists an open ball $\mathcal{B}(\Pb_1,r_1)$, with $r_1<r_0$, such that for all $\Pb \in \mathcal{B}(\Pb_1,r)$ and $T\geq T_1$, 
$\hat{c}(x_0,\Pb,t_T) \leq \cSVP(x_0,\Pb,t_T) - \tilde{\varepsilon}/4$. This inequality is stronger than the desired result. In fact, fix $\varepsilon'>0$ and let $r_2<r_1$ and $T_2>T_1$ such that for all $T\geq T_2$, $\tilde{\varepsilon}/4 \geq \varepsilon \sqrt{\alpha_{t_T}} - \varepsilon' r_2$. We have for all $\Pb \in \mathcal{B}(\Pb_1,r')$ and $T\geq T_2$, $\hat{c}(x_0,\Pb,t_T) \leq \cSVP(x_0,\Pb,t_T) - \tilde{\varepsilon}/4 \leq \cSVP(x_0,\Pb,t_T) - \varepsilon \sqrt{\alpha_{t_T}} + \varepsilon' \|\Pb - \Pb_1\|$  which is the desired result.

\textbf{Case II:} We now turn to the case where $\nabla \hat{\delta}_{\infty}(x_0,\cdot)(\Pb_0) = 0$.\\
We show the result with $\Pb_1:=\Pb_0$. Fix $\varepsilon'>0$. Notice that for all $\Pb \in \cPin$ and $T \in \integ$, $\nabla \deltaSVP(x_0,\cdot,t_T)(\Pb) = \sqrt{\alpha_{t_T}} \nabla\sqrt{\Var_{(\cdot)}(\loss(x_0,\xi))}(\Pb)$ and the gradient of the standard deviation is bounded, therefore, the sequence $(\nabla \deltaSVP(x_0,\cdot,t_T)(\cdot))_{T\geq 1}$ converges uniformly to zero. Hence, there exists $r>0$ and $T_0\in \integ$ such that for all $\Pb \in \mathcal{B}(\Pb_0,r)$, for all $T\geq T_0$, we have 
$\|\nabla \hat{\delta}(x_0,\cdot,t_T)(\Pb)\| \leq \varepsilon'/2$ 
and 
$\|\nabla \deltaSVP(x_0,\cdot,t_T)(\Pb)\| \leq \varepsilon'/2$.
Using successively the mean value theorem and Cauchy-Schwarz on $\hat{\delta}$, inequality \eqref{eq: claim epsilon gap.}, then the mean value theorem for $\deltaSVP$, we have for all $\Pb \in \mathcal{B}(\Pb_0,r)$ and $T \geq T_0$
\begin{align*}
    \hat{\delta}(x_0,\Pb,t_T) 
    &\leq \hat{\delta}(x_0,\Pb_0,t_T) + \varepsilon'/2 \|\Pb -\Pb_0\| \\
    &\leq \deltaSVP(x_0,\Pb_0,t_T) - \varepsilon\sqrt{\alpha_{t_T}} + \varepsilon'/2 \|\Pb -\Pb_0\| \\
    & \leq \deltaSVP(x_0,\Pb,t_T) - \varepsilon\sqrt{\alpha_{t_T}} + \varepsilon' \|\Pb -\Pb_0\|
\end{align*}
which implies directly the desired result, with $\Pb_1 := \Pb_0$, by adding $c(x_0,\Pb)$ on both sides. Notice finally than we can assume WLOG that the result is true for all $T\in \integ$ (and not starting at some threshold $T_0$) as we can simply appropriately modify the sequence $(t_T)_{T\geq 1}$.
\end{proof}

\subsubsection{Proof of Proposition \ref{prop: predictor finite sample guarantees.}: SVP finite sample guarantees}\label{Appendix: proof of finite guarantees}
\begin{proof}[Proof of Proposition \ref{prop: predictor finite sample guarantees.}]
Let us prove the second result. Let $\delta>0$. Bennett's inequality (\cite{hoeffding1994probability}, Theorem 3 in \cite{maurer2009empirical}) implies that with probability at least $1-\delta$
$$
\Eb_{\Pb}(\loss(x,\xi)) - \Eb_{\hat{\Pb}_T}(\loss(x,\xi))
\geq
-\sqrt{\frac{2\ln 1/\delta}{T}\Var_{\Pb}(\loss(x,\xi))} 
-\frac{K\ln 1/\delta}{3T}.
$$
Hence,
\begin{align*}
\Eb_{\Pb}(\loss(x,\xi)) - &\Eb_{\hat{\Pb}_T}(\loss(x,\xi)) - 
 \sqrt{\frac{2\ln 1/\delta}{T}\Var_{\hat{\Pb}_T}(\loss(x,\xi))}
\geq \\
&-\sqrt{\frac{2\ln 1/\delta}{T}\Var_{\Pb}(\loss(x,\xi))} 
- \sqrt{\frac{2\ln 1/\delta}{T}\Var_{\hat{\Pb}_T}(\loss(x,\xi))}
- \frac{K\ln 1/\delta}{3T}.
\end{align*}
Theorem 10 in \cite{maurer2009empirical} implies that with probability at least $1-\delta$
$$
\sqrt{\Var_{\hat{\Pb}_T}(\loss(x,\xi))} 
\leq 
\sqrt{\Var_{\Pb}(\loss(x,\xi))} + \sqrt{\frac{2K\ln1/\delta}{T}}.
$$
Hence, with probability at least $1-2\delta$,
$$
\Eb_{\Pb}(\loss(x,\xi)) - \Eb_{\hat{\Pb}_T}(\loss(x,\xi)) - \sqrt{\frac{2\ln 1/\delta}{T}\Var_{\hat{\Pb}_T}(\loss(x,\xi))}
\geq
-\sqrt{\frac{8\ln 1/\delta}{T}\Var_{\Pb}(\loss(x,\xi))} 
- \frac{7K\ln 1/\delta}{3T}.
$$
The result follows by chosing $\ln \delta = a_T$ and substituting $c$ and $\cSVP$ by their expressions.

The first inequality follows from the same arguments. Notice that we can directly get a similar result to the first inequality from Theorem 1 in \cite{audibert2009exploration}. Theorem 4 in \cite{maurer2009empirical} is also a similar result with the non-biased empirical variance.
\end{proof}

{\color{black}
\subsubsection{Proof Proposition \ref{prop: KL equiv to SVP}: Approximation of the KL predictor by the $\chi^2$ predictor}\label{App: proof of KL=SVP}
\begin{proof}[Proof Proposition \ref{prop: KL equiv to SVP}]
Denote $E^{\text{KL}}_T = \{ \Pb' \in \cP \; : \; I(\Pb,\Pb') \leq \frac{a_T}{T} \}$
and $E^{\chi^2}_T = \{ \Pb' \in \cP \; : \; \|\Pb' - \Pb \|_{\Pb}^2 \leq \frac{a_T}{T}\}$. \\
Denote $\mathcal{E}^{\text{KL}}_T = \sqrt{a_T/T}(E^{\text{KL}}_T - \Pb) = \{\Delta \in \sqrt{T/a_T} \mathcal{P}_0(\Pb) : I(\Pb, \Pb+\sqrt{a_T/T}\Delta) \leq a_T/T\}$, where $\mathcal{P}_0(\Pb) = \set{\Pb - \Pb'}{\Pb' \in \mathcal{P}}$.\\
Similarly, denote
$\mathcal{E}^{\chi^2}_T 
=
\sqrt{a_T/T}(E^{\chi^2}_T - \Pb) 
=
\{\Delta \in \sqrt{T/a_T} \mathcal{P}_0(\Pb) : \|\Delta\|_{\Pb}^2 \leq 1\}$. To simplify notations, denote $\alpha_T = \sqrt{a_T/T}$ for all $T \in \integ$.

Notice first that $\bigcup_{T\geq 1}\mathcal{E}^{\text{KL}}_T$ is bounded. Indeed, for $T\in \integ$ and $\Delta \in \mathcal{E}^{\text{KL}}_T$, we have using Pinsker's inequality
\begin{align*}
    \alpha_T^2 \geq I(\Pb, \Pb+\alpha_T\Delta) 
    \geq \frac{1}{2} \|\Pb -  \Pb-\alpha_T\Delta \|_1^2 = \frac{1}{2}\alpha_T^2 \|\Delta \|_1^2
\end{align*}
Hence, $\|\Delta \|_1 \leq \sqrt{2}$. Similarly, $\bigcup_{T\geq 1}\mathcal{E}^{\chi^2}_T$ is clearly bounded as it is included in the ellipsoid $\{\Delta \in \mathcal{P}_0(\Pb) : \|\Delta\|_{\Pb}^2 \leq 1\}$.

Let $\Delta \in \bigcup_{T\geq 1}\mathcal{E}^{\text{KL}}_T \cup \bigcup_{T\geq 1}\mathcal{E}^{\chi^2}_T$.
Using a Taylor expansion of the log, we have
\begin{align*}
    I(\Pb, \Pb+\alpha_T\Delta)
    &=
    \sum_{i \in \Sigma} \Pb(i) \log\left( \frac{\Pb(i)}{\Pb(i)+\alpha_T\Delta_i}\right)\\
    &=
    -\sum_{i \in \Sigma} \Pb(i) \log\left( 1+ \frac{\alpha_T\Delta_i}{\Pb(i)}\right)\\
    &= 
    -\sum_{i \in \Sigma} \Pb(i) 
    \left(\frac{\alpha_T\Delta_i}{\Pb(i)} - \frac{1}{2}\left(\frac{\alpha_T\Delta_i}{\Pb(i)}\right)^2 + o( \alpha_T^2)\right)\\
    &= 
    -\sum_{i \in \Sigma} 
    \alpha_T\Delta_i - \frac{1}{2}\frac{\left(\alpha_T\Delta_i\right)^2}{\Pb(i)} + o( \alpha_T^2)\\
    &= 
    \|\alpha_T \Delta \|_{\Pb}^2 + o( \alpha_T^2)
\end{align*}
where the asymptotic notation $o$ is independent of $\Delta \in \bigcup_{T\geq 1}\mathcal{E}^{\text{KL}}_T \cup \bigcup_{T\geq 1}\mathcal{E}^{\chi^2}_T$, as $\bigcup_{T\geq 1}\mathcal{E}^{\text{KL}}_T \cup \bigcup_{T\geq 1}\mathcal{E}^{\chi^2}_T$ is bounded. Hence, there exists a sequence $(\varepsilon_T)_{\geq 1}$, independent on $\Delta$, converging to $0$, such that 
$$
(1- \varepsilon_T)\alpha_T^2 \|\Delta \|_{\Pb}^2 
\leq
I(\Pb, \Pb+\alpha_T\Delta) 
\leq (1+ \varepsilon_T)\alpha_T^2 \|\Delta \|_{\Pb}^2 
$$
for all $\Delta \in \bigcup_{T\geq 1}\mathcal{E}^{\text{KL}}_T \cup \bigcup_{T\geq 1}\mathcal{E}^{\chi^2}_T$.
Let $T_0$ such that for all $T\geq T_0$, $\varepsilon_T < 1$. Then we can write 
$$
\alpha_T^2 \|\sqrt{1- \varepsilon_T}\Delta \|_{\Pb}^2
\leq
I(\Pb, \Pb+\alpha_T\Delta)
\leq 
\alpha_T^2 \|\sqrt{1+ \varepsilon_T}\Delta \|_{\Pb}^2.
$$
Hence $ \sqrt{1- \varepsilon_T} \mathcal{E}^{\text{KL}}_T \subset  \mathcal{E}^{\chi^2}_T \subset \sqrt{1+ \varepsilon_T} \mathcal{E}^{\text{KL}}_T$.

Let us now show the asymptotic equality of the DRO formulations. Let $\varepsilon'_T = \sqrt{1+\varepsilon_T}-1$ for all $T\geq T_0$. We have $\varepsilon'_T \to 0$ and $\mathcal{E}^{\chi^2}_T \subset (1+ \varepsilon'_T)\mathcal{E}^{\text{KL}}_T $.
We have for all $T\geq T_0$
\begin{align*}
\sup_{\Pb' \in \cP} \left\{ c(x,\Pb+\alpha_T \Delta) \; : \;  \|\Pb' - \Pb \|_{\Pb}^2 \leq \alpha_T^2 \right\}
&=
\sup \left\{ c(x,\Pb+\alpha_T \Delta) \; : \;  \Delta \in \mathcal{E}^{\chi^2}_T \right\}\\
&\leq 
\sup \left\{ c(x,\Pb+\alpha_T \Delta) \; : \;  \frac{1}{1+ \varepsilon'_T}\Delta \in \mathcal{E}^{\text{KL}}_T\right\}\\
&= 
\sup \left\{ c(x,\Pb+\alpha_T (1+ \varepsilon'_T)\Delta) \; : \;  \Delta \in \mathcal{E}^{\text{KL}}_T \right\}\\
&= 
\sup \left\{ c(x,\Pb+\alpha_T \Delta) + \varepsilon'_T\alpha_T c(x, \Delta) \; : \;  \Delta \in \mathcal{E}^{\text{KL}}_T \right\}\\
&= 
\sup \left\{ c(x,\Pb+\alpha_T \Delta) + \varepsilon'_T\alpha_T \mathcal{O}(1) \; : \;  \Delta \in \mathcal{E}^{\text{KL}}_T \right\}\\
&=
\sup \left\{ c(x,\Pb+\alpha_T \Delta) \; : \;  \Delta \in \mathcal{E}^{\text{KL}}_T \right\} + o(\alpha_T)\\
&=
\sup_{\Pb' \in \cP} \left\{ c(x,\Pb') \; : \; I(\Pb,\Pb') \leq \alpha_T^2 \right\} + o(\alpha_T)
\end{align*}
where the assymptotic $o$ is uniform in $x$, as $\mathcal{E}^{\chi^2}_T$ is bounded, and $\ell$ is uniformally bounded. We can show similarly the reverse inequality.

Finally, it follows immediately that $\cKL' \equiv \cSVP$ as for all $x \in \cX$ and $\Qb \in \cPin$
$$
\frac{\cKL'(x,\Qb,T) - c(x,\Qb)}{\cSVP(x,\Qb,T) - c(x,\Qb)}
=
\frac{\cSVP(x,\Qb,T) - c(x,\Qb) + o(\sqrt{a_T/T})}{\cSVP(x,\Qb,T)- c(x,\Qb)}
=
\frac{\sqrt{(a_T/T)\Var_{\Qb}(\loss(x,\xi))} + o(\sqrt{a_T/T})}{\sqrt{(a_T/T)\Var_{\Qb}(\loss(x,\xi))}} = 1 + o(1)
$$
\end{proof}

\subsubsection{Proof of Proposition \ref{prop: KL feasibility in subexp}: Feaibility of the KL predictor in the subexponential regime}\label{App: Proof of KL feasibility in subexp}
\begin{proof}[Proof of Proposition \ref{prop: KL feasibility in subexp}]
    We will follow closely the proof of Proposition \ref{prop: feasibility of robust predictor}. Let $\Pb \in \cPin$ and $x \in \mathcal{X}$. Observe that for all $T\geq 1$
  \[
    \hat{\Pb}_T \in E_T :=\set{\Pb' \in \mathcal{P}}{I(\Pb',\Pb)\leq \frac{a_T}{T}} 
    \implies c(x,\Pb) \leq \cKL'(x,\hat{\Pb}_T,T).
  \]
  Hence, we have
  \begin{align*}
    \limsup_{T\to\infty}
    \frac{1}{a_T} \log \Pb^{\infty}
        \left(
                c(x,\Pb) > \cKL'(x,\hat{\Pb}_T,T)
        \right)
    \leq& 
        \limsup_{T\to\infty}
    \frac{1}{a_T} \log \Pb^{\infty}
    \left(
      \hat{\Pb}_T \not\in E_T
          \right)\\
    \leq & \limsup_{T\to\infty}
    \frac{1}{a_T} \log \Pb^{\infty}
    \left(
      \hat{\Pb}_T - \Pb \in \sqrt{ \frac{a_T}{T}}\Gamma_T
          \right)           
  \end{align*}
  where $\Gamma_T := \sqrt{T/a_T} (E_T^c -\Pb) = \{\Delta \in \sqrt{T/a_T} \mathcal{P}_0(\Pb) : I(\Pb+\sqrt{a_T/T}\Delta, \Pb)>a_T/T\}$, $\mathcal{P}_0(\Pb) = \set{\Pb - \Pb'}{\Pb' \in \mathcal{P}}$. Similar to the proof of Proposition \ref{prop: feasibility of robust predictor}, we show that $\Gamma_T$ is asymptotically close to $\Gamma := \set{\Delta \in \mathcal{P}_{0,\infty}}{\norm{\Delta}^2_{\Pb} > 1}$ where $\mathcal{P}_{0,\infty}:= \{ \Delta \in \Re^d\; : \; e^\top \Delta =0 \}$.
  
\begin{claim}\label{claim: KL = Pearson}
There exists a sequences $(\varepsilon_T)_{T\geq 1}\in \Re^{\integ}_+$ decreasing to $0$ and $T_0\in \integ$ such that $\sqrt{1-\varepsilon_T}\Gamma^c_T 
\subset \Gamma^c
\subset \sqrt{1+\varepsilon_T}\Gamma^c_T$ for all $T \geq T_0$. Furthermore, $(\Gamma^c_T)_{T\geq 1}$ are uniformally bounded.
\end{claim}
\begin{proof}
To simplify notations, denote $\alpha_T = \sqrt{a_T/T} \xrightarrow[T \to \infty]{} 0$.
    Notice first that $\Gamma^c$ and $\bigcup_{T\geq 1}\Gamma_T^c$ are bounded.  In fact, $\Gamma^c$ is a bounded ellipsoid. For all $\Delta \in \Gamma_T^c$, $T \in \integ$, we have
    \begin{align*}
    \alpha_T^2 \geq
    I(\Pb+\alpha_T \Delta,\Pb)
    &\geq 
    \frac{1}{2} \|\Pb+\alpha_T \Delta - \Pb \|_1^2
    =
    \frac{1}{2} \|\alpha_T \Delta\|_1^2
    \end{align*}
    where we used here Pinsker’s inequality between the KL divergence and the total variations distance. This implies that $\|\Delta\|_1 \leq \sqrt{2}$.

    Using a Taylor expansion of the $\log$, we get 
    \begin{align*}
        I(\Pb+\alpha_T \Delta,\Pb)
        &=
        \sum_{i \in \Sigma} (\Pb(i)+\alpha_T \Delta_i) \log\left( \frac{\Pb(i)+\alpha_T \Delta_i}{\Pb(i)}\right) \\
        &=
        \sum_{i \in \Sigma} (\Pb(i)+\alpha_T \Delta_i) \log\left( 1+\frac{\alpha_T \Delta_i}{\Pb(i)}\right) \\
        &=
        \sum_{i \in \Sigma} (\Pb(i)+\alpha_T \Delta_i)
        \left( \frac{\alpha_T \Delta_i}{\Pb(i)} - \frac{1}{2}\left(\frac{\alpha_T \Delta_i}{\Pb(i)}\right)^2 + o\left( \alpha_T^2\right)\right)\\
        &=
        \sum_{i \in \Sigma}
        \alpha_T \Delta_i + \frac{(\alpha_T \Delta_i)^2}{2\Pb(i)} + o\left( \alpha_T^2\right)\\
        &= \sum_{i \in \Sigma}
        \frac{(\alpha_T \Delta_i)^2}{2\Pb(i)} + o\left( \alpha_T^2\right)
        = \alpha_T^2 \|\Delta\|_{\Pb}^2 + o\left( \alpha_T^2\right)
    \end{align*}
Here the asymptotic $o$ is uniformally in $\Delta \in \bigcup_{T=1}^\infty \Gamma_T^c$, as this set is bounded. Hence, there exists $(\varepsilon_T)_{T\geq 1}$, $\varepsilon_T \to 0$ independent of $\Delta$, such that for all $\Delta \in \bigcup_{T=1}^\infty \Gamma_T^c$
\begin{align*}
\alpha_T^2 \|\Delta\|_{\Pb}^2 - \varepsilon_T \alpha_T^2 \leq
        I(\Pb+\alpha_T \Delta,\Pb)
        \leq \alpha_T^2 \|\Delta\|_{\Pb}^2 + \varepsilon_T \alpha_T^2
    \end{align*}
There exists $T_0$ such that for all $T \geq T_0$, $\varepsilon_T <1$. Hence, for all $T\geq T_0$,
$$
\alpha_T^2 \|\sqrt{1-\varepsilon_T} \Delta\|_{\Pb}^2 \leq
I(\Pb+\alpha_T \Delta,\Pb) \leq \alpha_T^2 \|\sqrt{1+\varepsilon_T} \Delta\|_{\Pb}^2
$$
Therefore, for $T\geq T_0$,
$\Delta \in \Gamma^c_T 
\implies
I(\Pb+\alpha_T \Delta,\Pb) \leq \alpha_T^2 
\implies 
\|\sqrt{1-\varepsilon_T} \Delta\|_{\Pb}^2 \leq 1 \implies \sqrt{1-\varepsilon_T}\Delta \in \Gamma^c$. We have shown $\sqrt{1-\varepsilon_T}\Gamma^c_T 
\subset \Gamma^c$. Similarly, $\Gamma^c \subset \sqrt{1+\varepsilon_T}\Gamma^c_T$. Therefore
$$
\sqrt{1-\varepsilon_T}\Gamma^c_T 
\subset \Gamma^c
\subset \sqrt{1+\varepsilon_T}\Gamma^c_T
$$


\end{proof}

Let $(\varepsilon_T)_{T\geq 1}$ and $T_0$ be given by the previous claim. We have $\sqrt{1+\varepsilon_T} \Gamma_T \subset \Gamma$ for all $T \geq T_0$. Therefore 
$ \Gamma_T \subset \sqrt{1-\frac{\varepsilon_T}{1+\varepsilon_T}}\Gamma \subset \sqrt{1+ \inf_{t\geq T}\frac{-\varepsilon_T}{1+\varepsilon_t}}\Gamma$, as we have $a \Gamma \subset b \Gamma$ for all $a\geq b$. Denote $\varepsilon'_T = \inf_{t\geq T}\frac{-\varepsilon_T}{1+\varepsilon_t}$ for all $T$. We have $\Gamma_T \subset \sqrt{1+\varepsilon'_T}\Gamma$ for all $T$, $(\sqrt{1+\varepsilon'_T}\Gamma)_{T\geq 1}$ is a decreasing sequence of sets, and $\varepsilon'_T$ is increasing to $0$.
  Hence, using the MDP, Theorem \ref{thm: MDP finite space}, we have for all $t\geq T_0$
  \begin{align*}
    \limsup_{T\to\infty}
      \frac{1}{a_T} \log \Pb^{\infty}
      \left(
      c(x,\Pb) > \cSVP(x,\hat{\Pb}_T,T)
      \right)
    \leq & \limsup_{T\to\infty}
    \frac{1}{a_T} \log \Pb^{\infty}
    \left(
      \hat{\Pb}_T - \Pb \in \sqrt{\frac{a_T}{T}}\Gamma_{T}
          \right) \\
    \leq & \limsup_{T\to\infty}
    \frac{1}{a_T} \log \Pb^{\infty}
    \left(
      \hat{\Pb}_T - \Pb \in \sqrt{\frac{a_T}{T}}\sqrt{1+\varepsilon'_T}\Gamma
          \right) \\
    \leq & \limsup_{T\to\infty}
    \frac{1}{a_T} \log \Pb^{\infty}
    \left(
      \hat{\Pb}_T - \Pb \in \sqrt{\frac{a_T}{T}}\sqrt{1+\varepsilon'_t}\Gamma
          \right) \\
    \leq& -\inf_{\Delta\in \sqrt{1+\varepsilon'_t}\bar\Gamma}\norm{\Delta}^2_{\Pb} 
    = -(1+\varepsilon'_t) \inf_{\Delta\in \bar\Gamma}\norm{\Delta}^2_{\Pb}
    = -(1+\varepsilon'_t)
  \end{align*}
  This holds for all $t$, therefore,
    \(
    \limsup_{T\to\infty}
      \frac{1}{a_T} \log \Pb^{\infty}
      \left(
      c(x,\Pb) > \cSVP(x,\hat{\Pb}_T,T)
      \right)
    \leq -1.
  \)
\end{proof}
}

\section{Omitted proofs of Section \ref{sec: presc}: Optimal Prescription}
\subsection{Omitted proofs of Subsection \ref{sec: exp presc}: Prescriptors in the Exponential Regime}

\subsubsection{Proof of Proposition \ref{prop: feasibility presc exp}: Feasibility}\label{App: proof feasibility Exp presc}

{\color{black}
\begin{proof}[Proof of Proposition \ref{prop: feasibility presc exp}]
Let $\Pb \in \cPin$. Denote $\Gamma = \set{\Pb' \in \mathcal{P}}{I(\Pb',\Pb) > r}$. Observe that, by definition of $\cKL$ \eqref{eq: KL predictor}, for all $T\geq T_0$ we have
  \[
    \hat{\Pb}_T \not \in \Gamma \implies c(\xKL{,T},\Pb) \leq \cKL(\xKL{,T}(\Pb),\hat{\Pb}_T,T) = \cKL^{\star}(\Pb,T).
  \]
 Hence, we have
  \begin{align*}
    \limsup_{T\to\infty}
    \frac{1}{T} \log \Pb^{\infty}
        \left(
                c(\xKL{,T}(\Pb),\Pb) > \cKL^{\star}(\hat{\Pb}_T,T)
        \right)
    \leq& 
        \limsup_{T\to\infty}
    \frac{1}{T} \log \Pb^{\infty}
    \left(
      \hat{\Pb}_T \in \Gamma
          \right)
    \leq -\inf_{\Pb'\in \bar{\Gamma}}
             I(\Pb',\Pb)
  \end{align*}
where the last equality uses the Large Deviation Principle, Theorem \ref{thm: LDP finite space}. 
The convexity of the continuity of the relative entropy $I(\cdot,\cdot)$ in $\cPin \times \cPin$ implies that $\bar{\Gamma} \subset \set{\Pb' \in \mathcal{P}}{I(\Pb',\Pb) \geq r}$ (see Lemma \ref{lemma: I() continuty}).
Hence
$\inf_{\Pb'\in \bar{\Gamma}}
             I(\Pb',\Pb) \geq r$,
and therefore, using the previous inequality, we get
\[
    \limsup_{T\to\infty}
    \frac{1}{T} \log \Pb^{\infty}
        \left(
                c(\xKL{,T}(\Pb),\Pb) > \cKL^{\star}(\hat{\Pb}_T,T)
        \right)
    \leq
        -r.
\]%
\end{proof}
}

\begin{lemma}[KL Divergence property]\label{lemma: I() continuty}
For $r>0$ and $\Pb \in \cPin$, we have 
$\overline{\set{\Pb' \in \mathcal{P}}{I(\Pb',\Pb) > r}} 
\subset
\set{\Pb' \in \mathcal{P}}{I(\Pb',\Pb) \geq r}$.
\end{lemma}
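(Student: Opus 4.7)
My plan is to reduce the inclusion to a single topological fact: the function $\Pb' \mapsto I(\Pb',\Pb)$ is \emph{continuous} on all of $\mathcal{P}$ (not merely lower semicontinuous) whenever the reference distribution $\Pb$ lies in the interior $\cPin$. Once continuity is established, the set $\{\Pb' \in \mathcal{P} : I(\Pb',\Pb) \geq r\}$ is automatically closed, it contains the superlevel set $\{\Pb' \in \mathcal{P} : I(\Pb',\Pb) > r\}$, and therefore contains its closure. This is exactly the desired inclusion.

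To verify continuity I would split the sum in the definition of the relative entropy,
\[
  I(\Pb',\Pb) = \sum_{i\in\Sigma}\Pb'(i)\log\Pb'(i) \;-\; \sum_{i\in\Sigma}\Pb'(i)\log\Pb(i),
\]
and handle the two pieces separately. The second term is linear in $\Pb'$ with finite coefficients $\log\Pb(i) \in \Re$, since $\Pb \in \cPin$ ensures $\Pb(i) > 0$ for every $i \in \Sigma$; hence it is continuous on $\mathcal{P}$. For the first term I would invoke the fact that $t\mapsto t\log t$ extends continuously to $[0,1]$ via the convention $0\log 0 = 0$ (because $\lim_{t\downarrow 0} t\log t = 0$), so each summand $\Pb'(i)\log \Pb'(i)$ is continuous in $\Pb'(i) \in [0,1]$ and thus the whole finite sum is continuous on the compact simplex $\mathcal{P}$.

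I do not anticipate any real obstacle: the only subtlety is that the relative entropy is in general only jointly lower semicontinuous on $\mathcal{P}\times\mathcal{P}$, so one has to be careful to exploit the hypothesis $\Pb \in \cPin$ which is precisely what turns the second sum above into a bounded linear functional and rules out the $0\cdot\infty$ pathology that could otherwise arise from a zero coordinate of $\Pb$. Apart from this observation the argument reduces to standard point-set topology and requires no use of the convexity of $I(\cdot,\Pb)$.
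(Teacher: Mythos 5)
Your argument is correct, and it is both simpler and sharper than the paper's own proof. You observe that for $\Pb \in \cPin$ the map $\Pb' \mapsto I(\Pb',\Pb)$ is continuous on the \emph{entire} simplex $\cP$, not merely on its interior: the decomposition $I(\Pb',\Pb) = \sum_i \Pb'(i)\log\Pb'(i) - \sum_i \Pb'(i)\log\Pb(i)$ makes the second sum a bounded linear functional (all coefficients $\log\Pb(i)$ are finite since $\Pb \in \cPin$) and the first sum a finite combination of the continuous extension of $t\log t$ to $[0,1]$. With continuity on all of $\cP$ in hand, $\set{\Pb'\in\cP}{I(\Pb',\Pb)\geq r}$ is closed, contains $\set{\Pb'\in\cP}{I(\Pb',\Pb)>r}$, and hence contains its closure. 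Done.

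The paper instead invokes convexity to obtain continuity of $I(\cdot,\cdot)$ only on the open set $\cPin\times\cPin$, and then argues via the set identity
\[
\overline{\set{I>r}}
\subset \set{I\geq r+1}\cup\overline{\set{r+1>I>r}},
\]
followed by showing that the second closure equals $\set{r+1\geq I\geq r}$. That detour is designed to sidestep the boundary, but the displayed inclusion already requires $\set{I\geq r+1}$ to be closed, which does not follow from continuity on $\cPin$ alone or from mere lower semicontinuity of the relative entropy (lower semicontinuity gives closedness of \emph{sub}level sets, not superlevel sets). In effect the paper's argument tacitly relies on exactly the boundary continuity that your proof makes explicit. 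Your route is therefore not only more elementary (no appeal to convexity at all) but also cleaner as a matter of rigor, since it identifies and proves the one topological fact the result actually hinges on.
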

\begin{proof}
Denote $\Gamma = \set{\Pb' \in \mathcal{P}}{I(\Pb',\Pb) \geq r}$. $I(\cdot,\cdot)$ is convex and hence continuous on $\cPin \times \cPin$. Hence,
\begin{align*}
\overline{\Gamma} 
&\subset
\set{\Pb' \in \mathcal{P}}{I(\Pb',\Pb) \geq r+1} \cup 
\overline{\set{\Pb' \in \mathcal{P}}{r+1>I(\Pb',\Pb) > r}} \\
&= 
\set{\Pb' \in \cP}{I(\Pb',\Pb) \geq r+1} 
\cup 
\overline{\set{\Pb' \in \cPin}{r+1>I(\Pb',\Pb) > r}}\\
&=
\set{\Pb' \in \mathcal{P}}{I(\Pb',\Pb) \geq r+1} \cup 
\set{\Pb' \in \mathcal{P}}{r+1 \geq I(\Pb',\Pb) \geq r} \\
&= \set{\Pb' \in \mathcal{P}}{I(\Pb',\Pb) \geq r}.
\end{align*}
\end{proof}

\subsubsection{Proof of Theorem \ref{thm: strong opt exponential regime prescriptor.}: Strong Optimality}\label{Appendix: proof of Strong opt exp presc}

{\color{black}
\begin{proof}[Proof of Theorem \ref{thm: strong opt exponential regime prescriptor.}]
Suppose for the sake of contradiction that there exists $(\hat{c},\hat{x})$ feasible in \eqref{eq:optimal-prescriptor} such that $(\cKL,\xKL{}) \not \orderpresc (\hat{c},\hat{x})$. There must exist $\Pb_0 \in \cPin$ such that 

\begin{equation}\label{proof eq: dro prescriptor 1}
\limsup_{T\to \infty} \frac{|\cKL^{\star}(\Pb_0,T) - c^{\star}(\Pb_0)|}{|\hat{c}^{\star}(\Pb_0,T) - c^{\star}(\Pb_0)|} > 1,
\end{equation}
with the convention $\frac{0}{0} =1$. 

As $\cKL$ does not depend on $T$, we denote $\cKL(x,\Pb) := \cKL(x,\Pb,T)$, and $\cKL^{\star}(\Pb) := \inf_{x\in\cX}\cKL(x,\Pb)$ for all $x,\Pb \in \cX \times\cP$ and $T\in \integ$. 
Inequality \eqref{proof eq: dro prescriptor 1} implies that there exists $\varepsilon>0$ and $(t_T) \in \integ^{\integ}$ such that 
$|\cKL^{\star}(\Pb_0) - c^{\star}(\Pb_0)| 
\geq 
(1+\varepsilon) |\hat{c}^{\star}(\Pb_0,t_T) - c^{\star}(\Pb_0)|$. This inequality can be rewritten as
\begin{equation}\label{proof eq: dro prescriptor 2}
\cKL^{\star}(\Pb_0) \geq \hat{c}^{\star}(\Pb_0,t_T) + \varepsilon |\hat{c}^{\star}(\Pb_0,t_T) - c^{\star}(\Pb_0)|
\end{equation}
where we used $\cKL^{\star}(\Pb_0) = \inf_{x\in \cX} \sup_{\Pb': I(\Pb_0,\Pb')\leq r} c(x,\Pb') \geq \inf_{x\in \cX} c(x,\Pb_0) = c^{\star}(\Pb_0)$ to drop the absolute values. We use this inequality to derive the following claim.

\begin{claim}\label{claim: dro prescriptor}
There exists $\varepsilon_1>0$ and $(l_T)_{T\geq 1} \in \integ^{\integ}$ such that $\cKL^{\star}(\Pb_0) \geq \hat{c}^{\star}(\Pb_0,l_T) +\varepsilon_1$, for all $T\in\integ$.
\end{claim}
\begin{proof}
We distinguish two cases. If $\cKL^{\star}(\Pb_0) > \limsup_{T\in \integ}\hat{c}^{\star}(\Pb_0,t_T)$, then the result follows immediately. Suppose $\cKL^{\star}(\Pb_0) \leq \limsup_{T\in \integ}\hat{c}^{\star}(\Pb_0,t_T)$. We have $\cKL^{\star}(\Pb_0) \geq c^{\star}(\Pb_0)$, and $|\cKL^{\star}(\Pb_0) - c^{\star}(\Pb_0)|>0$ as otherwise \eqref{proof eq: dro prescriptor 1} does not hold. Therefore,  $0< \cKL^{\star}(\Pb_0) - c^{\star}(\Pb_0) \leq \limsup_{T\in \integ}\hat{c}^{\star}(\Pb_0,T) - c^{\star}(\Pb_0)$. Hence, by definition of the limit superior, there exists $\delta>0$ and $(l_T)_{T\geq 1}$, a sub-sequence of $(t_T)_{T\geq 1}$, such that $|\hat{c}^{\star}(\Pb_0,l_T) - c^{\star}(\Pb_0)| \geq \delta$ for all $T\in \integ$. Plugging this inequality in \eqref{proof eq: dro prescriptor 2}, we get the desired result with $\varepsilon_1 = \varepsilon \delta$.
\end{proof}

Let $\varepsilon_1$ and $(l_T)_{T\geq1}$ given by the previous claim.
Using a probabilistic characterisation of the compactness of $\cX$ (Lemma \ref{lemma: prob compactness}), there exists $x_{\infty} \in \cX$ such that for all $\rho>0$
\begin{equation}\label{proof eq: dro precriptor 4}
    \limsup_{T \to \infty} \Pb_0^{\infty} 
    \left( 
    \| \hat{x}_T(\Pb_0) - x_{\infty}\| \leq \rho
    \right) >0.
\end{equation}

Let $\bar{\Pb} \in \cP$ be the maximizer in the definition of $\cKL$ \eqref{eq: KL predictor} such that $\cKL(x_{\infty},\Pb_0) = c(x_{\infty},\bar{\Pb})$ and $I(\Pb_0,\bar{\Pb})\leq r$. 
By continuity of $c(x_{\infty},\cdot)$, we can perturb $\bar{\Pb}$ into $\bar{\Pb}_1 \in \cPin$ such that $\cKL(x_{\infty},\Pb_0) \leq c(x_{\infty},\bar{\Pb}_1) + \varepsilon_1/2$ and $I(\Pb_0,\bar{\Pb}_1)<r$.
The minimality of $\cKL^{\star}(\Pb_0)$ implies that $\cKL^{\star}(\Pb_0) \leq \cKL(x_{\infty},\Pb_0) \leq c(x_{\infty},\bar{\Pb}_1) + \varepsilon_1/2$. 
Combining this result with Claim \ref{claim: dro prescriptor}, we get $\hat{c}^{\star}(\Pb_0,l_T) + \varepsilon_1/2 \leq c(x_{\infty},\bar{\Pb}_1)$ for all $T \in \integ$. Finally, by the continuity of $c(\cdot, \bar{\Pb}_1)$ and the equicontinuity of $\hat{c}^{\star}$ (due to the compactness of $\cX$ and equicontinuity of $\hat{c}$, see Lemma \ref{lemma: hein for equicontinuity}), there exists $\rho>0$ and an open set $U\subset \cPin$ containing $\Pb_0$ such that
\begin{equation}\label{proof eq: dro prescriptor 3}
    \hat{c}^{\star}(\Pb',l_T) + \varepsilon_1/3 \leq c(x,\bar{\Pb}_1),
    \quad \forall T \in \integ,
    \; \forall x \in \cX \; : \; \|x-x_{\infty}\| \leq \rho,
    \; \forall \Pb' \in U.
\end{equation}
Armed with these results, we will now prove that $\hat{c}$ violates the out-of-sample guarantee \eqref{eq: prec out-of-sample guarantee prob} in $\bar{\Pb}_1$. We have
\begin{align*}
    \limsup_{T\to \infty} 
    &\frac{1}{T} \log \bar{\Pb}_1^{\infty} 
    \left(
    c(\hat{x}_T(\hat{\Pb}_T),\bar{\Pb}_1) > \hat{c}^{\star}(\hat{\Pb}_T,T)
    \right) \\
    & \geq 
    \limsup_{T\to \infty} 
    \frac{1}{T} \log \bar{\Pb}_1^{\infty} 
    \left(
    c(\hat{x}_T(\hat{\Pb}_T),\bar{\Pb}_1) > \hat{c}^{\star}(\hat{\Pb}_T,T)
    \; \cap \; 
    \|\hat{x}_T(\hat{\Pb}_T) - x_{\infty}\|\leq \rho
    \right) \\
    &\geq
    \limsup_{T\to \infty} 
    \frac{1}{l_T} \log \bar{\Pb}_1^{\infty} 
    \left(
    c(\hat{x}_T(\hat{\Pb}_{l_T}),\bar{\Pb}_1) > \hat{c}^{\star}(\hat{\Pb}_{l_T},l_T)
    \; \cap \; 
    \|\hat{x}_{l_T}(\hat{\Pb}_{l_T}) - x_{\infty}\|\leq \rho
    \right) \\
    &\geq
    \limsup_{T\to \infty} 
    \frac{1}{l_T} \log \bar{\Pb}_1^{\infty} 
    \left(
    \hat{\Pb}_{l_T} \in U
    \; \cap \; 
    \|\hat{x}_{l_T}(\hat{\Pb}_{l_T}) - x_{\infty}\|\leq \rho
    \right)
\end{align*}
where the last inequality uses \eqref{proof eq: dro prescriptor 3}. Using an exponential change of measure, Lemma \ref{lemma: change of probability}, we have
\begin{align*}
    \limsup_{T\to \infty} 
    &\frac{1}{l_T} \log \bar{\Pb}_1^{\infty} 
    \left(
    \hat{\Pb}_{l_T} \in U
    \; \cap \; 
    \|\hat{x}_{l_T}(\hat{\Pb}_{l_T}) - x_{\infty}\|\leq \rho
    \right) \\
    &\geq
    - I(\Pb_0,\bar{\Pb}_1) + 
    \limsup_{T\to \infty} 
    \frac{1}{l_T} \log \Pb_0^{\infty} 
    \left(
    \hat{\Pb}_{l_T} \in U
    \; \cap \; 
    \|\hat{x}_{l_T}(\hat{\Pb}_{l_T}) - x_{\infty}\|\leq \rho
    \right)
\end{align*}
We show now that the second term of the LHS is zero.
We have $\lim_{T\to \infty} \Pb_0^{\infty} (\hat{\Pb}_{l_T} \in U) =1$ as $\Pb_0 \in U^{\into}$ and $\limsup_{T\in \integ}\Pb_0^{\infty} (\|\hat{x}_{l_T}(\hat{\Pb}_{l_T}) - x_{\infty}\|\leq \rho)>0$ by \eqref{proof eq: dro precriptor 4}, therefore,
\begin{align*}
    \limsup_{T\to \infty} 
    &\frac{1}{l_T} \log \Pb_0^{\infty} 
    \left(
    \hat{\Pb}_{l_T} \in U
    \; \cap \; 
    \|\hat{x}_{l_T}(\hat{\Pb}_{l_T}) - x_{\infty}\|\leq \rho
    \right)\\
    &\geq 
        \limsup_{T\to \infty} 
    \frac{1}{l_T} \log 
    \left[
        \Pb_0^{\infty} 
        (
        \hat{\Pb}_{l_T} \in U)
        + 
        \Pb_0^{\infty} 
        (
        \|\hat{x}_{l_T}(\hat{\Pb}_{l_T}) - x_{\infty}\|\leq \rho)
        -1
    \right] = 0.
\end{align*}
Moreover we have $I(\Pb_0,\bar{\Pb}_1)<r$ by construction of $\bar{\Pb}_1$. Hence, we have shown that 
\begin{align*}
     \limsup_{T\to \infty} 
    \frac{1}{T} \log \bar{\Pb}_1^{\infty} 
    \left(
    c(\hat{x}_T(\hat{\Pb}_T),\bar{\Pb}_1) > \hat{c}^{\star}(\hat{\Pb}_T,T)
    \right)
    \geq 
    -I(\Pb_0,\bar{\Pb}_1)>-r.
\end{align*}
This implies that $\hat{c}$ violates the prescription out-of-sample guarantee \eqref{eq: prec out-of-sample guarantee prob} which contradicts our feasibility assumption. 
\end{proof}
}

\begin{lemma}[Probabilistic compactness characterisation]\label{lemma: prob compactness}
Let $\Pb \in \cPin$. If a random variable $z_T$, $T \in \integ$ takes values in a compact $Z\subset \Re^n$, then there exists a deterministic vector $z_{\infty} \in Z$ such that $\limsup_{T \to \infty} \Pb[\|z_T - z_{\infty}\| < \rho] >0$ for all $\rho>0$.
\end{lemma}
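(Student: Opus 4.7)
The plan is to run an iterative pigeonhole argument on progressively finer finite open covers of the compact set $Z$, yielding a nested sequence of balls each visited by $z_T$ with positive $\limsup$ probability, whose centers form a Cauchy sequence converging to the desired $z_\infty$. The main ingredient throughout is the elementary identity $\limsup_{T\to\infty} \max_{j \leq N} u_{T,j} = \max_{j \leq N} \limsup_{T\to\infty} u_{T,j}$ for a finite index set, which allows a positive lower bound on a max over finitely many events to be transferred to a positive $\limsup$ bound for an individual event.

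I would fix geometric radii $\rho_k = 2^{-k}$ and construct centers $w_k \in Z$ inductively. For the base case, compactness of $Z$ produces a finite open cover $Z \subseteq \bigcup_{j=1}^{N_1} B(y_j, \rho_1)$ with centers $y_j \in Z$; since $z_T \in Z$ almost surely, the probabilities over this cover sum to at least one for every $T$, so pigeonhole together with the $\limsup$--max identity yields an index $j_1$ such that $w_1 := y_{j_1}$ satisfies $\limsup_{T\to\infty} \Pb[z_T \in B(w_1, \rho_1)] \geq 1/N_1 > 0$. For the inductive step, given $w_k \in Z$ with $p_k := \limsup_{T\to\infty} \Pb[z_T \in B(w_k, \rho_k)] > 0$, I would cover the compact subset $Z \cap \overline{B(w_k, \rho_k)}$ by finitely many open balls of radius $\rho_{k+1}$ whose centers lie in that subset. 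Reapplying the pigeonhole--$\limsup$ argument selects $w_{k+1} \in Z \cap \overline{B(w_k, \rho_k)}$ with $\limsup_{T\to\infty} \Pb[z_T \in B(w_{k+1}, \rho_{k+1})] \geq p_k/N_{k+1} > 0$, and the containment forces $\|w_{k+1} - w_k\| \leq \rho_k = 2^{-k}$.

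Since $\sum_k 2^{-k} < \infty$, the sequence $(w_k)_{k \geq 1}$ is Cauchy, hence converges to some $z_\infty$ which belongs to $Z$ by closedness. For an arbitrary $\rho > 0$ I would choose $k$ large enough that $\|w_k - z_\infty\| + \rho_k < \rho$, so that $B(w_k, \rho_k) \subseteq B(z_\infty, \rho)$, whence
\[
\limsup_{T \to \infty} \Pb\bigl[\|z_T - z_\infty\| < \rho\bigr] \;\geq\; \limsup_{T \to \infty} \Pb[z_T \in B(w_k, \rho_k)] \;>\; 0,
\]
which delivers the lemma. No step is technically delicate; the only moment worth care is the transfer of positivity of a $\limsup$ across the finite cover at each stage, and this reduces to the observation that $\limsup$ commutes with a finite maximum. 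An equally short alternative would be to compactly pass to a weak-$\ast$ limit of the laws of $z_T$ on $Z$ and pick $z_\infty$ in the support of this limit measure, invoking the Portmanteau lemma for open sets, but the elementary nested-ball construction is self-contained and avoids introducing weak convergence.
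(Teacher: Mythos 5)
Your proof is correct, and it is more than the paper offers: the paper simply defers to Lemma~B.2 of \citet{sutter2020general}, so your nested-ball construction is a genuinely self-contained replacement. The argument is sound at every step. The base case covers $Z$ by finitely many balls of radius $2^{-1}$ with centers in $Z$; since $z_T\in Z$ almost surely the cover probabilities sum to at least $1$ for each $T$, and the identity $\limsup_T\max_{j\le N} u_{T,j}=\max_{j\le N}\limsup_T u_{T,j}$ (valid because the index set is finite) extracts a single center with positive $\limsup$. The inductive step is equally safe: $Z\cap\overline{B(w_k,\rho_k)}$ is a nonempty compact set (it contains $w_k$), it admits a finite cover by $\rho_{k+1}$-balls centered inside it, the open ball $B(w_k,\rho_k)\cap Z$ is contained in this cover, and the same $\limsup$--max commutation propagates positivity while forcing $\|w_{k+1}-w_k\|\le\rho_k$. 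Summability of $\rho_k=2^{-k}$ makes $(w_k)$ Cauchy, the limit lies in the closed set $Z$, and the final containment $B(w_k,\rho_k)\subseteq B(z_\infty,\rho)$ for $k$ large is exactly what is needed. Your closing remark about passing to a weak-$\ast$ limit of the laws of $z_T$ on the compact $Z$ and taking any point in the support of that limit, via the Portmanteau lemma for open sets, is also a valid one-paragraph proof; the trade-off is that it invokes weak convergence of measures whereas the pigeonhole route stays entirely elementary. Either is a fine improvement on an external citation.

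One very minor stylistic note: the lemma statement introduces $\Pb\in\cPin$ but then writes $\Pb[\cdot]$ for the probability of an event on the sample path, which in the rest of the paper would be denoted $\Pb^\infty[\cdot]$. This is the paper's notational looseness, not an issue with your argument, which works for any fixed probability measure on the underlying space.
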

\begin{proof}
See Lemma B.2 in \cite{sutter2020general}.
\end{proof}

\begin{lemma}[Distribution Shift]\label{lemma: change of probability}
Let $\Pb_1,\Pb_2 \in \cPin$ and $\Pb_1^{\infty},\Pb_2^{\infty}$ their respective induced probability distribution on the data $(\xi_T)_{T\geq 1}$. Let $(A_T)_{T\geq 1}$ be a sequence of events on the empirical distribution $\hat{\Pb}_T$. 
We have
$$
\limsup_{T \to \infty} \frac{1}{T}\log \Pb_2^{\infty}(A_T) 
\geq 
-I(\Pb_1,\Pb_2) + \limsup_{T \to \infty} \frac{1}{T}\log \Pb_1^{\infty}(A_T).
$$
More precisely, for all $\delta>0$ for all $T\in \integ$ 
$$
\Pb_2^{\infty}(A_T \cap \hat{\Pb}_T \in \mathcal{B}(\Pb_1,\delta)) \geq \exp(-T(I(\Pb_1,\Pb_2) - c\delta)) \Pb_1^{\infty}(A_T),
$$
where $\mathcal{B}(\Pb_1,\delta)$ is the ball around $\Pb_1$ of radius $\delta$ and $c$ is a constant depending only on $\Pb_1$ and $\Pb_2$. 
\end{lemma}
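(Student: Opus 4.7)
\textbf{Proof plan for Lemma \ref{lemma: change of probability}.}

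The plan is to proceed via an explicit change of measure on the discrete level sets of $\hat{\Pb}_T$. Since $A_T$ is a Borel event on the empirical distribution, decompose it as a disjoint union of events of the form $\{\hat{\Pb}_T = \Pb'\}$ where $\Pb'$ ranges over empirical distributions achievable with $T$ samples. For any such $\Pb'$, the likelihood of observing it under $\Pb_j^\infty$ ($j\in\{1,2\}$) is a multinomial, and the ratio is
\[
\frac{\Pb_2^\infty(\hat{\Pb}_T = \Pb')}{\Pb_1^\infty(\hat{\Pb}_T = \Pb')}
= \prod_{i\in\Sigma}\left(\frac{\Pb_2(i)}{\Pb_1(i)}\right)^{T\Pb'(i)}
= \exp\Big(T\textstyle\sum_{i\in\Sigma} \Pb'(i)\log\tfrac{\Pb_2(i)}{\Pb_1(i)}\Big).
\]
Because $\Pb_1,\Pb_2\in\cPin$, the coefficients $\log(\Pb_2(i)/\Pb_1(i))$ are finite, so the map $\Pb'\mapsto \sum_i\Pb'(i)\log(\Pb_2(i)/\Pb_1(i))$ is linear and Lipschitz with some constant $c$ depending only on $(\Pb_1,\Pb_2)$, and it evaluates to $-I(\Pb_1,\Pb_2)$ at $\Pb'=\Pb_1$. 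Thus, uniformly over $\Pb'\in\mathcal B(\Pb_1,\delta)$, the above ratio is bounded below by $\exp\!\left(-T(I(\Pb_1,\Pb_2)+c\delta)\right)$.

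Summing this pointwise bound over the (finitely many) empirical distributions $\Pb'\in\mathcal B(\Pb_1,\delta)$ for which the event $A_T$ holds, I obtain the finite-sample inequality
\[
\Pb_2^\infty\!\left(A_T\cap\{\hat{\Pb}_T\in\mathcal B(\Pb_1,\delta)\}\right)
\geq \exp\!\left(-T(I(\Pb_1,\Pb_2)+c\delta)\right)\,\Pb_1^\infty\!\left(A_T\cap\{\hat{\Pb}_T\in\mathcal B(\Pb_1,\delta)\}\right),
\]
which is the precise statement (modulo the sign convention attached to $c$). For the asymptotic inequality I would then take logs, divide by $T$, and pass to the $\limsup$, obtaining
\[
\limsup_{T\to\infty}\tfrac1T\log\Pb_2^\infty(A_T)
\geq -I(\Pb_1,\Pb_2)-c\delta + \limsup_{T\to\infty}\tfrac1T\log\Pb_1^\infty\!\left(A_T\cap\{\hat{\Pb}_T\in\mathcal B(\Pb_1,\delta)\}\right).
\]

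The remaining task is to replace $\Pb_1^\infty(A_T\cap\mathcal B(\Pb_1,\delta))$ by $\Pb_1^\infty(A_T)$ in this asymptotic bound and then send $\delta\to 0$ to kill the $c\delta$ error term. Here I invoke Theorem \ref{thm: LDP finite space} applied to $\Pb_1^\infty$ on the closed set $\mathcal B(\Pb_1,\delta)^c$: it gives $\limsup_T \tfrac1T\log\Pb_1^\infty(\mathcal B(\Pb_1,\delta)^c)\leq -\beta_\delta$ where $\beta_\delta := \inf_{\Pb'\in\overline{\mathcal B(\Pb_1,\delta)^c}} I(\Pb',\Pb_1)>0$, and $\beta_\delta\to\infty$ as $\delta$ grows. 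Writing $\Pb_1^\infty(A_T)\leq \Pb_1^\infty(A_T\cap\mathcal B(\Pb_1,\delta))+\Pb_1^\infty(\mathcal B(\Pb_1,\delta)^c)$, a short subsequence argument shows that, as long as $\alpha:=\limsup_T \tfrac1T\log\Pb_1^\infty(A_T)>-\beta_\delta$, the intersection term carries the full asymptotic rate $\alpha$. The case $\alpha=-\infty$ makes the claim trivial, so this is the only obstacle to overcome. I resolve it by a diagonal argument: first choose $\delta$ large enough that $\beta_\delta>|\alpha|$ (which is possible since $\beta_\delta$ is monotone increasing and unbounded in $\delta$), obtain the bound with a fixed $c\delta$ penalty, and then refine $\delta$ downwards along a sequence extracted together with the subsequence realising the $\limsup$, making $c\delta$ arbitrarily small while preserving the exponential matching of $\Pb_1^\infty(A_T\cap\mathcal B(\Pb_1,\delta))$ with $\Pb_1^\infty(A_T)$. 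The main delicacy is precisely this simultaneous control of the two competing scales, $c\delta$ and $\beta_\delta$, as $\delta\to 0$.
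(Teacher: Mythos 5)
Your corrected finite-sample inequality is the right one: carrying out the change of measure with the ball indicator kept on \emph{both} sides gives
\[
\Pb_2^\infty\!\left(A_T\cap\{\hat{\Pb}_T\in\mathcal B(\Pb_1,\delta)\}\right)
\;\geq\;
\exp\!\left(-T\bigl(I(\Pb_1,\Pb_2)+c\delta\bigr)\right)\,\Pb_1^\infty\!\left(A_T\cap\{\hat{\Pb}_T\in\mathcal B(\Pb_1,\delta)\}\right),
\]
and this differs from the paper's stated bound in two respects: the sign of $c\delta$ (the paper's $I-c\delta$ makes the bound \emph{stronger}, which is not derivable), and the presence of the intersection with $\mathcal B(\Pb_1,\delta)$ on the right-hand side, which the paper's own proof silently drops (it restricts the left-hand side to the ball but leaves the expectation $\Eb_{\Pb_1}[\mathbb 1(A_T)]$ unrestricted — an invalid step). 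So your proof is more careful than the paper's up to that point.

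The diagonal argument you sketch for the asymptotic statement does not, however, close the gap, and it cannot: the two requirements it has to satisfy pull in opposite directions. Forcing the intersection term to carry the rate $\alpha:=\limsup_T\tfrac1T\log\Pb_1^\infty(A_T)$ requires $\beta_\delta>|\alpha|$, which (for $\alpha<0$ finite) keeps $\delta$ bounded \emph{away} from zero since $\beta_\delta\downarrow 0$ as $\delta\downarrow 0$; eliminating the $c\delta$ penalty requires $\delta\to 0$. Extracting a subsequence that realises the $\limsup$ does not help, because the mass of $A_T$ along that subsequence need not concentrate near $\Pb_1$ at all. Indeed the asymptotic inequality as stated in the lemma is simply false: take $\Sigma=\{1,2\}$, $\Pb_1=(1/2,1/2)$, $\Pb_2=(1/4,3/4)$, $\Pb^*=(3/4,1/4)$, and $A_T=\{\hat{\Pb}_T=\Pb^*\}$ (nonempty along $T\in 4\integ$). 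Then $\limsup_T\tfrac1T\log\Pb_j^\infty(A_T)=-I(\Pb^*,\Pb_j)$, and the lemma would require $I(\Pb^*,\Pb_2)\leq I(\Pb_1,\Pb_2)+I(\Pb^*,\Pb_1)$, i.e.\ $0.549\leq 0.144+0.131=0.275$, which is false. The gap is exactly the term $\sum_i(\Pb^*(i)-\Pb_1(i))\log(\Pb_1(i)/\Pb_2(i))$, which can be of either sign. What \emph{is} true, and what your corrected finite-sample bound delivers directly, is the weaker inequality carrying the $-c\delta$ loss whenever $A_T\subseteq\mathcal B(\Pb_1,\delta)$; this is precisely the form needed in the paper's only use of the lemma (proof of Theorem~\ref{thm: strong opt exponential regime prescriptor.}), where $A_T\subseteq\{\hat{\Pb}_T\in U\}$ for a shrinkable neighborhood $U\ni\Pb_1$, so the application survives even though the stated lemma does not.
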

\begin{proof}
We first show the second result.
Let $T\in \integ$. Denote $\Lambda_T  = \{(\frac{\alpha_1}{T},\ldots,\frac{\alpha_d}{T})^\top \; : \; \alpha_1+\ldots+\alpha_d = T, \; \alpha_1,\ldots,\alpha_d \in [0,T]\}$ the set of possible empirical distributions with data size $T$. We have
\begin{align*}
    \Pb_2^{\infty}(A_T) 
    = \Eb_{\Pb_2} 
        \left[ 
            \mathbb{1}(A_T)
        \right]
    &= \sum_{p \in \Lambda_T} \prod_{i=1}^d \Pb_2(i)^{Tp(i)}  \mathbb{1}(p \in A_T) \\
    &= \sum_{p \in \Lambda_T} \prod_{i=1}^d \Pb_1(i)^{Tp(i)} 
    \cdot \prod_{i=1}^d  \left(\frac{\Pb_2(i)}{\Pb_1(i)} \right)^{Tp(i)}
    \mathbb{1}(p \in A_T) \\
    &= \Eb_{\Pb_2} 
        \left[ 
            \prod_{i=1}^d   \left(\frac{\Pb_2(i)}{\Pb_1(i)} \right)^{T\hat{\Pb}_T(i)}
            \mathbb{1}(A_T)
        \right] \\
    &=  \Eb_{\Pb_1} 
        \left[ 
        \exp\left(
            \sum_{i=1}^d  T\hat{\Pb}_T(i) \log \left(\frac{\Pb_2(i)}{\Pb_1(i)} \right) \right)
            \mathbb{1}(A_T)
        \right] \\
    &= \exp\left(
            -TI(\Pb_1,\Pb_2) \right) 
    \Eb_{\Pb_1} 
        \left[ 
        \exp\left(
            \sum_{i=1}^d  T(\hat{\Pb}_T(i)-\Pb_1(i)) \log \left(\frac{\Pb_2(i)}{\Pb_1(i)} \right) \right)
            \mathbb{1}(A_T)
        \right]
\end{align*}
Under the event $\hat{\Pb}_T \in \mathcal{B}(\Pb_1,\delta)$ we have 
$$
\sum_{i=1}^d  (\hat{\Pb}_T(i)-\Pb_1(i)) \log \left(\frac{\Pb_2(i)}{\Pb_1(i)} \right)
\geq 
-\|\hat{\Pb}_T - \Pb_1\| \sqrt{\sum_{i=1}^d \log \left(\frac{\Pb_2(i)}{\Pb_1(i)} \right)^2} \leq -c \delta 
$$
where $c = \sqrt{\sum_{i=1}^d \log \left(\frac{\Pb_2(i)}{\Pb_1(i)} \right)^2}$. Hence we have
\begin{align*}
    \Pb_2^{\infty}(A_T \cap \hat{\Pb}_T \in \mathcal{B}(\Pb_1,\delta))
    \geq 
    \exp\left(
            -TI(\Pb_1,\Pb_2) -cT\delta \right) 
    \Eb_{\Pb_1} 
        \left[
            \mathbb{1}(A_T)
        \right]
    =
        \exp\left(
            -TI(\Pb_1,\Pb_2) -cT\delta \right) 
    \Pb_1^{\infty}(A_T)
\end{align*}
where $c = \sum_{i=1}^d \log \frac{\Pb_2(i)}{\Pb_1(i)}$.

We now show the first result. We have
\begin{align*}
\limsup_{T\to \infty} \frac{1}{T}\log \Pb_2^{\infty}(A_T)
&\geq
\limsup_{T\to \infty} \frac{1}{T}\log
    \Pb_2^{\infty}(A_T \cap \hat{\Pb}_T \in \mathcal{B}(\Pb_1,\delta)))\\
&\geq
\limsup_{T\to \infty} \frac{1}{T}\log
\left[
    \exp\left(
            -TI(\Pb_1,\Pb_2) -cT\delta \right) 
    \Pb_1^{\infty}(A_T)
\right] \\
& = -I(\Pb_1,\Pb_2) -c\delta + \limsup_{T\to \infty} \frac{1}{T}\log \Pb_1^{\infty}(A_T)
\end{align*}
This is true for all $\delta>0$ which gives the desired result with $\delta \to 0$.
\end{proof}

\subsubsection{Omitted proofs of Section \ref{sec: presc Superexp}: Prescriptors in the Superexponential Regime}\label{Appendix: proofs presc superexp}
\begin{proof}[Proof of Theorem \ref{thm: optimality presc superexp}]
We note that this proof is essentially the same as the proof of Theorem \ref{thm: strong opt exponential regime prescriptor.} with $r = \infty$.

Suppose for the sake of contradiction that there exists $(\hat{c},\hat{x})$ feasible in \eqref{eq:optimal-prescriptor} such that $(\cKL,\xRob{}) \not \orderpresc (\hat{c},\hat{x})$. There must exist $\Pb_0 \in \cP$ such that 

\begin{equation}\label{proof eq: Rob prescriptor 1}
\limsup_{T\to \infty} \frac{|\cRob^{\star}(\Pb_0,T) - c^{\star}(\Pb_0)|}{|\hat{c}^{\star}(\Pb_0,T) - c^{\star}(\Pb_0)|} > 1,
\end{equation}
with the convention $\frac{0}{0} =1$. 

As $\cRob$ does not depend on $\Pb$ and $T$, we denote $\cRob(x) := \cRob(x,\Pb,T)$, and $\cRob^{\star} := \inf_{x\in\cX}\cRob(x)$ for all $x,\Pb \in \cX \times\cP$ and $T\in \integ$. 
\eqref{proof eq: Rob prescriptor 1} imply that there exists $\varepsilon>0$ and $(t_T) \in \integ^{\integ}$ such that 
$|\cRob^{\star} - c^{\star}(\Pb_0)| 
\geq 
(1+\varepsilon) |\hat{c}^{\star}(\Pb_0,t_T) - c^{\star}(\Pb_0)|$. This inequality can be rewritten as
\begin{equation}\label{proof eq: Rob prescriptor 2}
\cRob^{\star} \geq \hat{c}^{\star}(\Pb_0,t_T) + \varepsilon |\hat{c}^{\star}(\Pb_0,t_T) - c^{\star}(\Pb_0)|
\end{equation}
where we used $\cRob^{\star} = \inf_{x\in \cX} \sup_{\Pb'\in \cP} c(x,\Pb') \geq \inf_{x\in \cX} c(x,\Pb_0) = c^{\star}(\Pb_0)$ to drop the absolute values. We use this inequality to derive the following claim.

\begin{claim}\label{claim: Rob prescriptor}
There exists $\varepsilon_1>0$ and $(l_T)_{T\geq 1} \in \integ^{\integ}$ such that $\cRob^{\star} \geq \hat{c}^{\star}(\Pb_0,l_T) +\varepsilon_1$, for all $T\in\integ$.
\end{claim}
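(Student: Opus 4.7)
The plan is to mirror the case-split argument used for Claim \ref{claim: dro prescriptor}, working from the already-derived inequality \eqref{proof eq: Rob prescriptor 2}: $\cRob^{\star} \geq \hat{c}^{\star}(\Pb_0,t_T) + \varepsilon |\hat{c}^{\star}(\Pb_0,t_T) - c^{\star}(\Pb_0)|$ for all $T \in \integ$. The only substantive change is that $\cRob^\star$ replaces $\cKL^\star(\Pb_0)$ as the upper value. The goal is to extract, from this inequality, a subsequence along which $|\hat{c}^{\star}(\Pb_0,t_T) - c^{\star}(\Pb_0)|$ is bounded away from zero, since then $\varepsilon \cdot \delta$ yields the desired uniform gap.

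I would split on whether $\cRob^{\star} > \limsup_{T \to \infty} \hat{c}^{\star}(\Pb_0,t_T)$. In this easier case, for $T$ large enough the quantity $\cRob^{\star} - \hat{c}^{\star}(\Pb_0,t_T)$ is bounded below by some $\varepsilon_1 > 0$, and we may simply take $(l_T)$ to be a tail of $(t_T)$.

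In the opposite case, $\cRob^{\star} \leq \limsup_{T \to \infty} \hat{c}^{\star}(\Pb_0,t_T)$. First observe that $\cRob^\star \geq c^\star(\Pb_0)$ (because $\cRob^\star = \inf_x \sup_{\Pb'} c(x,\Pb') \geq \inf_x c(x,\Pb_0) = c^\star(\Pb_0)$), and moreover $\cRob^\star \neq c^\star(\Pb_0)$: otherwise both sides of the ratio in \eqref{proof eq: Rob prescriptor 1} would equal zero, and the convention $\tfrac{0}{0} = 1$ would contradict the strict inequality there. Hence $0 < \cRob^{\star} - c^{\star}(\Pb_0) \leq \limsup_{T} \hat{c}^{\star}(\Pb_0,t_T) - c^{\star}(\Pb_0)$, where I have dropped absolute values on the right using feasibility of $\hat c$ (which forces $\hat c^\star \geq c^\star$ asymptotically, cf.\ Lemma~\ref{lemma: hat c > true c} applied to $\hat c^\star$). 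By the definition of limit superior, there exists $\delta > 0$ and a subsequence $(l_T)$ of $(t_T)$ with $|\hat{c}^{\star}(\Pb_0,l_T) - c^{\star}(\Pb_0)| \geq \delta$ for every $T$. Plugging this into \eqref{proof eq: Rob prescriptor 2} along $(l_T)$ gives $\cRob^{\star} \geq \hat{c}^{\star}(\Pb_0,l_T) + \varepsilon \delta$, so we may take $\varepsilon_1 = \varepsilon \delta$.

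There is no real obstacle here; the only mild subtlety is justifying the removal of absolute values in the second case, which relies on the fact that any feasible predictor is asymptotically at least the true cost at the prescribed decision (mirroring the argument behind Lemma~\ref{lemma: hat c > true c} at the minimum). Once that is noted, the proof is purely a definition-chasing of $\limsup$ applied to \eqref{proof eq: Rob prescriptor 2}.
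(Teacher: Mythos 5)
Your proof takes exactly the same route as the paper's: the same two-case split on whether $\cRob^\star$ exceeds $\limsup_T \hat{c}^\star(\Pb_0,t_T)$, the same observation that $\cRob^\star - c^\star(\Pb_0) > 0$ because otherwise the ratio in \eqref{proof eq: Rob prescriptor 1} could not exceed $1$, and the same extraction of a subsequence via the definition of limit superior to conclude with $\varepsilon_1 = \varepsilon\delta$. The aside about Lemma~\ref{lemma: hat c > true c} is superfluous (and that lemma, as stated, concerns the prediction guarantee rather than the prescription one): once the case assumption gives $\limsup_T\bigl(\hat{c}^\star(\Pb_0,t_T) - c^\star(\Pb_0)\bigr) \geq \cRob^\star - c^\star(\Pb_0) > 0$, any subsequence approaching that limit superior already has $\hat{c}^\star(\Pb_0,l_T) - c^\star(\Pb_0) \geq \delta > 0$, so the absolute value in $|\hat{c}^\star(\Pb_0,l_T)-c^\star(\Pb_0)| \geq \delta$ costs nothing and no appeal to feasibility is needed.
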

\begin{proof}
We distinguish two cases. If $\cRob^{\star} > \limsup_{T\in \integ}\hat{c}^{\star}(\Pb_0,t_T)$, then the result follows immediately. Suppose $\cRob^{\star} \leq \limsup_{T\in \integ}\hat{c}^{\star}(\Pb_0,t_T)$. We have $\cRob^{\star} \geq c^{\star}(\Pb_0)$, and $|\cRob^{\star} - c^{\star}(\Pb_0)|>0$ as otherwise \eqref{proof eq: Rob prescriptor 1} does not hold. Therefore,  $0< \cRob^{\star} - c^{\star}(\Pb_0) \leq \limsup_{T\in \integ}\hat{c}^{\star}(\Pb_0,T) - c^{\star}(\Pb_0)$. Hence, by definition of the limit superior, there exists $\delta>0$ and $(l_T)_{T\geq 1}$, a sub-sequence of $(t_T)_{T\geq 1}$, such that $|\hat{c}^{\star}(\Pb_0,l_T) - c^{\star}(\Pb_0)| \geq \delta$ for all $T\in \integ$. Plugging this inequality in \eqref{proof eq: Rob prescriptor 2}, we get the desired result with $\varepsilon_1 = \varepsilon \delta$.
\end{proof}

Let $\varepsilon_1$ and $(l_T)_{T\geq1}$ given by the previous claim.
Using a probabilistic characterisation of the compactness of $\cX$ (Lemma \ref{lemma: prob compactness}), there exists $x_{\infty} \in \cX$ such that for all $\rho>0$
\begin{equation}\label{proof eq: Rob precriptor 4}
    \limsup_{T \to \infty} \Pb_0^{\infty} 
    \left( 
    \| \hat{x}_T(\Pb_0) - x_{\infty}\| \leq \rho
    \right) >0.
\end{equation}

Let $\bar{\Pb} \in \cP$ reaching the max in the definition of $\cRob$ \eqref{eq: overly robust predictor} such that $\cRob(x_{\infty}) = c(x_{\infty},\bar{\Pb})$. 
By continuity of $c(x_{\infty},\cdot)$, we can perturb $\bar{\Pb}$ into $\bar{\Pb}_1 \in \cPin$ such that $\cRob(x_{\infty}) \leq c(x_{\infty},\bar{\Pb}_1) + \varepsilon_1/2$. 
The minimality of $\cRob^{\star}$ implies that $\cRob^{\star} \leq \cRob(x_{\infty}) \leq c(x_{\infty},\bar{\Pb}_1) + \varepsilon_1/2$. 
Combining this result with Claim \ref{claim: Rob prescriptor}, we get $\hat{c}^{\star}(\Pb_0,l_T) + \varepsilon_1/2 \leq c(x_{\infty},\bar{\Pb}_1)$ for all $T \in \integ$. Finally, by the continuity of $c(\cdot, \bar{\Pb}_1)$ and the equicontinuity of $\hat{c}^{\star}$ (due to the compactness of $\cX$ and equicontinuity of $\hat{c}$, see Lemma \ref{lemma: hein for equicontinuity}), there exists $\rho>0$ and an open set $U\subset \cPin$ containing $\Pb_0$ such that
\begin{equation}\label{proof eq: Rob prescriptor 3}
    \hat{c}^{\star}(\Pb',l_T) + \varepsilon_1/3 \leq c(x,\bar{\Pb}_1),
    \quad \forall T \in \integ,
    \; \forall x \in \cX \; : \; \|x-x_{\infty}\| \leq \rho,
    \; \forall \Pb' \in U.
\end{equation}
Armed with these results, we will now prove that $\hat{c}$ violates the out-of-sample guarantee \eqref{eq: prec out-of-sample guarantee prob} in $\bar{\Pb}_1$. We have
\begin{align*}
    \limsup_{T\to \infty} 
    &\frac{1}{T} \log \bar{\Pb}_1^{\infty} 
    \left(
    c(\hat{x}_T(\hat{\Pb}_T),\bar{\Pb}_1) > \hat{c}^{\star}(\hat{\Pb}_T,T)
    \right) \\
    & \geq 
    \limsup_{T\to \infty} 
    \frac{1}{T} \log \bar{\Pb}_1^{\infty} 
    \left(
    c(\hat{x}_T(\hat{\Pb}_T),\bar{\Pb}_1) > \hat{c}^{\star}(\hat{\Pb}_T,T)
    \; \cap \; 
    \|\hat{x}_T(\hat{\Pb}_T) - x_{\infty}\|\leq \rho
    \right) \\
    &\geq
    \limsup_{T\to \infty} 
    \frac{1}{l_T} \log \bar{\Pb}_1^{\infty} 
    \left(
    c(\hat{x}_T(\hat{\Pb}_{l_T}),\bar{\Pb}_1) > \hat{c}^{\star}(\hat{\Pb}_{l_T},l_T)
    \; \cap \; 
    \|\hat{x}_{l_T}(\hat{\Pb}_{l_T}) - x_{\infty}\|\leq \rho
    \right) \\
    &\geq
    \limsup_{T\to \infty} 
    \frac{1}{l_T} \log \bar{\Pb}_1^{\infty} 
    \left(
    \hat{\Pb}_{l_T} \in U
    \; \cap \; 
    \|\hat{x}_{l_T}(\hat{\Pb}_{l_T}) - x_{\infty}\|\leq \rho
    \right)
\end{align*}
where the last inequality uses \eqref{proof eq: Rob prescriptor 3}. Using a distribution shift, Lemma \ref{lemma: change of probability}, we have
\begin{align*}
    \limsup_{T\to \infty} 
    &\frac{1}{l_T} \log \bar{\Pb}_1^{\infty} 
    \left(
    \hat{\Pb}_{l_T} \in U
    \; \cap \; 
    \|\hat{x}_{l_T}(\hat{\Pb}_{l_T}) - x_{\infty}\|\leq \rho
    \right) \\
    &\geq
    - I(\Pb_0,\bar{\Pb}_1) + 
    \limsup_{T\to \infty} 
    \frac{1}{l_T} \log \Pb_0^{\infty} 
    \left(
    \hat{\Pb}_{l_T} \in U
    \; \cap \; 
    \|\hat{x}_{l_T}(\hat{\Pb}_{l_T}) - x_{\infty}\|\leq \rho
    \right)
\end{align*}
We show now that the second term of the LHS is zero.
We have $\lim_{T\to \infty} \Pb_0^{\infty} (\hat{\Pb}_{l_T} \in U) =1$ as $\Pb_0 \in U^{\into}$ and $\limsup_{T\in \integ}\Pb_0^{\infty} (\|\hat{x}_{l_T}(\hat{\Pb}_{l_T}) - x_{\infty}\|\leq \rho)>0$ by \eqref{proof eq: Rob precriptor 4} therefore
\begin{align*}
    \limsup_{T\to \infty} 
    &\frac{1}{l_T} \log \Pb_0^{\infty} 
    \left(
    \hat{\Pb}_{l_T} \in U
    \; \cap \; 
    \|\hat{x}_{l_T}(\hat{\Pb}_{l_T}) - x_{\infty}\|\leq \rho
    \right)\\
    &\geq 
        \limsup_{T\to \infty} 
    \frac{1}{l_T} \log 
    \left[
        \Pb_0^{\infty} 
        (
        \hat{\Pb}_{l_T} \in U)
        + 
        \Pb_0^{\infty} 
        (
        \|\hat{x}_{l_T}(\hat{\Pb}_{l_T}) - x_{\infty}\|\leq \rho)
        -1
    \right] = 0.
\end{align*}
Moreover we have $I(\Pb_0,\bar{\Pb}_1)< \infty$ as $\Pb_0, \bar{\Pb}_1 \in \cPin$. Hence, we have shown that 
\begin{align*}
     \limsup_{T\to \infty} 
    \frac{1}{T} \log \bar{\Pb}_1^{\infty} 
    \left(
    c(\hat{x}_T(\hat{\Pb}_T),\bar{\Pb}_1) > \hat{c}^{\star}(\hat{\Pb}_T,T)
    \right)
    \geq 
    -I(\Pb_0,\bar{\Pb}_1)>-\infty.
\end{align*}
As $a_T/T \to \infty$, we have therefore
\begin{align*}
     \limsup_{T\to \infty} 
    \frac{1}{a_T} \log \bar{\Pb}_1^{\infty} 
    &\left(
    c(\hat{x}_T(\hat{\Pb}_T),\bar{\Pb}_1) > \hat{c}^{\star}(\hat{\Pb}_T,T)
    \right) \\
    &=
    \limsup_{T\to \infty} 
    \frac{T}{a_T} \frac{1}{T}\log \bar{\Pb}_1^{\infty} 
    \left(
    c(\hat{x}_T(\hat{\Pb}_T),\bar{\Pb}_1) > \hat{c}^{\star}(\hat{\Pb}_T,T)
    \right) = 0 >-1.
\end{align*}

This implies that $\hat{c}$ violates the prescription out-of-sample guarantee \eqref{eq: prec out-of-sample guarantee prob} which contradicts our feasibility assumption. 
\end{proof}

\subsection{Omitted proofs of Section \ref{sec: prescriptor subexp}: Prescriptors in the subexponential Regime}

\subsubsection{Proof of Proposition \ref{prop: consitency of prescriptors}: Consistency of weakly optimal prescriptors}\label{Appendix: proof consistence presc}
\begin{proof}[Proof of Proposition \ref{prop: consitency of prescriptors}]
Suppose $(\hat{x},\hat{c})$ is weakly optimal and not consistent. There exists $x_0\in \mathcal{X}$ and $\Pb_0 \in \mathcal{P}$ such that $\limsup |\hat{c}(x_0,\Pb_0,T) - c(x,\Pb_0)| = \varepsilon>0$. Let $\delta(x,\Pb,T) = \hat{c}(x,\Pb,T) - c(x,\Pb)$ for all $x\in \cX$, $\Pb \in \cP$ and $T \in \integ$. 
Consider the same exact construction of $\hat{c}'$ as in the proof of Proposition \ref{prop: consitency} (illustrated in Figure \ref{fig: proof of consistency}). Among the possible prescriptors of $\hat{c}'$, we consider the closest one to the prescriptor $\hat{x}_T$ of $\hat{c}$,
\begin{equation}\label{eq: proof def prescriptor}
\hat{x}'_T(\Pb) := \argmin \set{\|x' - \hat{x}_T(\Pb)\|}
{ x' \in \argmin_{x\in \cX} \hat{c}(x,\Pb,T)},
\end{equation}

which exists as the set of minimizers is compact and the norm is continuous.
Let us show that $(\hat{c}',\hat{x}')$ is feasible.

Let $\Pb \in \cP$. Let us verify the out-of-sample guarantee \eqref{eq: prescriptor out-of-sample ganrantee} in $\Pb$.
Denote 
\begin{align*}
\mathcal{D}'_T(\Pb):= \set{\Pb'\in \mathcal{P}}{c(\hat{x}'_T(\Pb'), \Pb) > \hat{c}'^{\star}( \Pb',T)},
\\
\mathcal{D}_T(\Pb):= \set{\Pb'\in \mathcal{P}}{c(\hat{x}_T(\Pb'), \Pb) > \hat{c}^{\star}( \Pb',T)},
\end{align*}
the set of disappointing distributions of $\hat{c}'$ and $\hat{c}$ respectively.

Recall $\mathcal{T}$, $(l_T)_{T\geq 1}$, $(t_T)_{T\geq 1}$ and $\rho$ defined in construction of $\hat{c}'$ in the proof of Proposition \ref{prop: consitency}. We examine the guarantee when $T\notin \mathcal{T}$, ie the sequence $(l_T)_{T\geq 1}$, and when $T\in \mathcal{T}$, ie the sequence $(t_T)_{T\geq 1}$. When $T\notin \mathcal{T}$, we have $\hat{c}(x,\Pb,T) = \hat{c}'(x,\Pb,T)$ for all $x,\Pb \in \cX \times \cP$, and by definition of $\hat{x}'_T$ \eqref{eq: proof def prescriptor}, we have $\hat{x}'_T(\Pb) = \hat{x}_T(\Pb)$. Hence,
$
    \Pb^{\infty}(\hat{\Pb}_{l_T} \in \mathcal{D}'_{l_T}(\Pb))
    = 
    \Pb^{\infty}(\hat{\Pb}_{l_T} \in \mathcal{D}_{l_T}(\Pb)) 
$.
As $\hat{c}$ is feasible, the last equality implies that
\begin{equation}\label{proof eq: consistency, feasbility for l_T}
\limsup_{T \to \infty}
    \frac{1}{a_T} \log
    \Pb^{\infty}(\hat{\Pb}_{l_T} \in \mathcal{D}'_{l_T}(\Pb))
    \leq -1.
\end{equation}
Let us now examine the out-of-sample guarantee for the subsequence $(t_T)_{T\geq 1}$. Let $T \in \mathcal{T}$.
Consider the set of distributions where the pair of prescription distribution is in the ball where $\hat{c}$ differs from $\hat{c}'$, i.e., where the perturbation $\eta$ is non-negative, 
$$
A_T := \set{\Pb' \in \cP}{(\hat{x}'_T(\Pb'), \Pb') \in \mathcal{B}\left((x_0,\Pb_0),\frac{\rho}{2} \right)}.
$$
We have
\begin{align*}
     \Pb^{\infty}(\hat{\Pb}_T \in \mathcal{D}'_T(\Pb))
     &=
     \Pb^{\infty}(\hat{\Pb}_T \in \mathcal{D}'_T(\Pb) \cap \hat{\Pb}_T \notin A_T)
     +
     \Pb^{\infty}(\hat{\Pb}_T \in \mathcal{D}'_T(\Pb) \cap \hat{\Pb}_T \in A_T).
\end{align*}

We will examine each of the two terms separately. When $\hat{\Pb}_T \notin A_T$, $\eta(\hat{x}'_T(\hat{\Pb}_T), \hat{\Pb}_T) =0$, therefore, 
$
\hat{c}'^{\star}(\hat{\Pb}_T,T) 
=
\hat{c}'(\hat{x}'_T(\hat{\Pb}_T),\hat{\Pb}_T,T) 
= 
\hat{c}(\hat{x}'_T(\hat{\Pb}_T),\hat{\Pb}_T,T) \geq 
\hat{c}^{\star}(\hat{\Pb}_T,T)
$. 
Moreover, recall that 
$
\hat{c}'(\cdot,\cdot,T) 
= 
\hat{c}(\cdot,\cdot,T) - \eta(\cdot,\cdot) \mathbf{1}_{T \in \mathcal{T}} 
\leq \hat{c}(\cdot,\cdot,T)
$.
Therefore, 
$
\hat{c}'^{\star}(\hat{\Pb}_T,T) 
\leq 
\hat{c}^{\star}(\hat{\Pb}_T,T)
$. 
Hence 
$
\hat{c}'^{\star}(\hat{\Pb}_T,T)
= \hat{c}^{\star}(\hat{\Pb}_T,T)
$.
This implies by definition of $\hat{x}'_T$, see Equation \eqref{eq: proof def prescriptor}, that $\hat{x}_T(\hat{\Pb}_T) = \hat{x}'_T(\hat{\Pb}_T)$. 
Hence 
$$ 
c(\hat{x}'_T(\hat{\Pb}_T),\Pb) > \hat{c}'^{\star}(\hat{\Pb}_T,T)
\; \text{and} \; \hat{\Pb}_T \not \in A_T
\implies 
c(\hat{x}_T(\hat{\Pb}_T),\Pb) > \hat{c}^{\star}(\hat{\Pb}_T,T).
$$
Therefore,
\begin{equation}\label{proof eq: first part of consistency}
\Pb^{\infty}(\hat{\Pb}_T \in \mathcal{D}'_T(\Pb) \cap \hat{\Pb}_T \notin A_T) 
\leq 
\Pb^{\infty}(\hat{\Pb}_T \in \mathcal{D}_T(\Pb)).
\end{equation}

Suppose now $\hat{\Pb}_T \in A_T$. As $T\in \mathcal{T}$, we have by \eqref{proof eq: cal T def}, $\hat{c}(\hat{x}'_T(\hat{\Pb}_T),\hat{\Pb}_T,T) \geq c(\hat{x}'_T(\hat{\Pb}_T),\hat{\Pb}_T) + \varepsilon/4$. As $\eta$ is bounded by $\varepsilon/8$, we have therefore $\hat{c}'(\hat{x}'_T(\hat{\Pb}_T),\hat{\Pb}_T,T) \geq c(\hat{x}'_T(\hat{\Pb}_T),\hat{\Pb}_T) + \varepsilon/4 -\varepsilon/8 = c(\hat{x}'_T(\hat{\Pb}_T),\hat{\Pb}_T) + \varepsilon/8$.

The loss function $x \rightarrow \ell(x,i)$ is continuous in the compact $\cX$ for all $i\in \Sigma$, therefore, it is bounded. There exists $K>0$ such that $\|\loss(x,\cdot) \| <K$ for all $x \in \cX$. Consider the open set $U:= \set{\Pb'\in \cPin}{\|\Pb - \Pb'\| < \frac{\varepsilon}{8K}}$.  
If $\hat{\Pb}_T \in U$, then
\begin{align*}
\hat{c}'(\hat{x}'_T(\hat{\Pb}_T),\hat{\Pb}_T,T) 
&\geq
c(\hat{x}'_T(\hat{\Pb}_T),\hat{\Pb}_T) + \frac{\varepsilon}{8} \\
&=
c(\hat{x}'_T(\hat{\Pb}_T),\Pb)
+  c(\hat{x}'_T(\hat{\Pb}_T), \hat{\Pb}_T-\Pb)
+ \frac{\varepsilon}{8}\\
&\geq 
c(\hat{x}'_T(\hat{\Pb}_T),\Pb) - \|\loss(\hat{x}'_T(\hat{\Pb}_T),\cdot)\| \|\hat{\Pb}_T - \Pb\| + \frac{\varepsilon}{8}\\
&\geq 
c(\hat{x}'_T(\hat{\Pb}_T),\Pb) - K \frac{\varepsilon}{8K} + \frac{\varepsilon}{8} = c(\hat{x}'_T(\hat{\Pb}_T),\Pb)
\end{align*}
where the second inequality uses Cauchy–Schwarz inequality. This implies that $\hat{\Pb}_T \notin \mathcal{D}'_T(\Pb)$. We have, therefore, $\hat{\Pb}_T \in A_T \cap \mathcal{D}'_T(\Pb) \implies \hat{\Pb}_T \in A_T, \hat{\Pb}_T\notin U$. Hence 
\begin{equation}\label{proof eq: second part of consistency}
  \Pb^{\infty}(\hat{\Pb}_T \in \mathcal{D}'_T(\Pb) \cap \hat{\Pb}_T \in A_T)
  \leq 
\Pb^{\infty}(\hat{\Pb}_T \in A_T \cap \hat{\Pb}_T \notin U)
\leq 
\Pb^{\infty}(\hat{\Pb}_T \notin U).
\end{equation}
Combining \eqref{proof eq: first part of consistency} and \eqref{proof eq: second part of consistency}, we get,
     \begin{align*}
     \limsup_{T \to \infty}
    \frac{1}{a_{t_T}} \log 
    \Pb^{\infty}(\hat{\Pb}_{t_T} \in \mathcal{D}'_{t_T}(\Pb))
     &\leq 
    \limsup_{T \to \infty} \frac{1}{a_{t_T}} \log
    \left (
        \Pb^{\infty}(\hat{\Pb}_{t_T} \in \mathcal{D}_{t_T}(\Pb))
         +
         \Pb^{\infty}(\hat{\Pb}_{t_T} \notin U) 
     \right)
     \\
    &\leq
    \limsup_{T \to \infty} \frac{1}{a_{t_T}} \log
    \left (
        e^{-a_{t_T} + o(a_{t_T})}
        + e^{-\inf_{\Pb'\in U^c}
             I(\Pb',\Pb) t_T + o(t_T)}
     \right) = -1
\end{align*}
where the first part of the second inequality comes from the feasibility of $\hat{c}$ and the second part from the LDP (Theorem \ref{thm: LDP finite space}). The last equality is justified by $\inf_{\Pb'\in U^c} I(\Pb',\Pb) > 0$, as $\Pb \in U$, and $a_T \ll T$.

Combining the last result with \eqref{proof eq: consistency, feasbility for l_T}, we get the out-of-sample guarantee for $\hat{c}'$.
\end{proof}


\subsubsection{Proof of Proposition \ref{prop: lower bound strong opt presc}: Lower bound on regularity}\label{Appendix: proofs of lower bound presc}

\begin{proof}[Proof of Claim \ref{claim: prescriptor unif ineq}]
The inequality \eqref{proof eq: initial assumption presc} implies that there exists a subsequence $(l_T)_{T\geq 1}$ and $\varepsilon>0$ such that
$$ 
\frac{1}{\sqrt{\alpha_{l_T}}}|\hat{c}^{\star}(\Pb_0,l_T) -c^{\star}(\Pb_0)| + \varepsilon \leq \frac{1}{\sqrt{\alpha_{l_T}}}|\cSVP^{\star}(\Pb_0,l_T) -c^{\star}(\Pb_0)|, 
\quad \forall T \in \integ.
$$
We can drop the absolute values in the right hand-side as $\cSVP$ is always greater than the true cost by definition, therefore, for all $T \in \integ$
\begin{equation}\label{proof eq: gap between hat c* and hat c*r.}
\hat{c}^{\star}(\Pb_0,l_T)  + \varepsilon \sqrt{\alpha_{l_T}} \leq \cSVP^{\star}(\Pb_0,l_T).
\end{equation}

Similarly as in the proof of Claim \ref{claim: unif conv.}, using the Arzelà–Ascoli theorem (Theorem \ref{thm: Arzela–Ascoli}) twice, there exists a sub-sequence of the sequence of functions $(\hat{c}(\cdot,\cdot,l_T))_{T\geq 1}$ that converges uniformly and its sequence of gradients in $\Pb$ converges uniformly (in $x$ and $\Pb$). Let $(\hat{c}(\cdot,\cdot,t_T))_{T\geq 1}$ be this sub-sequence and $\hat{c}_{\infty}(\cdot,\cdot)$ its limit. Uniform convergence implies that the sequence of gradients of $\hat{c}$, $(x,\Pb \rightarrow\nabla\hat{c}(x,\cdot,t_T)(\Pb))_{T\geq 1}$ converges to $x,\Pb \rightarrow\nabla\hat{c}_{\infty}(x,\cdot)(\Pb)$ uniformly.

For all $x,\Pb,T$, let $\hat{\delta}(x,\Pb,T) = \hat{c}(x,\Pb,T) - c(x,\Pb)$ and $\hat{\delta}_{\infty}(x,\Pb) = \hat{c}_{\infty}(x,\Pb) - c(x,\Pb)$ its limit. Similarly, denote $\deltaSVP(x,\Pb,T) = \cSVP(x,\Pb,T) - c(x,\Pb)$.

As $(\hat{x}_{t_T}(\Pb_0))_{T\geq 1}$ lives in the compact $\cX$, we can assume WLOG, by extracting again from $(t_T)_{T\geq 1}$ that $(\hat{x}_{t_T}(\Pb_0))_{T\geq 1}$ converges to some $x_0 \in \cX$.
We consider two cases.

\textbf{Case I:} First, consider the case where $\nabla \hat{\delta}_{\infty}(x_0,\cdot)(\Pb_0) = 0$.\\
We show the result with $\Pb_1:=\Pb_0$. Fix $\varepsilon'>0$. By equicontinity of 
$(\nabla \hat{\delta}(x_0,\cdot,t_T))_{T\geq 1}$, and uniform convergence to $\nabla \hat{\delta}_{\infty}(x_0,\cdot)$,
there exists $r>0$ and $T_0\in \integ$ such that for all $\Pb \in \mathcal{B}(\Pb_0,r)$, for all $T\geq T_0$, we have 
$\|\nabla \hat{\delta}(x_0,\cdot,t_T)(\Pb)\| \leq \varepsilon'/4$.
As $(\hat{x}_{t_T}(\Pb_0))_{T\geq1}$ converges to $x_0$, using the equicontinuity property in $x$, there exists $T_1 \geq T_0$ such that for all $T\geq T_1$ and $\Pb \in \mathcal{B}(\Pb_0,r)$, $\|\nabla \hat{\delta}(\hat{x}_{t_T}(\Pb_0),\cdot,t_T)(\Pb)\| \leq \varepsilon'/2$.
Using successively the mean value theorem for $\hat{\delta}$ and inequality \eqref{proof eq: gap between hat c* and hat c*r.}, we have for all $\Pb \in \mathcal{B}(\Pb_0,r)$ and $T \geq T_1$
\begin{align*}
    \hat{\delta}(\hat{x}_{t_T}(\Pb_0),\Pb,t_T) 
    &\leq \hat{\delta}(\hat{x}_{t_T}(\Pb_0),\Pb_0,t_T) + \varepsilon'/2 \|\Pb -\Pb_0\| \\
    &= \hat{c}^{\star}(\Pb_0,t_T) 
    - c(\hat{x}_{t_T}(\Pb_0),\Pb_0) 
    + \varepsilon'/2 \|\Pb -\Pb_0\|\\
    &\leq \cSVP^{\star}(\Pb_0,t_T) 
    - \varepsilon \sqrt{\alpha_{t_T}} 
    - c(\hat{x}_{t_T}(\Pb_0),\Pb_0) 
    + \varepsilon'/2 \|\Pb -\Pb_0\|
\end{align*}
Using the minimality of $\cSVP^{\star}$, the previous inequality leads to
\begin{align*}
\hat{\delta}(\hat{x}_{t_T}(\Pb_0),\Pb,t_T) 
    &\leq 
    \cSVP(\xSVP{,t_T}(\Pb), \Pb_0,t_T) 
    - \varepsilon \sqrt{\alpha_{t_T}} 
    - c(\hat{x}_{t_T}(\Pb_0),\Pb_0) 
    + \varepsilon'/2 \|\Pb -\Pb_0\|\\
    &= \deltaSVP(\xSVP{,t_T}(\Pb),\Pb_0,t_T)
    + c(\xSVP{,t_T}(\Pb),\Pb_0)
    - \varepsilon \sqrt{\alpha_{t_T}} 
    - c(\hat{x}_{t_T}(\Pb_0),\Pb_0) 
    + \varepsilon'/2 \|\Pb -\Pb_0\|
\end{align*}

Notice that for all $\Pb \in \cPin$, $x\in \cX$ and $T \in \integ$, $\nabla \deltaSVP(x,\cdot,t_T)(\Pb) = \sqrt{\alpha_{t_T}} \nabla\Var_{(\cdot)}(\loss(x,\xi))(\Pb)$. We have $x,\Pb \rightarrow \nabla\Var_{(\cdot)}(\loss(x,\xi))(\Pb)$ bounded in $\cX \times \cP$, therefore, $\nabla \deltaSVP(\cdot,\cdot,t_T)$ converges uniformly to $0$. 
Hence, by equicontinuty of $\nabla \deltaSVP(\cdot,\cdot,t_T)$, we can chose $T_1$ such that for all $T\geq T_1$ and $\Pb \in \mathcal{B}(\Pb_0,r)$,
$\|\nabla \deltaSVP(\xSVP{,t_T}(\Pb_0),\cdot,t_T)(\Pb)\| \leq \varepsilon'/2$.
Using this result and the mean value theorem for $\deltaSVP$, the previous chain of inequalities leads to
\begin{align*}
\hat{\delta}(\hat{x}_{t_T}(\Pb_0),\Pb,t_T) 
    &\leq
    \deltaSVP(\xSVP{,t_T}(\Pb),\Pb,t_T)
    + c(\xSVP{,t_T}(\Pb),\Pb_0)
    - \varepsilon \sqrt{\alpha_{t_T}} 
    - c(\hat{x}_{t_T}(\Pb_0),\Pb_0) 
    + \varepsilon' \|\Pb -\Pb_0\|\\
    &=
    \cSVP^{\star}(\Pb,t_T)
    + c(\xSVP{,t_T}(\Pb),\Pb_0-\Pb)
    - \varepsilon \sqrt{\alpha_{t_T}} 
    - c(\hat{x}_{t_T}(\Pb_0),\Pb_0) 
    + \varepsilon' \|\Pb -\Pb_0\|
\end{align*}
which gives the desired result by substracting $c(\hat{x}_{t_T}(\Pb_0),\Pb)$ in both sides and noticing that $\hat{c}^{\star}(\Pb,t_T) \leq \hat{c}(\hat{x}_{t_T}(\Pb_0),\Pb,t_T) $ by minimality of $\hat{c}^{\star}$.

\textbf{Case II:} We now turn to the case where $\nabla \hat{\delta}_{\infty}(x_0,\cdot)(\Pb_0) \neq 0$.\\
Using the same arguments as the proof of Claim \ref{claim: unif conv.} (case I), there exists $\bar{r}_0>0$ such that for all $\bar{r}_0 \geq r_0>0$, $\Pb_1 = \Pb_0 - r_0 \frac{\nabla \hat{\delta}_{\infty}(x_0,\cdot)(\Pb_0)}{ \|\nabla \hat{\delta}_{\infty}(x_0,\cdot)(\Pb_0)\|}$ verifies for all $T\geq T_1$
\begin{align*}
\hat{\delta}(x_0,\Pb_1,t_T) - \hat{\delta}(x_0,\Pb_0,t_T)
&\leq
-\frac{r_0}{2}  
\|\nabla\hat{\delta}_{\infty}(x_0,\cdot)(\Pb_0)\| 
:= -\tilde{\varepsilon} <0
\end{align*}
As $(\hat{x}_{t_T}(\Pb_0))_{T\geq 1}$ converges to $x_0$, by equicontinuity of $\hat{\delta}$, there exists $T_1\geq T_0$ such that for all $T\geq T_1$
\begin{align}\label{proof eq: gap delta}
    \hat{\delta}(\hat{x}_{t_T}(\Pb_0),\Pb_1,t_T) - \hat{\delta}(\hat{x}_{t_T}(\Pb_0),\Pb_0,t_T)
    \leq -\tilde{\varepsilon}
\end{align}

In what follow, the assymptotic notation $o$ is in $T \to \infty$. Let $T\geq T_1$. Using successively the minimality of $\hat{c}^{\star}$, \eqref{proof eq: gap delta}, \eqref{proof eq: gap between hat c* and hat c*r.} and then the minimality of $\cSVP^{\star}$ we have
\begin{align*}
\hat{c}^{\star}(\Pb_1,t_T)
&\leq 
\hat{c}(\hat{x}_{t_T}(\Pb_0), \Pb_1,t_T)\\
& = 
\hat{\delta}(\hat{x}_{t_T}(\Pb_0), \Pb_1,t_T) + c(\hat{x}_{t_T}(\Pb_0), \Pb_1) \\
&\leq
\hat{c}^{\star}(\Pb_0,t_T)
-c(\hat{x}_{t_T}(\Pb_0), \Pb_0)
- \tilde{\varepsilon} + c(\hat{x}_{t_T}(\Pb_0), \Pb_1) \\
&\leq 
\cSVP^{\star}(\Pb_0,t_T) 
- \varepsilon \sqrt{\alpha_{t_T}}
+ c(\hat{x}_{t_T}(\Pb_0), \Pb_1-\Pb_0)
- \tilde{\varepsilon} \\
& \leq
\cSVP(\xSVP{,t_T}(\Pb_1), \Pb_0,t_T) 
+ c(\hat{x}_{t_T}(\Pb_0), \Pb_1 - \Pb_0)
- \tilde{\varepsilon} +o(1)\\
& =
\deltaSVP(\xSVP{,t_T}(\Pb_1), \Pb_0,t_T) + c(\xSVP{,t_T}(\Pb_1), \Pb_0)
+ c(\hat{x}_{t_T}(\Pb_0), \Pb_1 - \Pb_0)
- \tilde{\varepsilon} + o(1)
\end{align*}

As seen in the previous case, $(x,\Pb \rightarrow \nabla \deltaSVP(x, \cdot,t_T)(\Pb)\|)_{T\geq 1}$ converges uniformly to $0$, therefore, by the mean value theorem
$|\deltaSVP(\xSVP{,t_T}(\Pb_1), \Pb_0,t_T) - \deltaSVP(\xSVP{,t_T}(\Pb_1), \Pb_1,t_T)| = o(1)\|\Pb_0-\Pb_1\| = o(1)$. Hence, the previous chain of inequalities leads to
\begin{align*}
\hat{c}^{\star}(\Pb_1,t_T)
&\leq 
\deltaSVP(\xSVP{,t_T}(\Pb_1), \Pb_1,t_T)
+ c(\xSVP{,t_T}(\Pb_1), \Pb_0)
+ c(\hat{x}_{t_T}(\Pb_0), \Pb_1 - \Pb_0)
- \tilde{\varepsilon} + o(1)\\
&=
\cSVP^{\star}(\Pb_1,t_T) 
+ c(\xSVP{,t_T}(\Pb_1), \Pb_0-\Pb_1)
+ c(\hat{x}_{t_T}(\Pb_0), \Pb_1 - \Pb_0)
- \tilde{\varepsilon} + o(1)
\end{align*}
By extracting again from $(t_T)_{T\geq 1}$ we can assume WLOG that $(\xSVP{,t_T}(\Pb_1))_{T\geq 1}$ converges to some $x_1 \in \cX$. Moreover, recall $\hat{x}_{t_T}(\Pb_0) \to x_0$. Hence, by continuity of $c(\cdot,\cdot)$ in the first argument, we have
\begin{align*}
\hat{c}^{\star}(\Pb_1,t_T)
&\leq
\cSVP^{\star}(\Pb_1,t_T) 
+ c(x_1, \Pb_0-\Pb_1)
+ c(x_0, \Pb_1 - \Pb_0)
- \tilde{\varepsilon} + o(1)\\
&=
\cSVP^{\star}(\Pb_1,t_T) 
+ (\loss(x_1,\cdot) - \loss(x_0,\cdot))^{\top}(\Pb_0-\Pb_1)
- \tilde{\varepsilon} + o(1)
\end{align*}
where $\loss(x,\cdot)$ is the vector $(\loss(x,1),\ldots,\loss(x,d))^\top$ for all $x\in \cX$. Recall that the minimizer $\{x^{\star}(\Pb)\} = \argmin_{x\in \cX} c(x,\Pb)$ is unique. This implies that $\Pb \rightarrow x^{\star}(\Pb)$ is continuous (see Lemma \ref{lemma: Continuity of a unique minimizer}).
By continuity of $\cSVP$ and $\hat{c}$, we have $x_1 \in \argmin_{x\in \cX} c(x,\Pb_1) = \{x^{\star}(\Pb_1)\}$ and $x_0 \in \argmin_{x\in \cX} c(x,\Pb_0) = \{x^{\star}(\Pb_0)\}$. Therefore, we can chose $\Pb_1$ sufficiently close to $\Pb_0$ (ie $r_0$ sufficiently small) such that
$$
\|\loss(x_1,\cdot) - \loss(x_0,\cdot)\| \leq \frac{1}{4} \|\nabla\hat{\delta}_{\infty}(x_0,\cdot)(\Pb_0)\|
$$
which implies by Cauchy-Schwartz
$$
(\loss(x_1,\cdot) - \loss(x_0,\cdot))^{\top}(\Pb_0-\Pb_1) 
\leq 
\|\loss(x_1,\cdot) - \loss(x_0,\cdot)\|\|\Pb_0-\Pb_1\| \leq 
\frac{1}{4} \|\nabla\hat{\delta}_{\infty}(x_0,\cdot)(\Pb_0)\| r_0 = \tilde{\varepsilon}/2.
$$
Hence, the inequality on $\hat{c}^{\star}(\Pb_1,t_T)$ becomes
\begin{equation}\label{proof eq: claim prescriptor 2}
\hat{c}^{\star}(\Pb_1,t_T)
\leq 
\cSVP^{\star}(\Pb_1,t_T) 
- \tilde{\varepsilon}/2 + o(1).
\end{equation}
By equicontinuity of $\cSVP^{\star}$ and $\hat{c}^{\star}$ (see Lemma \ref{lemma: continutiy of minumum}), we can chose $r'>0$ and $T_1\in \integ$ such that for all $\Pb \in \mathcal{B}(\Pb_1,r')$ and $T \geq T_1$, we have
$\hat{c}^{\star}(\Pb,t_T)
\leq 
\cSVP^{\star}(\Pb,t_T) 
- \tilde{\varepsilon}/4$. This result is stronger than desired result. In fact, for $\varepsilon'>0$, we get the desired result by choosing $T_2\geq T_1$ and $r>0$ sufficiently small with $r<r'$ such that for all $\Pb \in \mathcal{B}(\Pb_1,r)$ and $T\geq T_2$ 
$$
\varepsilon  \sqrt{\alpha_{t_T}} 
-\varepsilon' \|\Pb-\Pb_1\| 
+ c(\xSVP{,t_T}(\Pb_1),\Pb_1-\Pb)
   - c(\hat{x}_{t_T}(\Pb),\Pb_1- \Pb) \leq \tilde{\varepsilon}/4.
$$
This is possible as this quantity converges to $0$ uniformly when $\Pb \to \Pb_1$ and $T \to \infty$.

\end{proof}

\begin{lemma}\label{lemma: convergence of Var}
Let $\Pb \in \cP$ and $(x_T)_{T \geq 1} \in \cX^\integ$. Let $(\Pb_T)_{T\geq 1}$ such that $\Pb_T \to \Pb$. We have asymptotically in $T\to \infty$
$$
\Var_{\Pb_T}(\loss(x_T,\xi)) = \Var_{\Pb}(\loss(x_T,\xi)) +o(1).
$$
\end{lemma}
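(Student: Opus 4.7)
The plan is to exploit that $\Sigma$ is finite and that the loss function, being continuous in $x$ on the compact set $\cX$, is uniformly bounded on $\cX \times \Sigma$. Concretely, let $K := \sup_{x \in \cX}\|\loss(x,\cdot)\|_\infty < \infty$, which is finite by continuity of $\loss(\cdot,i)$ for each of the finitely many $i \in \Sigma$ and compactness of $\cX$. The claim will then follow from decomposing the variance into its two moments and showing that the resulting error terms are each $O(\|\Pb_T - \Pb\|)$, hence $o(1)$, \emph{uniformly} in the choice of the sequence $(x_T)_{T \geq 1}$.

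More precisely, I would first write, for any fixed $x \in \cX$,
\[
\Var_{\Pb'}(\loss(x,\xi)) = \sum_{i \in \Sigma} \loss(x,i)^2 \Pb'(i) - \Big(\sum_{i \in \Sigma} \loss(x,i) \Pb'(i)\Big)^2.
\]
Apply this with $\Pb' = \Pb_T$ and $\Pb' = \Pb$, then subtract. The first-moment-squared terms differ by an expression of the form $a_T^2 - b_T^2 = (a_T + b_T)(a_T - b_T)$, where $a_T = c(x_T,\Pb_T)$ and $b_T = c(x_T,\Pb)$ both lie in $[-K, K]$. The second-moment terms differ by $\sum_{i \in \Sigma} \loss(x_T,i)^2(\Pb_T(i) - \Pb(i))$.

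The key step is then to bound both contributions by Cauchy--Schwarz (or the trivial $\ell_\infty$--$\ell_1$ inequality):
\[
\Big|\sum_{i \in \Sigma} \loss(x_T,i)^2 (\Pb_T(i) - \Pb(i))\Big| \leq K^2 \|\Pb_T - \Pb\|_1, \qquad |a_T - b_T| \leq K \|\Pb_T - \Pb\|_1,
\]
so that $|a_T^2 - b_T^2| \leq 2K^2 \|\Pb_T - \Pb\|_1$. Summing, we obtain
\[
\big|\Var_{\Pb_T}(\loss(x_T,\xi)) - \Var_{\Pb}(\loss(x_T,\xi))\big| \leq 3K^2 \|\Pb_T - \Pb\|_1 \xrightarrow[T \to \infty]{} 0,
\]
which is the desired $o(1)$ bound. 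I do not foresee a genuine obstacle here: the only thing to note is that the bound is independent of the sequence $(x_T)$, so no uniform continuity of $x \mapsto \Var_{\cdot}(\loss(x,\xi))$ in $x$ is needed---the finiteness of $\Sigma$ together with uniform boundedness of the loss does all the work.
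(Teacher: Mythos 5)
Your proof is correct and follows essentially the same strategy as the paper's: decompose the variance into its second moment and squared first moment, and bound each difference via the uniform bound $K$ on the loss and the $\ell_1$ convergence $\|\Pb_T - \Pb\|_1 \to 0$. The only cosmetic difference is that you factor $a_T^2 - b_T^2 = (a_T+b_T)(a_T-b_T)$ to handle the squared-mean term in one line, whereas the paper expands it as a double sum over $\Sigma \times \Sigma$ and telescopes $\Pb_T(i)\Pb_T(j) - \Pb(i)\Pb(j)$; yours is a bit cleaner, but the underlying estimate is the same.
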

\begin{proof}
We have
\begin{align*}
    \Var_{\Pb_T}(\loss(x_T,\xi)) - \Var_{\Pb}(\loss(x_T,\xi))
    &=
    \Eb_{\Pb_T}(\loss(x_T,\xi)^2)-\Eb_{\Pb}(\loss(x_T,\xi)^2)
    +
    \Eb_{\Pb_T}(\loss(x_T,\xi))^2-\Eb_{\Pb}(\loss(x_T,\xi))^2.
\end{align*}
Denote $K = \sup_{x\in \cX} \|\loss(x,\cdot)\|_{\infty}<\infty$. This supremum is finite as $\cX$ is compact and the loss is continuous. We have
$
\Eb_{\Pb_T}(\loss(x_T,\xi)^2)-\Eb_{\Pb}(\loss(x_T,\xi)^2)
=
\sum_{i=1}^d \loss(x_T,i)^2 (\Pb_T(i)-\Pb(i)) 
\leq 
K^2 \sum_{i=1}^d |\Pb_T(i)-\Pb(i)| = o(1).
$
Moreover
\begin{align*}
    \Eb_{\Pb_T}(\loss(x_T,\xi))^2-\Eb_{\Pb}(\loss(x_T,\xi))^2
    &=
    \sum_{1\leq i,j\leq d} \loss(x_T,i)\loss(x_T,j) (\Pb_T(i)\Pb_T(j) - \Pb(i)\Pb(j)) \\
    &\leq
    K^2 \sum_{1\leq i,j\leq d} |\Pb_T(i)\Pb_T(j) - 
    \Pb_T(i)\Pb(j) +
    \Pb_T(i)\Pb(j) -
    \Pb(i)\Pb(j)| \\
    &\leq
    K^2 \sum_{1\leq i,j\leq d} \Pb_T(i)|\Pb_T(j) - \Pb(j)| 
    +
    \Pb(j)|\Pb_T(i) -\Pb(i)| = o(1)
\end{align*}
Hence, $\Var_{\Pb_T}(\loss(x_T,\xi)) - \Var_{\Pb}(\loss(x_T,\xi))=o(1)$.
\end{proof}

{\color{black}
\subsubsection{Proof of Proposition \ref{prop: KL prescriptor approx}: Approximation of KL prescription}\label{App: proof of KL prescriptor approx}
\begin{proof}[Proof of Proposition \ref{prop: KL prescriptor approx}]
The equality follows immediately from Proposition \ref{prop: KL equiv to SVP} (as it is uniform in $x$). The equivalence follows from combining this result with Proposition \ref{prop: rate of cv of prescriptor}. Indeed, for all $x \in \cX$ and $\Qb \in \cPin$
$$
\frac{\cKL'^\star(\Qb,T) - c^\star(\Qb)}{\cSVP^\star(\Qb,T) - c^\star(\Qb)}
=
\frac{\cSVP^\star(\Qb,T)- c^\star(\Qb) + o(\sqrt{a_T/T})}{\cSVP^\star(\Qb,T) - c^\star(\Qb)}
=
\frac{\sqrt{(a_T/T)\Var_{\Qb}(\loss(x^\star(\Qb),\xi)} + o(\sqrt{a_T/T})}{\sqrt{(a_T/T)\Var_{\Qb}(\loss(x^\star(\Qb),\xi)} + o(\sqrt{a_T/T})} = 1 + o(1)
$$
\end{proof}

\subsubsection{Proof of Proposition \ref{prop: feasibility of KL presc in subsexp}: Feasibility of KL prescriptor}\label{App: proof of feasibility of KL presc in subsexp}
\begin{proof}[Proof of Proposition \ref{prop: feasibility of KL presc in subsexp}]
    Let $\Pb \in \cPin$ and $T_0 \in \integ$. For $T \in \integ$, we have
$$
\hat{\Pb}_T \in E_T :=\set{\Pb' \in \mathcal{P}}{I(\Pb',\Pb)\leq \frac{a_T}{T}} \implies
c(\hat{x}_{\text{KL},T}'(\hat{\Pb}_T), \Pb) \leq
\cKL'(\hat{x}_{\text{KL},T}'(\hat{\Pb}_T), \hat{\Pb}_T,T) =\cKL'^{\star}(\hat{\Pb}_T,T)
$$
Hence,
$$\frac{1}{a_T} \log \Pb^\infty 
    \left(
c(\hat{x}_{\text{KL},T}'(\hat{\Pb}_T), \Pb) > \cKL'^{\star}(\hat{\Pb}_T,T)
    \right)
\leq
    \frac{1}{a_T} \log \Pb^{\infty}
    \left(
      \hat{\Pb}_T \not\in E_T
          \right)
    $$
We have shown in the proof of Proposition \ref{prop: KL feasibility in subexp} that $\limsup_{T\to\infty}
    \frac{1}{a_T}  \log \Pb^{\infty}
    \left(
      \hat{\Pb}_T \not\in E_T
          \right) \leq -1$ which gives the desired result.
\end{proof}
}

\section{Limit superior lemmas}
\begin{lemma}\label{lemma: lim in limsup.}
For any two sequences of non-negative real numbers $(u_T)_{T \geq 1}$ and $(v_T)_{T \geq 1}$ such that $\lim u_T$ exists, the equality
$$
\limsup_{T \to \infty} u_Tv_T 
=
\lim_{T \to \infty} u_T \cdot \limsup_{T \to \infty} v_T
$$
holds whenever the right hand-side is not of the form $0\cdot \infty$.
\end{lemma}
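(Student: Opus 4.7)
Let $u := \lim_{T\to\infty} u_T$ and $V := \limsup_{T\to\infty} v_T$, both taking values in $[0,+\infty]$ by non-negativity. The strategy is to split into the three cases $u=0$, $0<u<\infty$, and $u=\infty$, since the product $u\cdot V$ is continuous in each factor away from the forbidden form $0\cdot\infty$. In each case I will establish the two inequalities $\limsup u_Tv_T \leq uV$ and $\limsup u_Tv_T \geq uV$ separately, using an extraction of subsequences to realize the relevant limits.

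First I would dispose of the degenerate cases. If $u=0$, then by hypothesis $V<\infty$, so $(v_T)$ is eventually bounded by some $M$, giving $0\leq u_T v_T \leq M u_T\to 0$ and hence $\limsup u_T v_T = 0 = u\cdot V$. If $u=\infty$, then by hypothesis $V>0$, and picking any subsequence $(T_k)$ along which $v_{T_k}\to V$ (possible by definition of $\limsup$) gives $u_{T_k}v_{T_k}\to \infty = u\cdot V$, so $\limsup u_Tv_T=\infty$ as required.

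The main case is $0<u<\infty$, which splits into $V<\infty$ and $V=\infty$. When $V=\infty$, pick $(T_k)$ with $v_{T_k}\to\infty$; since $u_{T_k}\to u>0$, we get $u_{T_k}v_{T_k}\to \infty = uV$, yielding the equality. When $V<\infty$, for the upper bound I extract a subsequence $(T_k)$ with $u_{T_k}v_{T_k}\to \limsup u_Tv_T$; since $u_{T_k}\to u>0$, we may write $v_{T_k}=(u_{T_k}v_{T_k})/u_{T_k}$, which then converges to $\limsup u_Tv_T / u$; this limit is an accumulation point of $(v_T)$ and is therefore bounded above by $V$, giving $\limsup u_Tv_T \leq uV$. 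For the matching lower bound, I extract $(T_k)$ with $v_{T_k}\to V$ and use $u_{T_k}\to u$ to conclude $u_{T_k}v_{T_k}\to uV$, so $\limsup u_Tv_T \geq uV$.

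No step presents a real obstacle; the argument is essentially bookkeeping on subsequences. The only subtlety is that one must invoke the hypothesis $\lim u_T$ exists to pass from $u_{T_k}v_{T_k}$ to $v_{T_k}$ by division in the upper bound argument, and that the excluded indeterminate form $0\cdot\infty$ is precisely what guarantees that the degenerate cases can be resolved cleanly.
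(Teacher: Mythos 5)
Your proof is correct and complete. Note that the paper itself states this lemma without proof, treating it as a standard fact; your writeup is a clean, elementary verification by case analysis on $\lim u_T \in \{0\} \cup (0,\infty) \cup \{\infty\}$, which is exactly the natural way to handle it. One small remark for polish: in the subcase $0<u<\infty$, $V<\infty$, your upper-bound extraction implicitly handles $\limsup u_T v_T = \infty$ as a contradiction (the accumulation point $\limsup u_T v_T / u = \infty$ of $(v_T)$ would force $V=\infty$); it would be worth making that one-line contradiction explicit so the division step is visibly applied only when the quantity being divided is finite.
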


\begin{lemma}\label{lemma: limsup liminf ineq}
For any two sequences of non-negative real numbers $(u_T)_{T \geq 1}$ and $(v_T)_{T \geq 1}$, the inequality
$$
\limsup_{T \to \infty} u_Tv_T \geq \limsup_{T \to \infty} u_T \cdot \liminf_{T \to \infty} v_T
$$
holds whenever the right hand-side is not of the form $0\cdot \infty$. 
\end{lemma}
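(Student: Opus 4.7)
Set $U \defn \limsup_{T\to\infty} u_T \in [0,\infty]$ and $L \defn \liminf_{T\to\infty} v_T \in [0,\infty]$. The plan is to split into the three cases $L=0$, $0<L<\infty$, and $L=\infty$, dispatching each by monotonicity of $\limsup$ under multiplication by an eventual lower bound on $v_T$.

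First I would handle the degenerate case $L=0$. By the hypothesis that the right-hand side is not of the form $0\cdot\infty$ we must then have $U<\infty$, so the right-hand side equals $0$. The left-hand side is a $\limsup$ of non-negative numbers and is therefore $\geq 0$, giving the inequality trivially. Next I would handle the main case $0<L<\infty$. Fix $\varepsilon\in(0,L)$. By definition of $\liminf$ there exists $T_0$ such that $v_T\geq L-\varepsilon$ for every $T\geq T_0$. Since $u_T\geq 0$, this yields $u_T v_T \geq (L-\varepsilon)u_T$ for all $T\geq T_0$, and taking $\limsup$ on both sides (using that $\limsup$ is monotone and that multiplication by the positive constant $L-\varepsilon$ commutes with $\limsup$) gives
\[
\limsup_{T\to\infty} u_T v_T \;\geq\; (L-\varepsilon)\, U.
\]
Letting $\varepsilon\downarrow 0$ yields the desired inequality $\limsup u_T v_T \geq L\cdot U$, interpreted in the usual extended-real sense (and finite on both sides here since $L<\infty$; if $U=\infty$ then $(L-\varepsilon)U=\infty$ for any $\varepsilon<L$, which still forces the left-hand side to be $\infty$).

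Finally I would address the case $L=\infty$. The hypothesis on the right-hand side then forces $U>0$. For any $M>0$, by definition of $\liminf$ there exists $T_0$ with $v_T\geq M$ for $T\geq T_0$, so $u_T v_T \geq M u_T$ and hence $\limsup_{T\to\infty} u_T v_T \geq M\,U$. Since $U>0$ is fixed and $M$ is arbitrary, letting $M\to\infty$ gives $\limsup u_T v_T =\infty = L\cdot U$, as required. There is no real obstacle in this proof; the only subtlety is bookkeeping the extended-real arithmetic to ensure that the excluded $0\cdot\infty$ situation is precisely the one and only configuration in which the step ``multiply the eventual lower bound $L-\varepsilon$ (or $M$) by $U$ and take $\varepsilon\downarrow 0$ (or $M\to\infty$)'' fails to make sense.
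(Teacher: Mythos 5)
Your proof is correct, and it takes a genuinely different route from the paper's. The paper extracts a subsequence $(k_T)$ along which $u_{k_T}\to\limsup u_T$, then appeals to the companion Lemma \ref{lemma: lim in limsup.} (the identity $\limsup u_{k_T}v_{k_T}=\lim u_{k_T}\cdot\limsup v_{k_T}$ when one factor converges and the product is not $0\cdot\infty$), and finally bounds $\limsup v_{k_T}$ below by $\liminf v_T$. The $0\cdot\infty$ edge cases for the intermediate product $\lim u_{k_T}\cdot\limsup v_{k_T}$ are then handled by two short separate arguments. Your proof instead works directly from the definition of $\liminf$: case-splitting on $L=\liminf v_T$, using the eventual lower bound $v_T\geq L-\varepsilon$ (or $v_T\geq M$), and invoking only monotonicity of $\limsup$ and its homogeneity under multiplication by a positive scalar, then letting $\varepsilon\downarrow 0$ (or $M\to\infty$). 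This is more elementary and self-contained: it avoids the auxiliary Lemma \ref{lemma: lim in limsup.} entirely and does not require extracting a maximizing subsequence for $u_T$. The paper's approach buys compactness of exposition when Lemma \ref{lemma: lim in limsup.} is available, and its case structure is governed by the intermediate $\lim u_{k_T}\cdot\limsup v_{k_T}$; yours is governed by $L$ itself, which makes the extended-real bookkeeping around the excluded $0\cdot\infty$ configuration arguably more transparent. Both proofs are sound.
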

\begin{proof}
Let $(k_T)_{T\geq 1}$ be a sequence increasing to infinity such that $\limsup u_T = \lim u_{k_T}$. We have
\begin{align*}
    \limsup u_Tv_T \geq \limsup u_{k_T}v_{k_T}
\end{align*}
Suppose first $\lim u_{k_T} \limsup v_{k_T}$ is not of the form $0\cdot \infty$. Using Lemma \ref{lemma: lim in limsup.}, we have
\begin{align*}
\limsup u_{k_T}v_{k_T}
    = \lim u_{k_T} \limsup v_{k_T}
    = \limsup u_{T} \limsup v_{k_T} 
    \geq \limsup u_{T} \liminf v_{T},
\end{align*}
which proves the result.

Now assume $\lim u_{k_T} =0$ and $\limsup v_{k_T} = \infty$. This implies that $\limsup u_T =0$. As $\limsup_{T \to \infty} u_T \cdot \liminf_{T \to \infty} v_T$ is not of the form $0 \cdot \infty$, we have $\liminf v_T < \infty$, therefore, $\limsup_{T \to \infty} u_T \cdot \liminf_{T \to \infty} v_T = 0$ and the lemma's inequality holds trivially.

Assume $\lim u_{k_T} = \infty$ and $\limsup v_{k_T} = 0$. Then $\liminf v_T =0$. As $\limsup_{T \to \infty} u_T \cdot \liminf_{T \to \infty} v_T$ is not of the form $0\cdot \infty$, we have $\limsup u_T < \infty$, therefore, $\limsup_{T \to \infty} u_T \cdot \liminf_{T \to \infty} v_T = 0$ and the lemma's inequality holds trivially.
\end{proof}

\begin{lemma}\label{lemma: limsup of complementary sequences}
Let $\mathcal{T} \subset \integ $ be an infinite subset of $\integ$ and $u_T \in \Re^{\integ}$ be a real sequence. Let $(t_T)_{T\geq 1}$ be the increasing sequence of elements in $\mathcal{T}$ and $(l_T)_{T\geq 1}$ be the increasing sequence of elements in its complement $\integ \setminus \mathcal{T}$. We have
$$
\limsup_{T \to \infty} u_T = \max \left( 
    \limsup_{T \to \infty } u_{t_T}, \limsup_{T \to \infty } u_{l_T}
    \right).
$$
\end{lemma}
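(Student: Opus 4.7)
The plan is to prove the two inequalities $\limsup_{T\to\infty} u_T \geq \max(\limsup_{T\to\infty} u_{t_T},\limsup_{T\to\infty} u_{l_T})$ and $\limsup_{T\to\infty} u_T \leq \max(\limsup_{T\to\infty} u_{t_T},\limsup_{T\to\infty} u_{l_T})$ separately. The first inequality is immediate: both $(u_{t_T})_{T\geq 1}$ and $(u_{l_T})_{T\geq 1}$ are subsequences of $(u_T)_{T\geq 1}$, and it is a standard fact that the limit superior of a subsequence is bounded above by the limit superior of the whole sequence. If the complement $\integ\setminus\mathcal{T}$ happens to be finite, then $(l_T)_{T\geq 1}$ is only finitely defined; in that case we interpret $\limsup u_{l_T}=-\infty$ by convention and the inequality is trivial. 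Otherwise both subsequences are well-defined and the comparison applies directly.

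For the reverse inequality, I would invoke the characterization of $\limsup_{T\to\infty} u_T$ as the supremum of the set of subsequential limits of $(u_T)_{T\geq 1}$ in the extended reals. Pick any subsequential limit $L$ of $(u_T)_{T\geq 1}$, realized along some increasing sequence of indices $(n_k)_{k\geq 1}\subset\integ$. Since $\integ=\mathcal{T}\sqcup(\integ\setminus\mathcal{T})$, at least one of the two sets $\{k : n_k\in\mathcal{T}\}$ and $\{k : n_k\in\integ\setminus\mathcal{T}\}$ must be infinite. Extracting the corresponding sub-subsequence, $L$ is also a subsequential limit either of $(u_{t_T})_{T\geq 1}$ or of $(u_{l_T})_{T\geq 1}$, and is hence bounded above by $\max(\limsup u_{t_T},\limsup u_{l_T})$. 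Taking the supremum over all subsequential limits $L$ yields the claim.

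I do not anticipate any serious obstacle; this is a purely set-theoretic fact about partitioning the index set and the behavior of $\limsup$ under subsequences. The only minor care needed is in handling the edge case where $\integ\setminus\mathcal{T}$ is finite (so that $(l_T)_{T\geq 1}$ is not an infinite sequence), which is dispatched by the convention that a $\limsup$ over an empty or finite tail equals $-\infty$, rendering the $\max$ equal to $\limsup u_{t_T}=\limsup u_T$ trivially.
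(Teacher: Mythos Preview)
Your proof is correct. The paper takes a slightly different route: it works directly with the definition $\limsup_T u_T=\lim_{T\to\infty}\sup_{t\geq T}u_t$, splits $\sup_{t\geq T}u_t=\max\bigl(\sup_{t\geq T,\,t\in\mathcal{T}}u_t,\ \sup_{t\geq T,\,t\notin\mathcal{T}}u_t\bigr)$, and then swaps the limit with the $\max$, justified because each of the two inner sequences is monotone and hence convergent. Your argument instead invokes the characterization of $\limsup$ as the supremum of subsequential limits and a pigeonhole extraction on the index partition. Both are equally elementary; the paper's version avoids choosing an arbitrary subsequential limit and handles both inequalities in one chain, while yours makes the role of the partition more explicit and is arguably more robust when one thinks about the finite-complement edge case you flagged.
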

\begin{proof}
We have 
\begin{align*}
\limsup_{T \in \integ } u_T 
=
\lim_{T\to \infty} \sup_{t \geq T}u_t 
&=
\lim_{T\to \infty} 
\max\left(
\sup_{t \geq T, t \in \mathcal{T}}u_t ,
\sup_{t \geq T, t \not\in \mathcal{T}}u_t
\right)\\
&=
\max\left(
\lim_{T\to \infty}  \sup_{t \geq T, t \in \mathcal{T}}u_t ,
\lim_{T\to \infty} \sup_{t \geq T, t \not\in \mathcal{T}}u_t
\right)\\
&=\max \left( 
    \limsup_{T \to \infty} u_{t_T}, \limsup_{T \to \infty} u_{l_T}
    \right),
\end{align*}
where the inversion of $\max$ and limits is true as both the limits in the $\max$ exist.
\end{proof}

\end{document}